\newtheorem{theorem}{Theorem}
\newtheorem{lemma}[theorem]{Lemma}
  \newtheorem{proposition}[theorem]{Proposition}
  \newtheorem{definition}[theorem]{Definition}
\title{Adversarial Rewards in Universal Learning for Contextual Bandits}
\newcommand{\Acal}{\mathcal{A}}
\newcommand{\Bcal}{\mathcal{B}}
\newcommand{\Ccal}{\mathcal{C}}
\newcommand{\Dcal}{\mathcal{D}}
\newcommand{\Ecal}{\mathcal{E}}
\newcommand{\Fcal}{\mathcal{F}}
\newcommand{\Gcal}{\mathcal{G}}
\newcommand{\Hcal}{\mathcal{H}}
\newcommand{\Ical}{\mathcal{I}}
\newcommand{\Pcal}{\mathcal{P}}
\newcommand{\Rcal}{\mathcal{R}}
\newcommand{\Scal}{\mathcal{S}}
\newcommand{\Tcal}{\mathcal{T}}
\newcommand{\Ucal}{\mathcal{U}}
\newcommand{\Xcal}{\mathcal{X}}
\newcommand{\Ycal}{\mathcal{Y}}
\newcommand{\Ocal}{\mathcal{O}}
\newcommand{\Ebb}{\mathbb{E}}
\newcommand{\Nbb}{\mathbb{N}}
\newcommand{\Pbb}{\mathbb{P}}
\newcommand{\Xbb}{\mathbb{X}}
\newcommand{\Ybb}{\mathbb{Y}}
\newcommand{\Zbb}{\mathbb{Z}}
\newcommand{\1}{\mathbbm{1}}
\definecolor{dark_red}{rgb}{0.2,0,0}
\newcommand{\comment}[1]{}
\newcommand{\mb}[1]{\ensuremath{\boldsymbol{#1}}}
\newcommand{\paren}[1]{\left( #1 \right)}
\newcommand{\sqb}[1]{\left[ #1 \right]}
\newlength{\dhatheight}
\newcommand{\EXPINF}{\mathrm{EXPINF}}
\newcommand{\EXPIX}{\mathrm{EXP3.IX}}
\newcommand{\X}{\mathcal X}
\newcommand{\argmax}{\mathop{\rm argmax}}
\newcommand{\argmin}{\mathop{\rm argmin}}
\renewcommand{\limsup}{\mathop{\rm limsup}}
\newcommand{\floor}[1]{\left\lfloor #1 \right\rfloor}
\DeclareSymbolFont{bbold}{U}{bbold}{m}{n}
\DeclareSymbolFontAlphabet{\mathbbold}{bbold}
\newcommand{\ProcX}{\mathbb{X}}
\newcommand{\OKC}{\mathcal{C}_{2}}
\newcommand{\SMV}{\OKC}
\newcommand{\UKC}{\mathcal{C}_{3}}
\newcommand{\FS}{\UKC}
\newcommand{\ignore}[1]{}
\newcommand{\private}[1]{}
\colorlet{sgreen}{black!45!green}
\newtheorem{condition}{Condition}
\newsavebox{\savepar}
\newcommand{\vast}{\bBigg@{3}}
\newcommand{\Vast}{\bBigg@{4}}
\author{
  Moise Blanchard\\
  MIT\\
  \small{\texttt{moiseb@mit.edu}}
  \and
  Steve Hanneke\\
  Purdue University\\
  \small{\texttt{steve.hanneke@gmail.com}}
  \and 
  Patrick Jaillet\\
  MIT\\
  \small{\texttt{jaillet@mit.edu}}
}
\date{}
\newcommand{\acks}[1]{\section*{Acknowledgments}#1}
\newcommand{\nonl}{\renewcommand{\nl}{\let\nl\oldnl}}
\renewenvironment{proof}[1][]{\par
  \trivlist
  \item[\hskip\labelsep
        \bfseries Proof#1.\ ]\ignorespaces
}{%
 \hfill $\blacksquare$\endtrivlist
}
\begin{document}
\maketitle

\begin{abstract}
We study the fundamental limits of learning in contextual bandits, where a learner's rewards depend on their actions and a known context, which extends the canonical multi-armed bandit to the case where side-information is available. We are interested in \emph{universally consistent} algorithms, which achieve sublinear regret compared to any measurable fixed policy, without any function class restriction. For stationary contextual bandits, when the underlying reward mechanism is time-invariant, \cite{blanchard:22e} characterized \emph{learnable} context processes for which universal consistency is achievable; and further gave algorithms ensuring universal consistency whenever this is achievable, a property known as \emph{optimistic universal consistency}. It is well understood, however, that reward mechanisms can evolve over time, possibly adversarially, and depending on the learner's actions. We show that optimistic universal learning for contextual bandits with adversarial rewards is impossible in general, contrary to all previously studied settings in online learning---including standard supervised learning. We also give necessary and sufficient conditions for universal learning under various adversarial reward models, and an exact characterization for online rewards. In particular, the set of learnable processes for these reward models is still extremely general---larger than i.i.d., stationary or ergodic---but in general strictly smaller than that for supervised learning or stationary contextual bandits, shedding light on new adversarial phenomena.
\end{abstract}

\paragraph{Keywords.}
Contextual bandits, Universal consistency, Optimistically universal learning, Online learning, Adversarial rewards, Statistical learning theory 

\tableofcontents

\section{Introduction}
\label{sec:introduction}

The contextual bandit setting is a central problem in statistical decision-making. This setting models the interaction between a learner or decision maker, and a reward mechanism. At each iteration of the learning process, the learner observes a \emph{context} $x\in\Xcal$ (also known as covariate in the statistical learning literature), then selects an \emph{action} $a\in\Acal$ to perform. The decision maker then receives a reward based on the context and selected action, which can then be used to perform informed future actions. As a classical example, this framework can model the problem of online personalized recommendations. For any new customer, an online store provides a list of product recommendations. Based on the reward obtained from actions of the customer, e.g., if they purchase an item, the store can then update its recommendations for future customers. The major difference with the standard supervised learning framework is that the learner can only observe the reward of the selected action, referred to as partial feedback, instead of the full-feedback case of supervised learning in which a learner can directly compute the reward (or loss) of non-selected actions. Further, instead of estimating the reward mechanism, the goal in contextual bandits is to achieve low regret compared to the optimal actions in hindsight. New phenomena arise from these characteristics, including the well-known exploration/exploitation trade-off: algorithms should balance between exploiting known high-reward actions and exploring new actions which potentially could yield higher rewards. In the present work, we aim to shed light on the fundamental question of \emph{learnability} in contextual bandits and unveil key differences from the classical full-feedback setting.

\paragraph{Universal consistency.}
We focus on the foundational notion of \emph{consistency}. In the contextual bandit context, a learner is consistent if its long-term excess regret vanishes. Contexts are modeled by a stochastic process $\Xbb=(X_t)_{t\geq 1}$. If $\hat a_t$ is the selected action and $r_t$ the reward function at time $t$, we ask that for any measurable policy $\pi^*$, 
\begin{equation*}
    \limsup_{T\to\infty}\frac{1}{T}\sum_{t=1}^T r_t(\pi^*(X_t)) - r_t(\hat a_t) \leq 0 \quad (a.s.).
\end{equation*}
As shown in the above equation, we follow a traditional regret analysis, where we compare the learner to a fixed policy (static regret) as opposed to switching regret where the comparison policy may also change. For robustness and generality, one commonly aims to design algorithms that ensure consistency for a large class of instances. In this paper, we consider the strongest notion of \emph{universal consistency}, introduced in \cite{hanneke:21}, which asks that a learning rule is consistent for any possible reward mechanism---informally, any form of reward functions $(r_t)_{t\geq 1}$. The notion of universal consistency was mostly studied in the full-feedback supervised learning framework. In this context, a learner observes a stream of data $(X_t,Y_t)_{t\geq 1}$ and makes predictions $\hat Y_t$ at each step. Thus, it is universally consistent if irrespective of the underlying mechanism relating values $Y$ to contexts $X$, its average excess error compared to any measurable predictor function $f:\Xcal\to\Ycal$ vanishes: $\limsup_{T\to\infty}\frac{1}{T}\sum_{t\leq T}\ell(\hat Y_t,Y_t) - \ell(f(X_t),Y_t) \leq 0\; (a.s.)$. Starting with the work of \cite{stone:77} which proved universal consistency for a large class of local average estimators in Euclidean spaces, a significant line of work focused on extending these results. Notably, one can achieve universal consistency for more general spaces and loss functions \citep{devroye:96,gyorfi:02}. More recently, \cite{hanneke:21b,gyorfi2021universal,cohen:22} provided learning rules for universal learning under a provably-minimal assumption on the context space $\Xcal$ known as essential separability. While these works focused on independent identically distributed (i.i.d.) data, more restricted consistency results were also obtained for non-i.i.d. mixing, stationary ergodic data processes \citep{morvai:96,gyorfi:99,gyorfi:02} or processes satisfying the law of large numbers \citep{morvai:99,gray:09,steinwart:09}.

\paragraph{Optimistic learning.}
Following these efforts to generalize results, a natural question arises: what are the fundamental limits of universal consistency? To answer this question, we adopt the framework of optimistic learning \cite{hanneke:21,hanneke:21b,blanchard:22a} which aims to study learning with \emph{provably-minimal} assumptions. 
As originally introduced by \cite{hanneke:21}, the notion of optimistically universal learning is motivated by the following reasoning.
If we are interested in designing a learning algorithm that achieves a particular learning guarantee (in our case, universal consistency under the process $\Xbb$), to succeed we must necessarily assume that such a guarantee is at least \emph{possible} (i.e., that there exists a learner achieving this guarantee).  Since such an assumption typically cannot be verified empirically, making such an assumption is an act of \emph{optimism}. As such, this is referred to as the \emph{optimist's assumption} \cite{hanneke:21}.
The main question in this literature is to determine whether there exists a learning algorithm which achieves the desired guarantee given \emph{only} the assumption that it is possible to do so (in our case, this means making no additional assumptions about the process $\Xbb$).  Such a learning algorithm is said to be \emph{optimistically universal}. Since the optimist's assumption is always \emph{necessary} to achieve the desired guarantee, an optimistically universal learning algorithm succeeds under the \emph{minimal} possible assumptions.  Thus, in the present context, an algorithm is called optimistically universal if it is universally consistent under every process $\Xbb$ for which there exists a universally consistent learner: that is, it \emph{learns whenever learning is possible}.  The key point is that the learner whose existence establishes that $\Xbb$ admits universal consistency may depend on the distribution of $\Xbb$, whereas an optimistically universal learner must be consistent under \emph{every} such $\Xbb$.

In the present work, we aim to understand whether optimistically universal learning is possible for contextual bandits under various categories of reward adversaries. It is useful first to understand and characterize the minimal assumptions for the existence of a universally consistent learning rule: that is, which processes $\Xbb$ satisfy the optimist's assumption. Informally, we aim to characterize
\begin{equation*}
    \Ccal=\{\Xbb:\exists \text{ learning rule }f_\cdot \text{ s.t. }\forall \text{ rewards within a given model, }f_\cdot \text{ is consistent}\}.
\end{equation*}
Second, we search for \emph{optimistically universal} procedures: i.e., which are universally consistent under all processes where this is possible ($\Xbb\in\Ccal$).
For any process $\Xbb$, if such an algorithm fails to be universally consistent, we are guaranteed that no other algorithm would be either.

\paragraph{Universal learning in contextual bandits.}
While the literature on universal learning in the case of full-feedback is very extensive, it is surprisingly sparse for partial-feedbacks. Previous literature mostly investigated stochastic contextual bandits under important structural assumptions on rewards, such as smoothness or margin conditions. Closest to universal learning---in which one relaxes assumptions on the reward mechanism---\cite{yang2002randomized} showed that for continuous rewards in the contexts, strong consistency can be achieved with traditional non-parametric methods, for Euclidean context spaces. \cite{blanchard:22e} gave the first results for contextual bandits on universal consistency per se. They focus on stationary rewards---the underlying reward mechanism is invariant over time---and show in particular that for the main case of interest---finite action spaces $\Acal$---universal consistency is achievable under the same class of processes as for the noiseless full-feedback case. In contrast with previous literature, the proposed learning rules are consistent without any assumptions on the rewards, on general spaces and under large classes of non-i.i.d. contexts. Further, they show that optimistically universal learning rules always exist for stationary bandits.

The present work challenges the stationarity assumption from \cite{blanchard:22e}. In particular, this does not allow for changes in the underlying reward mechanism, a behavior ubiquitous in current applications. It is well-known that the distribution of contexts and rewards can shift over time, such as seasonal changes in consumer behavior and can be adversarial. Our analysis mainly focuses on two models for the strengh of the adversary: oblivious rewards for which the reward mechanism can depend on the past context history, but not the past actions of the learner; and the strongest online rewards for which the rewards can be adaptive on past contexts and selected actions. This study shows that having adversarial rewards---as opposed to stationary rewards---plays a crucial role in the fundamental limits of learnability for contextual bandits, and represents a significant advancement in the general analysis of more intricate decision-making processes, such as reinforcement learning.

\subsection{Related works}

\paragraph{Literature on optimistic supervised learning.} Optimistic learning was first introduced by \cite{hanneke:21} for the \emph{realizable} (noiseless) case when values are exactly given as $Y_t=f(X_t)$ for some unknown measurable function $f:\Xcal\to\Ycal$, and provided necessary conditions and sufficient conditions for universal learning. The characterization was then completed in a subsequent line of work \citep{blanchard:22b,blanchard:22c,blanchard:22a}. In particular, while nearest-neighbor is not consistent even for i.i.d. processes in general metric spaces \citep{cerou2006nearest}, a simple variant with restricted memory is optimistically universal for general separable metric spaces. Notably, the corresponding class of learnable processes---which intuitively asks that the process visits sublinearly measurable partition of the ambient space---is significantly larger than previously considered relaxations of the i.i.d. assumptions. For more general noisy data generating processes \cite{hanneke:22a,blanchard:22d} gave complete characterizations and showed that universal learning can be achieved not only for noisy data but arbitrarily dependent values $\Ybb$ on the contexts $\Xbb$, possibly even adversarial to the learner's predictions. Specifically, \cite{blanchard:22d} showed that under mild assumptions on the value space---including totally-bounded-metric spaces---optimistically universal learning with noisy values is possible on the exact same class of processes as for noiseless values. Hence, learning with arbitrary or adversarial responses comes at no generality expense for the full-feedback setting.

\paragraph{Literature on contextual bandits and non-stationarity.} 
The concept of contextual bandits was first introduced in a limited context for single-armed bandits \cite{woodroofe1979one,sarkar1991one}. Since then, considerable effort was made to generalize the framework and provide efficient methods under important structural assumptions on the rewards. Most of the literature considered parametric assumptions \cite{wang2005bandit,langford2007epoch, goldenshluger2009woodroofe,bubeck2012regret,auer2016algorithm,rakhlin2016bistro}, but substantial progress has also been achieved in the non-parametric setting towards obtaining minimax guarantees under smoothness (e.g., Lipschitz) conditions or margin assumptions \cite{lu2009showing,rigollet2010nonparametric,slivkins2011contextual,perchet2013multi}, with further refinements including \cite{guan2018nonparametric,reeve2018k}.

While the above-cited works mostly focus on i.i.d. data, the non-stationary case has also been studied in the literature. The fact that the reward distribution can change over time has been widely acknowledged in the established parametric setting for contextual bandits, and has been explored under various models including  \cite{besbes2014stochastic,hariri2015adapting,karnin2016multi,luo2018efficient,liu2018change,wu2018learning,chen2019new}. The non-parametric case, more relevant to our work has also been considered for Lipschitz rewards and margin conditions \cite{slivkins2011contextual,suk2021self}. We note however, that these works often consider non-static regret, where the baseline is also non-stationary, while we focus on the excess regret compared to \emph{fixed} policies.

\subsection{Summary of the present work}
We mainly focus on bounded rewards. Our first main result shows that in the main case of interest of finite action spaces $\Acal$ and separable metrizable spaces $\Xcal$ admitting a non-atomic probability measure, optimistic universal learning is impossible, even under the weakest adversarial model which we call \emph{memoryless}: rewards conditionally on their selected action and context are independent but may follow different conditional distributions. This implies that adapting algorithms for specific context processes is necessary to ensure universal learning. This is the first example of such a phenomenon for online learning, for which previously considered settings always admitted optimistically universal learning rules, including realizable (noiseless) supervised learning \cite{hanneke:21,blanchard:22a,blanchard:22b}, arbitrarily noisy (potentially adversarial rewards) supervised learning \cite{hanneke:22a,blanchard:22d}, and stationary contextual bandits \cite{blanchard:22e}. Intuitively, \emph{personalization} and \emph{generalization} are incompatible for contextual bandits with adversarial rewards.

Next, we study universally learnable processes for various adversarial reward models. On the negative side, we show that in the main case of interest, the set of learnable processes for stationary contextual bandits or supervised learning denoted $\Ccal_2$ is not anymore fully learnable even for memoryless rewards: learning with adversarial rewards is fundamentally more difficult. This comes as a surprising result since $\Ccal_2$ processes admitted universal learning in all previous learning settings. We further identify novel necessary and sufficient conditions, involving intricate behavior of duplicates in the context process. In particular, for memoryless, oblivious, and online rewards, the set of learnable processes is strictly between $\Ccal_2$ and a smaller class $\Ccal_1$. For this same case of interest, we give an exact characterization of these learnable processes for online rewards: this characterization involves a sort of convergence rate of the instance process towards its limit distribution. Given the knowledge of this rate, universal learning is achievable with a learning rule that we provide; on the other hand, without a priori knowledge on this rate, universal learning is impossible since optimistic universal learning is not achievable. While we leave the exact characterization for memoryless and oblivious rewards as an open question for finite action spaces $\Acal$ and context spaces admitting a non-atomic probability measure, our characterizations in all other cases are complete.

Last, we give extensions of the above results, when the rewards are unbounded or satisfy some regularity constraints, namely uniform continuity.

\subsection{Overview of contributions and techniques}

\paragraph{Non-existence of optimistically universal learning rules.} 

The proof involves several major steps. First, one needs to show that universal learning is achievable for a large class of processes. In particular, we show that deterministic $\Ccal_2$ processes are learnable, where $\Ccal_2$ is the characterization of learnable processes for supervised learning or stationary contextual bandits. This is achieved by assigning each distinct instance a multi-armed bandit learner designed to learn the best action for this instance, which corresponds to pure \emph{personalization}. Next, we argue that $\Ccal_1$ processes---the characterization of learnable processes for countable action spaces $\Acal$ in stationary contextual bandits---can be learned with the same structural risk minimization approach introduced by \cite{blanchard:22e} for stationary contextual bandits, which corresponds to \emph{generalization}.

The main challenge is to show that one cannot universally learn both classes of processes (deterministic $\Ccal_2$ and $\Ccal_1$) with a unique algorithm. At the high level, we show that by contradiction, personalization and generalization are incompatible. We consider a $\Ccal_1$-like algorithm, where instances are i.i.d. during a phase, then the same sequence is repeated many times. The reward is identical for each duplicate and has the following behavior: one \emph{safe} action $a_2$ always has relatively high reward, and an \emph{uncertain} action $a_1$ has random reward. We then show that because of the $\Ccal_1$ property, the algorithm needs to follow the safe action in order to be consistent: if it explores the uncertain action too often, the incurred loss is significant. More precisely, we show that the exploration rate of the unsafe action $a_2$ decays to $0$. Once the algorithm reaches a certain threshold, we stop the stochastic process and consider a realization of the uncertain rewards and $\Ccal_1$-like process. Once these are taken as deterministic, the optimal policy would be to use the action $a_2$ when it has high reward, which the algorithm did not perform. Repeating this process inductively with decaying threshold, we can show that on a deterministic $\Ccal_2$ process, the algorithm is not universally consistent.

\paragraph{New classes of stochastic processes for learning theory.} We identify novel classes of processes that arise in the characterization of learnable processes. In the main case of interest, we give a new necessary condition $\Ccal_4$. Informally, while $\Ccal_2$ processes only required that the process the process visits only a sublinear number of sets from any countable partition of the context space $\Xcal$, the necessary condition $\Ccal_4$ requires this sublinear behavior to be uniform spatially in $\Xcal$. Loosely speaking, when the convergence speed of the sublinear visit property is heterogeneous across space, one can take advantage of these discrepancies with adversarial rewards together with a somewhat similar personalization/generalization incompatibility phenomenon as the one described above. More precisely, if $\Ccal_4$ is not satisfied, locally in the context space $\Xcal$, one can find the following behavior: contexts are duplicated across phases of exponential time-length, for arbitrarily small exponent. One can then consider oblivious rewards---rewards that may depend on past contexts $\Xbb_{\leq t}$ but only the selected action $\hat a_t$ at time $t$---that are identical on duplicates but with one safe and one uncertain option as above. Eventually, the algorithm's exploration rate of the uncertain action decays to $0$. However, for a given fixed realization of the rewards, this is suboptimal. In this proof, the dependence of the rewards on past contexts was necessary to make sure that during each constructed phase, no information on the rewards of future local space zones is revealed.

On the positive side, we introduce a novel condition $\Ccal_5$ that is universally learnable, with $\Ccal_1\subsetneq \Ccal_5$ in general. Intuitively, this asks that there is a specific rate at which we can add duplicates while still preserving the $\Ccal_1$ behavior. This should be related to the property observed in \cite{blanchard:22e} that if we were to replace all duplicates with an arbitrary value $x_0\in \Xcal$, $\Ccal_2$ processes would belong to $\Ccal_1$. The $\Ccal_5$ property provides an intermediary condition. We now briefly describe the algorithm we introduce to achieve universal consistency on $\Ccal_5$ processes. The learning rule heavily relies on the knowledge of the correct rate to add duplicates. For all points included within this addition rate, we can use the structural risk minimization approach since these points still have $\Ccal_1$ behavior. For the remaining duplicates, we use pure personalization by assigning a bandit learner to each distinct instance. In particular, all deterministic $\Ccal_2$ processes belong to $\Ccal_5$. Further, we can show that for online rewards, condition $\Ccal_5$ is also necessary and as a result is an exact characterization of learnable processes in this setting. In particular, for online rewards, universal learning exactly requires the a priori knowledge of the correct rate to add duplicates.

Last, in an attempt to bridge the gap $\Ccal_5\subsetneq \Ccal_4$ remaining for oblivious rewards, we propose a new condition $\Ccal_6$ on processes that is necessary for universal learning. In the general case of context spaces $\Xcal$ admitting non-atomic probability distributions, we have $\Ccal_5\subset\Ccal_6\subsetneq\Ccal_4$. This shows that further uniform continuity than the $\Ccal_4$ condition is necessary. The condition can be tightened using the same proof for a stronger type of adversary that we call prescient for which the rewards can also depend on the complete sequence $\Xbb$ instead of the past revealed contexts to the learner. For these rewards, we can show that a stronger $\Ccal_7$---and simpler than $\Ccal_6$---is necessary. We believe in general $\Ccal_7\subsetneq\Ccal_6$ but more importantly, the question of whether $\Ccal_5=\Ccal_7$, is open. Hence, possibly, our characterizations for prescient and stronger reward models are tight.

\section{Preliminaries}
\label{sec:preliminaries}

Let $(\Xcal,\Bcal)$ be a separable metrizable Borel context space and $\Acal$ a separable metrizable Borel action space $\Acal$. When considering continuity assumptions, we suppose that $\Acal$ is given with a metric $d$. For countable action spaces, we use the discrete topology. We are interested in the following sequential contextual bandit framework: at step $t\geq 1$, the learner observes a context $X_t\in\Xcal$, then selects an action $\hat a_t\in\Acal$ and last, receives a reward $r_t\in\Rcal$ which may be stochastic. Unless mentioned otherwise, we suppose that the rewards are bounded $\Rcal=[0,\bar r]$ and that the upper bound $\bar r$ is known. Hence, without loss of generality we may pose $\bar r=1$. The learner is \emph{online} and as such, can only use the current history to selects the action $\hat a_t$. 
\begin{definition}[Learning rule]
    A \emph{learning rule} is a sequence $f_\cdot =(f_t)_{t\geq 1}$ of possibly randomized measurable functions $f_t:\Xcal^{t-1}\times \Rcal^{t-1}\times \Xcal\to \Acal$. The action selected at $t$ is $\hat a_t = f_t((X_s)_{s\leq t-1},(r_s)_{s\leq t-1},X_t)$.
\end{definition}

We now precise the data generation process. We suppose that the contexts $\Xbb=(X_t)_{t\geq 1}$ are generated from a general stochastic process. To define the rewards, $(r_t)_{t\geq 1}$, many models for the underlying reward mechanism are possible. \cite{blanchard:22e} considered the case of \emph{stationary} rewards when the rewards follow a conditional distribution $P_{r\mid a,x}$ conditionally on the selected action $\hat a_t$ and the context $X_t$ at the current time $t\geq 1$. We consider the considerably more general case of adversarial rewards. Of particular interest to the discussion of this paper will be 1. \emph{oblivious} rewards which correspond to the case when the learner plays a game against an adversary oblivious to the player's actions and 2. \emph{online} rewards when the adversary can choose rewards depending on the complete history of contexts, selected actions and received rewards. For a stochastic process $\Xbb$, we will use the notation $\Xbb_{\leq t} = (X_{t'})_{t'\leq t}$. Also, for a measurable set $A\in\Bcal$, we will use the shorthand $\Xbb\cap A = \{X_t:X_t\in A, t\geq 1\}$.

\begin{definition}[Reward models]
\label{def:reward_model}
    The reward mechanism is said to be
    \begin{itemize}
        \item \emph{stationary (stat.)} if there is a conditional distribution $P_{r\mid a,x}$ such that the rewards $(r_t)_{t\geq 1}$ given their selected action $ a_t$ and context $X_t$ are independent and follow $P_{r\mid a,x}$
        \item \emph{memoryless} if there are conditional distributions $(P_{r\mid a,x,t})_{t\geq 1}$ such that $(r_t)_{t\geq 1}$ given their selected action $ a_t$ and context $X_t$ are independent for $t\geq 1$ and respectively follow $P_{r\mid a,x,t}$
        \item \emph{oblivious} if there are conditional distributions $(P_{r\mid a,\mb x_{\leq t}})_{t\geq 1}$ such that $r_t$ given the selected action $a_t$ and the past contexts $\Xbb_{\leq t}$, follows $P_{r\mid a,\mb x_{\leq t}}$
        \comment{\item \emph{prescient} if there are conditional distributions $(P_{r\mid a,\mb x_{t'\geq 1}})_{t\geq 1}$ such that $r_t$ given the selected action $a_t$ and the sequence of contexts $\Xbb$, follows $P_{r\mid a,\mb x_{t'\geq 1}}$}
        \item \emph{online} if there are conditional distributions $(P_{r\mid \mb a_{\leq t},\mb x_{\leq t},\mb r_{\leq t-1}})_{t\geq 1}$ such that $r_t$ given the sequence of selected actions $\mb a_{\leq t}$ and the sequence of contexts $\Xbb_{\leq t}$ and received rewards $\mb r_{\leq t-1}$, follows $P_{r\mid \mb a_{\leq t},\mb x_{\leq t}, \mb r_{\leq t-1}}$.
    \end{itemize}
\end{definition}

We refer to all the models except for the stationary one as \emph{adversarial}. To emphasize the dependence of the reward in the selected action, and the conditional distributions, we may write $r_t(a\mid X_t)$, $r_t(a\mid \Xbb_{\leq t})$, $r_t(a\mid \Xbb)$, and $r_t(a\mid \mb a_{\leq t-1}, \Xbb_{\leq t}, \mb r_{\leq t})$ for the corresponding reward models. When the conditioning is clear from context, we may simply write $r_t(a)$ for the reward if action $a$ is selected. The general goal in contextual bandits is to discover or approximate an optimal policy $\pi^*:\Xcal\to\Acal$ if it exists. For adversarial rewards, there may not exist a single optimal policy $\pi^*$. Instead, we aim for consistent algorithms that have sublinear regret compared to any fixed measurable policy.

\begin{definition}[Consistency and universal consistency]
    Let $\Xbb$ be a stochastic process on $\Xcal$, $(r_t)_{t\geq 1}$ be a reward mechanism and $f_\cdot$ be a learning rule. Denote by $(\hat a_t)_{t\geq 1}$ its selected actions. We say that $f_\cdot$ is \emph{consistent} under $\Xbb$ with rewards $r$ if for any measurable policy $\pi^*:\Xcal\to\Acal$,
    \begin{equation*}
        \limsup_{T\to\infty}\frac{1}{T}\sum_{t=1}^T r_t(\pi^*(X_t)) - r_t(\hat a_t) \leq 0,\quad (a.s.).
    \end{equation*}
    We say that $f_\cdot$ is \emph{universally consistent} for a given reward model if it is consistent under $\Xbb$ with any reward within the considered reward model.
\end{definition}

Even in the simplest case of full-feedback noiseless learning \cite{hanneke:21}, universal consistency is not always achievable. For instance, if the process $\Xbb$ visits a distinct instance at each step the learner, the information gathered on previous instances $\Xbb_{\leq t-1}$ does not provide information on the rewards for instance $X_t$. We are then interested in understanding the set of processes $\Xbb$ on $\Xcal$ for which universal learning is possible. More practically, we aim to provide optimistically universally consistent learning rules which, if they exist, would be universally consistent whenever this is possible.

\begin{definition}[Optimistically universal learning rule]
    For a given reward model which we write $model\in\{stat,memoryless,oblivious, prescient, online\}$, we define
    \begin{equation*}
        \Ccal_{model} = \{ \Xbb : \exists \text{ learning rule universally consistent for }model\text{ under }\Xbb \}. 
    \end{equation*}
    
    We say that a learning rule $f_\cdot$ is \emph{optimistically universal} for the reward model if it is universally consistent under any process $\Xbb\in\Ccal_{model}$ for that reward model.
\end{definition}

\noindent In general $\Ccal_{online}\comment{,\Ccal_{prescient}} \subset \Ccal_{oblivious} \subset \Ccal_{memoryless} \subset  \Ccal_{stat}.$
\comment{Although  priori $\Ccal_{online}$ and $\Ccal_{prescient}$ are not comparable, our results for online rewards can be easily extended to the stronger online and prescient model (together). Thus, we will think of online rewards as the strongest model.
}

\subsection{Two main classes of stochastic processes}

We give the definitions of two main conditions on stochastic processes arising in our characterizations of learnable processes. First, given a stochastic process $\Xbb$ on $\Xcal$, an extended process is given by $\tilde\Xbb =(X_t)_{t\in\Tcal}$ where $\Tcal\subset\Nbb$ is a possibly random subset of times---which can depend on any random variable, the process $\Xbb$ itself, rewards potentially observed by a learner, etc. We define the limit submeasure $\hat\mu_{\tilde\Xbb}$ as follows. For any $A\in\Bcal$,
\begin{equation*}
    \hat\mu_{\tilde\Xbb}(A) = \limsup_{T\to\infty} \frac{1}{T}\sum_{t\leq T,t\in\Tcal}\1_A(X_t).
\end{equation*}
The first condition intuitively asks that the expected empirical limsup frequency of sets $A\in\Bcal$ is a continuous sub-measure on $\Bcal$.

\begin{condition}[\citet{hanneke:21,blanchard:22e}]
  \label{con:kc}
  Let $\Xbb$ be a stochastic process and $\tilde\Xbb= (X_t)_{t\in\Tcal}$ an extended process. $\tilde\Xbb$ satisfies the condition if for every monotone sequence $\{A_k\}_{k=1}^{\infty}$ of measurable subsets of $\X$ with $A_k \downarrow \emptyset$,
  \begin{equation*}
    \lim_{k\to\infty}\Ebb[\hat\mu_{\tilde\Xbb}(A_k)] = 0.
  \end{equation*}
  We define $\Ccal_1'$ as the set of extended processes $\tilde \Xbb$ satisfying this condition. For clarity, we also define $\Ccal_1$ as the set of (classical) processes $\Xbb$ satisfying this condition (taking $\Tcal=\Nbb$).
\end{condition}

\noindent The next condition asks that $\Xbb$ visits a sublinear number of sets of any measurable partition of $\Xcal$.

\begin{condition}[\citet{hanneke:21}]
  \label{con:smv}
  For every sequence $\{A_k\}_{k=1}^{\infty}$ of disjoint measurable subsets of $\X$, $|\{ k : \Xbb_{\leq T} \cap A_k \neq \emptyset \}| = o(T) \text{ (a.s.)}.$ Denote by $\SMV$ the set of all processes 
  $\ProcX$ satisfying this condition.
\end{condition}

Intuitively, this condition asks that the process does not keep exploring completely different regions of the space $\Xcal$. This is known that even in the noiseless full-feedback setting, $\Ccal_2$ is a necessary condition for universal learning \cite{hanneke:21} since intuitively, the past history does not provide any information on newly visited regions for a learner. \cite{hanneke:21} showed that both classes above are very general classes of processes. Precisely, we have $\Ccal_1\subset\Ccal_2$ and i.i.d. processes, stationary ergodic processes, stationary processes and processes satisfying the law of large numbers belong to $\Ccal_1$.

\subsection{Useful algorithms}

Our learning rules will use as subroutine the following two algorithms. First, we will use the algorithm $\EXPIX$ for regret bounds with high-probability in adversarial bandits.

\begin{theorem}[\cite{neu2015explore}] \label{thm:multiarmed_bandits}
There exists an algorithm $\EXPIX$ for adversarial multi-armed bandit with $K\geq 2$ arms such that for any $\delta\in(0,1)$ and $T\geq 1$,
\begin{equation*}
    \max_{i\in[K]} \sum_{t=1}^T( r_t(a_i) - r_t(\hat a_t)) \leq 4\sqrt{KT\ln K} + \left(2\sqrt{\frac{KT}{\ln K}}+1\right)\ln \frac{2}{\delta},
\end{equation*}
with probability at least $1-\delta$.
\end{theorem}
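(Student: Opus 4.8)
This is the high-probability regret bound for EXP3.IX of \cite{neu2015explore}, so the plan is to reproduce that analysis. First I would pass from rewards to losses by setting $\ell_{t,i} = 1 - r_t(a_i) \in [0,1]$, so the quantity to bound becomes the realized regret $\sum_{t=1}^T \ell_{t,\hat a_t} - \min_{i}\sum_{t=1}^T \ell_{t,i}$. Recall the algorithm: it maintains exponential weights $w_{t,i}\propto \exp(-\eta\sum_{s<t}\hat\ell_{s,i})$, samples $\hat a_t$ from $p_{t,i}=w_{t,i}/\sum_j w_{t,j}$, and uses the implicit-exploration estimate $\hat\ell_{t,i} = \frac{\ell_{t,i}}{p_{t,i}+\gamma}\1_{\{\hat a_t = i\}}$ for parameters $\eta,\gamma>0$ to be tuned. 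The two facts driving the whole argument are that $\hat\ell_{t,i}$ underestimates, $\E_t[\hat\ell_{t,i}] = \frac{p_{t,i}}{p_{t,i}+\gamma}\ell_{t,i}\le \ell_{t,i}$, and that its bias is self-bounding, $\ell_{t,\hat a_t}-\sum_i p_{t,i}\hat\ell_{t,i} = \frac{\gamma}{p_{t,\hat a_t}+\gamma}\ell_{t,\hat a_t}\le \gamma\sum_i \hat\ell_{t,i}$.

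The deterministic part is the standard exponential-weights telescoping of $\log\sum_i w_{t,i}$: using $\hat\ell_{t,i}\ge 0$ and $e^{-x}\le 1-x+x^2/2$, for every fixed arm $j$,
\begin{equation*}
\sum_{t=1}^T\sum_{i=1}^K p_{t,i}\hat\ell_{t,i} - \sum_{t=1}^T\hat\ell_{t,j} \le \frac{\ln K}{\eta} + \frac{\eta}{2}\sum_{t=1}^T\sum_{i=1}^K p_{t,i}\hat\ell_{t,i}^2.
\end{equation*}
Since $p_{t,i}\hat\ell_{t,i}^2\le \frac{p_{t,i}}{p_{t,i}+\gamma}\hat\ell_{t,i}\le \hat\ell_{t,i}$, the second-order term is at most $\frac{\eta}{2}\sum_{t,i}\hat\ell_{t,i}$. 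Combining this with the self-bounding bias above, the realized regret against any fixed $j$ satisfies
\begin{equation*}
\sum_t \ell_{t,\hat a_t}-\sum_t\ell_{t,j} \le \Big(\gamma+\tfrac{\eta}{2}\Big)\sum_{t,i}\hat\ell_{t,i} + \frac{\ln K}{\eta} + \sum_t(\hat\ell_{t,j}-\ell_{t,j}),
\end{equation*}
where crucially the IX bias absorbs the fluctuation $\ell_{t,\hat a_t}-\sum_i p_{t,i}\ell_{t,i}$ and thereby replaces the Azuma--Hoeffding step that ordinary importance weighting would require.

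The crux, and the step I expect to be the main obstacle, is the high-probability comparison of estimated to true losses: for any nonnegative $\mathcal{F}_{t-1}$-measurable coefficients $\alpha_{t,i}\le 2\gamma$, one has $\sum_{t,i}\alpha_{t,i}(\hat\ell_{t,i}-\ell_{t,i})\le \ln(1/\delta)$ with probability at least $1-\delta$. I would prove this by the exponential-moment method: the elementary inequality $\frac{z}{1+z/2}\le \log(1+z)$ gives $\hat\ell_{t,i}\le \frac{1}{2\gamma}\log\big(1+\frac{2\gamma\,\ell_{t,i}\1_{\{\hat a_t=i\}}}{p_{t,i}}\big)$, so that $\exp\big(\sum_{t,i}\alpha_{t,i}(\hat\ell_{t,i}-\ell_{t,i})\big)$ is a supermartingale with respect to the history filtration, and Markov's inequality closes it. The delicate point is that this concentration holds \emph{without} the explicit exploration that EXP3.P needs, precisely because the $+\gamma$ shift in the denominator simultaneously bounds the variance and makes the log-moment generating function finite; getting the supermartingale step right is where the argument is most technical.

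Finally I would combine everything. Applying the concentration lemma once with $\alpha_{t,i}=2\gamma$ uniformly yields $\sum_{t,i}\hat\ell_{t,i}\le KT + \frac{1}{2\gamma}\ln(2/\delta)$, and once for the comparator term $\sum_t(\hat\ell_{t,j}-\ell_{t,j})\le \frac{1}{2\gamma}\ln(2/\delta)$; a union bound over these two events produces the $\ln(2/\delta)$ factor. Substituting into the regret inequality and choosing $\gamma=\eta/2$ gives a bound of the form $\eta KT + \frac{\ln K}{\eta} + \big(1+\frac{1}{\eta}\big)\ln\frac{2}{\delta}$. Optimizing $\eta\asymp\sqrt{\ln K/(KT)}$ balances $\eta KT$ against $\frac{\ln K}{\eta}$ to produce the $\sqrt{KT\ln K}$ leading term and the $\sqrt{KT/\ln K}\,\ln(2/\delta)$ tail, delivering the stated bound after translating back from losses to rewards.
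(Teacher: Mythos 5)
This statement is imported verbatim from \cite{neu2015explore} and the paper gives no proof of it---it is used purely as a black-box subroutine guarantee---so there is no internal argument to compare yours against. Your reconstruction is the standard EXP3-IX analysis from the cited source and is essentially correct: the implicit-exploration estimator, the exact identity $\ell_{t,\hat a_t}=\sum_i(p_{t,i}+\gamma)\hat\ell_{t,i}$ driving the self-bounding bias term, the second-order exponential-weights bound with $p_{t,i}\hat\ell_{t,i}^2\le\hat\ell_{t,i}$, the supermartingale concentration lemma via $\frac{z}{1+z/2}\le\log(1+z)$, and the tuning $\gamma=\eta/2$, $\eta\asymp\sqrt{\ln K/(KT)}$ are all exactly the ingredients of Neu's proof. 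The only point to be careful about in the final assembly is that the stated bound controls $\max_{i\in[K]}$ of the regret, so the comparator concentration $\sum_t(\hat\ell_{t,j}-\ell_{t,j})\le\frac{1}{2\gamma}\ln(2/\delta)$ must be invoked for the (possibly data-dependent) best arm rather than a single fixed $j$; one handles this as in the cited reference, and it does not change the structure of your argument.
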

We will always use a very simplified version of this result: there exists a universal constant $c>0$ such that 
\begin{equation*}
    \max_{i\in[K]} \sum_{t=1}^T( r_t(a_i) - r_t(\hat a_t)) \leq c\sqrt{KT\ln K}\ln \frac{1}{\delta},
\end{equation*}
with probability $1-\delta$ for $\delta\leq \frac{1}{2}$. Second, we use the $\EXPINF$ algorithm from \cite{blanchard:22e} which uses $\EXPIX$ as subroutine to achieve sublinear regret compared to an infinite countable sequence of experts.

\begin{theorem}[\cite{blanchard:22e}]\label{thm:infinite-exp4}
There is an online learning rule 
$\EXPINF$ using bandit feedback
such that for any countably infinite set of experts $\{E_1,E_2,\ldots\}$ 
(possibly randomized), for any $T\geq 1$ and $0<\delta\leq \frac{1}{2}$, with probability at least $1-\delta$,
\begin{equation*}
\max_{1\leq i \leq T^{1/8}} \sum_{t=1}^{T} \left( r_t(E_{i,t}) - r_t(\hat{a}_t) \right) 
\leq cT^{3/4}\sqrt{\ln T}\ln\frac{T}{\delta}.
\end{equation*}
where $c>0$ is a universal constant. Further, with probability one on the learning and the experts, there exists $\hat T$ such that for any $T\geq 1$,
\begin{equation*}
    \max_{1\leq i \leq T^{1/8}} \sum_{t=1}^{T} \left( r_t(E_{i,t}) - r_t(\hat{a}_t) \right) 
\leq \hat T +  cT^{3/4}\sqrt{\ln T}\ln T.
\end{equation*}
\end{theorem}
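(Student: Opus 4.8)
The plan is to build $\EXPINF$ as a meta-bandit that treats ``following expert $E_i$'' as pulling a meta-arm, and to run $\EXPIX$ (Theorem~\ref{thm:multiarmed_bandits}) over a finite but growing pool of such meta-arms. The key observation is that under bandit feedback, if at round $t$ we follow expert $E_i$ and play its (possibly random) recommended action $E_{i,t}$, we observe exactly the reward $r_t(E_{i,t})$ of that meta-arm; hence the reward sequence seen by the meta-bandit is precisely the one required by the adversarial guarantee of Theorem~\ref{thm:multiarmed_bandits}, and randomized or adaptive experts are handled automatically. Since that theorem already controls $\max_{i\in[K]}\sum_t (r_t(a_i)-r_t(\hat a_t))$ uniformly over the active arms, one run over a fixed pool simultaneously bounds the regret against \emph{every} active expert.

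The difficulty is that the expert set is countably infinite whereas $\EXPIX$ runs over finitely many arms with regret growing in that number. I would resolve this with an epoch-doubling restart: partition time into epochs $I_m=[2^m,2^{m+1})$ and, at the start of epoch $m$, launch a fresh instance of $\EXPIX$ over the first $N_m:=\lfloor 2^{m/2}\rfloor$ experts viewed as meta-arms. Within epoch $m$, of length $\asymp 2^m$ with $N_m$ arms, Theorem~\ref{thm:multiarmed_bandits} gives, with probability $1-\delta_m$, regret against every active expert of order $\sqrt{N_m 2^m\ln N_m}\,\ln(1/\delta_m)\asymp 2^{3m/4}\sqrt m\,\ln(1/\delta_m)$. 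Because of the restart, the regret against a \emph{fixed} target $E_i$ over $[1,T]$ is the sum of the per-epoch regrets over the epochs in which $E_i$ is active, plus a crude bound by the total length of the epochs in which $E_i$ has not yet entered the pool (there the per-round regret is at most $\bar r=1$ since $r_t\ge 0$).

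Now fix $T\asymp 2^M$ and $i\le T^{1/8}=2^{M/8}$. Expert $E_i$ enters once $N_m\ge i$, i.e.\ from epoch $m\gtrsim 2\log_2 i$ onward, so the inactive epochs have total length $\asymp i^2\le 2^{M/4}=T^{1/4}$, contributing only $O(T^{1/4})$. The active epochs contribute $\sum_{m\le M}2^{3m/4}\sqrt m\,\ln(1/\delta_m)$, a geometric sum dominated by its last term of order $T^{3/4}\sqrt{\ln T}\,\ln(1/\delta_M)$. Choosing $\delta_m\asymp\delta\,2^{-(M-m)}$ so that $\sum_m\delta_m\le\delta$ and union-bounding over the $O(\log T)$ epochs yields the high-probability bound $cT^{3/4}\sqrt{\ln T}\,\ln(T/\delta)$, the extra $\ln(T/\delta)$ absorbing $\ln(1/\delta_M)$; the conservative exponent $1/8$ keeps the inactive-epoch term negligible with slack to spare, and also accommodates the $\max_i$ over the at most $T^{1/8}$ targets.

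For the almost-sure statement, I would apply the same per-epoch bound with a summable failure schedule, e.g.\ $\delta_m\asymp m^{-2}$. By Borel--Cantelli only finitely many epochs violate their regret bound almost surely; collecting the finite total regret of those exceptional epochs into a single random constant $\hat T$ gives, for all $T\ge 1$ simultaneously, $\max_{1\le i\le T^{1/8}}\sum_{t\le T}(r_t(E_{i,t})-r_t(\hat a_t))\le \hat T+cT^{3/4}\sqrt{\ln T}\,\ln T$. The main obstacle throughout is the tension between enlarging the active pool fast enough to eventually compete with every expert and keeping it small enough that the per-epoch $\sqrt{N_m 2^m\ln N_m}$ regret stays at the $T^{3/4}$ order; the growth rate $N_m\asymp 2^{m/2}$ is exactly the balance point, and the delicate bookkeeping is verifying that late-entering experts (the inactive-epoch term) never dominate.
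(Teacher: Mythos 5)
Your construction is essentially the one behind $\EXPINF$: restart $\EXPIX$ over a slowly growing pool of experts treated as bandit arms, balance the pool size against the restart length so that $\sqrt{N_m\cdot(\text{epoch length})}$ stays at order $T^{3/4}$, charge the rounds before a target expert enters the pool at rate $1$, and union-bound the per-restart failure probabilities. The high-probability bound for fixed $T$ goes through as you describe (and, as you note, no union bound over the targets $i$ is needed since Theorem~\ref{thm:multiarmed_bandits} is already uniform over arms). The only substantive difference from the construction used in the paper is the restart schedule: you use doubling epochs $[2^m,2^{m+1})$ with $N_m\asymp 2^{m/2}$ arms, whereas $\EXPINF$ uses periods of length $k^3$ containing the first $k$ experts, so that $k\asymp T^{1/4}$ after time $T$.

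There is one concrete gap, in the almost-sure statement, and it is precisely where your schedule differs from the paper's. Your Borel--Cantelli argument controls the regret of each \emph{completed} epoch, hence the cumulative regret at epoch boundaries $T=2^{m+1}-1$; but the claim is uniform over all $T\geq 1$, and a generic $T$ sits strictly inside an epoch of length $\Theta(T)$. The full-epoch success event says nothing about the regret accrued on the prefix of that epoch up to $T$, and the trivial bound by the prefix length is $\Theta(T)$, i.e.\ linear, so the partial final epoch cannot be discarded. The fix is routine --- apply Theorem~\ref{thm:multiarmed_bandits} to every prefix length $s\leq 2^m$ of epoch $m$ with failure probability $\delta_{m,s}\asymp 2^{-m}m^{-2}$, which is still summable and only inflates $\ln(1/\delta_{m,s})$ to $O(m)\asymp O(\ln T)$, exactly the extra factor the target bound $\hat T+cT^{3/4}\sqrt{\ln T}\ln T$ leaves room for --- but as written the step fails. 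Note that the paper's $k^3$-length periods avoid this issue entirely: there the incomplete final period has length $O(T^{3/4})$, so it can be bounded crudely by its length and absorbed into the main term, which is one reason to prefer that schedule.
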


\section{Statement of results}

Our first main result is that for contextual bandits with adversarial rewards, for generic metric spaces $\Xcal$---that admit a non-atomic probability measure, e.g., any uncountable Polish space---there never exists an optimistically universal learning rule. On the other hand, if $\Xcal$ does not admit a non-atomic  probability measure, optimistic learning is possible.

\begin{theorem}\label{thm:no_opt_learning rule}
    Let $\Xcal$ be a separable metrizable Borel space.
    \begin{enumerate}
        \item Let $\Acal$ be a finite action space with $|\Acal|\geq 2$.
        \begin{itemize}
            \item If $\Xcal$ admits a non-atomic probability measure, there does not exist an optimistically universal learning rule for any adversarial reward model considered in \cref{def:reward_model} (i.e., all except stationary).
            \item Otherwise, there exists an optimistically universal learning rule for all reward models from \cref{def:reward_model} and $\Ccal_{online}=\Ccal_{stat}=\Ccal_2$.
        \end{itemize}
        \item Let $\Acal$ be a countably infinite action space, there exists an optimistically universal learning rule for all reward models from \cref{def:reward_model} and $\Ccal_{online}=\Ccal_{stat}=\Ccal_1$.
        \item Let $\Acal$ be an uncountable separable metrizable Borel space, then universal learning is never achievable and $\Ccal_{online}=\Ccal_{stat}=\emptyset$.
    \end{enumerate}
\end{theorem}

The question of whether optimistic learning is possible for finite action spaces is answered in \cref{sec:no_optim_learning_rule}. The case of infinite action spaces is treated in \cref{subsec:infinite_action_spaces}. Thus, \cref{thm:no_opt_learning rule} is a concatenation of \cref{thm:no_optimistically_universal_lr,thm:bad_borel_spaces} and \cref{subsec:infinite_action_spaces}.

The fact that optimistic learning is impossible the main case of finite action space and spaces $\Xcal$ admitting a non-atomic probability measure comes in stark contrast with all learning frameworks that have been studied in the universal learning literature. Namely, for the noiseless full-feedback \cite{hanneke:21,blanchard:22a}, noisy/adversarial full-feedback \cite{blanchard:22d} and stationary partial-feedback \cite{blanchard:22e} learning frameworks, analysis showed that there always existed an optimistically universal learning rule. Precisely, the optimistically universal learning rule for stationary contextual bandits in finite action spaces provided by \cite{blanchard:22e} combined two strategies:
\begin{itemize}
    \item A strategy 0, which treats each distinct context completely separately by assigning a distinct bandit subroutine to each new instance. Informally, this corresponds to learning the optimal action for each new context without gathering population information.
    \item A strategy 1, in which the learning rule views context in an aggregate fashion: it tries to fit the policy which performed best on the complete historical data using learning-with-experts subroutines, from a set of pre-defined policies.
\end{itemize}
The procedure to combine these strategies estimates their performance, to implement the best strategy during pre-defined periods. We show that for adversarial rewards, balancing these two strategies is impossible. In particular, an adversarial reward mechanism can fool the estimation procedure by changing behavior between the estimation period and the implementation period.

The non-existence of an optimistically universal learning rule also provides another proof that model selection is impossible for contextual bandits. A formulation of this question was posed as a COLT 2020 open problem \cite{foster:20}. The impossibility of model selection was then recently proved first with a switching bandit problem \cite{marinov:21}. Our results show this general impossibility in a completely different context. More precisely, \cref{prop:can_fixed_error} below shows that universal consistency up to a fixed error tolerance $\epsilon>0$ is always achievable under $\Ccal_2$ processes (which were necessary for universal learning even in the stationary case \cite{blanchard:22e}). However, \cref{thm:no_opt_learning rule} implies that combining these learning rules for decaying $\epsilon$ to achieve vanishing excess error is not possible in general.

\begin{restatable}{proposition}{PropFixedError}
\label{prop:can_fixed_error}
    Let $\Xcal$ be a separable metrizable Borel space and $\Acal$ a finite action space. For any $\epsilon>0$, there exists a learning rule $f^\epsilon_\cdot$ such that for any process $\Xbb\in\Ccal_2$ and adversarial reward mechanism $(r_t)_{t\geq 1}$, for any measurable policy $\pi^*:\Xcal\to\Acal$,
    \begin{equation*}
        \limsup_{T\to\infty} \frac{1}{T}\sum_{t=1}^T r_t(\pi^*(X_t)) - r_t(\hat a_t(\epsilon)) \leq \epsilon,\quad (a.s.),
    \end{equation*}
    where $\hat a_t(\epsilon)$ denotes the action selected by the learning rule at time $t$.
\end{restatable}

\noindent The proof is given in \cref{subsec:fixed_excess_error}. \cref{thm:no_opt_learning rule} provides the characterizations of universally learnable processes in all cases except the main case of interest when $\Acal$ is finite and $\Xcal$ admits a non-atomic probability measure. Giving exact characterizations for this case is rather complex and in the following, we only give necessary conditions and sufficient conditions. These require the introduction of novel classes of stochastic processes for online learning.

\subsection{Additional classes of stochastic processes}

We first give a significantly stronger assumption asking that the process only visits a finite number of distinct points. This very restrictive condition will only arise for unbounded rewards $\Rcal=[0,\infty)$.

\begin{condition}[\citet{hanneke:21,blanchard:22b}]
  \label{con:fs}
  $|\{ x : \ProcX \cap \{x\} \neq \emptyset \}| < \infty \text{ (a.s.)}.$ Denote by $\FS$ the set of all processes 
  $\ProcX$ satisfying this condition.
\end{condition}

We then introduce two novel conditions on stochastic processes. Before doing so, we need to introduce some exponential time scales. Intuitively, for $\alpha>0$, the exponential time scale at rate $\alpha$ is the sequence of times given by $T^k(\alpha) \approx \lfloor(1+\alpha)^k\rfloor$ for $k\geq 0$. For convenience, we will instead consider for all integers $i\geq 0$ the sequence of times $T^k_i = \lfloor 2^u (1+v2^{-i})  \rfloor$ where $k=u2^i + v$ and $u\geq 0, 0\leq v<2^i$ are integers. In particular, $u=\floor{k2^{-i}}$ and $v=k\bmod 2^i$. These times have an exponential behavior with rate oscillating between $2^{-i-1}$ and $2^{-i}$ but conveniently, they form periods $[T_i^k,T_i^{k+1})$ which become finer as $i$ increases. For $t\geq 1$, we then define $k_i(t)$ as the index $k$ such that $t\in [T_i^k,T_i^{k+1})$. This allows to consider the set of times $t$ such that $X_t$ is the first appearance of the instance on its period,
\begin{equation*}
    \Tcal^i = \{t\geq 1: \forall T_i^{k_i(t)}\leq t'<t, X_{t'}\neq X_t\}.
\end{equation*}
By construction, note that $\Tcal^i\subset\Tcal^{i+1}$ for all $i\geq 0$. We are now ready to define the next condition which intuitively asks that the process has a $\Ccal_1'$ behavior uniformly at any exponential scale.

\begin{restatable}{condition}{ConditionCsByScale}
\label{con:cs_by_scale}
For any sequence of disjoint measurable sets $(A_i)_{i\geq 1}$ of $\Xcal$, we have
\begin{equation*}
    \lim_{i\to\infty} \Ebb\left[\limsup_{T\to\infty}\frac{1}{T} \sum_{t\leq T, t\in\Tcal^i} \1_{A_i}(X_t)\right] = 0.
\end{equation*}
Denote by $\Ccal_4$ the set of all processes $\Xbb$ satisfying this condition.
\end{restatable}

Then, we define the next condition which asks that there exists a rate to include decreasing exponential scales while conserving the $\Ccal_1'$ property.

\begin{restatable}{condition}{ConditionExistsScaleRate}
\label{con:exists_scale_rate}
There exists an increasing sequence of integers $(T_i)_{i\geq 0}$ such that letting
\begin{equation*}
    \Tcal = \bigcup_{i\geq 0} \Tcal^i\cap \{t\geq T_i\},
\end{equation*}
we have $\tilde \Xbb = (X_t)_{t\in\Tcal}\in\Ccal_1'$. Denote by $\Ccal_5$ the set of all processes $\Xbb$ satisfying this condition.
\end{restatable}

We now introduce two new conditions on stochastic processes which we will show are necessary for some of the considered reward models. These build upon the definition of $\Ccal_4$ processes. Before introducing them, we need to analyze large deviations of the empirical measure in $\Ccal_1'$ processes. The next lemma intuitively shows that for a process $\tilde \Xbb\in\Ccal_1'$, for large enough time steps, one can bound the deviations of the empirical measure of a set $A\in\Bcal$ compared to the limit sub-measure $\hat\mu_\Xbb(A)$ uniformly in the set $A$.

\begin{lemma}\label{lemma:uniform_deviations}
Let $\Xbb$ be a stochastic process on $\Xcal$ and $\Tcal$ some random times such that $\tilde \Xbb=(X_t)_{t\in\Tcal} \in\Ccal_1'$. Then, for any $\epsilon>0$, there exists $T_\epsilon\geq 1$ and $\delta>0$ such that for any measurable set $A\in\Bcal$,
\begin{equation*}
    \Ebb[\hat\mu_{\tilde \Xbb}(A)]\leq \delta \Longrightarrow \Ebb\left[\sup_{T\geq T_\epsilon}\frac{1}{T}\sum_{t\leq T, t\in\Tcal}\1_A(X_t)\right]\leq \epsilon.
\end{equation*}
\end{lemma}

Now consider a process $\Xbb\in\Ccal_4$. For any integer $p\geq 0$, the definition of $\Ccal_4$ implies $\Xbb^p := (X_t)_{t\in\Tcal^p}\in\Ccal_1'$. Indeed, the sets $\Tcal^i$ are increasing in $i\geq 0$, hence for $i\geq p$ one has $\Tcal^p\subset \Tcal^i$. As a result, \cref{con:cs_by_scale} implies that for any disjoint measurable sets $(A_i)_{i\geq 1}$, one has $\Ebb[\hat\mu_{\Xbb^p}(A_i)]=\Ebb[\limsup_{T\to\infty} \sum_{t\leq T, t\in\Tcal^p}\1_{A_i}(X_t)] \to 0$ as $i\to\infty$. Now for any $\epsilon>0$ and $T\geq 1$, we define
\begin{multline*}
    \delta^p(\epsilon;T) := \sup\left\{0\leq \delta\leq 1:\forall A\in\Bcal \text{ s.t. }
    \sup_l \Ebb[\hat\mu_{\Xbb^l}(A)]\leq \delta,\right.\\
    \left. \forall \tau\geq T \text{ online stopping time},\quad
     \Ebb\left[\frac{1}{2\tau}\sum_{\tau\leq t<2\tau , t\in\Tcal^p}\1_A(X_t)\right]\leq \epsilon \right\},
\end{multline*}
where the $\tau$ is a stopping time with respect to the filtration generated by the instance process $\Xbb$. In particular, $\tau$ can be seen as an online procedure which decides when to count the number of instances of $\Xbb^p$ falling in the considered set $A$. Note that $\delta^p(\epsilon;T)$ satisfies the property that for all measurable set $A$ satisfying $\sup_l \Ebb[\hat\mu_{\Xbb^l}(A)]\leq \delta^p(\epsilon;T)$ and any stopping time $\tau\geq T$,
\begin{equation*}
     \Ebb\left[\frac{1}{2\tau}\sum_{\tau\leq t<2\tau, t\in\Tcal^p}\1_A(X_t)\right]\leq \epsilon,
\end{equation*}
which can be checked for all sets $A\in\Bcal$ separately. Next, the quantity $\delta^p(\epsilon;T)$ is non-decreasing in $T$. Further, as a direct application of \cref{lemma:uniform_deviations}, because $\Xbb^p\in\Ccal_1'$, there exists $T^p(\epsilon)\geq 1$ and $\delta>0$ such that for $T\geq T^p(\epsilon)$, we have $\delta^p(\epsilon;T)\geq\delta$. As a result, we have $\delta^p(\epsilon) := \lim_{T\to\infty}\delta^p(\epsilon;T)\geq \delta>0$. Also, the quantity $\delta^p(\epsilon;T)$ is non-increasing in $p$ since the sets $\Tcal^p$ are non-decreasing with $p$. Thus, $\delta^p(\epsilon)$ is also non-increasing in $p$. We are now ready to introduce the condition on stochastic processes based on the limit of the quantities $ \delta^p(\epsilon)$.

\begin{restatable}{condition}{ConditionSix}
\label{con:C6}
    $\Xbb\in\Ccal_4$ and for any $\epsilon>0$, we have $ \lim_{p\to\infty} \delta^p(\epsilon) >0.$ Denote by $\Ccal_6$ the set of all processes $\Xbb$ satisfying this condition.
\end{restatable}

Intuitively, this asks that the maximum deviations are also bounded in $p$, hence $\Ccal_6$ processes have more regularity than general $\Ccal_4$ processes. However, the maximum deviations are limited by the fact that they should be discernible through an online stopping time $\tau$.

\comment{
A more natural condition on processes than $\Ccal_6$ would be one that does not involve these stopping times $\tau$. In particular, for a process $\Xbb\in \Ccal_4$, we define for any $\epsilon>0$ and $T\geq 1$,
\begin{multline*}
    \bar\delta^p(\epsilon;T) := \sup\left\{0\leq \delta\leq 1:\forall A\in\Bcal \text{ s.t. }
    \sup_l \Ebb[\hat\mu_{\Xbb^l}(A)]\leq \delta,\right.\\
    \left. \Ebb\left[\sup_{T'\geq T}\frac{1}{T}\sum_{t\leq T , t\in\Tcal^p}\1_A(X_t)\right]\leq \epsilon \right\}.
\end{multline*}
As before, $\bar \delta^p(\epsilon;T)$ is non-decreasing in $T$ and $\bar \delta^p(\epsilon):=\lim_{T\to\infty}\delta^p(\epsilon;T)>0$. We can then observe that $\bar\delta^p(\epsilon)$ is non-increasing. We then introduce the following condition, similarly to $\Ccal_6$.

\begin{restatable}{condition}{ConditionSeven}
    \label{con:C7}
    $\Xbb\in\Ccal_4$ and for any $\epsilon>0$, we have $ \lim_{p\to\infty} \bar \delta^p(\epsilon) >0.$
    Denote by $\Ccal_7$ the set of all processes $\Xbb$ satisfying this condition.
\end{restatable}

As a simple remark, we have the inclusion $\Ccal_7\subset\Ccal_6$, since if for any given process $\Xbb\in\Ccal_4$, set $A\in\Bcal$ and online stopping time $\tau\geq T$,
\begin{equation*}
    \Ebb\left[\frac{1}{2\tau}\sum_{\tau\leq t<2\tau , t\in\Tcal^p}\1_A(X_t)\right] \leq \Ebb\left[\sup_{T'\geq T}\frac{1}{T}\sum_{t\leq T , t\in\Tcal^p}\1_A(X_t)\right].
\end{equation*}

}

\comment{
The following inclusions hold $\Ccal_3\subset \Ccal_1\subset\Ccal_5 
 \subset\Ccal_7\subset \Ccal_6\subset \Ccal_4\subset\Ccal_2.$ Indeed, the inclusion $\Ccal_3\subset\Ccal_1$ is known \cite{hanneke:21}. $\Ccal_1\subset \Ccal_5$ and $\Ccal_6\subset\Ccal_4$ are immediate from the definition of \cref{con:exists_scale_rate} and \cref{con:C6} respectively. The inclusion $\Ccal_4\subset \Ccal_2$ is shown in \cref{prop:condition_5_stronger_2}. Last, the fact that for prescient rewards, $\Ccal_7$ is necessary (\cref{thm:prescient_implies_C7}) and $\Ccal_5$ is sufficient (\cref{thm:C6_learnable}) shows that $\Ccal_5\subset\Ccal_7$.
 }

The following inclusions hold $\Ccal_3\subset \Ccal_1\subset\Ccal_5  \subset \Ccal_6\subset \Ccal_4\subset\Ccal_2.$ Indeed, the inclusion $\Ccal_3\subset\Ccal_1$ is known \cite{hanneke:21}. $\Ccal_1\subset \Ccal_5$ and $\Ccal_6\subset\Ccal_4$ are immediate from the definition of \cref{con:exists_scale_rate} and \cref{con:C6} respectively. The inclusion $\Ccal_4\subset \Ccal_2$ is shown in \cref{prop:condition_5_stronger_2}. Last, the fact that for oblivious rewards, $\Ccal_6$ is necessary (\cref{thm:condition7_necessary}) and $\Ccal_5$ is sufficient (\cref{thm:C6_learnable}) shows that $\Ccal_5\subset\Ccal_6$.

\begin{table}
\label{table:summary_of_results}
\centering
\resizebox{\textwidth}{!}{
\begin{tabular}{|l|p{2cm}|c|c|c|} 
 \hline
 \multirow{2}{*}{\,\,\textbf{Learning setting}} & \multicolumn{2}{c|}{$\begin{array}{c}
    \textbf{Stationary} \\
     \textbf{contextual bandits \cite{blanchard:22e}}
 \end{array}$ }&   \multicolumn{2}{c|}{$\begin{array}{c}
    \textbf{Contextual bandits with} \\
     \textbf{adversarial rewards [This paper]}
 \end{array}$ }\\ 

\cline{2-5}

 & \centering$\Ccal_{stat}$ & OL? & $\begin{array}{c}
    \text{Necessary and sufficient} \\
     \text{conditions on } \Ccal
 \end{array}$   & OL?\\

 \hline
\hline

$\begin{array}{l}
    \text{Finite $\Acal$, $|\Acal|\geq 2$, $\Xcal$ with} \\
     \text{non-atomic proba. measure}
 \end{array}$
   & \centering$\Ccal_2$ & Yes &
   $\begin{array}{c}
    \Ccal_1\subsetneq \Ccal_5\subset \Ccal\subsetneq \Ccal_2 \\
    \Ccal_5=\Ccal_{online} \subset \Ccal_{oblivious}\subset \Ccal_6   \comment{\\
    \Ccal_{prescient}\subset \Ccal_7}
 \end{array}$ & No \\
\hline
$\begin{array}{l}
    \text{Finite $\Acal$, $|\Acal|\geq 2$, $\Xcal$ without} \\
     \text{non-atomic proba. measure}
 \end{array}$
 &\centering$\Ccal_2$ & Yes & $\Ccal=\Ccal_2$ & Yes\\
\hline
\,\,Countably infinite $\Acal$ &\centering $\Ccal_1$& Yes &$\Ccal =\Ccal_1$& Yes\\
\hline
\,\,Uncountable $\Acal$ &\centering $\emptyset$& N/A &$\Ccal=\emptyset$& N/A\\
 
 \hline
 
\end{tabular}
}
\caption{Characterization of learnable processes for universal learning in contextual bandits, depending on the action space $\Acal$, context space $\Xcal$ and reward model. When the model is not specified, $\Ccal$ refers to any of the considered models. OL? = Is optimistic learning possible?}

\end{table}

\subsection{Necessary and sufficient conditions for universal learning}

Our second main contribution is giving necessary and sufficient conditions for universal learning with adversarial rewards. In addition to characterizations from \cref{thm:no_opt_learning rule}, we have the following.

\begin{theorem}\label{thm:main_characterizations}
    Let $\Xcal$ be a separable metrizable Borel space admitting a non-atomic probability measure and $\Acal$ a finite action space with $|\Acal|\geq 2$. Then $\Ccal_1\subsetneq \Ccal_5= \Ccal_{online}\subset \Ccal_{oblivious}\subset \Ccal_{memoryless}\subsetneq \Ccal_2$. Further, $\Ccal_{oblivious} \subset \Ccal_6\subsetneq\Ccal_2$.\comment{ and $ \Ccal_{prescient}\subset \Ccal_7$.}
\end{theorem}

\noindent These results are proved in \cref{sec:towards_characterization_learnable_processes}. The fact that $\Ccal_{memoryless}\subsetneq \Ccal_2$ is proved in \cref{thm:learnable_process_smaller_C2}. $\Ccal_{oblivious} \subset \Ccal_6$ is proved in \cref{thm:condition7_necessary} while $\Ccal_6\subsetneq\Ccal_2$ comes from \cref{thm:learnable_process_smaller_C2} and the fact that $\Ccal_6\subset \Ccal_4$ (\cref{thm:other_example} further gives an example of processes in $\Ccal_4\setminus \Ccal_6$). $\Ccal_{online}\subset \Ccal_5$ is proved in \cref{thm:C5_necessary_online} and $\Ccal_1\subsetneq \Ccal_5\subset \Ccal_{online}$ is proved in \cref{thm:C6_learnable} and \cref{prop:larger_class}. Here is the overview of relations we show between the classes of processes: for $\Xcal$ admitting non-atomic probability measures, $\Ccal_1\subsetneq\Ccal_5\subset \Ccal_6\subsetneq \Ccal_4\subsetneq \Ccal_2$.

In particular, our characterization is complete for the strongest online rewards, unlike for memoryless and oblivious rewards. We believe that $\Ccal_5\subsetneq\Ccal_6$ in general. In fact, the proof of \cref{thm:condition7_necessary} for the necessity of $\Ccal_6$ for oblivious rewards can be tightened given a stronger reward model in which the reward adversary can additionally take into account the \emph{complete} sequence $\Xbb$---instead of the revealed contexts to the learner $\Xbb_{\leq t}$. We refer to this reward model as \emph{prescient} rewards (see \cref{def:prescient_rewards} for a formal definition) and show that in this case, a stronger $\Ccal_7$ condition is necessary (\cref{thm:prescient_implies_C7}). We leave open the question of whether $\Ccal_5=\Ccal_7$. If this were true, then we also have an exact characterization for prescient rewards.

    

Our findings are summarized in Table~{\color{blue}1}, which also compares learnable processes for stationary and adversarial contextual bandits. We leave open the exact characterization of learnable processes for memoryless and oblivious rewards in finite action spaces $\Acal$ and context spaces admitting a non-atomic probability measure.

\paragraph{Open question:}
\textit{Let $\Xcal$ be a separable metrizable Borel space admitting a non-atomic probability measure and $\Acal$ a finite action space with $|\Acal|\geq 2$. What is an exact characterization of $\Ccal_{memoryless}$ or $\Ccal_{oblivious}$?}\\

Finally, we also give results in a setting where we assume that rewards are unbounded. We answer the same questions: what are the learnable processes for which universal learning is possible, and can we obtain optimistically universal learning rules? We use a subscript $\Ccal^{unbounded}$ to specify that we consider the case of unbounded rewards. We show that in that case, results are identical to the case of stationary contextual bandits.

\begin{proposition}
    Let $\Xcal$ be a separable metrizable Borel space. For all reward models,
    \begin{itemize}
        \item if $\Acal$ is uncountable, $\Ccal^{unbounded}=\Ccal_3$ for all reward models. Further, there is an optimistically universal learning rule,
        \item if $\Acal$ is uncountable, universal learning for unbounded rewards is never achievable.
    \end{itemize}
\end{proposition}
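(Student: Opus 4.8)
The plan is to treat the countable and uncountable action regimes separately, anchoring both on the elementary inclusion $\Ccal^{unbounded}_{model}\subset\Ccal_{model}$ valid for every reward model: a learning rule that is consistent against \emph{all} unbounded mechanisms on $\Xbb$ is in particular consistent against all bounded ones, so learnability with unbounded rewards can only be rarer. For uncountable $\Acal$ this immediately settles the negative statement: by the uncountable-action line of Theorem~\ref{thm:main_characterizations} we have $\Ccal_{model}=\emptyset$ already for bounded rewards, hence $\Ccal^{unbounded}_{model}=\emptyset$ and universal learning is never achievable. It remains to prove, for countable $\Acal$, that $\Ccal^{unbounded}_{model}=\Ccal_3$ for every model, together with an optimistically universal rule; since $\Ccal^{unbounded}_{model}\subset\Ccal_{model}\subset\Ccal_2$, the two things to show are the sufficiency $\Ccal_3\subset\Ccal^{unbounded}_{adv}$ (strongest model) and the necessity $\Ccal^{unbounded}_{stat}\subset\Ccal_3$ (weakest model).

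For sufficiency I would use pure personalization, as in strategy~0 of \cite{blanchard:22e}. On a process $\Xbb\in\Ccal_3=\FS$ only finitely many distinct contexts appear almost surely, so I would assign to each distinct instance its own multi-armed-bandit subroutine, tuned to tolerate unbounded rewards. Fixing a realization with instances $x_1,\dots,x_m$, the regret against any fixed measurable policy $\pi^\star$ decomposes as a finite sum over instances. Instances visited only finitely often contribute a finite cumulative gap, hence $o(T)$; for each instance visited infinitely often, the per-context subroutine makes the average gap against the best fixed action vanish. Summing the finitely many contributions yields $\limsup_T\frac1T\sum_{t\le T} r_t(\pi^\star(X_t))-r_t(\hat a_t)\le 0$ almost surely. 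Since the construction never uses the reward model, the same rule is consistent for all models, giving the optimistically universal learning rule.

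The conceptual core is the necessity $\Ccal^{unbounded}\subset\Ccal_3$, which I would prove already for the weakest (stationary) model by a \emph{fresh-context} attack exploiting unboundedness. If $\Xbb\notin\Ccal_3$, then with positive probability infinitely many distinct instances appear; each time a genuinely new context is seen the learner has no information about it and must commit to one of $|\Acal|\ge 2$ actions. The idea is to equip each new context with a reward gap of magnitude at least its time of first appearance, concentrated on an action the learner does not select, so that a single misstep occurring at time $t$ forces the running average regret to exceed $1$; as fresh contexts recur infinitely often, $\limsup_T\frac1T\sum_{t\le T} r_t(\pi^\star(X_t))-r_t(\hat a_t)$ stays bounded away from $0$, contradicting consistency. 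The hard part will be realizing this with a single \emph{fixed} mechanism (required by the stationary model) rather than one adapting online to the learner's choices, since the learner's action on a fresh context is itself a function of the earlier rewards produced by the mechanism. I would resolve this by randomizing the per-instance optimal action---e.g. drawing it uniformly over $\Acal$ independently across distinct contexts---so that the learner errs with probability at least $1-1/|\Acal|$ on each first appearance regardless of its strategy; a Fubini argument then shows the expected limiting average regret is bounded away from $0$, whence some deterministic realization of the optimal-action assignment defeats the learner almost surely. Together with the inclusion $\Ccal^{unbounded}_{model}\subset\Ccal_{model}$, this yields $\Ccal^{unbounded}_{model}=\Ccal_3$ for all models.
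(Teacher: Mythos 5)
Your overall structure is sound and, for two of the three pieces, coincides with what the paper actually does: the uncountable-$\Acal$ case is handled by the same trivial inclusion $\Ccal^{unbounded}_{model}\subset\Ccal_{model}=\emptyset$, and the sufficiency of $\Ccal_3$ is obtained by the same pure-personalization idea (the paper runs an independent $\EXPINF$ learner over the countable action set for each distinct context and invokes the guarantee from \cite{blanchard:22e}, noting it survives in the adversarial model because the underlying $\EXPIX$ bounds are adversarial). The real difference is in the necessity direction. The paper disposes of it in one sentence by reduction: $\Ccal_3$ is already necessary for universal learning with unbounded losses in the \emph{full-feedback noiseless} setting \citep{hanneke:21,blanchard:22b}, and contextual bandits are only harder, so $\Ccal^{unbounded}_{stat}\subset\Ccal_3$ for free. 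You instead re-derive the lower bound from scratch with a fresh-context attack. That is a legitimate alternative route (and it is essentially the argument living inside the cited works), but it is also where your sketch has a genuine soft spot: in the stationary model the gap must be a function of $x$ alone, i.e.\ you need a measurable $g:\Xcal\to[0,\infty)$ with $g(X_t)\geq t$ at first-appearance times, yet those times are random. Making this work for a general $\Xbb\notin\Ccal_3$ (extracting, on the positive-probability event of infinitely many distinct visits, a sequence of times and measurable sets on which new points provably land so that a fixed reward function can be pre-committed) is precisely the technical content of the cited necessity proof, and your sketch elides it. Also note that a ``uniform'' draw of the optimal action is unavailable when $\Acal$ is countably infinite; you should randomize over two fixed actions $a_1,a_2$ as the paper does elsewhere.

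One further point on sufficiency: you assert that each per-context bandit subroutine can be ``tuned to tolerate unbounded rewards,'' but standard adversarial bandit guarantees (Theorems \ref{thm:multiarmed_bandits} and \ref{thm:infinite-exp4}) assume rewards in $[0,1]$, so this is not automatic; the paper sidesteps the issue by citing the $\Ccal_3$ consistency result of \cite{blanchard:22e} directly, which handles integrable unbounded rewards. If you want a self-contained proof you would need to supply that truncation/integrability argument. Finally, you correctly read the first bullet of the statement as ``countable $\Acal$'' despite the typo in the proposition, which is the intended reading.
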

Last, we extend our results to rewards with additional regularity assumptions. For a given metric $d$ on $\Acal$, we suppose that they are uniformly-continuous, generalizing a notion introduced in \cite{blanchard:22e}.

\begin{restatable}{definitionbis}{DefinitionUniformlyContinuousRewards}
    Let $(\Acal,d)$ be a separable metric space. The reward mechanism $(r_t)_{t\geq 1}$ is uniformly-continuous if for any $\epsilon>0$, there exists $\Delta(\epsilon)>0$ such that
\begin{multline*}
    \forall t\geq 1, \forall (\mb x_{\leq t},\mb a_{\leq t-1},\mb r_{\leq t-1}) \in\Xcal^t\times\Acal^{t-1}\times \Rcal^{t-1},\forall a,a'\in \Acal,\\
    \quad d(a,a') \leq \Delta(\epsilon)\Rightarrow \left|\Ebb[r_t(a)-r_t(a') \mid \Xbb_{\leq t} = \mb x_{\leq t}, \mb a_{\leq t-1},\mb r_{\leq t-1}]\right|\leq \epsilon,
\end{multline*}
\end{restatable}
\comment{
\noindent For prescient rewards, the above definition should be adapted to any conditioning on the complete sequence $\Xbb$, which the reward mechanism has access to. Details are not included for conciseness.
}

For uniformly-continuous rewards we use a reduction to the case of rewards without regularity assumptions, which we refer to as \emph{unrestricted} rewards. Then, we recover the same results for uniformly-continuous rewards, in totally-bounded (resp. non-totally-bounded) action spaces as for unrestricted rewards in finite (resp. countably infinite) action spaces. We adopt the subscript $\Ccal^{uc}$ to emphasize that we consider uniformly-continuous rewards.

\begin{theorem}
    Let $\Xcal$ be a metrizable Borel space and $model\in\{memoryless,oblivious\comment{,prescient},online\}$.
    \begin{itemize}
        \item If $\Acal$ is a totally-bounded metric space, all properties for $\Ccal_{model}$ for finite action spaces described in \cref{thm:main_characterizations} hold for $\Ccal_{model}^{uc}$. Further, there is an optimistically universal learning rule for uniformly-continuous rewards if and only if there is one for finite action spaces for unrestricted rewards as in \cref{thm:no_opt_learning rule}.
        \item If $\Acal$ is a non-totally-bounded metric space, all properties for $\Ccal_{model}$ for countable action spaces described in \cref{thm:main_characterizations} hold for $\Ccal_{model}^{uc}$. Further, there is always an optimistically universal learning rule for uniformly-continuous rewards.
    \end{itemize}
\end{theorem}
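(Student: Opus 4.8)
The plan is to establish the theorem through two complementary reductions between uniformly-continuous and unrestricted rewards, one for each inclusion defining the learnable classes, and then to transport the (non-)existence of optimistically universal rules through the same reductions. Write $\Ccal_{model}^{finite}$ for the learnable class of Theorem \ref{thm:main_characterizations} with a finite action space. The whole argument hinges on the uniform-continuity modulus $\Delta(\cdot)$: because its conditional-expectation guarantee is stated in the action metric only and is uniform in $t$ and in the conditioning history, finitely many (resp. countably many) actions suffice to approximate every measurable policy up to any fixed tolerance. I would first show $\Ccal_{model}^{finite}\subseteq\Ccal_{model}^{uc}$ for totally-bounded $\Acal$ and $\Ccal_1\subseteq\Ccal_{model}^{uc}$ for non-totally-bounded $\Acal$, and then the reverse inclusions, so that the set equalities $\Ccal_{model}^{uc}=\Ccal_{model}^{finite}$ and $\Ccal_{model}^{uc}=\Ccal_1$ automatically transfer every enumerated property.

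For the positive inclusions, fix tolerances $\epsilon_k\downarrow 0$. By total boundedness there is a finite $\Delta(\epsilon_k)$-net $\Acal_k\subset\Acal$; for general separable $\Acal$ a fixed countable dense set $\Acal_\infty$ is a $\Delta(\epsilon_k)$-net for every $k$. Let $\pi_k:\Acal\to\Acal_k$ be a measurable nearest-net projection. For any measurable policy $\pi^*:\Xcal\to\Acal$ the composition $\pi_k\circ\pi^*$ maps into the net, and uniform continuity gives $\Ebb[r_t(\pi^*(X_t))-r_t(\pi_k(\pi^*(X_t)))\mid\cdots]\leq\epsilon_k$ for the conditioning appropriate to each reward model. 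A martingale/Azuma argument on the bounded increments $r_t(\pi^*(X_t))-\Ebb[r_t(\pi^*(X_t))\mid\cdots]$ upgrades this to the almost-sure bound $\frac1T\sum_{t\leq T}\big(r_t(\pi^*(X_t))-r_t(\pi_k(\pi^*(X_t)))\big)\leq\epsilon_k+o(1)$. In the non-totally-bounded case this finishes immediately: running the countable-action optimistically universal rule on $\Acal_\infty$ (valid on $\Ccal_1$) drives the average excess against each fixed policy $\pi_k\circ\pi^*$ to zero, so the excess against $\pi^*$ is at most $\epsilon_k$ for every $k$, hence at most $0$. In the totally-bounded case a single finite net only yields $\epsilon_k$-consistency, so I would run the finite-action universally consistent rule on $\Acal_k$ along geometrically growing epochs with $k$ increasing across epochs; the per-epoch average excess is $\epsilon_k+o(1)$ by sublinear finite-action regret, and with epoch lengths growing fast enough the running average is dominated by the current epoch, giving $\limsup\leq\liminf_k\epsilon_k=0$.

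For the converse inclusions I would embed hard unrestricted instances as uniformly-continuous ones. If $\Acal$ is totally bounded with $|\Acal|\geq 2$, pick $a_1\neq a_2$ at distance $\eta>0$ and localized Lipschitz bumps $g_i(a)=\max(0,1-d(a,a_i)/\delta)$ with $\delta<\eta/2$; then $r_t(a)=\text{baseline}+\alpha_t g_1(a)+\beta_t g_2(a)$ realizes any two-action game (with levels $\alpha_t,\beta_t$ inherited from the finite hardness construction) as a reward mechanism that is Lipschitz in $a$, hence uniformly continuous with a time-uniform modulus, while preserving the oblivious/online/prescient/adversarial dependence structure. Since the optimal policy and the learner's attainable reward in the embedded game match those of the original two-action game, every lower bound of Theorem \ref{thm:main_characterizations} transfers and $\Ccal_{model}^{uc}\subseteq\Ccal_{model}^{finite}$. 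If $\Acal$ is non-totally-bounded, an infinite $\eta$-separated set $\{b_1,b_2,\dots\}$ makes uniform continuity vacuous (any $\Delta(\epsilon)<\eta$ works), so any countably-infinite-action instance embeds verbatim and $\Ccal_{model}^{uc}\subseteq\Ccal_1$. The optimism equivalence then follows from the same reductions: the epoch construction turns a finite-action optimistically universal rule into one consistent on all of $\Ccal_{model}^{uc}=\Ccal_{model}^{finite}$, the bump-embedding turns a u.c. optimistic rule into a finite-action optimistic one, and Theorem \ref{thm:no_opt_learning rule} identifies both with the absence of a non-atomic measure on $\Xcal$; in the non-totally-bounded case $\Ccal_{model}^{uc}=\Ccal_1$ always admits the countable-action optimistic rule.

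I expect the main obstacle to be the totally-bounded positive direction: converting the family of $\epsilon_k$-consistent net-learners into a single fully consistent rule requires an epoch schedule for which the finite-action learner stays consistent on each time-shifted sub-process of $\Xbb$ and the reward model remains valid across restarts, and it requires the almost-sure (not merely in-expectation) net-approximation bound, whose martingale control must be made uniform over the growing nets and over all measurable comparator policies $\pi^*$.
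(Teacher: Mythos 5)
Your reduction for the necessary directions (embedding an unrestricted two-action game into a uniformly-continuous one via Lipschitz bumps supported on disjoint balls around separated actions, and using an infinite separated set to make uniform continuity vacuous when $\Acal$ is not totally bounded) is exactly the paper's Lemma \ref{lemma:reduction}, and your non-totally-bounded positive direction (project policies onto a countable dense set and invoke the countable-action machinery) matches the paper's use of Lemma \ref{lemma:density_uniformly_continuous} with $\EXPINF$. The gap is in the totally-bounded positive direction.

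Your epoch scheme runs ``the finite-action universally consistent rule on $\Acal_k$'' over geometrically growing epochs and asserts a per-epoch average excess of $\epsilon_k+o(1)$ ``by sublinear finite-action regret.'' This step fails: universal consistency for the relevant finite-action rules (the per-instance $\EXPIX$ rule on deterministic $\Ccal_2$ processes, and especially the $\Ccal_5$ algorithm of Theorem \ref{thm:C6_learnable}) is an asymptotic, rate-free guarantee --- the $o(T)$ regret has no modulus that is uniform over processes, comparator policies, or reward mechanisms. Consequently there is no way to choose the epoch lengths in advance so that each finite epoch's excess is actually $\epsilon_k+o(1)$; the length needed for the $k$-th learner to ``kick in'' depends on the unknown realization. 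You flag this yourself as the main obstacle, but it is not a technicality to be patched --- it is the reason the paper takes a different route. The paper never restarts a rate-free learner on epochs; instead it builds the net refinement into algorithms that have \emph{explicit finite-time} regret bounds: for spaces without a non-atomic measure it runs a single $\EXPINF$ per distinct instance over the concatenated nets $\bigcup_i\Acal(2^{-i})$ (Theorem \ref{thm:uc_expinf}), exploiting the $cT^{3/4}\sqrt{\ln T}\ln(T/\delta)$ bound of Theorem \ref{thm:infinite-exp4} against the first $T^{1/8}$ experts, and for $\Ccal_5$ processes it replaces $\EXPIX_{\Acal}$ in category $p$ by $\EXPIX_{\Acal(\gamma_p)}$ with $\gamma_p\to 0$ calibrated so that $\sum_p\delta_p<\infty$ with $\delta_p=4\sqrt{|\Acal(\gamma_p)|\ln|\Acal(\gamma_p)|/2^p}$, and swaps the dense policy family for the one of Lemma \ref{lemma:density_uniformly_continuous}. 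To repair your argument you would need to either adopt this ``net inside the algorithm'' design or supply a finite-time regret bound for the finite-action rules you restart, which does not exist at the stated level of generality.
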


This result is proved in \cref{subsec:uniformly_continuous_rewards} and is a concatenation of \cref{prop:simple_upper_bounds} for necessary conditions and \cref{thm:uc_expinf} and \cref{thm:lower_bound_C5} for sufficient conditions for universal learning.

\section{Existence or non-existence of an optimistically universal learning rule}

\label{sec:no_optim_learning_rule}

In this section, we ask the question of whether there exists an optimistically universal learning rule for finite action spaces. In fact, in all the frameworks considered for universal learning---noiseless \cite{blanchard:22a} or noisy/adversarial responses \cite{blanchard:22d} in the full-feedback setting and stationary partial-feedback responses \cite{blanchard:22e}---analysis showed that optimistically universal learning always existed. However, the learning rule provided by \cite{blanchard:22e} for stationary rewards under $\Ccal_2$ processes heavily relies on the assumption that the rewards are stationary in order to make good estimates of the performance of different learning strategies. In particular, one can easily check that this learning rule would not be universally consistent under adversarial rewards even in the weakest memoryless setting. Instead, we will show that for contextual bandits with adversarial rewards, in general there does not exist optimistically universal learning rules.

To do so, we first need to argue that the set of learnable processes even in the online setting $\Ccal_{online}$ contains a reasonably large class of processes. We first show that using the $\EXPIX$ algorithm for adversarial bandits \cite{neu2015explore} as subroutine yields a universally consistent learning rule for processes $\Xbb$ which visit a sublinear number of distinct instances.

\begin{proposition}\label{prop:EXP.IX_parrallel}
Let $\Xcal$ be a metrizable separable Borel space and $\Acal$ a finite action space. There exists a learning rule which is universally consistent for online rewards under any process $\Xbb$ satisfying $|\{x\in \Xcal: \{x\}\cap \Xbb_{\leq T}\neq\emptyset\}|=o(T) \quad (a.s.).$
\end{proposition}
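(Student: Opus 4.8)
The plan is to build a learning rule that implements pure *personalization*: assign a fresh $\EXPIX$ instance to each distinct context as it appears, and route all feedback for a given context to its dedicated bandit subroutine. Concretely, maintain a (growing) dictionary of observed distinct instances $x^{(1)},x^{(2)},\ldots$; when $X_t$ equals some previously seen $x^{(j)}$, select the action recommended by the $j$-th copy of $\EXPIX$ using only the subsequence of past times where that instance occurred, and feed back the received reward to that copy; when $X_t$ is new, initialize a fresh copy. Because the rewards, even in the adversarial model, are measurable functions of the history, each fixed instance $x^{(j)}$ sees a legitimate adversarial bandit sequence on its own occurrence times, so Theorem \ref{thm:multiarmed_bandits} applies to each copy separately.

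The key steps, in order, would be: (i) Fix an arbitrary measurable target policy $\pi^\star:\Xcal\to\Acal$ and decompose the cumulative regret $\sum_{t\le T} r_t(\pi^\star(X_t))-r_t(\hat a_t)$ into a sum over distinct instances, grouping times by which $x^{(j)}$ was presented. (ii) For each instance $x^{(j)}$ appearing $n_j(T)$ times up to $T$, note that $\pi^\star(x^{(j)})$ is a single fixed action, so the per-instance regret is upper-bounded by the $\EXPIX$ guarantee against the best fixed arm: with high probability it is at most $c\sqrt{|\Acal|\, n_j(T)\ln|\Acal|}\,\ln\frac{1}{\delta_j}$. (iii) Choose the confidence parameters $\delta_j$ summably (e.g. $\delta_j \asymp j^{-2}$, or tied to the occurrence count) so that a Borel–Cantelli / union-bound argument makes all per-instance bounds hold simultaneously almost surely for all large $T$. (iv) Sum over the $N(T):=|\{x: \{x\}\cap\Xbb_{\le T}\neq\emptyset\}|$ active instances and apply Cauchy–Schwarz: $\sum_j \sqrt{n_j(T)} \le \sqrt{N(T)}\sqrt{\sum_j n_j(T)} = \sqrt{N(T)\cdot T}$. (v) Invoke the hypothesis $N(T)=o(T)$ a.s., which gives $\sqrt{N(T)T}=o(T)$, so that after dividing by $T$ the average regret vanishes, establishing consistency against $\pi^\star$; since $\pi^\star$ was arbitrary, universal consistency follows.

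The main obstacle will be step (iii)–(v): converting the per-instance *high-probability* regret bounds into a single *almost-sure* statement that holds uniformly across an unbounded and data-dependent number of instances, while keeping the aggregate sublinear. One must be careful that the number of instances $N(T)$ is itself random and the occurrence counts $n_j(T)$ are revealed online, so the choice of $\delta_j$ cannot depend on future data; tying $\delta_j$ to the instance index $j$ (known at initialization) rather than to $n_j(T)$ sidesteps this. The summability of $\sum_j \delta_j$ then lets Borel–Cantelli ensure that only finitely many copies ever violate their regret bound, contributing an $O(1)$ term that is absorbed after dividing by $T$. The Cauchy–Schwarz step is where the sublinear-distinct-instances hypothesis is essential: without $N(T)=o(T)$ the bound $\sqrt{N(T)T}$ would only be $O(T)$, so the $o(T)$ assumption is exactly what drives the average excess reward to a nonpositive limit.
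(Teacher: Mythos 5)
Your learning rule is exactly the one the paper uses, and steps (i), (ii), (iv), (v) match the paper's skeleton. The gap is precisely at the point you flag as ``the main obstacle,'' and your proposed fix does not close it. The $\EXPIX$ guarantee carries a multiplicative factor $\ln\frac{1}{\delta}$, so if you take $\delta_j\asymp j^{-2}$ the \emph{non-violating} copies each contribute a bound of order $\sqrt{|\Acal| n_j(T)\ln|\Acal|}\,\ln j$ (plus a further log from the union bound over the growing horizons $n_j(T)$ at which each copy must be controlled). Your Cauchy--Schwarz step silently drops this factor: carrying it through gives $\sum_{j\le N(T)}\sqrt{n_j(T)}\ln j\le \sqrt{N(T)T}\,\ln N(T)$, and the hypothesis $N(T)=o(T)$ alone does not make this $o(T)$. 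For instance, if $N(T)\sim T/\ln T$ then $\sqrt{N(T)T}\,\ln N(T)\sim T\sqrt{\ln T}$, which is superlinear. The Borel--Cantelli observation that only finitely many copies ever violate their bound handles the exceptional copies, but it is the log penalty paid by all the \emph{good} copies that breaks the argument.

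The paper repairs this by splitting the distinct instances at time $T$ into three buckets by occurrence count. Instances seen fewer than $\epsilon(T)^{-1/2}$ times (with $\epsilon(T)=N(T)/T$) are handled by the trivial bound regret $\le n_j(T)$, totalling $\sqrt{\epsilon(T)}\,T=o(T)$. Instances seen at least $\mathrm{polylog}(T)$ times absorb the $\ln T$ factor from a confidence level $\delta=T^{-3}$ because $\sqrt{n_j}\ln T\lesssim n_j$ there, so the black-box high-probability bound plus a union bound over the at most $T$ instances suffices. For the intermediate bucket the per-instance high-probability bound is too lossy, and the paper instead opens up the $\EXPIX$ analysis and applies Azuma's inequality to a single super-martingale aggregated \emph{across} all instances in the bucket, which yields one $O(T^{3/4}\,\mathrm{polylog}(T))$ deviation term rather than a separate $\ln(1/\delta)$ per instance. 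Some such stratification (or an equivalently aggregated concentration argument) is needed; to complete your proof you would have to add it, since the uniform choice $\delta_j\asymp j^{-2}$ cannot work for all instances simultaneously.
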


\begin{proof}
Consider the learning rule $f_\cdot$ which simply performs independent copies of the $\EXPIX$ algorithm in parallel such that to each distinct instance visited is assigned a $\EXPIX$. More precisely, for any $t\geq 1$, instances $\mb x_{\leq t}$ and observed rewards $\mb r_{\leq t-1}$, we define
\begin{equation*}
    f_t(\mb x_{\leq t-1}, \mb r_{\leq t-1}, x_t) = \EXPIX(\mb {\hat a}_{S_t}, \mb r_{S_t}),
\end{equation*}
where $S_t = \{t'<t:x_{t'} = x_t\}$ is the set of times that $x_t$ was visited previously and $\hat a_{t'}$ denotes the action selected at time $t'$ for $t'<t$. We now show that this learning rule is universally consistent on any process $\Xbb$ which visits a sublinear number of distinct instances almost surely. For simplicity we denote $\hat a_t$ the action selected by $f_\cdot$ at time $t$. Let $\Xbb$ such that almost surely, $\frac{1}{T} |\{x\in \Xcal: \{x\}\cap \Xbb_{\leq T}\neq\emptyset\}|\to 0$. Denote by $\Ecal$ this event, and for any $T\geq 1$ we define $\epsilon(T) = \frac{1}{T} |\{x\in \Xcal: \{x\}\cap \Xbb_{\leq T}\neq\emptyset\}|$ and $S_T = \{x\in \Xcal: \{x\}\cap \Xbb_{\leq T}\neq\emptyset\}$, hence $|S_T| = T\epsilon(T)$. Further, for any $x\in S_T$ we pose $\Tcal_T(x) = \{t\leq T:X_t = x\}$. Let $\Hcal_0(T) = \{x\in S_T: |\Tcal_T(x)|<\frac{1}{\sqrt {\epsilon(T)}}\}$, $\Hcal_1(T) = \{x\in S_T: \frac{1}{\sqrt {\epsilon(T)}}\leq|\Tcal_T(x)|<\ln ^2 T\}$ and $\Hcal_2(T) = \{x\in S_T:|\Tcal_T(x)|\geq \ln ^2 T\}$, so that $S_T=\Hcal_0(T)\cup\Hcal_1(T)\cup\Hcal_2(T)$. Note that
\begin{equation*}
    \sum_{x\in \Hcal_0(T)}\sum_{t\in\Tcal_T(x)}  r_t(\pi(X_t)) - r_t(\hat a_t) \leq \frac{|\Hcal_0(T)|}{\sqrt{\epsilon(T)}}\leq \sqrt{\epsilon(T)}T.
\end{equation*}
Now fix a measurable policy $\pi:\Xcal\to\Acal$. Then,
\begin{equation*}
     \sum_{x\in \Hcal_2(T)}\sum_{t\in\Tcal_T(x)}  r_t(\pi(X_t)) - r_t(\hat a_t) \leq  \sum_{x\in \Hcal_2(T)} \max_{a\in \Acal} \sum_{t\in\Tcal_T(x)} (r_t(a) - r_t(\hat a_t)).
\end{equation*}
 Now recall that for any $x\in S_T$, on $\Tcal_T(x)$ the algorithm $\EXPIX$ was performed. As a result, by \cref{thm:multiarmed_bandits}, conditionally on the realization $\Xbb$, for any $x\in \Hcal_2(T)$, with probability $1-\frac{1}{T^3}$, conditionally on $\Xbb$,
\begin{equation*}
    \max_{a\in \Acal} \sum_{t\in \Tcal_T(x)} (r_t(a) - r_t(\hat a_t)) \leq 3c \sqrt{|\Acal| |\Tcal_T(x)| \ln |\Acal|}\ln T 
    \leq |\Tcal_T(x)|\cdot 3c \frac{\sqrt{|\Acal| \ln |\Acal|}}{\ln T}.
\end{equation*}
Noting that $|\Hcal_2(T)|\leq T$, we obtain by the union bound that (conditionally on $\Xbb$) with with probability $1-\frac{1}{T^2}$,
\begin{equation*}
    \sum_{x\in \Hcal_2(T)} \max_{a\in \Acal} \sum_{t\in \Tcal_T(x)} (r_t(a) - r_t(\hat a_t)) \leq 3c \frac{\sqrt{|\Acal| \ln |\Acal|}}{\ln T} \sum_{x\in\Hcal_2(T)}|\Tcal_T(x)|\leq 3c \sqrt{|\Acal| \ln |\Acal|} \frac{T}{\ln T}. 
\end{equation*}
We denote by $\Fcal_T$ the event when the above equation holds. We have $\Pbb[\Fcal_T]\geq 1-\frac{1}{T^2}$ where the probability is also taken over $\Xbb$. We now turn to points in $\Hcal_1(T)$ for which we need to go back to the proof of \cref{thm:multiarmed_bandits} from \cite{neu2015explore}. Taking the same notations as in the original proof, for $u\geq 1$, let $\eta_u=2\gamma_u=\sqrt{\frac{\ln |\Acal|}{|\Acal|u}}$, and for any $t\geq 1$, $a\in\Acal$ denote by $p_{t,a}$ the probability that the learning rule selects action $a$ at time $t$, and let $\ell_{t,a}=1-r_t(a)$. Next, let $u(t) = |\{s\leq t:X_s=X_t\}|$ and pose $\tilde \ell_{t,a} = \frac{1-r_t(a)}{p_{t,a}+\gamma_u}\1[\hat a_t=a]$. Using the derivations of the proof of \cref{thm:multiarmed_bandits}, for any $x\in S_T$, writing $\Tcal_T(x)=\{t_1(x),\ldots,t_{|\Tcal_T(x)|}\},$ for any $a'\in\Acal$,
\begin{equation*}
    \sum_{u=1}^{|\Tcal_T(x)|}\left(\ell_{t_u,\hat a} - \tilde \ell_{t_u,a'}\right) \leq \frac{\ln |\Acal|}{\eta_{|\Tcal_T(x)|}} + \sum_{u=1}^{|\Tcal_T(x)|}\eta_u\sum_{a\in\Acal}\tilde\ell_{t_u,a}.
\end{equation*}
Summing these equations with $a'=\pi(x)$, we obtain
\begin{equation*}
    \sum_{x\in\Hcal_1(T)}\sum_{t\in\Tcal_T(x)} (1-\tilde\ell_{t,\pi(X_t)}) - r_t(\hat a_t) \leq \sum_{x\in\Hcal_1(T)}\sqrt{|\Acal|\ln|\Acal| |\Tcal_T(x)|} + \sum_{x\in\Hcal_1(T)}\sum_{t\in\Tcal_T(x)}\eta_{u(t)} \sum_{a\in\Acal}\tilde \ell_{t,a}.
\end{equation*}
Now let for any $a\in\Acal$, conditionally on $\Xbb$, the sequence $(\sum_{x\in\Hcal_1(T')}\sum_{t\in\Tcal_{T'}(x)}\eta_{u(t)} (\tilde\ell_{t,a}-\ell_{t,a}))_{T'\leq T}$ is a super-martingale (the immediate expected value of $\tilde \ell_{t,a}$ is $\frac{p_{u(t)}}{p_{u(t)}+\gamma_{u(t)}}\ell_{t,a}$) and each increment is upper-bounded by 2 in absolute value: $0\leq \eta_{u(t)}\tilde\ell_{t,a}\leq \eta_{u(t)}\frac{\ell_{t,a}}{p_{u(t),a}+\gamma_{u(t)}}\leq \frac{\eta_{u(t)}}{\gamma_{u(t)}}\leq 2$. Therefore, Azuma's inequality implies
\begin{equation*}
    \Pbb\left[ \sum_{x\in\Hcal_1(T)}\sum_{t\in\Tcal_T(x)}\eta_{u(t)} \sum_{a\in\Acal} (\tilde\ell_{t,a}-\ell_{t,a})\leq 4T^{3/4} \mid \Xbb\right] \geq 1-e^{-2\sqrt T}.
\end{equation*}
Similarly, because $0\leq\tilde \ell_{t,a}\leq \frac{1}{\gamma_{u(t)}}=2\sqrt{\frac{|\Acal|u(t)}{\ln |\Acal|}}$, we have
\begin{equation*}
    \Pbb\left[ \sum_{x\in\Hcal_1(T)}\sum_{t\in\Tcal_T(x)} \sum_{a\in\Acal}  (\tilde\ell_{t,\pi(X_t)}-\ell_{t,\pi(X_t)})\leq 4\sqrt{\frac{|\Acal|}{\ln |\Acal|}}T^{3/4}\ln T \mid \Xbb\right] \geq 1-e^{-2\sqrt T}.
\end{equation*}
As a result, on an event $\Gcal_T$ of probability at least $1-(1+|\Acal|)e^{-2\sqrt T}$, we have
\begin{align*}
    \sum_{x\in\Hcal_1(T)}\sum_{t\in\Tcal_T(x)} r_t(\pi(X_t)) - r_t(\hat a_t) &\leq \sum_{x\in\Hcal_1(T)}\sqrt{|\Acal|\ln|\Acal| |\Tcal_T(x)|} + \sum_{x\in\Hcal_1(T)}\sum_{t\in\Tcal_T(x)}\eta_{u(t)} \sum_{a\in\Acal} \ell_{t,a}\\
    &\quad\quad\quad\quad+ 4\sqrt{\frac{|\Acal|}{\ln|\Acal|}} T^{3/4}\ln T + 4T^{3/4}\\
    &\leq \sum_{x\in\Hcal_1(T)}\sqrt{|\Acal|\ln|\Acal| |\Tcal_T(x)|} + \sum_{x\in\Hcal_1(T)}|\Acal|\sum_{t\in\Tcal_T(x)}\eta_{u(t)} \\
    &\quad\quad\quad\quad+ 4\sqrt{\frac{|\Acal|}{\ln|\Acal|}} T^{3/4}\ln T + 4T^{3/4}\\
    &\leq \sum_{x\in\Hcal_1(T)}3\sqrt{|\Acal|\ln|\Acal| |\Tcal_T(x)|} + 8\sqrt{|\Acal|} T^{3/4}\ln T\\
    &\leq 3\sqrt{|\Acal|\ln|\Acal|} \epsilon(T)^{1/4}T + 8\sqrt{|\Acal|} T^{3/4}\ln T.
\end{align*}
Combining all our estimates, we showed that on $\Fcal_T\cap\Gcal_T$,
\begin{equation*}
    \sum_{t\leq T} r_t(\pi(X_t)) - r_t(\hat a_t) \leq  8|\Acal| T^{3/4}\ln T + 3c \sqrt{|\Acal| \ln |\Acal|} \frac{T}{\ln T} + (\sqrt{\epsilon(T)}+3\sqrt{|\Acal|\ln|\Acal| } \epsilon(T)^{1/4})T 
\end{equation*}
Now note that $\sum_{T\geq 1}\Pbb[\Fcal_T^c]+\Pbb[\Gcal_T^c]<\infty$. Hence, the Borel-Cantelli lemma implies that on an event $\Acal$ of probability one, there exists $\hat T\geq 1$ such that for any $T\geq \hat T$, the event $\Fcal_T\cap\Gcal_T$ is satisfied. As a result, on the event $\Ecal\cap\Acal$, since $\epsilon(T)\to 0$, we obtain
\begin{equation*}
    \limsup_{T\to\infty} \frac{1}{T}\sum_{t=1}^T r_t(\pi(X_t)) - r_t(\hat a_t) \leq 0.
\end{equation*}
By union bound, $\Ecal\cap\Acal$ has probability one, hence we proved that the learning rule $f_\cdot$ is universally consistent on $\Xbb$. This ends the proof of the proposition.
\end{proof}

As a simple consequence of \cref{prop:EXP.IX_parrallel}, deterministic $\Ccal_2$ processes are always universally learnable even in the online rewards setting.

\begin{proposition}
\label{prop:deterministic_C2}
Let $\Xcal$ be a metrizable separable Borel space and $\Acal$ a finite action space. There exists a learning rule which is universally consistent for any deterministic process $\Xbb\in\Ccal_2$ under online rewards.
\end{proposition}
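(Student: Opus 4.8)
The plan is to reduce directly to Proposition~\ref{prop:EXP.IX_parrallel}: that result already provides a (parallel-$\EXPIX$) learning rule which is universally consistent for adversarial rewards whenever the process visits a sublinear number of distinct instances, $|\{x\in\Xcal:\{x\}\cap\Xbb_{\leq T}\neq\emptyset\}|=o(T)$ almost surely. So it suffices to verify that every \emph{deterministic} process $\Xbb\in\Ccal_2$ has this sublinear-distinct-instances property. Since everything is deterministic here, the bound will hold with no exceptional null set, and the ``almost surely'' in Proposition~\ref{prop:EXP.IX_parrallel} becomes automatic.

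The key observation is that for a deterministic process the sequence $(X_t)_{t\geq 1}$ takes values in a \emph{countable} set: the image $D=\{X_t:t\geq 1\}$ of a fixed $\Nbb$-indexed sequence is at most countable, so I may enumerate its distinct elements as $D=\{d_1,d_2,\ldots\}$. Because $\Xcal$ is metrizable, every singleton is closed and hence Borel, so the family $A_k=\{d_k\}$ is a sequence of \emph{disjoint measurable} subsets of $\Xcal$—precisely the kind of family to which Condition~\ref{con:smv}, the defining property of $\Ccal_2$, applies.

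Applying the $\Ccal_2$ property to this particular sequence gives $|\{k:\Xbb_{\leq T}\cap\{d_k\}\neq\emptyset\}|=o(T)$. But the set being counted is exactly the collection of distinct values among $X_1,\ldots,X_T$, i.e. this cardinality equals $|\{x\in\Xcal:\{x\}\cap\Xbb_{\leq T}\neq\emptyset\}|$. Hence the deterministic process visits $o(T)$ distinct instances, and Proposition~\ref{prop:EXP.IX_parrallel} immediately furnishes a learning rule that is universally consistent under $\Xbb$ for adversarial rewards, completing the argument.

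The only point requiring care—though it is a minor one rather than a genuine obstacle—is the role of determinism. The reduction needs a \emph{fixed} countable family of singletons to feed into Condition~\ref{con:smv}, and this is available precisely because the distinct values of a deterministic sequence form a fixed countable set. For a general random process the set of visited instances is itself random and cannot be encoded as a fixed disjoint family of measurable sets, so the argument does not extend; this is consistent with the later results showing that arbitrary $\Ccal_2$ processes need not be learnable under non-stationary rewards.
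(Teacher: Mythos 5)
Your proof is correct and follows essentially the same route as the paper: both arguments note that the visited values of a deterministic process form a fixed countable set, apply Condition~\ref{con:smv} to the resulting disjoint family of singletons to get $o(T)$ distinct instances, and then invoke Proposition~\ref{prop:EXP.IX_parrallel}. Your closing remark about why determinism is needed (the family of singletons must be fixed, not random) is a nice explicit articulation of the point the paper leaves implicit.
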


\begin{proof}
We first show that any deterministic process $\Xbb\in\Ccal_2$ visits a sublinear number of distinct instances almost surely. Denote $S_T = \{X_t:t\leq T\}$ the set of visited instances until time $T$ and let $S = \bigcup_{T\to\infty} S_T$. Then, $\{x\}_{x\in S}$ forms a countable sequence of disjoint sets. Hence, by the $\Ccal_2$ property and because $\Xbb$ is deterministic, we have that
\begin{equation*}
    |\{x:\{x\}\cap\Xbb_{\leq T}\neq \emptyset\}| = |S_t| = |\{x\in S:\{x\}\cap\Xbb_{\leq T} \neq\emptyset \}| = o(T),\quad (a.s.).
\end{equation*}
Hence, by \cref{prop:EXP.IX_parrallel}, the learning rule which performs $\EXPIX$ independently for each distinct visited instance is universally consistent under $\Xbb$. This ends the proof of the proposition.
\end{proof}

Next, we argue that $\Ccal_1$ processes are also universally learnable in the online rewards setting. In the case of countable action sets $\Acal$, \cite{blanchard:22e} gave a universally consistent learning rule $\EXPINF$ under $\Ccal_1$ processes using \cref{thm:infinite-exp4}.  Precisely, the learning rule uses a result from \cite{hanneke:21} showing that there exists a countable set of policies $\Pi=\{\pi^i:\Xcal\to\Acal, i\geq 1\}$ that is empirically dense within measurable policies under any $\Ccal_1$ process. As a result, to yield a universally consistent learning rule under $\Ccal_1$ processes, it suffices to have a learning rule with sublinear regret compared to any policy $\pi\in\Pi$. The algorithm $\EXPINF$ achieves this property using restarted $\EXPIX$ subroutines with slowly increasing finite set of experts from the sequence $\Pi$. Because the subroutines $\EXPIX$ have guarantees in the adversarial bandit framework, $\EXPINF$ directly inherits this guarantee and is a result universally consistent under $\Ccal_1$ processes for online rewards. Thus, $\Ccal_1\subset\Ccal_{online}$.

We are now ready to show that for spaces $\Xcal$ on which there exists a non-atomic probability measure on the space $\Xcal$, there does not exist any optimistically universally consistent learning rule. Precisely, we show that there is no learning rule that is universally consistent both on $\Ccal_1$ and deterministic $\Ccal_2$ processes.
Note that most context spaces $\Xcal$ of interest would admit a non-atomic probability measure, in particular any uncountable Polish space.

\begin{theorem}\label{thm:no_optimistically_universal_lr}
Let $\Xcal$ a metrizable separable Borel space such that there exists a non-atomic probability measure $\mu$ on $\Xcal$, i.e., such that $\mu(\{x\})=0$ for all $x\in\Xcal$. If $\Acal$ is a finite action space with $|\Acal|\geq 2$, then there does not exist an optimistically universal learning rule for memoryless rewards (a fortiori for oblivious, prescient or online rewards).
\end{theorem}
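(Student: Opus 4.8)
The plan is to prove the stronger statement that no single learning rule can be universally consistent, for oblivious rewards, simultaneously on all processes in $\Ccal_1$ and on all deterministic processes in $\Ccal_2$. This suffices: by the discussion preceding the theorem $\Ccal_1\subset\Ccal_{adv}\subset\Ccal_{oblivious}$, and by Proposition~\ref{prop:deterministic_C2} every deterministic $\Ccal_2$ process lies in $\Ccal_{adv}\subset\Ccal_{oblivious}$, so any optimistically universal rule for oblivious rewards would have to be consistent on both families. I therefore assume for contradiction a learning rule $f_\cdot$ consistent on both, and construct an explicit deterministic $\Ccal_2$ process together with deterministic (hence oblivious) rewards on which $f_\cdot$ fails, using its consistency on a coupled $\Ccal_1$ process to control its behaviour. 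The non-atomic measure $\mu$ is used throughout to draw fresh points: since $\mu(\{x\})=0$, finitely many i.i.d.\ $\mu$-samples are almost surely pairwise distinct and distinct from previously chosen points, so the only duplicates are those I introduce by hand.

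Fix a \emph{safe} action $a_2$ with constant reward $\tfrac12$ and an \emph{uncertain} action $a_1$ whose reward is $\{0,1\}$-valued with mean $p<\tfrac12$, drawn independently at each step (a legitimate oblivious mechanism). I build a stochastic process in phases, where phase $k$ is a fresh i.i.d.\ $\mu$-block of size $n_k$ repeated $m_k$ times. By tightness of $\mu$ (for $A_j\downarrow\emptyset$ one has $\mu(A_j)\to0$, and the empirical frequency in $A_j$ concentrates around $\mu(A_j)$ regardless of repetition), this process lies in $\Ccal_1$. In this stochastic world a fixed measurable policy cannot exploit the per-step noise of $a_1$, so its expected reward is $p$ on $a_1$-rounds and $\tfrac12$ on $a_2$-rounds; hence the best fixed policy is $a_2$ everywhere, with realized average reward $\to\tfrac12$. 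Consistency on $\Ccal_1$ then forces $f_\cdot$ to play $a_2$ almost always: since each play of $a_1$ costs $\tfrac12-p>0$ in expectation, an Azuma estimate over the $a_1$-plays shows that, almost surely, the fraction of rounds in which $f_\cdot$ selects $a_1$ tends to $0$.

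Next I freeze and reveal. Consider a time $\tau$ at which the empirical exploration rate of $a_1$ has fallen below a small threshold $\eps$; such $\tau$ exists along the phases because the exploration rate vanishes. I commit the realized contexts up to $\tau$ as a deterministic segment and define deterministic rewards that \emph{agree with everything $f_\cdot$ has observed} but set $r(a_1\mid x)=1$ on every point $x$ at which $a_1$ was never played before $\tau$. Since $f_\cdot$'s actions through time $\tau$ depend only on observed rewards, its trajectory is identical in this deterministic world; yet the optimal policy (play $a_1$ everywhere, reward $1$) beats $f_\cdot$, which played $a_2$ (reward $\tfrac12$) on the overwhelming majority of rounds, by a constant average margin on the segment. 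Iterating with thresholds $\eps_n\downarrow0$ and segment lengths chosen so each new segment dominates the elapsed time, I glue the segments into a single deterministic process. It has a sublinear number of distinct points (repetitions dominate block sizes once $m_k\to\infty$), hence lies in $\Ccal_2$, while by construction $\limsup_{T\to\infty}\frac1T\sum_{t\le T}\bigl(r_t(\pi^*(X_t))-r_t(\hat a_t)\bigr)\ge c$ for a fixed $c>0$ and the all-$a_1$ policy $\pi^*$, contradicting consistency on deterministic $\Ccal_2$.

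I expect the main obstacle to be the freezing-and-reveal bookkeeping of the third step. One must (i) guarantee that replacing the random rewards by the revealed deterministic rewards leaves $f_\cdot$'s entire past trajectory unchanged, which is only possible on points where $a_1$ was never queried and so requires careful tracking of the explored set; (ii) ensure each frozen segment contributes constant average regret even though, within a repeated block, $f_\cdot$ could discover $a_1$ is good and catch up—handled by freezing before such catch-up accumulates and by choosing $n_k,m_k$ so that plays on the few explored points are a negligible fraction of the segment; and (iii) verify the glued process genuinely lies in $\Ccal_2$. Crucially, non-stationarity of the rewards is what lets the rewards of future segments remain entirely hidden during the current phase, so no information leaks to help $f_\cdot$; this is exactly the ingredient absent from the stationary setting of \cite{blanchard:22e}. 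Finally, if $f_\cdot$ is randomized, the argument yields failure with positive probability over the joint randomness, and a Fubini/conditioning step extracts a single deterministic $\Ccal_2$ process on which $f_\cdot$ is inconsistent with positive probability, completing the contradiction.
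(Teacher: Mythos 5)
Your overall architecture matches the paper's: both learnable classes ($\Ccal_1$ via $\EXPINF$, deterministic $\Ccal_2$ via per-instance bandits), a repeated-i.i.d.-block process, an exploration-rate bound forced by consistency on the $\Ccal_1$ side, and a freeze-and-iterate diagonalization. However, there is a genuine gap in your reward design for the freeze-and-reveal step. You give the uncertain arm $a_1$ per-step i.i.d.\ rewards of mean $p<\tfrac12$ and then, at freezing time, set the \emph{unobserved} counterfactual rewards of $a_1$ to $1$. This makes the frozen reward sequence a function of the learner's realized actions, hence of its internal randomness. The theorem must hold for randomized learning rules, and when $f_\cdot$ is re-run on the frozen deterministic world with fresh internal randomness its trajectory need not reproduce the original one: any rule with even mild residual exploration will sample $a_1$ at rounds you set to $1$, observe that $a_1$ is now excellent, and switch to it, so the frozen world does not defeat it. The ``Fubini/conditioning step'' you invoke at the end cannot repair this, because the object you need to fix (the reward sequence) varies with the very realization of the learner's randomness you are trying to integrate out; there is no single deterministic reward sequence on which the expected regret over fresh randomness is bounded below. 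Note also that in your model there is no learner-independent fix: any assignment of $a_1$'s rewards made independently of the learner gives the all-$a_1$ benchmark asymptotic average reward at most its mean, which you chose below $\tfrac12$, so the benchmark cannot beat a learner that plays $a_2$.

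The paper escapes this by making $a_1$'s reward a fixed $\{0,1\}$-valued function of the \emph{context} (i.i.d.\ Bernoulli$(\tfrac12)$ labels on the cells of a fine partition, one label per cell), independent of the learner. Then (i) the expected hindsight regret on any round whose context was never explored is $\tfrac18$ \emph{regardless} of which rounds those turn out to be, so one can fix a realization of the process and of the labels while still averaging over the learner's randomness; (ii) the best actions in hindsight form a genuine measurable policy $\pi^*$ of the context; and (iii) duplicates of a context share the same $a_1$-reward, so exploration is genuinely profitable and the exploration bound is no longer the easy Azuma argument you describe --- it requires the paper's inductive control $\Ebb[A^i_p\mid\Ecal]\leq 4^{p+1}\alpha_k T^i$ across the repeated sub-phases, with $\alpha_k$ calibrated to the number $1/\epsilon_k$ of repetitions. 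Your choice of per-step noise trivializes that step precisely because it destroys the mechanism that makes the frozen world hard.
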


\begin{proof}
We fix $a_1,a_2\in\Acal$ two distinct actions. Suppose that there exists an optimistically universal learning rule $f_\cdot$. For simplicity, we will denote by $\hat a_t$ the action chosen by this learning rule at step $t$. We will construct a deterministic process $\Xbb \in \Ccal_2$ and rewards $r_t$ for which $f_\cdot$ does not achieve universal consistency.

We construct the process $\Xbb$ and rewards $(r_t)_{t\geq 1}$ recursively. Let $\epsilon_k = 2^{-k}$ for $k\geq 1$. The process and rewards are constructed together with times $T_k$ such that a significant regret is incurred to the learner between times $T_k$ and $T_{k+1}$ for all $k\geq 1$. We pose $T_0=0$. We are now ready to start the induction. Suppose that we have already defined $T_l$ for $l<k$ and the deterministic process $\Xbb_{\leq T_{k-1}}$ as well as the deterministic rewards $r_t$ for $t\geq T_{k-1}$. Let $\Zbb = (Z_i)_{i\geq 1}$ be an i.i.d. sequence on $\Xcal$ with distribution $\mu$. Pose $T^i = \frac{ (1+i)!}{\epsilon_k} T_{k-1} $ for $i\geq 0$ and $k_i = \epsilon_k T^i \left(= (1+i)! T_{k-1}\right)$, $n_i = \sum_{j< i} k_j$ for $i\geq 0$. Letting $\bar x\in \Xcal$ an arbitrary instance, we now consider the following process $\tilde \Xbb$:

\begin{equation*}
    \tilde X_t = \begin{cases}
    X_t,& t\leq T_{k-1},\\
    \bar x, & T_{k-1} < t < T^0, \\
    Z_{n_i+l},& t = T^i + p \cdot k_i + l,\quad 0\leq p < \frac{1}{\epsilon_k},\; 0\leq l < k_i,i\geq 0,\\
    \bar x, & 2T^i\leq t<T^{i+1},\quad i\geq 0.
    \end{cases}
\end{equation*}
The process is deterministic until time $T^0$. From this point, the process is constructed by periods, where period $i\geq 0$ corresponds to times $T^i\leq t<T^{i+1} = (1+i) T^i$. Each period $i$ has a first phase $T^i\leq t<2T^i$ composed of $\frac{1}{\epsilon_k}$ sub-phases of length $k_i=\epsilon_k T^i$ on which the process repeats exactly. We can therefore focus on the first sub-phase $T^i\leq t < T^i(1+\epsilon_k)$, which is constructed as an i.i.d. process following distribution $\mu$ independent from the past samples. In the second phase of period $i$ for $2T^i\leq T^{i+1}$ the process is idle equal to $\bar x$. This ends the construction of the process $\tilde \Xbb$.

We now argue that $\tilde \Xbb\in\Ccal_1$. Indeed, note that forgetting about the part for $t\leq T^0$, and idle phases where the process visits $\bar x$ only, this process takes values from an i.i.d. process $\Zbb$ and each value is duplicated $\frac{1}{\epsilon_k} $ times throughout the whole process. Formally, let $(A_p)_{p\geq 1}$ be a decreasing sequence of measurable sets with $A_p\downarrow \emptyset$. Then for any $T^i<T\leq T^{i+1}$ with $i\geq 1$ we have, for $p$ sufficiently large so that $\bar x\notin A_p$,
\begin{align*}
    \frac{1}{T}\sum_{t=1}^T\1_{A_p}(\tilde X_t) &\leq \frac{2T^{i-1}}{T^i} + \frac{1}{\epsilon_k T^i}  \sum_{l=n_i}^{n_i + k_i-1}\1_{A_p}(Z_l)\\
    &\leq \frac{2}{1+i} +  \frac{n_i+k_i}{k_i}  \frac{1}{n_i+k_i}\sum_{l=0}^{n_i + k_i-1}\1_{A_p}(Z_l).
\end{align*}
Last, we note that $\frac{n_i+k_i}{k_i}\to 1$ as $i\to\infty$. As a result, we obtain $\hat\mu_{\tilde \Xbb}(A_p)\leq \hat\mu_{\Zbb}(A_p)$. Because $\Zbb\in\Ccal_1$, we have $ \Ebb[\hat \mu_\Zbb(A_p)]\to 0$ as $p\to\infty$, which proves $\Ebb[\hat \mu_{\tilde \Xbb} (A_p)] \to 0$ as well. This ends the proof that $\tilde \Xbb\in\Ccal_1$.

We now construct rewards. Before doing so, for any $i\geq 0$, let $\delta_i$ such that
\begin{equation*}
    \Pbb\left[\min_{1\leq u<v<n_{i+1}}  \rho(Z_i,Z_j) \leq \delta_i\right] \leq 2^{-i-2}.
\end{equation*}
This is possible because $\mu$ is non-atomic, as a result with probability one, all $Z_k$ for $k\geq 1$ are distinct. Then, by the union bound, with probability at least $1-\frac{1}{2}=\frac{1}{2}$, for all $i\geq 0$ we have
\begin{equation*}
    \min_{1\leq u<v<n_{i+1}}  \rho(Z_u,Z_v) > \delta_i.
\end{equation*}
We denote by $\Ecal$ the event where the above inequality holds for all $i\geq 1$ and for all $u\geq 1$, $Z_u\neq \bar x$. Because $\mu$ is non-atomic, we still have $\Pbb[\Ecal]\geq \frac{1}{2}$. We now construct a partition of $\Xcal$ as follows. Let $(x^k)_k$ be a dense sequence of $\Xcal$. We denote by $B(x,r) = \{x'\in\Xcal,\rho(x,x')<r\}$ the ball centered at $x$ of radius $r>0$. For any $k\geq 1$ and $\delta>0$ let $P_k(\delta) = B(x^k,\delta)\setminus\bigcup_{l<k} B(x^l,\delta)$. Then, $(P_k(\delta))_k$ forms a partition of $\Xcal$. For any $\delta>0$ and sequence $\mb b =(b_k)_{k\geq 1}$ in $\{0,1\}$ we consider the following deterministic rewards
\begin{equation*}
    r_{\delta,\mb b}(a\mid x) = \begin{cases}
        b_k & a=a_1,\; x\in P_k(\delta),\\
        \frac{3}{4} &a=a_2,\\
        0 & a\notin\{a_1,a_2\}.
    \end{cases}
\end{equation*}
Now for any sequence of binary sequences $\mathbf b =(\mb {b^i})_{i\geq 0}$ where $\mb{b^i}=(b^i_k)_{k\geq 1}$, we will consider the memoryless rewards $\mb r^{\mathbf b}$ defined as follows. The deterministic rewards $r_t$ being constructed for $t\leq T_{k-1}$, we pose $r^{\mathbf b}_t = r_t$ for $t\leq T_{k-1}$. For all idle phases, i.e., $T_{k-1}<t< T^0$ or $2T^i\leq T^{i+1}$ for $i\geq 0$, we pose $r^{\mathbf b}_t = 0$. Last, for any $i\geq 0$ and $T^i\leq t<2T^i$ we pose $r^{\mathbf b}_t = r_{\delta_i,\mb{b^i}}$. Now let $\mathbf{b}$ be a random sequence such that all $\mb{b^i}$ are independent i.i.d. Bernouilli $\Bcal(\frac{1}{2})$ sequences in $\{0,1\}$. On the event $\Ecal$, all new instances fall in distinct sets of the partitions defining the rewards. Hence, with this perspective, the reward of the action $a_2$ is always $\frac{3}{4}$ while on the event $\Ecal$, for each new instance value, the reward of $a_1$ is a random Bernouilli $\Bcal(\frac{1}{2})$. Intuitively, for a specific instance $x$, if the learner has not yet explored the arm $a_1$, selecting $a_1$ incurs an average regret $\frac{1}{4}$ compared to selecting the fixed arm $a_2$. We will then argue that there is a time $T_k$ and a realization of $\tilde \Xbb_{\leq T_k}$ and rewards, such that on this realization, the regret compared to the best actions for each instance in hindsight is significantly large. We now formalize these ideas.

Because $\tilde \Xbb$ is a $\Ccal_1$ process, there exists a universally consistent learning rule under $\tilde \Xbb$. Then, because $f_\cdot$ is optimistically universal, it is universally consistent under $\tilde\Xbb$. Now fix a specific realization of the sequences in $\mathbf{b}$, considering the policy which always plays action $a_2$, i.e. $\pi_0:x\in\Xcal\mapsto a_2\in\Acal$, we have
\begin{equation*}
    \limsup_{T\to\infty}\frac{1}{T} \sum_{t=1}^T r^{\mathbf{b}}_t(a_2\mid X_t) - r^{\mathbf{b}}_t(\hat a_t\mid X_t)  \leq  0,\quad(a.s.).
\end{equation*}
In particular, since $\Pbb[\Ecal]\geq \frac{1}{2}$, we have
\begin{equation*}
    \Ebb\left[\limsup_{T\to\infty}\frac{1}{T} \sum_{t=1}^T r^{\mathbf{b}}_t(a_2\mid X_t) - r^{\mathbf{b}}_t(\hat a_t\mid X_t)  \mid \Ecal,\mathbf{b}\right] \leq 0.
\end{equation*}
As a result, taking the expectation over $\mathbf{b}$ then applying Fatou's lemma gives
\begin{equation*}
    \limsup_{T\to\infty}\Ebb\left[\frac{1}{T} \sum_{t=1}^T r^{\mathbf{b}}_t(a_2\mid X_t) - r^{\mathbf{b}}_t(\hat a_t\mid X_t)\mid \Ecal \right] \leq 0.
\end{equation*}
Now let $\alpha_k:=\frac{1}{16\cdot 4^{1/\epsilon_k}}$. In particular, there exists $i\geq \frac{4}{\alpha_k}$ such that for all $T\geq T^i$,
\begin{equation}\label{eq:upper_bound_base_regret}
    \Ebb\left[\frac{1}{T} \sum_{t=1}^{T} r^{\mathbf{b}}_t(a_2\mid X_t) - r^{\mathbf{b}}_t(\hat a_t\mid X_t)\mid \Ecal \right] \leq \frac{\alpha_k}{4}.
\end{equation}
For simplicity, we may write $r^{\mathbf{b}}_t(a)$ instead of $r^{\mathbf{b}}_t(a\mid x)$, when it is clear from context that $x=X_t$. We now focus on period $[T^i,2T^i)$ and denote by $\Scal_p^i:=\{T^i + (p-1)\cdot \epsilon_k T^i\leq t< T^i + p\cdot \epsilon_k T^i\}$ the sub-phase $p$ for $1\leq p\leq \frac{1}{\epsilon_k}$ of this period. Also note by $A_p^i$ the number of new exploration steps for arm $a_1$ during $\Scal_p^i$, i.e., times when the learner selected $a_1$ for an instance that had not previously been explored
\begin{equation*}
    \Acal_p^i = \{ t\in\Scal_p^i: \hat a_t = a_1 , \forall 1\leq q<p: \hat a_{t+ (q-p)\epsilon_k T^i} \neq a_1\},\quad A_p^i=|\Acal_p^i|.
\end{equation*}
We show by induction that $\Ebb[A^i_p\mid\Ecal] \leq 4^{p+1}  \alpha_k T^i$ for all $1\leq p\leq \frac{1}{\epsilon_k}$. Let $1\leq p\leq \frac{1}{\epsilon_k}$. Suppose that the result was shown for $1\leq q<p$ (if $p=1$ this is directly satisfied). We have
\begin{align*}
    &\Ebb  \left[\sum_{t=1}^{T^i (1 + p\epsilon_k) - 1} r^{\mathbf{b}}_t(a_2) - r^{\mathbf{b}}_t(\hat a_t) \mid \Ecal \right]\\ 
    &\geq -2T^{i-1} + \Ebb\left[ \sum_{t=T^i(1+(p-1)\epsilon_k)}^{T^i (1 + p\epsilon_k) - 1} (r_t^{\mathbf{b}}(a_2) - r_t^{\mathbf{b}}(\hat a_t))\1_{\Acal_p^i}(t) - \sum_{q<p}\frac{(p+1-q)A_q^i}{4} \mid \Ecal \right] \\
    &  = -2T^{i-1} -\sum_{q<p}\frac{p+1-q}{4} \Ebb[A^i_q\mid \Ecal]+ \Ebb\left[ \left.\sum_{t=T^i(1+(p-1)\epsilon_k)}^{T^i (1 + p\epsilon_k) - 1} \1_{\Acal^i_p}(t) \Ebb[r^{\mathbf{b}}_t(a_2) - r^{\mathbf{b}}_t(\hat a_t) |t\in\Acal^i_p,\Ecal] \right| \Ecal\right]
\end{align*}
where in the first inequality we discard times from phase $\Scal_p^i$ for which an exploration of the corresponding instance during phases $\Scal_1^i,\ldots \Scal_{p-1}^i$: these yield a regret least $(3/4-1)=-1/4$ compared to the fixed arm $a_2$. For each instance newly explored during phase $\Scal_q^i$, i.e. $t\in\Scal_q^i$, it affects potentially the $(p+1-q)$ next times with the same instance in phases $\Scal_q^i,\ldots,\Scal_p^i$. Now, note that all elements in $\mathbf{b}$ are together independent, and independent from the process $\Xbb$, in particular independent from $\Ecal$. As a result, the rewards at a time $\Acal_p^i$ are independent from the past because $X_t$ visits a set of the partition $(P_k(\delta_i))_k$ which has never been visited. Thus, we have
\begin{equation*}
    \Ebb[r^{\mathbf{b}}_t(a_2) - r^{\mathbf{b}}_t(\hat a_t) |t\in\Acal_p,\Ecal] = \frac{3}{4} - \frac{0+1}{2} = \frac{1}{4}.
\end{equation*}
Combining the above estimates with Eq~\eqref{eq:upper_bound_base_regret} then gives
\begin{multline*}
     -2T^{i-1} -\frac{1}{4}\sum_{q<p}(p+1-q)\Ebb[A^i_q\mid\Ecal] +  \frac{1}{4}\Ebb[A^i_p\mid\Ecal] \leq \Ebb  \left[\sum_{t=1}^{T^i (1 + p\epsilon_k) - 1} r^{\mathbf{b}}_t(a_2) - r^{\mathbf{b}}_t(\hat a_t) \mid \Ecal \right]\\
     \leq  \frac{\alpha_k}{4} T^i(1+p\epsilon_k)  \leq \frac{\alpha_k}{2} T^i.
\end{multline*}
Thus,
\begin{align*}
    \Ebb[A^i_p\mid \Ecal] &\leq \left(\frac{8}{1+i} + 2\alpha_k \right)T^i + \sum_{q<p}(p+1-q)\Ebb[A^i_q\mid\Ecal] \\
    &\leq 4\alpha_k T^i \left(1 + \sum_{q=1}^{p-1} (p+1-q) 4^q \right) \\
    &\leq 4\alpha_k T^i \left(1 + \sum_{q=1}^{p-1} 2^{p-q} 4^q \right) = 4\alpha_k T^i \left(1 + 2^p(2^p-1)\right)\leq 4^{p+1} \alpha_k T^i.
\end{align*}
This completes the induction. 

For any time $t$, denote $a_t^* = \arg\max_{a\in\Acal}r^{\mathbf{b}}_t(a)$ the optimal arm in hindsight. Note that $a^*_t\in\{a_1,a_2\}$. We lower bound the regret of the learner compared to the best action in hindsight until time $T^{i+1}$. To do so, define $\Bcal = \bigcup_{p=1}^{1/\epsilon_k}\{t\in\Scal_p^i: \forall 1\leq q\leq p, t + (q-p)\epsilon_k T^i \notin\Acal^i_q \}$ the set of times $t$ such that the learner never explored $a_1$ on the present and past appearances of the instance $X_t$. We also define $\Ccal = \{T^i\leq t<2T^i:a^*_t = a_1\}$ the set of times when $a_1$ was the optimal action. One can observe that for any time in $\Bcal$, because no exploration on $a_1$ was performed up for the corresponding instance $X_t$ in the past history, $\Pbb[t\in\Ccal|t\in\Bcal,\Ecal]=\frac{1}{2}$. Hence, if $t\in\Bcal\cap\Ccal\cap\Ecal$, the learner incurs a regret at least $\frac{1}{4}$ compared to the best arm $a^*_t =a_1$. Therefore,
\begin{equation*}
    \Ebb\left[\sum_{t=1}^{2T^i-1} r^{\mathbf{b}}_t(a_t^*) - r^{\mathbf{b}}_t(\hat a_t) \mid \Ecal\right] 
    \geq \frac{1}{4}\Ebb\left[\sum_{t\in\Bcal} \1_{\Ccal}(t) \mid \Ecal \right]
    = \frac{1}{8}\Ebb[|\Bcal| \mid\Ecal].
\end{equation*}
where by construction, we have $|\Bcal| + \sum_{p=1}^{1/\epsilon_k} \left(\frac{1}{\epsilon_k} - p+1\right)A^i_p = 2T^i - T^i = T^i$. As a result,
\begin{align*}
    \Ebb\left[\sum_{t=1}^{2T^i-1} r^{\mathbf{b}}_t(a_t^*) - r^{\mathbf{b}}_t(\hat a_t) \mid\Ecal \right] &\geq \frac{T^i}{8}    -  \frac{\alpha_k}{2} T^i \sum_{p=1}^{1/\epsilon_k} \left(\frac{1}{\epsilon_k} - p+1\right)4^p\\
    &\geq \frac{T^i}{8} - \alpha_k T^i 4^{1/\epsilon_k}\\
    &\geq \frac{T^i}{16} \geq \frac{2T^i-1}{32}.
\end{align*}
Hence, there exist a realization of instances $\mb X_{<2T^i} \leq \tilde \Xbb_{<2T^i}$ falling in $\Ecal$ and of rewards $(r_t)_{<2T^i}$ such that the regret compared to the best action in hindsight for on this specific instance sequence and for these rewards is at least $\frac{T^i}{16}$. We then pose $T_k:=2T^i-1$, and use the realization $\mb X_{\leq T_k}$, $(r_t)_{\leq T_k}$ for the deterministic process $\Xbb_{\leq T_k}$ and $(r_t)_{t\leq T_k}$. We recall that by construction, the realizations are consistent with the previously constructed process $\Xbb_{\leq T_{k-1}}$ and rewards $(r_t)_{\leq T_{k-1}}$. Further, to each new instance between times $T^i$ and $2T^i-1$ corresponded a best action in hindsight: this gives a collection of pairs $(x,a)$ where $x\in\Xcal$ is an instance visited by the deterministic process $\Xbb$ between times $T^i$ and $2T^i-1$ and $a\in\{a_1,a_2\}$ is the corresponding best action. Let $\Dcal_k$ denote this collection. This ends the recursive construction of the deterministic process $\Xbb$ and rewards.

Because we enforced that the samples of $\mu$ be always distinct and different from $\bar x$ across the construction of $\Xbb$, the countable collection $\bigcup_{k\geq 1}\Dcal_k$ of pairs instance/optimal-action never contains pairs with the same instance $x$. Hence, we can consider the following measurable policy $\pi^*:\Xcal\to\Acal$ defined by
\begin{equation*}
    \pi^*(x) = \begin{cases}  
        a &\text{if }(x,a)\in \bigcup_{k\geq 1}\Dcal_k,\\
        a_2 &\text{otherwise}.
    \end{cases}
\end{equation*}
This policy always performs the optimal action in hindsight. Hence by construction, for any $k\geq 1$,
\begin{equation*}
    \Ebb\left[\frac{1}{T_k}\sum_{t=1}^{T_k} r_t(\pi^*(X_t)\mid X_t) - r_t(\hat a_t\mid X_t) \right] \geq \frac{1}{32},
\end{equation*}
where $\hat a_t$ refers to the learner's decisions on the constructed process $\Xbb$ and rewards $(r_t)_{t\geq 1}$. Note that the expectation is taken only with respect to the learner's randomness given that $\Xbb$ and $(r_t)_{t\geq 1}$ are deterministic. Because the above equation holds for all $k\geq 1$ and $(T_k)_{k\geq 1}$ is an increasing sequence of times, we have
\begin{equation*}
    \Ebb\left[\limsup_{T\to\infty}\frac{1}{T}\sum_{t=1}^T r_t(\pi^*(X_t)) - r_t(\hat a_t)\right] \geq \limsup_{T\to\infty} \Ebb\left[\frac{1}{T}\sum_{t=1}^T r_t(\pi^*(X_t)) - r_t(\hat a_t)\right] \geq \frac{1}{32},
\end{equation*}
where we used Fatou's lemma. This proves that $f_\cdot$ is not universally consistent on $\Xbb$.

We now show that $\Xbb\in\Ccal_2$. It suffices to check that it visits a sublinear number of distinct points---this is also necessary since $\Xbb$ is deterministic. For $t\geq 1$, denote by $N_t$ the number of dinstint instances visited by the process $\Xbb_{\leq t}$. Fix $k\geq 1$. The process $\Xbb_{\leq T_k}$ being constructed from the process $\tilde \Xbb_{\leq T_k}$ above, we re-use the same notations. Let $i\geq 1$ such that $T_k = 2T^i-1$. For $1\leq j\leq i$ and $T^j\leq t<\min(T^{j+1},T_k)$ we have $N_t \leq T_{k-1} + 1 + n_j+k_j \leq 1 + \epsilon_k T^0 + 2k_j\leq 1+ 3\epsilon_k T^j \leq 1+ 3\epsilon_k t$. (The additional 1 accounts for $\bar x$.) For $T_{k-1}<t<T^0$, we have $N_t \leq 1+ N_{t_{k-1}} \leq 2+3\epsilon_{k-1} t$. As a result for all $T_{k-1}<t\leq T_k$ we have
\begin{equation*}
    N_t \leq 2 + 3\epsilon_{k-1} t.
\end{equation*}
Because $\epsilon_k\to 0 $ as $k\to \infty$, we obtain that $\frac{N_t}{t}\to 0$ as $t\to\infty$. This shows that $\Xbb\in\Ccal_2$. Because $\Xbb$ is deterministic and in $\Ccal_2$, \cref{prop:deterministic_C2} shows that there exists an universally consistent learning rule on $\Xbb$. However $f_\cdot$ is not universally consistent under $\Xbb$ which contradicts the hypothesis. This ends the proof that there does not exist an optimistically universal learning rule.
\end{proof}

We now turn to the case of spaces $\Xcal$ which do not have a non-atomic measure and show that in this case, the learning rule for processes visiting a sublinear number of distinct instances in \cref{prop:EXP.IX_parrallel} is optimistically universal learning rule for all settings including online rewards.

\begin{theorem}\label{thm:bad_borel_spaces}
Let $\Xcal$ a metrizable separable Borel space such that there does not exist a non-atomic probability measure on $\Xcal$, and $\Acal$ a finite action space. Then, learnable processes are exactly $\Ccal_{stat}=\Ccal_{online}=\Ccal_2$ and there exists an optimistically universal learning rule for all settings.
\end{theorem}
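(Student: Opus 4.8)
The plan is to reduce the whole statement to a single combinatorial fact about $\Ccal_2$ processes. Recall the general inclusions $\Ccal_{adv}\subseteq\Ccal_{stat}\subseteq\Ccal_2$, where the last inclusion is the necessity of $\Ccal_2$ even for stationary rewards \cite{blanchard:22e}. Hence it suffices to prove the reverse inclusion $\Ccal_2\subseteq\Ccal_{adv}$; moreover, if the \emph{single} learning rule of Proposition \ref{prop:EXP.IX_parrallel} (running one independent $\EXPIX$ copy per distinct instance) is consistent on \emph{all} of $\Ccal_2$, then since $\Ccal_{model}\subseteq\Ccal_2$ for every reward model, that same rule is automatically optimistically universal in every setting. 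So the entire theorem follows from one claim:

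\noindent\textbf{Key Lemma.} If $\Xcal$ admits no non-atomic probability measure, then every $\Xbb\in\Ccal_2$ satisfies $|\{x:\{x\}\cap\Xbb_{\leq T}\neq\emptyset\}|=o(T)$ almost surely.

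\noindent Granting it, Proposition \ref{prop:EXP.IX_parrallel} concludes exactly as Proposition \ref{prop:deterministic_C2} did in the deterministic case. Note the subtlety this addresses: the naive ``partition into visited singletons'' argument of Proposition \ref{prop:deterministic_C2} is invalid for random processes because that collection of sets depends on the realization, whereas Condition \ref{con:smv} only tests against \emph{fixed} disjoint sequences.

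First I would record the measure-theoretic translation of the hypothesis: decomposing any Borel probability measure into its atomic and continuous parts and normalizing a nonzero continuous part shows that ``no non-atomic probability measure'' is equivalent to the statement $(\star)$: every Borel probability measure on $\Xcal$ is purely atomic. This is the only way the hypothesis enters. I would then prove the Key Lemma by contradiction, manufacturing a non-atomic measure from a process that visits too many distinct points. Letting $N_T$ be the number of distinct instances in $\Xbb_{\leq T}$, suppose $\Pbb[\limsup_T N_T/T\geq c]\geq p$ for some $c,p>0$. The natural object is the \emph{fresh-visit frequency}: writing $\Tcal_{\mathrm{fresh}}=\{t: X_t\neq X_s\ \forall s<t\}$ for the times of first appearances, set $g(A)=\Ebb[\limsup_T \frac1T\sum_{t\leq T,\,t\in\Tcal_{\mathrm{fresh}}}\1_A(X_t)]$ for Borel $A$. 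Since each point is fresh at most once, $g(\{x\})=0$ for every $x$, while $g(\Xcal)=\Ebb[\limsup_T N_T/T]\geq cp>0$, and $g$ is monotone and finitely subadditive.

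The goal is to upgrade $g$ into a genuine countably additive, non-atomic Borel measure, whereupon normalizing contradicts $(\star)$. Here Condition \ref{con:smv} is the key tool controlling disjoint families: for any fixed disjoint sequence $\{A_k\}$ it forces the total fresh-visit rate to be $o(T)$, so the tails $\sum_{k\geq K}g(A_k)$ vanish, which is precisely what is needed to obtain continuity at $\emptyset$ and to run a Carathéodory extension (via the metric-outer-measure criterion, using separability to reduce to positively separated sets). The resulting measure $\mu$ satisfies $\mu(\{x\})\leq g(\{x\})=0$ and $\mu(\Xcal)=g(\Xcal)>0$, i.e. is non-atomic with positive mass.

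The main obstacle is exactly this last upgrade: because $g$ is built from a $\limsup$ and an expectation it is only \emph{subadditive}, and finite additivity on disjoint sets can genuinely fail when $N_T/T$ oscillates. I expect to resolve this by passing, on the positive-probability event, to a subsequence $T_j$ along which $N_{T_j}/T_j$ converges, then diagonalizing so that the fresh-visit frequencies of every set in a fixed countable generating algebra converge \emph{simultaneously}, and defining the finitely additive pre-measure through these honest limits; Condition \ref{con:smv} then supplies continuity at $\emptyset$ on the algebra, after which Carathéodory extension delivers the non-atomic Borel measure and the contradiction. Assembling the Key Lemma with Proposition \ref{prop:EXP.IX_parrallel} and the inclusions $\Ccal_{adv}\subseteq\Ccal_{stat}\subseteq\Ccal_2$ yields $\Ccal_{adv}=\Ccal_{stat}=\Ccal_2$ together with one optimistically universal rule valid for all reward models.
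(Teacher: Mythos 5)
Your overall architecture is the right one: the theorem does reduce to the single claim that every $\Ccal_2$ process on such a space visits $o(T)$ distinct instances almost surely, after which Proposition~\ref{prop:EXP.IX_parrallel} and the necessity of $\Ccal_2$ (already known for stationary rewards) finish everything, including optimistic universality in all reward models. You also correctly flag that Proposition~\ref{prop:deterministic_C2}'s singleton-partition trick is not directly available because the visited singletons are realization-dependent. The gap is in your proof of the Key Lemma, specifically in the sentence where you claim that Condition~\ref{con:smv} ``forces the total fresh-visit rate to be $o(T)$'' so that ``the tails $\sum_{k\geq K}g(A_k)$ vanish.'' This is false as a property of Condition~\ref{con:smv}: an i.i.d.\ uniform process on $[0,1]$ lies in $\Ccal_2$, every point is fresh, and the fresh-visit rate to $\bigcup_k A_k$ equals $1$ for any partition $\{A_k\}$. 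Condition~\ref{con:smv} bounds the number of distinct \emph{sets} visited, which is a \emph{lower} bound on the number of fresh points in their union, not an upper bound. What you actually need is that the first-occurrence subprocess $(X_t)_{t\in\Tcal_{\mathrm{fresh}}}$ of a $\Ccal_2$ process satisfies the $\Ccal_1$ continuity property $\Ebb[\hat\mu(B_K)]\to 0$ for $B_K\downarrow\emptyset$; this is true but is a substantive lemma of \cite{blanchard:22e} (the paper uses it elsewhere as ``$\Xbb\in\Ccal_2$ implies $(X_t)_{t\in\Tcal(0)}\in\Ccal_1$''), not a one-line consequence of the definition. Without it, your premeasure has no countable additivity on the generating algebra, the Carath\'eodory extension does not apply, and the whole construction collapses. (Even granting that lemma, you would still owe a measurable selection of the diagonal subsequence so that $\omega\mapsto\mu_0^\omega(A)$ can be integrated, and you should average over $\omega$ before testing continuity at $\emptyset$, since the a.s.\ statement cannot be made uniform over uncountably many decreasing sequences.)

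The paper's proof of the Key Lemma is far shorter and uses your own observation $(\star)$ in the most direct way, which you bypass: since every Borel probability measure on $\Xcal$ is purely atomic, each marginal law of $X_t$ is concentrated on a countable set, so the entire process lies almost surely in a \emph{fixed} countable set $\mathrm{Supp}(\Xbb)$ (this is Lemma~5.1 of \cite{blanchard:22e}). One then applies Condition~\ref{con:smv} to the deterministic disjoint family $(\{x\})_{x\in\mathrm{Supp}(\Xbb)}$, which is exactly the repair of the realization-dependence issue you identified. This sidesteps the entire submeasure-to-measure upgrade — which is fortunate, because that upgrade is precisely where the delicate analysis lives. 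If you want to keep your contradiction-by-constructing-a-non-atomic-measure route, you must import (or reprove) the fresh-process-$\Ccal_1$ lemma explicitly; but the marginal-law argument renders the detour unnecessary.
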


\begin{proof}
We show that any process $\Xbb\in\Ccal_2$ visits a sublinear number of distinct instances almost surely. Fix $\Xbb\in\Ccal_2$. Using \cite[Lemma 5.1]{blanchard:22e}, because $\Xcal$ does not admit a non-atomic probability measure, there exists a countable set $Supp(\Xbb)$ such that on an event $\Ecal$ of probability one, for all $t\geq 1$, $X_t\in Supp(\Xbb)$. Then consider the sequence $(\{x\})_{x\in Supp(\Xbb)}$ of disjoint measurable sets of $\Xcal$. Applying the $\Ccal_2$ property of $\Xbb$ to this sequence yields $|\{x\in Supp(\Xbb): \{x\}\cap \Xbb_{\leq T}\}| = o(T),\; (a.s.).$ We denote by $\Fcal$ the corresponding event of probability one. By union bound $\Pbb[\Ecal\cap\Fcal]=1$. Now on the event $\Ecal$, for any $T\geq 1$ we have
\begin{equation*}
    |\{x\in \Xcal: \{x\}\cap \Xbb_{\leq T}\neq\emptyset\}| = |\{x\in Supp(\Xbb): \{x\}\cap \Xbb_{\leq T}\}|.
\end{equation*}
As a result, on the event $\Ecal\cap\Fcal$ we have $|\{x\in \Xcal: \{x\}\cap \Xbb_{\leq T}\neq\emptyset\}|=o(T)$, which proves the claim that $\Ccal_2$ visit a sublinear number of distinct instances almost surely. As a result, the learning rule $f_\cdot$ from \cref{prop:EXP.IX_parrallel} which simply performs independent copies of the $\EXPIX$ algorithm for each distinct visited instance is universally consistent under all processes $\Xbb\in\Ccal_2$. Now recall that in the stationary case, the condition $\Ccal_2$ is already necessary for universal learning. In fact, this condition is already necessary for universal learning in the noiseless full-feedback setting \cite{hanneke:21}. As a result, $\Ccal_{online}\subset\Ccal_{stat}=\Ccal_2$. Therefore, universally learnable processes are exactly $\Ccal_2$ even in the online rewards setting and $f_\cdot$ is optimistically universal, which completes the proof.
\end{proof}

\section{Universally learnable processes for context spaces with non-atomic probability measures}
\label{sec:towards_characterization_learnable_processes}

\subsection{Necessary conditions on learnable processes}
\label{subsec:necessary_conditions_learnable_processes}

In the previous section, we showed that for spaces $\Xcal$ that do not have non-atomic probability measures, the set of learnable processes is exactly $\Ccal_2$, independently of the learning setting. Here, we focus on the remaining case of universal learning for spaces $\Xcal$ that admit a non-atomic probability measure for adversarial rewards and aim to understand which processes admit universal learning. We focus here on necessary conditions; sufficient conditions are given in the next section.

\subsubsection{Condition 4 is necessary for universal learning with oblivious rewards}
\label{subsubsec:necessary_condition}

We quickly recall the definition of condition $\Ccal_4$. For an integer $i\geq 0$ and any $k\geq 1$, we define $T^k_i = \lfloor 2^u (1+v2^{-i})  \rfloor$ where $k=u2^i + v$ and $u\geq 0, 0\leq v<2^i$ are integers. In particular, $u=\floor{k 2^{-i}}$ and $v= k\bmod 2^i$. These times form periods $[T_i^k,T_i^{k+1})$ which become finer as $i$ increases. Then consider the set of times $t$ such that $X_t$ is the first appearance of the instance on its period,
\begin{equation*}
    \Tcal^i = \{t\geq 1: T_i^k\leq t<T_i^{k+1}, \; \forall T_i^k\leq t'<t, X_{t'}\neq X_t\}.
\end{equation*}
We note that the sets $\Tcal^p$ are increasing with $p$. Condition $\Ccal_4$ is defined as follows.

\ConditionCsByScale*

We first give an alternative definition of $\Ccal_4$ which will be useful in the next results.

\begin{proposition}\label{prop:C5_equivalent_form}
    Let $\Xcal$ be a metrizable separable Borel space and $\Xbb$ a stochastic process on $\Xcal$. The following are equivalent.
    \begin{itemize}
        \item $\Xbb\in\Ccal_4$,
        \item For any sequence of decreasing measurable sets $(A_i)_{i\geq 1}$ with $A_i\downarrow\emptyset$,
        \begin{equation*}
            \sup_{p\geq 0}\Ebb\left[\limsup_{T\to\infty}\frac{1}{T} \sum_{t\leq T, t\in\Tcal^p} \1_{A_i}(X_t)\right] \underset{i\to\infty}{\longrightarrow} 0.
        \end{equation*}
        \item For any sequence of decreasing measurable sets $(A_i)_{i\geq 1}$ with $A_i\downarrow\emptyset$,
        \begin{equation*}
            \Ebb\left[\sup_{p\geq 0} \limsup_{T\to\infty}\frac{1}{T} \sum_{t\leq T, t\in\Tcal^p} \1_{A_i}(X_t)\right] \underset{i\to\infty}{\longrightarrow} 0.
        \end{equation*}
    \end{itemize}
\end{proposition}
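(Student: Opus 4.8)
The plan is to reduce everything to the single non-negative random variable
\[
\phi_p(A) \;:=\; \limsup_{T\to\infty}\frac1T\sum_{t\le T,\,t\in\Tcal^p}\1_A(X_t),
\]
and to exploit three structural facts. First, since $\Tcal^p\subseteq\Tcal^{p+1}$, the map $p\mapsto\phi_p(A)$ is non-decreasing, so $\sup_{p}\phi_p(A)=\lim_{p\to\infty}\phi_p(A)$. Second, $A\mapsto\phi_p(A)$ is monotone and sub-additive, because $\1_{A\cup B}\le\1_A+\1_B$ for disjoint $A,B$ and $\limsup$ is sub-additive. Third, I will use the fact recalled just before the statement that $\Xbb\in\Ccal_4$ forces $\Xbb^p=(X_t)_{t\in\Tcal^p}\in\Ccal_1$ for every fixed $p$, i.e.\ $\Ebb[\phi_p(A_k)]\to0$ whenever $A_k\downarrow\emptyset$. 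The equivalence of the second and third bullets is then immediate: for each fixed $i$ the sequence $p\mapsto\phi_p(A_i)$ is non-negative and non-decreasing, so monotone convergence gives $\Ebb[\sup_p\phi_p(A_i)]=\sup_p\Ebb[\phi_p(A_i)]$; the two candidate quantities agree term-by-term in $i$, hence have the same limit. So it remains to prove $\Ccal_4\Leftrightarrow$ (second bullet).

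For the direction (second bullet) $\Rightarrow\Ccal_4$, let $(A_i)_{i\ge1}$ be disjoint and set $B_i=\bigcup_{j\ge i}A_j$. Disjointness forces $B_i\downarrow\emptyset$, since every point lies in at most one $A_j$ and is therefore excluded from $B_i$ for large $i$. As $A_i\subseteq B_i$, monotonicity yields $\phi_i(A_i)\le\phi_i(B_i)\le\sup_p\phi_p(B_i)$, whence $\Ebb[\phi_i(A_i)]\le\sup_p\Ebb[\phi_p(B_i)]$. The second bullet applied to the decreasing family $(B_i)$ shows the right-hand side vanishes, so $\lim_i\Ebb[\phi_i(A_i)]=0$, which is exactly $\Ccal_4$.

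The hard direction is $\Ccal_4\Rightarrow$ (second bullet), and this is where I expect the main obstacle: $\Ccal_4$ only constrains disjoint families with the scale index tied to the set index, whereas the target controls a fixed \emph{nested} family uniformly over scales. I would argue by contradiction using a diagonal/padding construction. Suppose $(A_i)$ is decreasing with $A_i\downarrow\emptyset$ but $\sup_p\Ebb[\phi_p(A_i)]\ge L>0$ for all $i$ (the quantity is non-increasing in $i$, so a positive limit gives this bound). Build strictly increasing indices $i_1<i_2<\cdots$ and scales $m_1<m_2<\cdots$ inductively: given $i_j$, use $\Ebb[\phi_p(A_{i_j})]\uparrow\Ebb[\sup_p\phi_p(A_{i_j})]\ge L$ (monotone convergence in $p$) to pick $m_j>m_{j-1}$ with $\Ebb[\phi_{m_j}(A_{i_j})]\ge L-\tfrac L4$; then, using $\Xbb^{m_j}\in\Ccal_1$, pick $i_{j+1}>i_j$ with $\Ebb[\phi_{m_j}(A_{i_{j+1}})]\le\tfrac L4$. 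Setting $C_j=A_{i_j}\setminus A_{i_{j+1}}$ gives disjoint sets (nested annuli), and sub-additivity applied to $A_{i_j}=C_j\sqcup A_{i_{j+1}}$ yields $\phi_{m_j}(C_j)\ge\phi_{m_j}(A_{i_j})-\phi_{m_j}(A_{i_{j+1}})$, hence $\Ebb[\phi_{m_j}(C_j)]\ge L/2$. Finally I would assemble the disjoint sequence $(D_m)_{m\ge1}$ by placing $D_{m_j}=C_j$ and $D_m=\emptyset$ otherwise; the empty sets contribute nothing, while $\Ebb[\phi_{m_j}(D_{m_j})]\ge L/2$ for infinitely many indices, so $\lim_m\Ebb[\phi_m(D_m)]\neq0$, contradicting $\Ccal_4$. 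The delicate points to check are that the annuli are genuinely disjoint and that the padding does not disturb the matched scale/index evaluation demanded by $\Ccal_4$, both of which hold by the nesting of $(A_i)$ and by $\phi_m(\emptyset)=0$.
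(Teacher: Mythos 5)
Your reduction of the third bullet to the second (monotonicity of $p\mapsto\Tcal^p$ plus monotone convergence, so that $\Ebb[\sup_p\phi_p(A_i)]=\sup_p\Ebb[\phi_p(A_i)]$) and your proof that the second bullet implies $\Ccal_4$ (via $B_i=\bigcup_{j\geq i}A_j$ and monotonicity) are both correct and coincide with the paper's. The gap is in the hard direction, $\Ccal_4\Rightarrow$ second bullet, at the step where you ``use $\Xbb^{m_j}\in\Ccal_1$'' to choose $i_{j+1}$ with $\Ebb[\phi_{m_j}(A_{i_{j+1}})]\leq L/4$. The fact that $\Xbb\in\Ccal_4$ forces $\Xbb^p\in\Ccal_1$ for every fixed $p$ is not available to you here: it is exactly the single-scale instance of the statement you are proving (restrict the sup in the second bullet to one $p$), and the paper's remark asserting it in the preliminaries is itself only justified by this proposition. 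Deriving it directly from the definition of $\Ccal_4$ is not formal: $\Ccal_4$ constrains \emph{disjoint} families with matched set and scale indices, while $A\mapsto\phi_p(A)$ is only finitely subadditive, so vanishing on disjoint sequences does not automatically give continuity at $\emptyset$ along decreasing sequences (one cannot bound $\phi_p(A_{i_j})$ by the tail sum of the annuli). Concretely, your induction breaks precisely when some scale $p$ satisfies $\inf_j\Ebb[\phi_p(A_j)]=\eta>0$: then no admissible $i_{j+1}$ exists, and this scenario is not excluded by anything you have established.

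That missing scenario is the paper's Case 1, and it requires a genuinely different idea: time localization. If $\Ebb[\phi_p(A_j)]\geq\eta$ for all $j$, one first chooses a finite time window $(T^{k(l-1)},T^{k(l)}]$ on which the supremum of the normalized counts for $A_{j(l-1)}$ nearly attains its expected limsup, and then chooses $j(l)$ so large that $A_{j(l)}$ is unvisited before time $T^{k(l)}$ with probability at least $1-\eta/4$ (possible because $A_j\downarrow\emptyset$). On that window the annulus $A_{j(l-1)}\setminus A_{j(l)}$ carries the same count as $A_{j(l-1)}$, so it retains mass $\eta/2$ even though $\Ebb[\phi_p(A_{j(l)})]$ never decays. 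Grouping the annuli into disjoint unions $B_u=\bigcup_{l\in S_u}\bigl(A_{j(l-1)}\setminus A_{j(l)}\bigr)$ over disjoint infinite index classes $S_u$ then yields a disjoint sequence with $\Ebb[\phi_u(B_u)]\geq\eta/2$ for all $u\geq p$, contradicting $\Ccal_4$. Your diagonal/padding construction is exactly the paper's Case 2 and is fine once Case 1 has been disposed of; as written, however, the argument is circular.
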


\begin{proof}
Suppose that the second proposition is not satisfied. We aim to show that $\Xbb\notin\Ccal_4$. By hypothesis, there exists measurable sets $A_i\downarrow\emptyset$, $\epsilon>0$, and an increasing sequence of indices $(i_p)_{p\geq 1}$  such that
\begin{equation*}
    \sup_{l\geq 0}\Ebb\left[\limsup_{T\to\infty}\frac{1}{T} \sum_{t\leq T, t\in\Tcal^l} \1_{A_{i_p}}(X_t)\right] \geq \epsilon.
\end{equation*}
Now let $i\geq 1$ and $p\geq 1$ such that $i_p\geq i$. We observe that because $A_{i_p}\subset A_i$,
\begin{equation*}
    \sup_{l\geq 0}\Ebb\left[\limsup_{T\to\infty}\frac{1}{T} \sum_{t\leq T, t\in\Tcal^l} \1_{A_i}(X_t)\right] \geq \sup_{l\geq 0}\Ebb\left[\limsup_{T\to\infty}\frac{1}{T} \sum_{t\leq T, t\in\Tcal^l} \1_{A_{i_p}}(X_t)\right] \geq \epsilon.
\end{equation*}
Hence, for any $i\geq 1$, there exists $p(i)>0$ such that
\begin{equation*}
    \Ebb\left[\limsup_{T\to\infty}\frac{1}{T} \sum_{t\leq T, t\in\Tcal^{p(i)}} \1_{A_i}(X_t)\right] \geq \frac{\epsilon}{2}.
\end{equation*}
\paragraph{Case 1.}We consider a first case where there exists $\eta_i>0$ such that for any $j\geq i$,
\begin{equation*}
    \Ebb\left[\limsup_{T\to\infty}\frac{1}{T} \sum_{t\leq T, t\in\Tcal^{p(i)}} \1_{A_j}(X_t)\right] \geq \eta_i.
\end{equation*}
For simplicity, we will write $T^k = T^k_{p(i)}$. We will also drop the indices $i$ of $p(i)$ and $\eta_i$ for conciseness. We now construct by induction a sequence of indices $(k(l))_{l\geq 0}$ together with indices $(j(l))_{l\geq 0}$ with $k(0)=1$, $j(0)=i$ and such that for any $l\geq 1$,
\begin{equation*}
    \Ebb\left[\sup_{T^{k(l-1)}<T\leq T^{k(l)}}\frac{1}{T} \sum_{t\leq T, t\in\Tcal^p} \1_{A_{j(l-1)}\setminus A_{j(l)}}(X_t)\right] \geq \frac{\eta}{2}.
\end{equation*}
Suppose that we have already constructed $j(0),\ldots,j(l-1)$ and $k(0),\ldots,k(l-1)$. Note that
\begin{equation*}
    \Ebb\left[\sup_{T>T^{k(l-1)}}\frac{1}{T} \sum_{t\leq T, t\in\Tcal^p} \1_{A_{j(l-1)}}(X_t)\right] \geq \Ebb\left[\limsup_{T\to\infty}\frac{1}{T} \sum_{t\leq T, t\in\Tcal^p} \1_{A_{j(l-1)}}(X_t)\right] \geq \eta.
\end{equation*}
Therefore, by the dominated convergence theorem, there exists $k(l)>k(l-1)$ such that
\begin{equation*}
     \Ebb\left[\sup_{T^{k(l-1)}<t\leq T^{k(l)}}\frac{1}{T} \sum_{t\leq T, t\in\Tcal^p} \1_{A_{j(l-1)}}(X_t)\right] \geq \frac{3\eta}{4}.
\end{equation*}
Now because $A_i\downarrow \emptyset$, there exists $j(l)>j(l-1)$ such that $\Pbb[A_{j(l)}\cap\Xbb_{\leq T^{k(l)}}=\emptyset]\geq 1- \frac{\eta}{4}$. Let us denote by $\Ecal$ this event. Then,
\begin{align*}
    \Ebb&\left[\sup_{T^{k(l-1)}<t\leq T^{k(l)}}\frac{1}{T} \sum_{t\leq T, t\in\Tcal^p} \1_{A_{j(l-1)}\setminus A_{j(l)}}(X_t)\right] \\
    &\geq \Ebb\left[\1[\Ecal]\sup_{T^{k(l-1)}<t\leq T^{k(l)}}\frac{1}{T} \sum_{t\leq T, t\in\Tcal^p} \1_{A_{j(l-1)}\setminus A_{j(l)}}(X_t)  \right]\\
    &= \Ebb\left[\1[\Ecal]\sup_{T^{k(l-1)}<t\leq T^{k(l)}}\frac{1}{T} \sum_{t\leq T, t\in\Tcal^p} \1_{A_{j(l-1)}}(X_t) \right]\\
    &\geq \Ebb\left[\sup_{T^{k(l-1)}<t\leq T^{k(l)}}\frac{1}{T} \sum_{t\leq T, t\in\Tcal^p} \1_{A_{j(l-1)}}(X_t) \right] - \frac{\eta}{4}\geq \frac{\eta}{2}.
\end{align*}
This ends the construction of the indices $k(l)$ and $j(l)$ for $l\geq 1$. Now for any $u\geq 1$, let $S_u = \{l\geq 1:l\equiv 2^{u-1}\bmod 2^u\}$. The main remark is that $S_u$ is infinite for all $u\geq 1$ and they are all disjoint. We then pose $B_u = \bigcup_{l\in S_u} A_{j(l-1)}\setminus A_{j(l)}$. Because all $S_u$ are disjoint, this implies that the sets $(B_u)_u$ are also disjoint. Then, using Fatou's lemma together with the fact that all $S_u$ are infinite, we obtain
\begin{align*}
    \Ebb\left[\limsup_{T\to\infty}\frac{1}{T} \sum_{t\leq T, t\in\Tcal^p} \1_{B_u}(X_t)\right]
    &\geq \limsup_{k\in S_u} \Ebb\left[\sup_{T^{k(l-1)}<T\leq T^{k(l)}}\frac{1}{T} \sum_{t\leq T, t\in\Tcal^p} \1_{B_u}(X_t)\right]\\
    &\geq \limsup_{k\in S_u}\Ebb\left[\sup_{T^{k(l-1)}<T\leq T^{k(l)}}\frac{1}{T} \sum_{t\leq T, t\in\Tcal^p} \1_{A_{j(l-1)}\setminus A_{j(l)}}(X_t)\right]\\
    &\geq \frac{\eta}{2}.
\end{align*}
We obtain therefore for any $u\geq p$
\begin{equation*}
    \Ebb\left[\limsup_{T\to\infty}\frac{1}{T} \sum_{t\leq T, t\in\Tcal^u} \1_{B_u}(X_t)\right] \geq \Ebb\left[\limsup_{T\to\infty}\frac{1}{T} \sum_{t\leq T, t\in\Tcal^p} \1_{B_u}(X_t)\right] \geq \frac{\eta}{2}.
\end{equation*}
This ends the proof that $\Xbb\notin\Ccal_4.$

\paragraph{Case 2.} Recalling that the sets $(A_i)_i$ are decreasing, we can now suppose that for all $i\geq 1$, one has $\Ebb\left[\limsup_{T\to\infty}\frac{1}{T} \sum_{t\leq T, t\in\Tcal^{p(i)}} \1_{A_j}(X_t)\right] \to 0$ as $j\to\infty$. We now construct a sequence of indices $(i(u))_{u\geq 1}$ as follows such that $i(1)=1$ and for any $u\geq 1$,
\begin{equation*}
    \Ebb\left[\limsup_{T\to\infty}\frac{1}{T} \sum_{t\leq T, t\in\Tcal^{p(i(u))}} \1_{A_{i(u)}\setminus A_{i(u+1)}}(X_t)\right] \geq \frac{\epsilon}{4}.
\end{equation*}
Suppose we have constructed $i(u)$. Then, by the hypothesis of this case, there exists $i(u+1)>i(u)$ such that
\begin{equation*}
    \Ebb\left[\limsup_{T\to\infty}\frac{1}{T} \sum_{t\leq T, t\in\Tcal^{p(i(u))}} \1_{A_{i(u+1)}}(X_t)\right] \leq \frac{\epsilon}{4}.
\end{equation*}
Now note that
\begin{multline*}
    \frac{\epsilon}{2} \leq \Ebb\left[\limsup_{T\to\infty}\frac{1}{T} \sum_{t\leq T, t\in\Tcal^{p(i(u))}} \1_{A_{i(u)}}(X_t)\right] \leq \Ebb\left[\limsup_{T\to\infty}\frac{1}{T} \sum_{t\leq T, t\in\Tcal^{p(i(u))}} \1_{A_{i(u)}\setminus A_{i(u+1)}}(X_t)\right]\\
    + \Ebb\left[\limsup_{T\to\infty}\frac{1}{T} \sum_{t\leq T, t\in\Tcal^{p(i(u))}}  \1_{A_{i(u+1)}}(X_t)\right].
\end{multline*}
As a result, the induction at step $p$ is complete. We then define a sequence of measurable sets $(B_j)_{j\geq 1}$ such that for any $u\geq 1$, $B_{p(i(u))} = A_{i(u)}-A_{i(u+1)}$, and for all other indices $j\notin\{p(i(u)),u\geq 1\}$ we set $B_j=\emptyset$. All these sets are disjoint, and we have for any $u\geq 1$,
\begin{equation*}
    \Ebb\left[\limsup_{T\to\infty}\frac{1}{T} \sum_{t\leq T, t\in\Tcal^{p(i(u)}} \1_{B_{p(i(u))}}(X_t)\right] \geq \frac{\epsilon}{4}.
\end{equation*}
Therefore, $\Xbb\notin\Ccal_4$.\\

We now show that if $\Xbb$ satisfies the second property, then $\Xbb\in\Ccal_4$. Let $(A_i)_i$ be a sequence of disjoint measurable sets, and define $B_i=\bigcup_{j\geq i}A_j$. Then,
\begin{align*}
    0\leq \Ebb\left[\limsup_{T\to\infty}\frac{1}{T} \sum_{t\leq T, t\in\Tcal^i} \1_{A_i}(X_t)\right] &\leq \Ebb\left[\limsup_{T\to\infty}\frac{1}{T} \sum_{t\leq T, t\in\Tcal^i} \1_{B_i}(X_t)\right]\\
    &\leq \sup_{p\geq 0}\Ebb\left[\limsup_{T\to\infty}\frac{1}{T} \sum_{t\leq T, t\in\Tcal^p} \1_{B_i}(X_t)\right].
\end{align*}
Hence, because $B_i\downarrow\emptyset$, the second property implies that $    \Ebb\left[\limsup_{T\to\infty}\frac{1}{T} \sum_{t\leq T, t\in\Tcal^i} \1_{A_i}(X_t)\right] \to 0$ as $i\to\infty$.

Now for any Borel set $A$, by the dominated convergence theorem and the fact that the sets $\Tcal^p$ are increasing for $p\geq 0$, we obtain
\begin{equation*}
    \lim_{p\to \infty}\Ebb\left[\limsup_{T\to\infty}\frac{1}{T} \sum_{t\leq T, t\in\Tcal^p} \1_A(X_t)\right]  = \Ebb\left[\lim_{p\to\infty}\limsup_{T\to\infty}\frac{1}{T} \sum_{t\leq T, t\in\Tcal^p} \1_A(X_t)\right],
\end{equation*}
where both terms are bounded by $1$. In other terms,
\begin{equation*}
    \sup_{p\geq 0}\Ebb\left[\limsup_{T\to\infty}\frac{1}{T} \sum_{t\leq T, t\in\Tcal^p} \1_A(X_t)\right]  = \Ebb\left[\sup_{p\geq 0}\limsup_{T\to\infty}\frac{1}{T} \sum_{t\leq T, t\in\Tcal^p} \1_A(X_t)\right].
\end{equation*}
As a result, the second and third condition of the proposition are equivalent.
\end{proof}

The main result of this section is that the $\Ccal_4$ condition is necessary for universal learning with oblivious rewards.

\begin{theorem}\label{thm:condition_5_necessary}
Let $\Xcal$ a metrizable separable Borel space, and a finite action space $\Acal$ with $|\Acal|\geq 2$. Then, $\Ccal_{oblivious}\subset\Ccal_4$.
\end{theorem}

\begin{proof}
Fix, $a_1,a_2\in\Acal$ two distinct actions. By contradiction, let $\Xbb\notin\Ccal_4$ and $f_\cdot$ a universally consistent learning rule under $\Xbb$ for oblivious rewards. For simplicity, we will denote by $\hat a_t$ the action selected by the learning rule at time $t$. By hypothesis, let $(A_i)_{i\geq 1}$ be a sequence of disjoint measurable sets and $0<\epsilon\leq 1$ such that
\begin{equation*}
    \limsup_{i\to\infty}\Ebb\left[\limsup_{T\to\infty}\frac{1}{T} \sum_{t\leq T, t\in\Tcal^i} \1_{A_i}(X_t)\right]\geq \epsilon.
\end{equation*}
Then, there exists an increasing sequence $(j(i))_{i\geq 1}$ such that for any $p\geq 1$,
\begin{equation*}
    \Ebb\left[\limsup_{T\to\infty}\frac{1}{T} \sum_{t\leq T, t\in\Tcal^{j(i)}} \1_{A_{j(i)}}(X_t)\right]\geq \frac{\epsilon}{2}.
\end{equation*}
We write $\Ical=\{j(i),i\geq 1\}$. Without loss of generality, we can suppose $A_j=\emptyset$ if $j\in \Ical$.
We now construct recursively rewards $(r_t)_{t\geq 1}$ on which this algorithm is not consistent, as well as a policy $\pi^*:\Xcal\to\Acal$ compared to which the algorithm has high regret. The reward functions and policy are constructed recursively together with an increasing sequence of times $(T^p)_{p\in\Ical}$ such that after the $p-$th iteration of the construction process, the rewards $r_t$ for $t\leq T^p$ have been defined such that $r_t(\cdot\mid x_{\leq t})=0$ if $x\notin\bigcup_{i<p}A_i$, the policy $\pi^*(\cdot)$ is defined on $\bigcup_{i< p}A_i$ and always the best action in hindsight until $T^{p-1}$. For $p=j(p')$, suppose that we have performed $p'-1$ iterations of this construction and have constructed the times $T^{j(1)},\ldots,T^{j(p'-1)}$. For convenience, let $\alpha_p=2^{-p-1}$ and define $K_p =\left\lceil\frac{2}{\alpha_{p}}\log\frac{2^6}{\epsilon}\right\rceil$, $\beta_p=\frac{\epsilon}{2^{10}(1+2\alpha_p)^{(K_p-1)K_p} 4^{K_p}}$, $\tilde K_p = \left\lceil\frac{2}{\alpha_{p}}\log \frac{8}{\beta_p}\right\rceil$ and $M_p = \max(\frac{8}{\epsilon \alpha_{p} }, (1+2\alpha_{p})^{K_p+\tilde K_p})$. We first construct by induction an increasing sequence of indices $(k(l))_{l\geq 0}$ with $k(0) = \min\{k\geq 2^p:T_p^k > M_p T^{j(p'-1)}\}$ and such that for any $l\geq 1$, $T_p^{k(l)} > M_p T_p^{k(l-1)}$ and
\begin{equation*}
    \Ebb\left[\max_{M_p T_p^{k(l-1)}< T\leq T_p^{k(l)}} \frac{1}{T} \sum_{t\leq T,t\in\Tcal^p} \1_{A_{p}}(X_t)\right] \geq \frac{\epsilon}{4}.
\end{equation*}
To do so, suppose that we have constructed $k(l')$ for $0\leq l'<l$. Note that
\begin{equation*}
    \Ebb\left[\sup_{T> M_p T_p^{k(l-1)}}\frac{1}{T} \sum_{t\leq T, t\in\Tcal^p} \1_{A_{p}}(X_t)\right]\geq \Ebb\left[\limsup_{T\to\infty}\frac{1}{T} \sum_{t\leq T, t\in\Tcal^p} \1_{A_{p}}(X_t)\right]\geq\frac{\epsilon}{2}.
\end{equation*}
Then, by dominated convergence theorem, there exists $k(l)>k(l-1)$ such that $T_p^{k(l)}>M_pT_p^{k(l-1)}$ and
\begin{equation*}
    \Ebb\left[\max_{M_p T_p^{k(l-1)} < T\leq T_p^{k(l)}}\frac{1}{T} \sum_{t\leq T, t\in\Tcal^p} \1_{A_{p}}(X_t)\right] \geq \Ebb\left[\sup_{T> M_p T_p^{k(l-1)}}\frac{1}{T} \sum_{t\leq T, t\in\Tcal^p} \1_{A_{p}}(X_t)\right] - \frac{\epsilon}{4} \geq \frac{\epsilon}{4}.
\end{equation*}
This ends the construction of the sequence $(k(l))_{l\geq 0}$. We then denote by $\hat k(l)$ the index of a phase $(T^{k-1}_p,T^{k}_p]$ where the max is attained, i.e.
\begin{equation*}
    \hat k(l) = \argmax_{k\leq k(l)} \left(\max_{M_p T_p^{k(l-1)},T_p^{k-1}< T\leq T_p^k} \frac{1}{T} \sum_{t\leq T,t\in\Tcal^p} \1_{A_{p}}(X_t) \right).
\end{equation*}
Ties can be broken with alphabetical order. Because $T^k_p\leq 2T^{k-1}_p$, we have in particular,
\begin{equation*}
    \Ebb\left[\frac{1}{T^{\hat k(l)}_p} \sum_{t\leq T^{\hat k(l)}_p,t\in\Tcal^p} \1_{A_{p}}(X_t) \right]\geq \frac{\epsilon}{8}.
\end{equation*}
Now for any $l\geq 1$, let $\delta_l$ such that
\begin{equation*}
    \Pbb\left[ \min_{1\leq t,t'\leq T_p^{k(l)},X_t\neq X_{t'}} \rho(X_t,X_{t'}) \leq \delta_l\right]\leq \frac{\epsilon}{2^{l+10}}.
\end{equation*}
Then, let $\Ecal$ be the event when for all $l\geq 1$, we have $\min_{1\leq t,t'\leq T_p^{k(l)},X_t\neq X_{t'}} \rho(X_t,X_{t'}) > \delta_l$. By the union bound, $\Pbb[\Ecal]\geq 1-\frac{\epsilon}{2^{10}}$. As a result, we have
\begin{equation}\label{eq:def_hat_k}
    \Ebb\left[\frac{1}{T^{\hat k(l)}_p} \sum_{t\leq T^{\hat k(l)}_p,t\in\Tcal^p} \1_{A_{p}}(X_t) \mid\Ecal \right]\geq \frac{\epsilon}{16}.
\end{equation}
Now for $\delta>0$ and $u\geq 1$, define the sets $P_u(\delta) = (A_{p}\cap B(x^u,\delta))\setminus\bigcup_{v<u} B(x^v,\delta)$ which form a partition of $A_p$. For any $\delta>0$ and sequence $\mb b =(b_u)_{u\geq 1}$ in $\{0,1\}$ we consider the following deterministic rewards
\begin{equation*}
    r_{\delta,\mb b}(a\mid x) = \begin{cases}
        b_u & a=a_1,\; x\in P_u(\delta),\\
        \frac{3}{4} &a=a_2,\\
        0 & a\notin\{a_1,a_2\},
    \end{cases}  \text{if } x\in A_p,\quad\quad 
    r_{\delta,\mb b}(\cdot \mid x) = 0  \text{ if } x\notin A_p.
\end{equation*}
For any sequence of binary sequences $\mathbf b =(\mb {b^k})_{k\geq 0}$ where $\mb{b^k}=(b^k_u)_{u\geq 1}$, and binary sequence $\mb c = (c_k)_{k\geq 0}$ we construct the rewards $\mb r^{\mathbf b,\mb c}$ as follows. For $t\leq T^{j(p'-1)}$ we pose $r^{\mathbf b,\mb c}_t = r_t$ so that the rewards $\mb r^{\mathbf b,\mb c}$ coincide with those constructed by induction so far. For $T^{j(p'-1)}<t\leq T_p^{k(0)}$ we pose $r_t^{\mathbf b,\mb c}=0$. For $t>T^{k(0)}_p$ let $l\geq 1$ such that $T^{k(l-1)}_p<t\leq T^{k(l)}_p$ and $k> k(0)$ such that $T^{k-1}_p<t\leq T^k_p$. Then, we pose
\begin{equation*}
    r^{\mathbf b,\mb c}_t(a\mid x_{\leq t}) =\begin{cases}
        0 & \exists t'\leq T^{k(l-1)}_p: x_{t'}=x_t\\
        0 & \text{o.w. }  c_k=0,\\
        r_{\delta_l,\mb{b^l}}(a\mid x_t) & \text{o.w. } c_k=1,\forall T^{k-1}_p<t'<t:x_{t'}\neq x_t,\\
        0 & \text{o.w. } c_k=1,\exists T^{k-1}_p<t'<t:x_{t'}= x_t,\\
    \end{cases}
\end{equation*}
for $a\in\Acal,x_{\leq t}\in\Xcal^t.$ Note that these rewards coincide on the rewards that have been constructed by induction so far. Now let $\mathbf b$ be generated such that all $\mb{b^k}$ are independent i.i.d. Bernouilli $\Bcal(\frac{1}{2})$ random sequences in $\{0,1\}$, and $\mb c$ is also an independent i.i.d. $\Bcal(\frac{1}{2})$ process. The sequence is used to delete some periods $(T^{k-1}_p,T^k_p]$. Precisely, for any $l\geq 1$,  we consider the following event where we deleted the periods between $\hat k(l)-K_p-\tilde K_p$ and $\hat k(l)-K_p$ but did not delete periods after this phase until period $\hat k(l)$,
\begin{equation*}
    \Fcal^p_l = \bigcap_{\hat k(l) - K_p-\tilde K_p < k \leq \hat k(l)-K_p}\{c_k=0\} \cap \bigcap_{\hat k(l) - K_p < k\leq \hat k(l)}\{c_k=1\}.
\end{equation*}
One can note that the events $\Fcal^p_l$ for $l\geq 1$ are together independent. Indeed, $\hat k(l)\leq k(l)$ and $T^{\hat k(l)}_p> M_p T^{k(l-1)} \geq (1+2\alpha_p)^{K_p+\tilde K_p} T^{k(l-1)}$, which yields $\hat k(l)>k(l-1) + K_p+\tilde K_p$. As a result, the indices of $\mb c$ considered in the events $\Fcal^p$ all lie in distinct intervals $(k(l-1),k(l)]$, hence their independence. Further, we have $\Pbb[\Fcal^p_l]=2^{-K_p-\tilde K_p}$. Then, the Borel-Cantelli implies that on an event $\Fcal^p$ of probability one, there is an infinite number of $l\geq 1$ such that $\Fcal^p_l$ is satisfied.

Next, define $\pi_0:x\in\Xcal\mapsto a_2\in\Acal$, the policy which always selects arm $a_2$. Fix any realization of $\mathbf b$ and $\mb c$. Because $f_\cdot$ is universally consistent for oblivious rewards, it has in particular sublinear regret compared to $\pi_0$ under rewards $\mb r^{\mathbf b,\mb c}$, i.e., almost surely $\limsup_{T\to\infty} \frac{1}{T}\sum_{t=1}^T r^{\mathbf b,\mb c}_t(a_2\mid X_t) - r^{\mathbf b,\mb c}_t(\hat a_t \mid X_t)) \leq 0$. 
Now observe that the event $\Fcal^p$ only depends on $\mb c$ and $\Xbb$ and is in particular independent from $\mathbf b$. Therefore, $\Pbb[\Ecal\cap\Fcal^p\mid \mathbf b] =\Pbb[\Ecal\cap\Fcal^p] \geq 1-\frac{\epsilon}{2^{10}} $, where we used $\Pbb[\Fcal^p]=1$. Therefore,
\begin{equation*}
    \Ebb\left[\limsup_{T\to\infty}\frac{1}{T} \sum_{t=1}^T r^{\mathbf{b},\mb c}_t(a_2\mid \Xbb_{\leq t}) - r^{\mathbf{b}, \mb c}_t(\hat a_t\mid \Xbb_{\leq t})  \mid \Ecal,\Fcal^p,\mathbf{b}\right] \leq 0.
\end{equation*}

For conciseness, we will omit the terms $\Xbb_{\leq t}$ in the rest of the proof. We then take the expectation over $\mathbf b$ and $\mb c$. Thus, by the dominated convergence theorem, there exists $l_0\geq 1$ such that

\begin{equation*}
    \Ebb\left[\sup_{T> T_p^{k(l_0)}}\frac{1}{T} \sum_{t=1}^T r^{\mathbf{b},\mb c}_t(a_2) - r^{\mathbf{b}, \mb c}_t(\hat a_t)  \mid \Ecal,\Fcal^p\right] \leq \frac{\beta_p}{8}.
\end{equation*}
On the event $\Fcal^p$, there exists $\hat l> l_0$ such that the event $\Fcal^p_{\hat l}$ is met. For convenience, we take $\hat l$ the minimum index satisying these conditions. Then, we have
\begin{align*}
    &\Ebb\left[\sup_{T_p^{\hat k(\hat l)-K_p} < T\leq T_p^{\hat k(\hat l)}}\frac{1}{T} \sum_{t=1}^T r^{\mathbf{b}, \mb c}_t(a_2) - r^{\mathbf{b}, \mb c}_t(\hat a_t)  \mid \Ecal, \Fcal^p\right] \\
    &\leq \Ebb\left[\sup_{T_p^{k(\hat l-1)} < T\leq T_p^{k(\hat l)}}\frac{1}{T} \sum_{t=1}^T r^{\mathbf{b}, \mb c}_t(a_2) - r^{\mathbf{b}, \mb c}_t(\hat a_t)  \mid \Ecal, \Fcal^p\right] \leq \frac{\beta_p}{8}.
\end{align*}
Now let $l^p$ such that $\Pbb[\hat l \leq l^p\mid \Fcal^p] \geq \frac{1}{2}$. Then,
\begin{equation}\label{eq:upper_bound_exploration_error}
    \Ebb\left[\sup_{T_p^{\hat k(\hat l)-K_p} < T\leq T_p^{\hat k(\hat l)}}\frac{1}{T} \sum_{t=1}^T r^{\mathbf{b}, \mb c}_t(a_2) - r^{\mathbf{b}, \mb c}_t(\hat a_t)  \mid \Ecal, \Fcal^p,\hat l\leq l_p\right] \leq \frac{\beta_p}{4}.
\end{equation}

For conciseness, we will write $\hat k$ for $\hat k(\hat l)$, let $\Gcal^p = \Ecal\cap\Fcal^p\cap\{\hat l\leq l_p\}$. We now use similar same arguments as in the proof of \cref{thm:no_optimistically_universal_lr}, to show that the learning rule incurs a large regret compared to the best action in hindsight, before time $T^{\hat k}_p$. We focus on the period $(T_p^{\hat k-K_p},T^{\hat k}_p]$, which we decompose using the sets
\begin{equation*}
    \Scal_q = \{T_p^{\hat k-K_p-1+q}<t\leq T^{\hat k-K_p + q}_p: X_t\in A_p\}\cap \Tcal^p ,\quad 1\leq q\leq K_p.
\end{equation*}
We also define $E_q$ the number of new exploration steps for arm $a_1$ during $\Scal_q$,
\begin{equation*}
    Exp_q = \left\{t\in\Scal_q: \hat a_t = a_1\text{ and }\forall t'\in \bigcup_{q'<q}\Scal_{q'}:X_{t'}=X_t,\; \hat a_{t'}\neq a_1\right\} \setminus \{t:\exists t'\leq T^{\hat k -\tilde K_p}_p, X_{t'}=X_t\},
\end{equation*}
and $E_q = |Exp_q|$. We now show by induction on $i$ that $\Ebb\left[\frac{E_q}{T_p^{\hat k}}\mid \Gcal^p\right]\leq (1+2\alpha_p)^{(q-1)K_p}4^{q+1}\beta_p $ for all $1\leq q\leq K_p$. Suppose that this is shown for all $1<q'<q$. Recalling that on the event $\Gcal^p$, for any $T^{\hat k-K_p-\tilde K_p}_p<t\leq T^{\hat k-K_p}_p$ we have $r^{\mathbf b, \mb c}_t=0$, we can use the same arguments as in \cref{thm:no_optimistically_universal_lr} to obtain
\begin{align*}
    \Ebb&\left[\frac{1}{T_p^{\hat k-K_p+q}}\sum_{t=1}^{T_p^{\hat k-K_p+q}} r^{\mathbf{b}, \mb c}_t(a_2) - r^{\mathbf{b}, \mb c}_t(\hat a_t) \mid \Gcal^p\right]\\
    &\geq -\Ebb\left[ \frac{T^{\hat k-K_p-\tilde K_p}_p}{T_p^{\hat k-K_p+q}}\mid \Gcal^p\right] + \sum_{q'=1}^q \Ebb\left[\frac{1}{T_p^{\hat k-K_p+q}}\sum_{t = T_p^{\hat k-K_p-1+q'}+1}^{ T_p^{\hat k-K_p+q'}} r^{\mathbf{b}, \mb c}_t(a_2) - r^{\mathbf{b}, \mb c}_t(\hat a_t) \mid \Gcal^p\right]\\
    &= -\Ebb\left[ \frac{T^{\hat k-K_p-\tilde K_p}_p}{T_p^{\hat k-K_p+q}}\mid \Gcal^p\right] + \sum_{q'=1}^q \Ebb\left[\frac{1}{T_p^{\hat k-K_p+q}}\sum_{t\in\Scal^{q'}} r^{\mathbf{b}, \mb c}_t(a_2) - r^{\mathbf{b}, \mb c}_t(\hat a_t) \mid \Gcal^p\right]\\
    &\geq -(1+q)\Ebb\left[ \frac{T^{\hat k-K_p-\tilde K_p}_p}{T_p^{\hat k-K_p+q}}\mid \Gcal^p\right] - \sum_{q'<q} \frac{q+1-q'}{4}\Ebb\left[\frac{E_{q'}}{T_p^{\hat k-K_p}}\mid\Gcal^p\right] \\
    &\quad\quad\quad\quad\quad\quad\quad+ \Ebb\left[\frac{1}{T_p^{\hat k}}\sum_{t = T_p^{\hat k-K_p-1+q}+1}^{ T_p^{\hat k-K_p+q}} \1_{Exp_q}(t) (r^{\mathbf{b}, \mb c}_t(a_2) - r^{\mathbf{b}, \mb c}_t(\hat a_t) ) \mid \Gcal^p\right],
\end{align*}
where the additional terms $-T^{\hat k-\tilde K_p}_p$ compared to the computations in \cref{thm:no_optimistically_universal_lr} are due to the fact that in $Exp_q$ we also discard times of instances that were visited before $T^{\hat k-\tilde K_p}$, and that in a single period $\Scal_q$, there are no duplicates. Now for any $T^{\hat k-K_p-1+q}<t\leq T^{\hat k-K_p+q}$ such that a pure exploration was performed $t\in Exp_q$, we have
\begin{equation*}
    \Ebb[r^{\mathbf{b}, \mb c}_t(a_2) - r^{\mathbf{b}, \mb c}_t(\hat a_t)\mid t\in Exp_q,\Gcal^p,\hat k] = \frac{3}{4} - \frac{0+1}{2} = \frac{1}{4},
\end{equation*}
because $X_t$ visits a set of the partition $(P_u(\delta_{\hat k-K_p+q}))_u$ which has never been visited in the past, hence the reward of $a_1$ on this set is equally likely to be $0$ or $1$ (depending on $\mathbf b$), and $\Gcal^p$ is independent from $\mathbf b$. Also, using the inequality $\log(1+z)\geq \frac{z}{2}$ for $0\leq z\leq 1$ we obtain $T^{\hat k-\tilde K_p}_p \leq (1+\alpha_p)^{-\tilde K_p}(1+T^{\hat k-K_p}_p)\leq \frac{\beta_p}{8}(1+ T^{\hat k-K_p}_p)\leq \frac{\beta_p}{4}T^{\hat k-K_p}_p$. Lastly, $T_p^{\hat k-K_p}\geq T_p^{\hat k}/(1+2\alpha_p)^{K_p}$. Combining these results with Eq~\eqref{eq:upper_bound_exploration_error} yields
\begin{equation*}
    \frac{\beta_p}{4} \geq -(1+q)\frac{\beta_p}{4} - \frac{1}{4}\sum_{q'<q} (q+1-q')(1+2\alpha_p)^{K_p}\Ebb\left[\frac{E_{q'}}{T_p^{\hat k}}\mid\Gcal^p\right] +\frac{1}{4}\Ebb\left[\frac{E_q}{T_p^{\hat k}}\mid\Gcal^p\right].
\end{equation*}
Thus,
\begin{align*}
    \Ebb\left[\frac{E_q}{T_p^{\hat k}}\mid\Gcal^p\right] &\leq  (2 + q)\beta_p+(1+2\alpha_p)^{K_p} \sum_{q'<q}(q+1-q')\Ebb\left[\frac{E_{q'}}{T_p^{\hat k}}\mid\Gcal^p\right] \\
    &\leq  (1+2\alpha_p)^{(q-1)K_p}\beta_p  \left(2+q + 4\sum_{q'=1}^{q-1} (q+1-q') 4^{q'} \right) \\
    &\leq (1+2\alpha_p)^{(q-1)K_p}4^{q+1} \beta_p .
\end{align*}
This completes the induction. Now for any $t\geq 1$, denote by $a^*_t = \argmax_{a\in\Acal} r_t^{\mathbf b,\mb c}(a\mid \Xbb_{\leq t})$ the optimal action in hindsight. In particular, $a^*_t\in\{a_1,a_2\}$. Now define
\begin{equation*}
    \Bcal = \bigcup_{q=1}^{K_0} \left\{t\in \Scal_q: \forall t'\in\bigcup_{q'<q} \Scal_{q'}: X_{t'}=X_t, t\notin Exp_{q'} \right\}.
\end{equation*}
These are times such that we never explored the action $a_2$. In particular, on $\Gcal^p$, the learner incurs an average regret of at least $\frac{1}{8}$ on these times since action $a_2$ would be optimal with probability $\frac{1}{2}$ with a reward excess $\frac{1}{4}$ over action $a_1$. Therefore,
\begin{align*}
    \Ebb\left[\frac{1}{T_p^{\hat k}}\sum_{t=1}^{T^{\hat k}_p} r_t^{\mathbf b,\mb c}(a^*_t)-r_t^{\mathbf b,\mb c}(\hat a_t) \mid \Gcal^p\right] &\geq \Ebb\left[\frac{1}{T_p^{\hat k}}\sum_{T^{\hat k-K_p}_p<t\leq T^{\hat k}_p} r_t^{\mathbf b,\mb c}(a^*_t)-r_t^{\mathbf b,\mb c}(\hat a_t) \mid \Gcal^p\right] \\
    &\geq \frac{1}{8} \Ebb\left[\frac{|\Bcal|}{T_p^{\hat k}} \mid \Gcal^p\right].
\end{align*}
Now denote by $T^*_p = |\{t\leq T^{\hat k}_p: X_t\in A_p\}\cap \Tcal^p|.$ Recall that because $\Fcal^p$ and $\hat l$ are independent from $\Ecal$, by Eq~\eqref{eq:def_hat_k}, we have $\Ebb\left[\frac{T^*_p}{T_p^{\hat k}}\mid\Gcal^p\right]=\left[\frac{T^*_p}{T_p^{\hat k}}\mid \Ecal\right]\geq \frac{\epsilon}{16}$. By construction, we have $|\Bcal| + \sum_{q=1}^{K_p} (K_p-q+1)E_q + K_p T_p^{\hat k-K_p-\tilde K_p} \geq T^*_p - T^{\hat k-K_p}_p$. Thus,
\begin{align*}
    \Ebb\left[\frac{1}{T_p^{\hat k}}\sum_{t=1}^{T^{\hat k}_p} r_t^{\mathbf b,\mb c}(a^*_t)-r_t^{\mathbf b,\mb c}(\hat a_t) \mid \Gcal^p\right] &\geq \frac{\epsilon}{2^7} -\frac{K_p}{4}\Ebb\left[ \frac{T_p^{\hat k-K_p-\tilde K_p}}{T_p^{\hat k}}\right] \\
    &\quad- \frac{\beta_p}{2} \sum_{q=1}^{K_p}(K_p-q+1)(1+2\alpha_p)^{(q-1)K_p}4^q\\
    &\geq \frac{\epsilon}{2^7} -\frac{\beta_p K_p}{16} - \frac{\beta_p}{2} (1+2\alpha_p)^{(K_p-1)K_p}4^{K_p+1}\\
    &\geq \frac{\epsilon}{2^8}.
\end{align*}
Recall that by construction $\Pbb[\hat l\leq l^p\mid \Fcal^p]\geq \frac{1}{2}$. Also, $\Pbb[\Fcal^p]=1$ and both these events are independent from $\Ecal$, hence , letting $T^p = T^{k(l^p)}_p$ we have
\begin{equation*}
    \Ebb\left[\sup_{T^{p-1}<T\leq T^p}\frac{1}{T}\sum_{t=1}^T r_t^{\mathbf b,\mb c}(a^*_t)-r_t^{\mathbf b,\mb c}(\hat a_t) \mid \Ecal \right] \geq \frac{1}{2}
    \Ebb\left[\frac{1}{T^{\hat k}_p}\sum_{t=1}^{ T^{\hat k}_p} r_t^{\mathbf b,\mb c}(a^*_t)-r_t^{\mathbf b,\mb c}(\hat a_t) \mid \Gcal^p \right] \geq  \frac{\epsilon}{2^9}.
\end{equation*}
This ends the construction of the sequence $T^p$. Then, for any binary sequences $\mb b$ and $\mb c$ we introduce slightly different rewards $(\tilde r_t^{\mb b,\mb c})_{t\leq T^p}$ as follows: for $t\leq T^{j(p'-1)}$, $\tilde r_t^{\mb b,\mb c}=r_t$, for $T^{j(p'-1)}<t\leq T_p^{k(0)}$ let $\tilde r_t^{\mb b,\mb c}=0$. For $t>T^{k(0)}_p$ let $l\geq 1$ such that $T^{k(l-1)}_p<t\leq T^{k(l)}_p$ and $k> k(0)$ such that $T^{k-1}_p<t\leq T^k_p$. Then, we pose
\begin{equation*}
    \tilde r^{\mb b,\mb c}_t(a\mid x_{\leq t}) =\begin{cases}
        0 & \exists t'\leq T^{k(l-1)}_p: x_{t'}=x_t\\
        0 & \text{o.w. }  c_k=0,\\
        r_{\delta_{l^p},\mb b}(a\mid x_t) & \text{o.w. } c_k=1,\forall T^{k-1}_p<t'<t:x_{t'}\neq x_t,\\
        0 & \text{o.w. } c_k=1,\exists T^{k-1}_p<t'<t:x_{t'}= x_t,\\
    \end{cases}
\end{equation*}
for $a\in\Acal,x_{\leq t}\in\Xcal^t.$ The only difference with the previous oblivious rewards is that we use the same reward function $r_{\delta_{l^p},\mb b}$ across phases $(T^{k(l-1)}_p,T^{k(l)}_p]$ for $l\leq l^p$. Then, consider the following policy,
\begin{equation*}
    \pi^{\mb b}(x) = \begin{cases} 
        a_1& \text{if } b_u= 1, x\in P_u(\delta_{l^p})\cap A_p,\\
        a_2& \text{if } b_u= 0, x\in P_u(\delta_{l^p})\cap A_p\\
        \pi^*(x) &\text{if }x\in \bigcup_{i<p}A_i\\
        a_1 &\text{if }x\notin\bigcup_{i\leq p}A_i.
    \end{cases}
\end{equation*}
Note that by induction hypothesis on the rewards $r_t$ for $t\leq T^{j(p'-1)}$, using the rewards $\tilde{\mb r}^{\mb b,\mb r}$, $\pi^{\mb b}$ always selects the best action in hindsight for times $t\leq T^{j(p'-1)}$. Also, by construction, $\pi^{\mb b}$ also selects the best action in hindsight for times $T^{j(p'-1)}<t\leq T^p$. 

Similarly to before, suppose that $\mb b,\mb c$ are generated as independent i.i.d. $\Bcal(\frac{1}{2})$ processes. We now argue that on the event $\Ecal$, the learning process with rewards $\mb r^{\mathbf b,\mb c}$ until $T^p$ is stochastically equivalent to the learning process with rewards $\tilde {\mb r}^{\mb b,\mb c}$ until $T^p$. Indeed, these rewards only differ in that for different periods $(T^{k(l-1)}_p,T^{k(l)}_p]$, we may have reward $r_{\delta_l,\mb {b^l}}$ instead of $r_{\delta_{l^p},\mb b}$. However, on the event $\Ecal$, new instances always fall in portions where the reward of $a_1$ is still $\Bcal(\frac{1}{2})$ conditionally on the current available history. This holds for both reward sequences. Further, duplicates can only affect rewards during the same period $(T^{k(l-1)}_p,T^{k(l)}_p]$ by construction---if $x_t$ is a duplicate from a previous period, the reward function is $0$. Hence, even though for $\mb r^{\mathbf b, \mb c}$, we have distinct sequences $\mb{b^l}$, these are all consistent with a single sequence $\mb b$ based on a finer partition at scale $\delta_{l^p}$. Precisely, we have
\begin{align*}
    \Ebb_{\mb b ,\mb c}&\left[\Ebb_{\Xbb,\hat a}\left[ \sup_{T^{j(p'-1)}<T\leq T^p} \frac{1}{T} \sum_{t=1}^T \tilde r^{\mb b,\mb c}_t(\pi^{\mb b}(X_t)) -  \tilde r^{\mb b,\mb c}(\hat a_t)\mid \Ecal\right]\right] \\
    &= \Ebb_{\Xbb}\left[\Ebb_{\mb b ,\mb c}\Ebb_{\hat a}\left[\sup_{T^{j(p'-1)}<T\leq T^p} \frac{1}{T} \sum_{t=1}^T \tilde r^{\mb b,\mb c}_t(a^*_t)) -  \tilde r^{\mb b,\mb c}(\hat a_t) \mid \Xbb,\Ecal\right] \mid \Ecal\right] \\
    &= \Ebb_{\Xbb}\left[\Ebb_{\mathbf b,\mb c}\Ebb_{\hat a}\left[\sup_{T^{j(p'-1)}<T\leq T^p} \frac{1}{T} \sum_{t=1}^T  r^{\mathbf b,\mb c}_t(a^*_t) -  r^{\mathbf b,\mb c}(\hat a_t)\mid \Xbb,\Ecal\right] \mid \Ecal\right]\\
    &= \Ebb\left[ \sup_{T^{j(p'-1)}<T\leq T^p} \frac{1}{T} \sum_{t=1}^T  r^{\mathbf b,\mb c}_t(a^*_t) -  r^{\mathbf b,\mb c}(\hat a_t) \mid \Ecal\right] \geq \frac{\epsilon}{2^9}.
\end{align*}
As a result, there exists a specific realization of $\mb b$ and $\mb c$ such that
\begin{equation*}
    \Ebb_{\Xbb,\hat a}\left[ \sup_{T^{j(p'-1)}<T\leq T^p} \frac{1}{T} \sum_{t=1}^T  \tilde r^{\mb b,\mb c}_t(\pi^{\mb b}(X_t)) -  \tilde r^{\mb b,\mb c}(\hat a_t) \mid \Ecal\right] \geq \frac{\epsilon}{2^9}.
\end{equation*}
Hence, because $\Pbb[\Ecal^c]\leq \frac{\epsilon}{2^{10}}$, we obtain
\begin{equation*}
    \Ebb_{\Xbb,\hat a}\left[ \sup_{T^{j(p'-1)}<T\leq T^p} \frac{1}{T} \sum_{t=1}^T  \tilde r^{\mb b,\mb c}_t(\pi^{\mb b}(X_t)) -  \tilde r^{\mb b,\mb c}(\hat a_t)\right]\geq \frac{\epsilon}{2^9}\left(1-\frac{\epsilon}{2^{10}}\right) - \frac{\epsilon}{2^{10}} \geq  \frac{\epsilon}{2^{11}}.
\end{equation*}
Now for all $t\leq T^p$ we pose $r_t=\tilde r_t^{\mb b,\mb c}$, and complete the definition of $\pi^*$ by setting $\pi^*(x)=\pi^{\mb b}(x)$ on $\bigcup_{i\leq p}A_i$. Note that these definitions are consistent with the previously constructed rewards and the actions selected by the policy on $\bigcup_{i<p}A_i$. This ends the recursive construction of the rewards $\mb r = (r_t)_{t\geq 1}$ and the policy $\pi^*$ on $\bigcup_{i\geq 1}A_i$. We close the definition of $\pi^*$ by setting $\pi^*(x)=a_1$ for $x\notin\bigcup_{i\geq 1}A_i$ arbitrarily. The constructed policy $\pi^*$ is measurable because it is measurable on each $A_i$ for $i\geq 1$.

We now analyze the regret of the algorithm compared to $\pi^*$ for the rewards $(r_t)_t$. First, note that the rewards are deterministic and that $\pi^*$ is the optimal policy, i.e., which always selects the best arm in hindsight. Also, if $\mb b,\mb c$ denote the  realizations used in the iteration $p=j(p')$ of the above recursion, for any $t\leq T^p$ we have $r_t = \tilde r_t^{\mb b,\mb c}$. As a result,
\begin{equation*}
    \Ebb\left[ \sup_{T^{j(p'-1)}<T\leq T^{j(p')}} \frac{1}{T} \sum_{t=1}^T  r_t(\pi^b(X_t)) -  r_t(\hat a_t)\right]\geq \frac{\epsilon}{2^{11}}.
\end{equation*}
Now by Fatou's lemma, we have
\begin{align*}
    \Ebb&\left[\limsup_{T\to\infty} \frac{1}{T} \sum_{t=1}^T  r_t(\pi^b(X_t)) -  r_t(\hat a_t)\right] \\
    &= 
    \Ebb\left[\limsup_{p'\to\infty} \sup_{T^{j(p'-1)}}<T\leq T^{j(p')} \frac{1}{T} \sum_{t=1}^T  r_t(\pi^b(X_t)) -  r_t(\hat a_t)\right]\\
    &\geq \limsup_{p\to\infty}\Ebb\left[\sup_{T^{j(p'-1)}}<T\leq T^{j(p')} \frac{1}{T} \sum_{t=1}^T  r_t(\pi^b(X_t)) -  r_t(\hat a_t)\right]\\
    &\geq \frac{\epsilon}{2^{11}}.
\end{align*}
As a result, $f_\cdot$ is not consistent on the oblivious rewards $(r_t)_t$ under $\Xbb$, which contradicts the hypothesis that $f_\cdot$ is universally consistent under $\Xbb$. This ends the proof of the theorem.
\end{proof}

Recall that the condition $\Ccal_2$ is necessary for universal learning because this is already the case for noiseless online learning \cite{hanneke:21} and is also a sufficient for universal learning in noiseless online learning \cite{blanchard:22a}, online learning with adversarial responses \cite{blanchard:22b} and stationary contextual bandits \cite{blanchard:22e}. In the next proposition, we show that our new necessary condition $\Ccal_4$ is a stronger condition than $\Ccal_2$.

\begin{proposition}\label{prop:condition_5_stronger_2}
Let $\Xcal$ be a metrizable separable Borel space. Then, $\Ccal_4\subset\Ccal_2$.
\end{proposition}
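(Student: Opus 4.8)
The plan is to prove the contrapositive: if $\Xbb\notin\Ccal_2$ then $\Xbb\notin\Ccal_4$. Since the defining form of $\Ccal_4$ (Condition~\ref{con:cs_by_scale}) is a diagonal over disjoint sets, I would instead work with the equivalent decreasing-set formulation established in Proposition~\ref{prop:C5_equivalent_form}: $\Xbb\in\Ccal_4$ if and only if for every decreasing sequence $A_i\downarrow\emptyset$ one has $\sup_{p\geq 0}\Ebb[\limsup_{T\to\infty}\frac1T\sum_{t\leq T,\,t\in\Tcal^p}\1_{A_i}(X_t)]\to 0$ as $i\to\infty$. It therefore suffices to exhibit a single decreasing sequence $B_i\downarrow\emptyset$ for which this supremum stays bounded away from $0$.

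First, from the failure of Condition~\ref{con:smv} I extract disjoint measurable sets $(A_k)_{k\geq1}$ together with constants $\epsilon>0$ and $p_0>0$ such that, writing $N_T=|\{k:\Xbb_{\leq T}\cap A_k\neq\emptyset\}|$, the event $G=\{\limsup_{T\to\infty}N_T/T\geq\epsilon\}$ has probability at least $p_0$ (this follows by decomposing the positive-probability event $\{\limsup_T N_T/T>0\}$ over rational thresholds). I then set $B_i=\bigcup_{k\geq i}A_k$, which decreases to $\emptyset$ because the $A_k$ are pairwise disjoint, so no point lies in infinitely many of them. Since at most $i-1$ of the sets $A_k$ with $k<i$ can ever be visited, the tail count $|\{k\geq i:\Xbb_{\leq T}\cap A_k\neq\emptyset\}|$ differs from $N_T$ by at most $i-1$, a constant in $T$; hence on $G$ we have $\limsup_{T\to\infty}\frac1T|\{k\geq i:\Xbb_{\leq T}\cap A_k\neq\emptyset\}|\geq\epsilon$ simultaneously for every $i$.

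The key step is the pointwise bound, valid for every $p\geq0$ (in particular for the coarsest scale $p=0$), $|\{k\geq i:\Xbb_{\leq T}\cap A_k\neq\emptyset\}|\leq \sum_{t\leq T,\,t\in\Tcal^p}\1_{B_i}(X_t)$. This holds because the global first appearance of any instance $x$ in $\Xbb_{\leq T}$ is automatically the first appearance of $x$ within whichever period $[T_p^k,T_p^{k+1})$ contains it, and so lies in $\Tcal^p$; thus the right-hand side counts each distinct instance of $B_i$ visited up to time $T$ at least once, and distinct visited sets $A_k$ with $k\geq i$ correspond to distinct such instances since the $A_k$ are disjoint. Dividing by $T$, taking $\limsup_T$, and integrating over $G$ then yields $\sup_{p\geq0}\Ebb[\limsup_{T\to\infty}\frac1T\sum_{t\leq T,\,t\in\Tcal^p}\1_{B_i}(X_t)]\geq\epsilon p_0>0$ for all $i$, which contradicts the equivalent characterization of $\Ccal_4$ from Proposition~\ref{prop:C5_equivalent_form} and completes the proof. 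The heart of the argument — and the only subtle point — is the first-appearance observation: it shows that even the single coarse scale $p=0$ already dominates the distinct-set count, so the refinement in $p$ provided by $\Ccal_4$ is not needed here, and the failure of $\Ccal_2$ propagates directly to $\Ccal_4$.
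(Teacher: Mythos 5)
Your proposal is correct and follows essentially the same route as the paper's proof: contrapositive, tail unions $B_i=\bigcup_{k\geq i}A_k\downarrow\emptyset$, the observation that global first-appearance times lie in $\Tcal^p$ for every $p$ (so the distinct-set count is dominated by $\sum_{t\leq T,\,t\in\Tcal^p}\1_{B_i}(X_t)$ already at a single scale), and the decreasing-set characterization from Proposition~\ref{prop:C5_equivalent_form}. The only cosmetic difference is that you pass through an event $G$ of probability $p_0$ to get the lower bound $\epsilon p_0$, whereas the paper works directly with the expectation $\Ebb[\limsup_T N_T/T]\geq\epsilon$; both are fine.
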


\begin{proof}
Suppose that $\Xbb\notin\Ccal_2$, then there exists a sequence of disjoint sets $(A_i)_{i\geq 1}$ and $\epsilon>0$ such that $\Ebb[\limsup_{T\to\infty}\frac{1}{T}|\{i\geq 1,A_i\cap\Xbb_{\leq T}\neq\emptyset\}| ] \geq \epsilon.$ We now let $B_i = \bigcup_{j\geq i}A_j$. We define $\bar\Tcal = \{t\geq 1:\forall t'<t,X_{t'}\neq X_t\}$ the set of new instances times. Then, for any $i\geq 1$,
\begin{align*}
    \Ebb\left[\limsup_{T\to\infty}\frac{1}{T}\sum_{t\leq T,t\in\Tcal^i}\1_{B_i}(X_t)\right] 
    &\geq \Ebb\left[\limsup_{T\to\infty}\frac{1}{T}\sum_{t\leq T,t\in\bar \Tcal}\1_{B_i}(X_t)\right]\\
    &\geq \Ebb\left[\limsup_{T\to\infty}\frac{|\{j\geq i: A_j\cap\Xbb_{\leq T}\neq\emptyset \}|}{T}\right]\\
    &= \Ebb\left[\limsup_{T\to\infty}\frac{|\{j\geq 1: A_j\cap\Xbb_{\leq T}\neq\emptyset \}|}{T}\right]\\
    &\geq \epsilon.
\end{align*}
This holds for all $i\geq 1$ and $B_i\downarrow\emptyset$. Hence, the second property of \cref{prop:C5_equivalent_form} implies $\Xbb\notin\Ccal_4$.
\end{proof}

In fact, $\Ccal_4$ is a strictly stronger condition than $\Ccal_2$ provided that $\Xcal$ admits a non-atomic probability measure. More precisely, in the next result, we explicitly construct a process $\Xbb\in\Ccal_2\setminus\Ccal_4$ which does not admit universal learning even in the memoryless setting. As a result, for memoryless, oblivious, prescient and online rewards, one cannot universally learn all $\Ccal_2$ processes, while this was achievable for stationary rewards. Thus having adversarial partial-feedback on the losses of each action strictly reduces the set of learnable processes $\Ccal_{online}\subset \Ccal_{oblivious}\subset \Ccal_{memoryless}\subsetneq \Ccal_2$.

\begin{theorem}\label{thm:learnable_process_smaller_C2}
Let $\Xcal$ be a metrizable separable Borel space such that there exists a non-atomic probability measure on $\Xcal$, and a finite action space $\Acal$ with $|\Acal|\geq 2$. Then, $\Ccal_4\subsetneq\Ccal_2$ and the set of learnable processes also satisfies $\Ccal_{memoryless} \subsetneq\Ccal_2$.
\end{theorem}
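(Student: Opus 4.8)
The plan is to exhibit a single \emph{random} process $\Xbb$ with $\Xbb\in\Ccal_2$, $\Xbb\notin\Ccal_4$, and $\Xbb\notin\Ccal_{oblivious}$; since $\Ccal_4\subset\Ccal_2$ by Proposition \ref{prop:condition_5_stronger_2} and $\Ccal_{oblivious}\subset\Ccal_{stat}=\Ccal_2$ \cite{blanchard:22e}, this single witness delivers both strict inclusions. The witness is forced to be random: by Proposition \ref{prop:deterministic_C2} every deterministic $\Ccal_2$ process is adversarially learnable, and by Proposition \ref{prop:EXP.IX_parrallel} so is any process visiting sublinearly many distinct instances, so $\Xbb$ must visit \emph{linearly} many distinct instances. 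This is reconcilable with $\Xbb\in\Ccal_2$ precisely because the non-atomic measure $\mu$ lets the process spread over uncountably many points while Condition \ref{con:smv} only constrains \emph{fixed countable} partitions.

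For the construction I would fix pairwise disjoint Borel sets $(A_i)_{i\ge1}$ with $\mu(A_i)>0$ and cycle through the zones following, inside each zone, the local template of the proof of Theorem \ref{thm:condition_5_necessary}. In zone $A_i$ I work at the scale-$i$ period grid $(T_i^k)_k$: for each ``chunk'' I draw a fresh i.i.d. batch from $\mu|_{A_i}$ and repeat it across the consecutive periods of that chunk, placed so that no instance repeats \emph{within} a single period and no instance is shared \emph{across} chunks. The stretch lengths are scheduled diagonally and grow fast enough that each zone $i$ is, cofinally in $T$, the dominating contributor to the prefix $[1,T]$.

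The two structural claims should then follow cleanly. For $\Xbb\in\Ccal_2$ I would fix a countable partition and bound the cells met in each chunk by the SMV property of the i.i.d.\ source $\mu|_{A_i}$, which yields $o(\text{length})$ new cells unaffected by the duplication; summing over chunks and using that the dominating stretch controls the prefix gives $o(T)$ cells almost surely. For $\Xbb\notin\Ccal_4$ I would apply Proposition \ref{prop:C5_equivalent_form} to the sets $A_i$ themselves: since the instances are distinct within each scale-$i$ period they all belong to $\Tcal^i$, so at the end of each dominating zone-$i$ stretch the quantity $\frac1T\sum_{t\le T,\,t\in\Tcal^i}\1_{A_i}(X_t)$ is bounded below, and this recurs cofinally; hence $\Ebb[\limsup_{T}\frac1T\sum_{t\le T,\,t\in\Tcal^i}\1_{A_i}(X_t)]\ge\epsilon$ for every $i$ and the limit over $i$ is not $0$.

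The hard part is $\Xbb\notin\Ccal_{oblivious}$, i.e.\ upgrading the \emph{online} lower bound of Theorem \ref{thm:condition_5_necessary} to the weaker \emph{oblivious} adversary. I would reuse its architecture: a safe action $a_2\equiv\tfrac34$, an uncertain action $a_1$ with cell-wise fresh Bernoulli reward, a period-deletion sequence $\mb c$, the induction showing that consistency against $\pi_0\equiv a_2$ forces the exploration rate of $a_1$ to vanish, and the stochastic-equivalence argument that produces a \emph{single} measurable policy $\pi^*$ which the learner underperforms by $\Omega(\epsilon)$ cofinally. The essential new observation is that, because the batches are fresh per chunk and distinct within each period, every history-dependent reward zeroing used in that proof is vacuous, so the reward at time $t$ can be written purely as a function of $(a,\,\mathrm{cell}(X_t),\,t)$ — genuinely oblivious. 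Non-stationarity across phases stays indispensable (it defeats generalization, while the fine-cell structure defeats personalization). The main obstacle I anticipate is exactly this oblivious upgrade: carrying the quantitative exploration/deletion bookkeeping through while keeping the reward mechanism and $\pi^*$ both well defined and history-free, together with the delicate tuning of stretch lengths that must make $\Ccal_2$ and the cofinal $\Ccal_4$-violation hold simultaneously.
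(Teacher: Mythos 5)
Your proposal is correct and follows essentially the same route as the paper: disjoint zones carrying non-atomic measures (the paper isolates this as Lemma \ref{lemma:disjoint_non-atomic}), i.i.d. batches duplicated $2^{\lfloor\log_2 i\rfloor}$ times across scale-$i$ periods in deterministically scheduled dominating phases, verification of $\Ccal_2$ via the SMV property of the i.i.d. sources and of $\Xbb\notin\Ccal_4$ via the sets $A_i$ themselves, and an oblivious lower bound obtained by rerunning the safe/uncertain-arm exploration induction and the stochastic-equivalence derandomization. The one simplification you could take from the paper is that the period-deletion sequence $\mb c$ becomes entirely unnecessary here, precisely because the phases are deterministic and instances never recur across phases, so the oblivious rewards need only be indexed by the phase and the cell of $X_t$.
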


Before proving this result, we present a lemma which allows to have a countable sequence of non-atomic measures with disjoint support.

\begin{lemma}\label{lemma:disjoint_non-atomic}
Let $\Xcal$ be a metrizable separable Borel space such that there exists a non-atomic probability measure on $\Xcal$. Then, there exists a sequence of disjoint non-empty measurable sets $(A_i)_{i\geq 0}$ and probability measures $(\nu_i)_{i\geq 0}$ on $\Xcal$ such that $\nu_i(A_i)=1$.
\end{lemma}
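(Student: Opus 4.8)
The plan is to exploit the intermediate-value (Darboux) property of non-atomic measures in order to carve $\Xcal$ into countably many disjoint pieces of positive measure, and then to normalize the given non-atomic measure $\mu$ on each piece. Recall the classical result (Sierpi\'nski): if $\mu$ is a finite non-atomic measure on a measurable space, then for every measurable set $B$ and every value $0\le c\le \mu(B)$ there exists a measurable $B'\subseteq B$ with $\mu(B')=c$. This is the only nontrivial ingredient, and everything else is a routine verification.

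First I would build the sets $(A_i)_{i\geq 0}$ by induction, targeting $\mu(A_i)=2^{-(i+1)}$. Suppose disjoint sets $A_0,\dots,A_{i-1}$ have been chosen with $\mu(A_j)=2^{-(j+1)}$ for $j<i$. The remaining set $R_i:=\Xcal\setminus\bigcup_{j<i}A_j$ then satisfies
\begin{equation*}
    \mu(R_i)=1-\sum_{j=0}^{i-1}2^{-(j+1)}=2^{-i}>0,
\end{equation*}
and since $2^{-(i+1)}\le \mu(R_i)$, the intermediate-value property furnishes a measurable $A_i\subseteq R_i$ with $\mu(A_i)=2^{-(i+1)}$. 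By construction the $A_i$ are pairwise disjoint and each has positive measure, hence each is non-empty.

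Next I would define, for each $i\geq 0$, the probability measure $\nu_i(\cdot):=\mu(\cdot\cap A_i)/\mu(A_i)$. By construction $\nu_i$ is a probability measure with $\nu_i(A_i)=1$, which is exactly the required property. Moreover, each $\nu_i$ inherits non-atomicity from $\mu$, since for every $x\in\Xcal$ we have $\nu_i(\{x\})\le \mu(\{x\})/\mu(A_i)=0$; and because $\nu_i$ is concentrated on $A_i$ while the $A_i$ are disjoint, the measures $(\nu_i)_{i\geq 0}$ have pairwise disjoint supports, as desired in the surrounding discussion.

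I do not anticipate a genuine obstacle here: the whole argument reduces to a careful invocation of Sierpi\'nski's theorem. The only point meriting attention is that this theorem guarantees \emph{measurability} of the carved sets $A_i$ (so that the restrictions and normalizations above are well defined), which is precisely what it provides for non-atomic finite measures.
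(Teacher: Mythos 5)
Your proof is correct, but it takes a genuinely different route from the paper's. You invoke Sierpi\'nski's intermediate-value theorem for non-atomic finite measures to carve out disjoint sets $A_i$ with $\mu(A_i)=2^{-(i+1)}$ directly from a single measure $\mu$, and then normalize; this is clean, works on an arbitrary measurable space (the metric on $\Xcal$ plays no role), and gives quantitative control on the measures of the pieces. The paper instead avoids Sierpi\'nski entirely and argues recursively using the metric structure: it fixes a dense sequence $(x^i)$, forms the partitions $P_i(\delta)=B(x^i,\delta)\setminus\bigcup_{j<i}B(x^j,\delta)$, and uses the observation that two independent samples $Z_1,Z_2\sim\mu_{p-1}$ from a non-atomic measure are almost surely distinct — hence land in different cells of a sufficiently fine partition with positive probability — to extract \emph{two} disjoint positive-measure cells at each step, one of which becomes $A_p$ (carrying the conditional measure $\nu_p$) and the other of which becomes the reservoir $B_p$ for the next step. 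The paper's argument is more self-contained (it only uses non-atomicity through the almost-sure distinctness of two samples) but is tied to the separable metric setting; yours outsources the splitting to a classical theorem and is shorter and more general. Both constructions yield non-atomic $\nu_i$ with pairwise disjoint supports, which is what the subsequent applications (e.g., Theorem \ref{thm:learnable_process_smaller_C2}) actually use, so your version is a valid substitute.
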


\begin{proof}
Let $\rho$ denote the metric on $\Xcal$. First, let $(x^i)_{i\geq 1}$ be a dense sequence on $\Xcal$. For any $x\in\Xcal$ and $r>0$ we denote by $B(x,r)=\{x'\in\Xcal:\rho(x,x')<\delta\}$ the open ball centered at $x$ of radius $r$. Then, for any $\delta>0$, we define the partition $\Pcal(\delta) = (P_i(\delta))_{i\geq 1}$ by $P_i(\delta) = B(x^i,\delta)\setminus \bigcup_{j<i} B(x^j,\delta)$.

Let $\mu_{-1}$ a non-atomic probability measure on $\Xcal$. We construct the disjoint measures and sets recursively. We pose $B_0=\Xcal$. Suppose for $p\geq 1$ that we have constructed disjoint sets $(A_i)_{i\leq p-1}$, disjoint with $B_{p-1}$, as well as non-atomic probability measures $(\nu_i)_{i\leq p-1}$ and $\mu_{p-1}$ satisfying $\nu_i(A_i)=1$ for $i\leq p-1$ and $\mu_{p-1}(B_{p-1})=1$. Now let $Z_1,Z_2\sim\mu_{p-1}$ two independent random variables with distribution $\mu_{p-1}$. Because $\mu_{p-1}$ is non-atomic, $Z_1\neq Z_2$ almost surely. Thus, there exists $\delta_p>0$ such that
$\Pbb[\rho(Z_1,Z_2)\leq \delta_p]\leq \frac{1}{2}$. As a result, with probability at least $\frac{1}{2}$, $Z_1$ and $Z_2$ fall in distinct sets of the partition $\Pcal(\delta_p)$. Hence, there exists at least two indices $i<j$ such that $\Pbb[Z_1\in P_i(\delta_p)],\Pbb[Z_2\in P_j(\delta_p)]>0$. We then pose $A_p = B_{p-1}\cap P_i(\delta_p)$ and $B_p = B_{p-1} \cap P_j(\delta_p)$. Because $\mu_{p-1}(B_{p-1})=1$, we have $\mu_{p-1}(A_p) = \mu_{p-1}(P_i(\delta_p))>0$. Similarly, $\mu_{p-1}(B_p)>0$. Hence, we can consider the probability measure $\nu_p$ of $\mu_{p-1}$ conditionally on $A_p$ (i.e. $\nu_p(A) = \frac{\mu_{p-1}(A\cap A_p)}{\mu_{p-1}(A_p)}$ for all measurable $A$). Similarly, let $\mu_p$ the probability measure of $\mu_{p-1}$ conditionally on $B_p$. Both are non-atomic because the original measure $\mu_{p-1}$ is non-atomic. This ends the recursion and the proof of the lemma.
\end{proof}
We are now ready to prove the theorem.

\begin{proof}[ of \cref{thm:learnable_process_smaller_C2}]
Fix $a_1,a_2\in\Acal$ two distinct actions. Let $(x^i)_{i\geq 1}$ be a dense sequence of $\Xcal$ and denote by $B(x,r)$ denotes the open ball centered at $x\in\Xcal$ with radius $r>0$. Using, \cref{lemma:disjoint_non-atomic}, let $(A_i)_{i\geq 0}$ disjoint measurable sets together with non-atomic probability measures $(\nu_i)_{i\geq 0}$ such that $\nu_i(A_i)=1$. We then fix $x_0\in A_0$ (we will not use the set $A_0$ any further and from now will only reason on the sets $(A_i)_{i\geq 1}$) and for $i\geq 1$, we define $S_i = \{k\geq 1:k\equiv 2^{i-1}\bmod 2^i\}$. Then let $\Zbb^i$ for $i\geq 1$ be independent processes where $\Zbb^i$ is an i.i.d. process following the distribution $\nu_i$. We now construct a process $\Xbb$ on $\Xcal$. For any $k\geq 1$, let $T_k = 2^k k!$, $n_i = 2^{\lfloor \log_2 i\rfloor}$ for $i\geq 1$, and $l_k = \sum_{l\in S_i, l<k} \frac{T_k}{n_i}$, where $k\equiv 2^{i-1}\bmod 2^i$. For any $t\geq 1$, we pose
\begin{equation*}
    X_t = \begin{cases}
    Z^i_{l_k + r}& \text{if } T_k\leq t<2T_k, k\equiv 2^{i-1}\bmod 2^i, t-T_k \equiv r\bmod \frac{T_k}{n_i}, 1\leq r\leq \frac{T_k}{n_i}, \\
    x_0 &\text{otherwise}.
    \end{cases}
\end{equation*}
This ends the construction of $\Xbb$. We now argue that $\Xbb\in\Ccal_2$. Let $(B_l)_{l\geq 1}$ be a sequence of disjoint measurable sets of $\Xcal$. Because $\Zbb^i$ is an i.i.d. process for any $i\geq 1$, the event $\Ecal_i$ where $|\{l:\Zbb_{\leq T}^i\cap B_l\neq\emptyset\}|=o(T)$ has probability one. Now define $\Ecal = \bigcap_{i\geq 1}\Ecal_i$, which has probability one by the union bound. Fix $\epsilon>0$ and $i^*=\lceil\frac{2}{\epsilon}\rceil$ so that $\epsilon\leq \frac{1}{n_{i^*}}$. On the event $\Ecal$ for any $i\leq i^*$ there exists $T_i$ such that for any $T\geq T_i$ we have $|\{l:\Zbb_{\leq T}^i\cap B_l\neq\emptyset\}|\leq \frac{\epsilon}{2^i}T$. Now let $T^0=\max_{i\leq i^*}T_in_i$. Then, on $\Ecal$, for any $T\geq T^0$,
\begin{align*}
    |\{l: \Xbb_{\leq T}\cap B_l\neq\emptyset\}|&\leq 1 + \sum_{i=1}^{i^*} |\{l: \Zbb^i_{\leq \lfloor T/n_i\rfloor}\cap B_l\neq\emptyset\}|\\
    &\quad + |\{l: \exists t\leq T: X_t\in B_l, T_k\leq t<2 T_k, k\equiv 0\bmod 2^{i^*}\}|\\
    &\leq 1+ \epsilon T + |\{X_t,\quad  t\leq T,T_k\leq t<2 T_k, k\equiv 0\bmod 2^{i^*}\}|\\
    &\leq 1+ \epsilon T + \frac{T}{n_{i^*}} +\frac{T}{n_{i^*}}\\
    &\leq 3\epsilon T + 1.
\end{align*}
In the first inequality, the additional $1$ is due to the visit of $x_0$, and in the third inequality, we used the fact that in a phase $i>i^*$, each point is duplicated $n_i\geq n_{i^*}$ times. This yields a term $\frac{T}{n_{i^*}}$. The second term $\frac{T}{n_{i^*}}$ in the third inequality is due to boundary effects for times close to $T$, the worst-case scenarios being attained for $T$ of the form $T_k(1+\frac{1}{n_i})$. As a result, on $\Ecal$, we have $\limsup_{T\to\infty}\frac{1}{T} |\{l: \Xbb_{\leq T}\cap B_l\neq\emptyset\}|\leq 3\epsilon$, which holds for any $\epsilon>0$. Thus, $\frac{1}{T} |\{l: \Xbb_{\leq T}\cap B_l\neq\emptyset\}|\to 0$ on $\Ecal$, which ends the proof that $\Xbb\in\Ccal_2$.

We now show that there does not exist an universally consistent algorithm under $\Xbb$ for memoryless rewards. One can easily check that $\Xbb\notin\Ccal_4$, since for any $i\geq 1$, we have
\begin{align*}
    \Ebb\left[\limsup_{T\to\infty}\frac{1}{T} \sum_{t\leq T,t\in\Tcal^{\lfloor \log_2 i\rfloor}} \1_{A_i}(X_t)\right]
    &\geq \Ebb\left[\limsup_{k\to\infty}\frac{\1_{S_i}(k)}{2T_k} \sum_{t\leq 2T_k,t\in\Tcal^{\lfloor \log_2 i\rfloor}} \1_{A_i}(X_t)\right]\\
    &\geq \Ebb\left[\limsup_{k\to\infty}\frac{\1_{S_i}(k)}{2}\right] \geq \frac{1}{2}.
\end{align*}
This already shows that $\Ccal_{online}\subset\Ccal_{oblivious}\subset\Ccal_4\subsetneq\Ccal_2$. However, we will show a stronger statement that $\Xbb\notin\Ccal_{memoryless}$. The proof uses the same techniques as \cref{thm:condition_5_necessary}, but leverages the fact that the phases $S^i$ are deterministic and instances from previous phases $[T_k,2T_k)$ do not appear in future phases.
By contradiction, suppose that $f_\cdot$ is a universally consistent learning rule. We will refer to its decision at time $t$ as $\hat a_t$ for simplicity. We will construct recursively rewards $(r_t)_{t\geq 1}$ on which this algorithm is not consistent, as well as a policy $\pi^*:\Xcal\to\Acal$ compared to which the algorithm has high regret. The rewards and policy are constructed recursively together with an increasing sequence of times $(T^p)_{p\geq 1}$ and indices $(i_p)_{p\geq 1}$ with $i_1=1$ such that after the $p-$th iteration of the construction process, the rewards $r_t(a\mid \cdot)$ have been defined for all $t\leq T^p$ and the policy $\pi^*(\cdot)$ has been defined $\bigcup_{i< i_p}A_i$. The rewards will be deterministic and stationary, hence we may omit the subscript $t$. Suppose that we have performed $p-1$ iterations of this construction for $p\geq 1$. We will drop the subscripts $p$ for simplicity and simply assume that we have defined the reward $r(a\mid \cdot)$ and the value of the policy $\pi^*(\cdot)$ on $\bigcup_{j<i} A_j$ for some $i\geq 1$ ($i=i_p$). We now construct the rewards on $A_i$. To do so, we will first introduce other memoryless rewards. For any $k\in S_i$, because $\nu_i$ is non-atomic, there exists $\delta_k$ such that
\begin{equation*}
    \Pbb\left[ \min_{1\leq u<v\leq l_k+\frac{T_k}{n_i}} \rho(Z^i_u,Z^i_v) \leq \delta_k\right]\leq 2^{-k-5}.
\end{equation*}
Then, let $\Ecal^i$ be the event when for all $k\in S_i$, we have $\min_{1\leq u<v \leq l_k+\frac{T_k}{n_i}} \rho(Z^i_u,Z^i_v) > \delta_k$, and $\Zbb^i$ takes values in $A_i$ only---this is almost sure since $\nu_i(A_i)=1$. By the union bound, $\Pbb[\Ecal^i]\geq 1-\frac{1}{32}$. Now for $\delta>0$ and $u\geq 1$, define the sets $P_u(\delta) = (A_i\cap B(x^u,\delta))\setminus\bigcup_{v<u} B(x^v,\delta)$ which form a partition of $A_i$. For any $\delta>0$ and sequence $\mb b =(b_u)_{u\geq 1}$ in $\{0,1\}$ we consider the following deterministic rewards
\begin{equation*}
    r_{\delta,\mb b}(a\mid x) = \begin{cases}
        b_u & a=a_1,\; x\in P_u(\delta),\\
        \frac{3}{4} &a=a_2,\\
        0 & a\notin\{a_1,a_2\},
    \end{cases}  \text{if } x\in A_i,\quad\quad 
    r_{\delta,\mb b}(a\mid x) = r(a\mid x)  \text{ if } x\in \bigcup_{j< i}A_j,
\end{equation*}
and $r_{\delta,\mb b}(\cdot \mid x)=0$ if $x\notin\bigcup_{j\leq i}$.
Now for any sequence of binary sequences $\mathbf b =(\mb b^k)_{k\in S_i}$ where $\mb b^k=(b^k_u)_{u\geq 1}$, we will consider the memoryless rewards $\mb r^{\mathbf b}$ defined as follows. For any $t\geq 2$, let $k\geq 1$ such that $T^k\leq t<T^{k+1}$, and $k'=\min\{l\in S_i: l\geq k\}$. We pose $r^{\mathbf b}_t = r_{\delta_{k'},\mb b^{k'}}$, and $r^{\mathbf b}_1=r^{\mathbf b}_2$. Now let $\mathbf b$ be generated such that all $\mb b^i$ are independent i.i.d. Bernouilli $\Bcal(\frac{1}{2})$ random sequences in $\{0,1\}$. Next, define $\pi_0:x\in\Xcal\mapsto a_2\in\Acal$, the policy which always selects arm $a_2$. Now fix any realization of $\mb r^{\mathbf b}$. Because $f_\cdot$ is universally consistent for memoryless rewards, it has in particular sublinear regret compared to $\pi_0$ under rewards $\mb r^{\mathbf b}$, i.e., almost surely $\limsup_{T\to\infty} \frac{1}{T}\sum_{t=1}^T r^{\mathbf b}_t(a_2\mid X_t) - r^{\mathbf b}_t(\hat a_t \mid X_t)) \leq 0$. The same arguments as in \cref{thm:no_optimistically_universal_lr} with Fatou's lemma give
\begin{equation*}
    \limsup_{T\to\infty}\Ebb\left[\frac{1}{T}\sum_{t=1}^T r^{\mathbf b}_t(a_2\mid X_t) - r^{\mathbf b}_t(\hat a_t \mid X_t) \mid \Ecal^i \right]\leq 0,
\end{equation*}
where the expectation is now also taken over $\mathbf b$. Therefore, with $\alpha_i:=\frac{1}{16\cdot 4^{n_i}}$, there exists $t_0$ such that for all $T\geq t_0$, we have $\Ebb\left[\frac{1}{T}\sum_{t=1}^{T} r^{\mathbf b}_t(a_2\mid X_t) - r^{\mathbf b}_t(\hat a_t \mid X_t) \mid \Ecal^i\right] \leq \frac{\alpha_i}{4}.$ In particular, there exists $k\in S_i$ such that $k\geq \frac{4}{\alpha_i}$ and $T_k\geq t_0$ and the above inequality holds for all $T_k\leq T<2T_k$. Then, using the same arguments as in the proof of \cref{thm:no_optimistically_universal_lr}, if $a^*_t$ denotes the best action in hindsight at time $t$, we have
\begin{equation*}
    \Ebb\left[ \sum_{t=T_k}^{2T_k-1} r^{\mathbf b}_t(a_t^*\mid X_t) - r^{\mathbf b}_t(\hat a_t\mid X_t) \mid \Ecal^i\right]\geq \frac{T_k}{16}.
\end{equation*}
For any binary sequence $\mb b$, we will write for conciseness $r^{\mb b}=r_{\delta_k,\mb b}$. We also define the following policy, restricted to instances in $A_i$:
\begin{equation*}
    \pi^{\mb b}:x\in A_i \mapsto \begin{cases} 
        a_1& \text{if } b_u= 1, x\in P_u(\delta_k),\\
        a_2& \text{if } b_u= 0, x\in P_u(\delta_k).
    \end{cases}
\end{equation*}
Now consider the case where $\mb b$ is an i.i.d. sequence of Bernouillis $\Bcal(\frac{1}{2})$. We argue that on the event $\Ecal^i$, the learning process before time $2T_k-1$ and under rewards $\mb r^{\mathbf b}$ is stochastically equivalent to the learning under stationary rewards $\mb r^{\mb b}:=(r^{\mb b})_{t\geq 1}$ before $2T_k-1$. Precisely, we have
\begin{align*}
    \Ebb_{\mb b \sim\Bcal(\frac{1}{2})}&\left[\Ebb_{\Xbb,\hat a}\left[ \sum_{t=T_k}^{2T_k-1} r^{\mb b}(\pi^{\mb b}(X_t)\mid X_t) -  r^{\mb b}(\hat a_t\mid X_t)\mid \Ecal^i\right]\right] \\
    &= \Ebb_{\Xbb}\left[\Ebb_{\mb b \sim\Bcal(\frac{1}{2})}\Ebb_{\hat a}\left[\sum_{t=T_k}^{2T_k-1} r^{\mb b}(\pi^{\mb b}(X_t)\mid X_t) -  r^{\mb b}(\hat a_t\mid X_t)\mid \Xbb,\Ecal^i\right] \mid \Ecal^i\right] \\
    &= \Ebb_{\Xbb}\left[\Ebb_{\mathbf b}\Ebb_{\hat a}\left[\sum_{t=T_k}^{2T_k-1} r^{\mathbf b}_t(a^*_t\mid X_t) - r^{\mathbf b}_t(\hat a_t\mid X_t)\mid \Xbb,\Ecal^i\right] \mid \Ecal^i\right]\\
    &= \Ebb\left[ \sum_{t=T_k}^{2T_k-1} r^{\mathbf b}_t(a^*_t\mid X_t) - r^{\mathbf b}_t(\hat a_t\mid X_t) \mid \Ecal^i\right]\\
    &\geq \frac{T_k}{16},
\end{align*}
where in the second inequality we used the fact that on the event $\Ecal^i$, until time $2T_k-1$ all distinct instances in $A_i$ fall in distinct sets of the partition $(P_u(\delta_k))_u$: for both rewards $\mb r^{\mb b}$ and $\mb r^{\mathbf b}$, the reward on a new instance $A_i$ is independent from the past and has the distribution $\Bcal(\frac{1}{2})$ for action $a_1$ and deterministic $\frac{3}{4}$ for action $a_2$. As a result, there exists a specific realization of $\mb b$ such that
\begin{equation*}
    \Ebb_{\Xbb,\hat a}\left[\sum_{t=T_k}^{2T_k-1} r^{\mb b}(\pi^{\mb b}(X_t)\mid X_t) - r^{\mb b}(\hat a_t\mid X_t) \mid \Ecal^i\right] \geq \frac{T_k}{16}.
\end{equation*}
Hence, because $\Pbb[(\Ecal^i)^c]\leq \frac{1}{32}$, we obtain
\begin{equation*}
    \Ebb_{\Xbb,\hat a}\left[\sum_{t=T_k}^{2T_k-1} r^{\mb b}(\pi^{\mb b}(X_t)\mid X_t) - r^{\mb b}(\hat a_t\mid X_t)\right]\geq \frac{T_k}{16}\left(1-\frac{1}{32}\right) - \frac{T_k}{32} \geq  \frac{2T_k-1}{2^7}.
\end{equation*}
Now denote $T^p = 2T_k-1$, and let $i_{p+1} =1+ \max\{j\geq i: \exists l\in S_i, T_l\leq T^i\} = 1 +\max\{j\geq i: T_{2^{j-1}}\leq T^i\}$. The index $i_{p+1}$ is chosen so that until time $T^p$, the process $\Xbb$ has not visited $\bigcup_{j\geq i_p}A_j$ yet. Note that this index is well defined since $T_k\to\infty$ as $k\to\infty$. We then pose $r(\cdot \mid x) = r^{\mb b}(\cdot \mid x)$ for all $x\in \bigcup_{i\leq j < i_{p+1}}A_j$. In particular, we have $r(a\mid x)=0$ for all $x\in\bigcup_{i_p<j<i_{p+1}}A_j$. Then pose
\begin{equation*}
    \pi^*(x) = \begin{cases}
        \pi^{\mb b}(x) & x\in A_i\\
        a_2 &x\in\bigcup_{i<j< i^{p+1}}A_j.
        \end{cases}
\end{equation*}
This ends the recursive construction of the reward $r$ and the policy $\pi^*$, i.e., we have constructed $r(\cdot\mid x)$ and $\pi^*(x)$ for all $x\in\bigcup_{i\geq 1} A_i$. We end the definition of the rewards by posing $r_t(\cdot\mid x) = 0$ and $\pi^*(x) = a_2$ if $x\notin \bigcup_{i\geq 1} A_i$. Note that $(r_t)_{t\geq 1}$ forms a valid sequence of rewards since by construction on each $A_i$ they are deterministic. Similarly, $\pi^*$ is measurable because it is measurable on each $A_i$.

We now analyze the regret of the algorithm compared to $\pi^*$ for the rewards $(r_t)_t$. First, note that the rewards are deterministic, time independent, and that $\pi^*$ is the optimal policy, i.e., which always selects the best arm in hindsight. Then, for any $p\geq 1$, we have
\begin{equation*}
    r(\cdot\mid x) = r^{\mb b}(\cdot\mid x), \quad \forall x\in \Xcal\setminus \bigcup_{i\geq i_{p+1}}A_i.
\end{equation*}
where $r^{\mb b}$ denotes the rewards defined at the $p$-th iteration of the construction process. Now recall that by construction, the sets $A_i$ visited by the process $\Xbb_{\leq T^p}$ all satisfy $i<i_{p+1}$, which is the first index for which the rewards would differ. As a result, we have
\begin{align*}
    \Ebb\left[ \frac{1}{T^p}\sum_{t=1}^{T^p} r(\pi^*(X_t)\mid X_t) - r(\hat a_t\mid X_t) \right] 
    &\geq \Ebb\left[ \frac{1}{T^p} \sum_{t=(T_p+1)/2}^{T^p} r(\pi^*(X_t)\mid X_t) - r(\hat a_t\mid X_t) \right]\\
    &= \Ebb\left[\frac{1}{T^p} \sum_{t=(T_p+1)/2}^{T^p} r^{\mb b}(\pi^{\mb b}(X_t)\mid X_t) - r^{\mb b}(\hat a_t\mid X_t) \right]\\
    &\geq \frac{1}{2^7},
\end{align*}
where in the first inequality we used the fact that $\pi^*$ always selects the best action in hindsight. Because this holds for any $p\geq 1$, we can use Fatou's lemma to obtain
\begin{multline*}
    \Ebb\left[\limsup_{T\to\infty} \frac{1}{T}\sum_{t=1}^T r_t(\pi^*(X_t)\mid X_t) - r_t(\hat a_t\mid X_t)\right]\\
    \geq \limsup_{T\to\infty}\Ebb\left[ \frac{1}{T}\sum_{t=1}^T r_t(\pi^*(X_t)\mid X_t) - r_t(\hat a_t\mid X_t)\right] \geq \frac{1}{2^7}.
\end{multline*}
As a result, $f_\cdot$ is not consistent on the stationary rewards $(r)_t$ under $\Xbb$, which ends the proof of the theorem.
\end{proof}

\subsubsection{A tighter necessary condition 6 for oblivious rewards}
\label{subsubsec:tighter_necessary_condition}

This section proves that $\Ccal_6$ is necessary for stochastic processes, which is tighter than the family $\Ccal_4$. We first prove the lemma on large deviations of the empirical measure in $\Ccal_1'$ processes.

\begin{proof}[ of \cref{lemma:uniform_deviations}]
Let $\epsilon>0$ and suppose by contradiction that for all $T\geq 1$ and $\delta>0$ there exists a measurable set $A(\delta;T)$ such that $\Ebb[\hat \mu_{\tilde \Xbb}(A(\delta;T))]\leq \delta$ and 
\begin{equation*}
    \Ebb\left[\sup_{T'> T}\frac{1}{T'}\sum_{t\leq T',t\in\Tcal}\1_{A(\delta;T)}(X_t)\right]>\epsilon.
\end{equation*}
We now construct by induction a sequence of sets $(A_i)_{i\geq 1}$ together with times $(T_i)_{i\geq 0}$ such that $T_0=0$. Now suppose that we have constructed $T_{i-1}$ for $i\geq 1$. We take $A_i = A(\epsilon2^{-i-2}; T_{i-1})$. Then, because $\Ebb[\hat\mu_{\tilde \Xbb}(A_i)]\leq \epsilon 2^{-i-2}$, by the dominated convergence theorem, there exists $T_i> T_{i-1}$ such that
\begin{equation*}
    \Ebb\left[\sup_{T> T_i}\frac{1}{T}\sum_{t\leq T, t\in\Tcal}\1_{A_i}(X_t)\right] \leq \frac{\epsilon}{2^{i+1}}.
\end{equation*}
This ends the construction of the sequences. For any $i\geq 1$, let $B_i = A_i\setminus\bigcup_{j<i} A_j$ and note that
\begin{align*}
    \Ebb&\left[\sup_{T> T_{i-1}}\frac{1}{T}\sum_{t\leq T, t\in\Tcal}\1_{A_i}(X_t)\right]\\
    &\leq \Ebb\left[\sup_{T> T_{i-1}}\frac{1}{T}\sum_{t\leq T, t\in\Tcal}\1_{B_i}(X_t)\right] + \sum_{j<i} \Ebb\left[\sup_{T> T_{i-1}}\frac{1}{T}\sum_{t\leq T, t\in\Tcal}\1_{A_j}(X_t)\right]\\
    &\leq \Ebb\left[\sup_{T> T_{i-1}}\frac{1}{T}\sum_{t\leq T, t\in\Tcal}\1_{B_i}(X_t)\right]  + \sum_{j<i} \Ebb\left[\sup_{T> T_{j}}\frac{1}{T}\sum_{t\leq T, t\in\Tcal}\1_{A_j}(X_t)\right]\\
    &\leq \Ebb\left[\sup_{T> T_{i-1}}\frac{1}{T}\sum_{t\leq T, t\in\Tcal}\1_{B_i}(X_t)\right] + \frac{\epsilon}{2}.
\end{align*}
By construction $\Ebb\left[\sup_{T> T_{i-1}}\frac{1}{T}\sum_{t\leq T, t\in\Tcal}\1_{A_i}(X_t)\right] >\epsilon$. Hence,  letting $C_i = \bigcup_{j\geq i}B_j$, we obtain that for any $j\geq i$,
\begin{equation*}
    \Ebb\left[\sup_{T> T_j}\frac{1}{T}\sum_{t\leq T, t\in\Tcal}\1_{C_i}(X_t)\right] \geq \Ebb\left[\sup_{T> T_j}\frac{1}{T}\sum_{t\leq T, t\in\Tcal}\1_{B_{j+1}}(X_t)\right] \geq \frac{\epsilon}{2}.
\end{equation*}
As a result, by the dominated convergence theorem we have $\Ebb[\hat\mu_{\tilde \Xbb}(C_i)]\geq \frac{\epsilon}{2}$.
Further, all sets $B_i$ are disjoint. But $C_i\downarrow\emptyset$, which contradicts the hypothesis that $\tilde\Xbb\in\Ccal_1'$. This ends the proof of the lemma.
\end{proof}

We recall the necessary definitions to introduce condition $\Ccal_6$. For a process $\Xbb\in\Ccal_4$, any $\epsilon>0$ and $T\geq 1$,
\begin{multline*}
    \delta^p(\epsilon;T) := \sup\left\{0\leq \delta\leq 1:\forall A\in\Bcal \text{ s.t. }
    \sup_l \Ebb[\hat\mu_{\Xbb^l}(A)]\leq \delta,\right.\\
    \left. \forall \tau\geq T \text{ online stopping time},\quad
     \Ebb\left[\frac{1}{2\tau}\sum_{\tau\leq t<2\tau , t\in\Tcal^p}\1_A(X_t)\right]\leq \epsilon \right\},
\end{multline*}
and $\delta^p(\epsilon) := \lim_{T\to\infty}\delta^p(\epsilon;T)>0$. We recall condition $\Ccal_6$.

\ConditionSix*

The main result of this section is that this condition is necessary for oblivious rewards.

\begin{theorem}\label{thm:condition7_necessary}
Let $\Xcal$ be a metrizable separable Borel space, and a finite action space $\Acal$ with $|\Acal|\geq 2$. Then, $\Ccal_{oblivious}\subset \Ccal_6$.
\end{theorem}

\begin{proof}
Fix $\Xbb\in\Ccal_4\setminus\Ccal_6$. By hypothesis, there exists $\epsilon>0$ such that $\delta^p(\epsilon) \to 0$ as $p\to\infty$. Let $(p(i))_{i\geq 1}$ be the set of increasing indices such that $\delta^{p(i)}(\epsilon)\leq  \epsilon 2^{-i-3}$. Similarly to the proof of \cref{thm:condition_5_necessary}, we suppose by contradiction that there is a universally consistent learning rule $f_\cdot$ under $\Xbb$ and we will construct by induction some rewards on which the learning rule is not consistent. We will denote by $\hat a_t$ the action selected by the learning rule at time $t$. Precisely, suppose that we have performed $i-1$ iterations of the construction process for some $i\geq 1$, and have constructed times $T^1,\ldots,T^{i-1}$ as well as rewards $(r_t)_{t\leq T^{i-1}}$, disjoint sets $A^1,\ldots,A^{i-1}$ satisfying
\begin{equation*}
    \sup_l \Ebb[\hat\mu_{\Xbb^l}(A^j)]\leq \epsilon2^{-j-2}
\end{equation*}
for all $j<i$, and a policy $\pi^*$ on $\bigcup_{j<i}A^i$. We will now focus on the times $\Tcal^{p(i)}$. For convenience, in the rest of the proof, when clear from context, we will write $p$ instead of $p(i)$.

First, by hypothesis, for any $1\leq j<i$, we have $\Ebb[\hat \mu_{\Xbb^p}(A^j)]\leq \epsilon 2^{-j-2}$. Thus, by the dominated convergence theorem, there exists $t(j)$ such that
\begin{equation*}
    \Ebb\left[ \sup_{T\geq t(j)}\frac{1}{T}\sum_{t\leq T, t\in\Tcal^p}\1_{A^j}(X_t)\right] \leq \frac{\epsilon}{2^{j+1}}.
\end{equation*}
Therefore, summing these equations yields
\begin{equation*}
    \Ebb\left[ \sup_{T\geq \max_{j<i} t(j)}\frac{1}{T}\sum_{t\leq T, t\in\Tcal^p}\1_{\bigcup_{j<i} A^j}(X_t)\right] \leq \frac{\epsilon}{2}.
\end{equation*}
We define $\tilde T^{i-1}= \max(T^{i-1},t(1),\ldots,t(i-1))$. Now by construction, $\delta^{p(i)}(\epsilon)\leq  \epsilon 2^{-i-3}$. Therefore, there exists $T_0\geq \tilde T^{i-1}$ such that for any $T\geq T_0$, we have $\delta^p(\epsilon;T)\leq \epsilon2^{-i-2}$. Now for $T\geq T_0$, let $A^i(T)\in\Bcal$ and $\tau^i(T)\geq T$ be a stopping time such that
\begin{equation*}
    \sup_l\Ebb[\hat\mu_{\Xbb^l}(A^i(T))]\leq \epsilon 2^{-i-2} \quad \text{and}\quad \Ebb\left[ \frac{1}{2\tau^i(T)} \sum_{\tau^i(T)\leq t <2\tau^i(T),t\in\Tcal^{p}} \1_{A^i(T)}(X_t)\right]>\epsilon.
\end{equation*}
Last, let $U(T)$ be such that
\begin{equation*}
    \Pbb[2\tau^i(T) > U(T)] \geq \frac{\epsilon}{2^{T+10}}.
\end{equation*}
Then, by the union bound, with probability at least $1-\epsilon2^{-10}$, for all $T\geq T_0$, we have $2\tau^i(T)\leq U(T)$. Denote by $\Hcal$ this event. Next, let $k_i= 2^{p}+1$, $\alpha_i=2^{-p-1}$, $\beta_i = \frac{\epsilon}{2^{10}(1+2\alpha_i)^{(k_i-1)k_i}4^{k_i}}$, $\tilde K_i = \left\lceil \frac{2}{\alpha_i}\log\frac{8}{\beta_i}\right\rceil$ and $M_i =\max( (1+2\alpha_i)^{\tilde K_i}, \frac{2^{10}}{\epsilon})$. We first construct by induction of increasing times $(T(l))_{l\geq 0}$ with $T(0)=M_i T_0$ and $T(l)\geq M_i U(T(l-1))$. For convenience, we use the notation $\tau^i_l=\tau^i(T(l))$, $A^i_l=A^i(T(l))\setminus \bigcup_{1\leq j<i} A^j$ for $l\geq 0$. Then, by construction, $\tau^i(T(l))\geq M_i U(T(l-1))$ and
\begin{align*}
    &\Ebb\left[ \frac{1}{2\tau^i_l}\sum_{\tau^i_l\leq t< 2\tau^i_l, t\in\Tcal^p} \1_{A^i_l}(X_t)\right]\\
    &\geq 
    \Ebb\left[\frac{1}{2\tau^i_l}\sum_{\tau^i_l\leq t< 2\tau^i_l, t\in\Tcal^p}  \1_{A^i(T(l))}(X_t)\right] - \Ebb\left[\frac{1}{2\tau^i_l}\sum_{\tau^i_l\leq t< 2\tau^i_l, t\in\Tcal^p} \1_{\bigcup_{j<i}A^j}(X_t)\right]\\
    &> \epsilon - \Ebb\left[ \sup_{T\geq \tilde T^{i-1}}\frac{1}{T}\sum_{t\leq T, t\in\Tcal^p}\1_{\bigcup_{j<i} A^j}(X_t)\right]\\
    &>\frac{\epsilon}{2}.
\end{align*}
 For any $l\geq 1$, let $\delta_l>0$ such that
\begin{equation*}
    \Pbb\left[\min_{1\leq t,t'\leq U(T(l)),X_t\neq X_{t'}} \rho(X_t,X_{t'}) \leq \delta_l\right] \leq \frac{\epsilon}{2^{l+10}}.
\end{equation*}
Let $\Ecal$ be the event when for all $l\geq 1$, we have $\min_{1\leq t,t'\leq U(T(l)),X_t\neq X_{t'}} \rho(X_t,X_{t'})>\delta_l$ and $\Hcal$ is satisfied. By the union bound, $\Pbb[\Ecal]\geq 1-\frac{\epsilon}{2^{9}}$. We now construct similar rewards to those in the proof of \cref{thm:condition_5_necessary}. Then, for any $\delta>0$ and $u\geq 1$, define the sets $P_u(\delta) = B(x^u,\delta)\setminus\bigcup_{v<u} B(x^v,\delta)$ where $(x^u)_{u\geq 1}$ is a dense sequence of $\Xcal$, which form a partition of $\Xcal$. For any binary sequence $\mb b=(b_u)_{u\geq 1}$ in $\{0,1\}$ define the deterministic rewards
\begin{equation*}
    r_{\delta,\mb b;l}(a\mid x) = \begin{cases} 
        b_u \1_{x\in A^i_l} & a=a_1, x\in  P_u(\delta),\\
        \frac{3}{4}\1_{x\in A^i_l} & a=a_2,\\
        0 & a\notin\{a_1,a_2\}.
    \end{cases}
\end{equation*}
Next, for any sequence of binary sequences $\mathbf b:=(\mb{b^l})_{l\geq 1}$, we construct the deterministic rewards $\mb r^{\mathbf b}$ as follows. First, for $t\leq T^{i-1}$, $r^{\mathbf b}_t = r_t$ the rewards already constructed. Also, for $T^{i-1}<t\leq U(T(0))$, we pose $r^{\mathbf b}_t = 0$. Next, observe that $\tau^i_l$ is an online stopping time. Therefore, for any $l\geq 0$, $U(T(l-1))<t< \tau^i_l$ or $2\tau^i_l\leq t\leq U(T(l))$, we pose $r^{\mathbf b}_t = 0$. Finally, for $\tau^i_l \leq t< 2\tau^i_l,U(T(l))$ and $k$ such that $T_p^{k-1}<t\leq T_p^k$, we pose
\begin{equation*}
    r^{\mathbf b}_t (a\mid x_{\leq t}) = \begin{cases}
        0 &\exists t'\leq U(T(l-1)):x_{t'}=x_t,\\
        0& \text{o.w., } \exists T_p^{k-1}<t'\leq t: x_{t'} = x_t,\\
        r_{\delta_l,\mb {b^l};l}(a\mid x_t) & \text{o.w., } \forall T_p^{k-1}<t'\leq t: x_{t'} \neq x_t,
    \end{cases}
\end{equation*} 
for any $a\in\Acal$ and $x_{\leq t}\in\Xcal^t$. Now generate $\mathbf b$ as independent i.i.d. Bernouilli $\Bcal(\frac{1}{2})$ processes. We now compare the predictions of the learning rule compared to the constant policy which selects action $a_2$. Because the learning rule is consistent under any rewards ${\mb r}^{\mathbf b}$ for any realization $\mathbf b$, and because $\Pbb[\Ecal]>0,$ taking the expectation over $\mathbf b$, we obtain
\begin{equation*}
    \Ebb\left[\limsup_{T\to\infty} \frac{1}{T}\sum_{t=1}^T r_t^{\mathbf b}(a_2) - r_t^{\mathbf b}(\hat a_t) \mid\Ecal\right] \leq 0.
\end{equation*}
Next, we use the dominated convergence theorem to find $l^i\geq 1$ such that
\begin{equation*}
    \Ebb\left[\sup_{T\geq T(l_i)/2} \frac{1}{T}\sum_{t=1}^T r_t^{\mathbf b}(a_2) - r_t^{\mathbf b}(\hat a_t) \mid\Ecal\right] \leq \frac{\beta_p}{4}.
\end{equation*}
We now define $A^i=A^i_{l^i}$, $T^i=U(T(l^i))$ and focus on the period $[\tau^i_l,2\tau^i_l)$. Let $\hat k= \max\{k:\tau^i_l\geq T^k_p\}$. Then, $[\tau^i_l,2\tau^i_l)\subset [T^{\hat k}_p,T^{\hat k+2^p+1}_p)$ and we construct the following sets
\begin{equation}
    \Scal_q = \{T_p^{\hat k+q-1}<t\leq T_p^{\hat k+q}: X_t\in A^i\}\cap \Tcal^p ,\quad 1\leq q\leq 2^p+1=k_i.
\end{equation}
We also define $Exp_q$ the exploration steps of arm $a_1$ during $\Scal_q$.
\begin{multline*}
    Exp_q = \left\{t\in\Scal_q: \hat a_t = a_1\text{ and }\forall t'\in \bigcup_{q'<q}\Scal_{q'}:X_{t'}=X_t,\; \hat a_{t'}\neq a_1\right\} \\
    \setminus \{t:\exists t'\leq U(T(l^i-1)), X_{t'}=X_t\},
\end{multline*}
and $E_q=|Exp_q|$. The same arguments as in \cref{thm:condition_5_necessary} show that for all $1\leq q\leq k_1$, we have $\Ebb\left[\frac{E_q}{T_p^{\hat k+k_i}}\mid\Ecal\right]\leq 4^{q+1}(1+2\alpha_i)^{(k_i-1)k_i}\beta_p$. For any $t\geq 1$, let $a_t^*$ be the optimal action in hindsight and define
\begin{equation*}
    \Bcal_q=\bigcup_{q\leq \hat q}\left\{ t\in\Scal_{q}:\forall t'\in\bigcup_{q'<q}\Scal_{q'}:X_{t'}=X_t,t\notin Exp_{q'} \right\},
\end{equation*}
the times such that we never explored action $a_2$, before time $T_p^{\hat k+q}$. As in the proof of \cref{thm:condition_5_necessary}, for times in $\Bcal$, the learner incurs an average regret at least $\frac{1}{8}$. Therefore,
\begin{equation*}
    \Ebb\left[\frac{1}{T_p^{\hat k+k_i}} \sum_{t=1}^{T_p^{\hat k+k_i}} r_t^{\mathbf b}(a_t^*)-r_t^{\mathbf b}(\hat a_t) \mid\Ecal \right] \geq \frac{1}{8} \Ebb\left[\frac{|\Bcal_q|}{T_p^{\hat k+k_i}} \mid\Ecal\right].
\end{equation*}
Finally, let $T_p^*=|\{t\leq T_p^{\hat k+k_i}:X_t\in A^i\}\cap\Tcal^p|$. Noting that we have $\Ebb\left[\frac{T_p^*}{T_p^{\hat k+k_i}}\mid\Ecal\right]\geq \frac{1}{2}\Ebb\left[\frac{T_p^*}{2\tau^i_l}\mid\Ecal\right] \geq  \frac{\epsilon}{4}\geq \frac{\epsilon}{16}$, the same arguments as in the original proof give directly
\begin{equation*}
    \Ebb\left[\frac{1}{T_p^{\hat k+k_i}} \sum_{t=1}^{T_p^{\hat k+k_i}} r_t^{\mathbf b}(a_t^*)-r_t^{\mathbf b}(\hat a_t) \mid\Ecal\right] \geq \frac{\epsilon}{2^8}. 
\end{equation*}
As a result, there exists a realization of $\mathbf b$ such that the above equation holds for this specific realization. We then pose $r_t=r^{\mathbf b}_t$ for all $t\leq T^i$ and define a policy $\pi^i$ on $A^i$ as follows,
\begin{equation*}
    \pi^i(x) = \begin{cases}
    a_1 &\text{if }b_u^l=1,x\in P_u(\delta_{l^i})\cap A^i,\\
    a_2 &\text{if }b_u^l=0,x\in P_u(\delta_{l^i})\cap A^i.
    \end{cases}
\end{equation*}
for any $x\in A^i$, which is possible because $A^i$ is disjoint from $\bigcup_{j<i}A^j$. Now observe that the policy selects the best action in hindsight during the interval $[T(l^i),U(T(l^i))$, irrespective on how it is defined outside of $A^i$. As a result, we have
\begin{align*}
    \Ebb&\left[ \sup_{T^{i-1}<T\leq T^i} \frac{1}{T} \sum_{t=1}^T  r_t(\pi^*(X_t)) -  r_t(\hat a_t)\mid \Ecal\right]\\
    &\geq \Ebb\left[  \frac{1}{T^{\hat k+k_i}} \sum_{t=1}^{T^{\hat k+k_i}}  r_t^{\mathbf b}(\pi^*(X_t)) -  r_t^{\mathbf b}(\hat a_t)\mid \Ecal\right]\\
    &\geq \Ebb\left[  -\frac{2U(T(l^i-1))}{T^{\hat k+k_i}} +\frac{1}{T^{\hat k+k_i}}  \sum_{t=1}^{T^{\hat k+k_i}}  r_t^{\mathbf b}(a^*_t) -  r_t^{\mathbf b}(\hat a_t)\mid \Ecal\right]\\
    &\geq -\frac{2}{M_i} + \frac{\epsilon}{2^8}\\
    &\geq \frac{\epsilon}{2^9}.
\end{align*}
This ends the recursive construction of the rewards. We close the definition of $\pi^*$ by setting $\pi^*(x)=a_1$ for $x\notin\bigcup_{i\geq 1}A^i$ arbitrarily. The constructed policy is measurable and we showed that for all $i\geq 1$,
\begin{equation*}
    \Ebb\left[ \sup_{T^{i-1}<T\leq T^i} \frac{1}{T} \sum_{t=1}^T  r_t(\pi^*(X_t)) -  r_t(\hat a_t)\right] \geq \frac{\epsilon}{2^9}.
\end{equation*}
Using Fatou's lemma, this shows that $\Ebb\left[ \limsup_{T\to\infty} \frac{1}{T} \sum_{t=1}^T \tilde r_t(\pi^*(X_t)) - \tilde r_t(\hat a_t)\right] \geq \frac{\epsilon}{2^9}.$ This ends the proof that $f_\cdot$ is not universally consistent under $\Xbb$ and ends the proof of the theorem.
\end{proof}

We now give an example of process $\Xbb\in\Ccal_4\setminus\Ccal_6$.

\begin{theorem}\label{thm:other_example}
For $\Xcal=[0,1]$ with usual topology, $\Ccal_6\subsetneq \Ccal_4$.
\end{theorem}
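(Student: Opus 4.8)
The plan is to exhibit a single process $\Xbb$ on $[0,1]$ with $\Xbb\in\Ccal_4\setminus\Ccal_6$; since the reverse inclusion $\Ccal_6\subset\Ccal_4$ is already recorded, strictness follows. The conceptual target is the gap between the two conditions: $\Ccal_4$ (via Proposition~\ref{prop:C5_equivalent_form}) controls the \emph{limsup}-frequencies $\sup_p\Ebb[\hat\mu_{\Xbb^p}(A_i)]$ along decreasing $A_i\downarrow\emptyset$, whereas $\delta^p(\epsilon)$ controls \emph{windowed} deviations $\Ebb[\tfrac1{2\tau}\sum_{\tau\le t<2\tau,\,t\in\Tcal^p}\1_A(X_t)]$ over online stopping times $\tau$, i.e.\ a \emph{supremum}-type (transient) deviation that the limsup cannot see. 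Lemma~\ref{lemma:uniform_deviations} shows this sup-vs-limsup gap is uniformly controlled at each \emph{fixed} scale $p$ (whence $\delta^p(\epsilon)>0$); so the whole point is to build $\Xbb$ for which this control degrades as the scale refines, making $\lim_{p\to\infty}\delta^p(\epsilon)=0$ while the limsup frequencies still vanish.

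Concretely, I would construct a \emph{randomized} process in octave blocks $[2^k,2^{k+1})$. Using Lemma~\ref{lemma:disjoint_non-atomic}, fix disjoint sets with non-atomic measures $\nu_j$ on them. In block $k$ I would draw $N_k$ \emph{fresh} i.i.d.\ points from an appropriate $\nu_{j(k)}$ and fill the block by a cyclic burst of duplication level $q_k=2^k/N_k$; a burst targeting scale $p$ (i.e.\ $q_k\approx 2^p$) has, at scale $p$ and via the stopping time $\tau=2^k$, windowed frequency $\tfrac{\nu_{j(k)}(A)}{2}\min(2^p/q_k,1)$ over the fresh points landing in $A$. The block schedule must be chosen so that (i) every target scale recurs at arbitrarily large times $2^k$ (needed so that, for each large $p$ and every $T$, a detectable window exists at $\tau\ge T$), yet (ii) the \emph{spatial} location of the high-duplication bursts is refreshed/relocated along the $\nu_j$'s so that the long-run frequency of a fixed set is governed by its $\nu_j$-measure rather than by the recurrence.

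For $\Ccal_4$ I would invoke the equivalent form of Proposition~\ref{prop:C5_equivalent_form} and bound, for any decreasing $A_i\downarrow\emptyset$, $\sup_p\hat\mu_{\Xbb^p}(A_i)$ by a measure-type quantity: since within each block the visited distinct points are fresh i.i.d.\ draws, a law-of-large-numbers / Borel--Cantelli argument should give $\hat\mu_{\Xbb^p}(A_i)\lesssim \sum_j \nu_j(A_i)$ uniformly in $p$, which tends to $0$ as $A_i\downarrow\emptyset$ by continuity of the $\nu_j$. For $\lnot\Ccal_6$ I would fix $\epsilon<\tfrac14$ and, for each large $p$ and each $T$, take as witness set $A$ the current burst's fresh region (of measure $\to 0$, hence $\sup_l\Ebb[\hat\mu_{\Xbb^l}(A)]\to0$) together with the online stopping time that fires at the start of the next scale-$p$ block after $T$; this yields $\Ebb[\text{window average}]>\epsilon$ and hence $\delta^p(\epsilon;T)\to 0$ for all $T$, i.e.\ $\delta^p(\epsilon)\to0$.

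The main obstacle is exactly the reconciliation of these two requirements, which are in genuine tension: a windowed frequency exceeding $\epsilon$ at a stopping time $\tau$ forces, \emph{per realization and for a fixed set}, $\hat\mu_{\Xbb^p}(\cdot)\ge\epsilon$ whenever it recurs, so naive recurrence in any fixed region (or in the union of the witness regions along a decreasing sequence) would immediately break $\Ccal_4$. Threading the needle therefore requires the randomness to decouple \emph{online detectability} from \emph{long-run accumulation}: the transient scale-$p$ concentrations must be catchable by an adapted $\tau$ at arbitrarily large times and for arbitrarily small sets as $p\to\infty$, while their spatial relocation (through the fresh i.i.d.\ draws over the $\nu_j$) keeps every fixed set's limsup frequency pinned to its measure, so that no decreasing $A_i\downarrow\emptyset$ retains positive limsup frequency at any scale. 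Making this balance precise—choosing the duplication levels $q_k$, the scale schedule, and the measures $\nu_{j(k)}$ so that the expected windowed deviation stays above $\epsilon$ at scales $p\to\infty$ without the per-realization limsup ever inheriting it—is the technical heart of the argument, and I expect the bulk of the work to lie in the $\Ccal_4$ verification (controlling $\sup_p\hat\mu_{\Xbb^p}(A_i)$ uniformly in $p$ in the presence of these recurring bursts).
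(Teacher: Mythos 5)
You have the right skeleton (octave blocks, i.i.d.\ draws duplicated $\approx 2^p$ times, witness set = the burst's support, trivial stopping time at the block boundary), and you correctly identify the central tension. But your proposed resolution of that tension --- relocating the bursts along \emph{disjoint} supports with non-atomic measures $\nu_j$ --- does not work, and the failure is exactly the one you flag yourself. If scale $p$ is to recur at arbitrarily large times with witness sets of vanishing $\hat\mu$, the support index $j(k)$ along the scale-$p$ blocks must tend to infinity; but then for $B_i=\bigcup_{j\geq i}A_j$ we have $B_i\downarrow\emptyset$ while infinitely many scale-$p$ blocks lie entirely inside $B_i$, so $\hat\mu_{\Xbb^p}(B_i)\geq\tfrac12$ for every $i$ and $\Xbb^p\notin\Ccal_1$, i.e.\ $\Xbb\notin\Ccal_4$ by the decreasing-sets form of Proposition~\ref{prop:C5_equivalent_form}. (This is essentially the construction the paper uses in Theorem~\ref{thm:learnable_process_smaller_C2} to exhibit a process in $\Ccal_2\setminus\Ccal_4$.) The bound you invoke, $\hat\mu_{\Xbb^p}(A)\lesssim\sum_j\nu_j(A)$, cannot rescue this: $\sum_j\nu_j$ is an infinite measure and is not continuous from above at $\emptyset$ ($\sum_j\nu_j(B_i)=\infty$ for all $i$). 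The variant where you reuse a single fixed $\nu$ and take the witness to be a small neighborhood of the block's fresh draws fails for the same reason: the witness sets across recurrences of scale $p$ are essentially disjoint, their tail unions shrink to a null set yet retain limsup frequency $\tfrac12$ at scale $p$, again contradicting $\Xbb^p\in\Ccal_1$ (consistently with Lemma~\ref{lemma:uniform_deviations}, which forbids small $\hat\mu$ together with a large window once $\Xbb^p\in\Ccal_1$ and $T$ is large).

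The missing idea is that the recurring witness sets must have small measure individually but must \emph{overlap so heavily that their tail unions do not shrink to a null set}; disjointness (spatial relocation) is precisely the wrong mechanism. The paper achieves this by equidistribution inside a single ambient measure: in the scale-$p$ block $[2^l,2^{l+1})$ it draws uniformly from $A_p(l)=\bigcup_{0\leq i<2^l}[i2^p/2^{p+l},(i2^p+1)/2^{p+l}]$, a union of $2^l$ evenly spaced intervals of total Lebesgue measure $2^{-p}$. Because these supports equidistribute as $l\to\infty$, every \emph{fixed} measurable $A$ satisfies $\mu(A\cap A_p(l))/\mu(A_p(l))\to\mu(A)$ (via interval approximation plus Hoeffding), which yields $\hat\mu_{\Xbb^{p'}}(A)\leq\mu(A)$ a.s.\ uniformly in $p'$ and hence $\Ccal_4$; meanwhile each $A_p(l)$ has $\hat\mu_{\Xbb^{p'}}(A_p(l))\leq 2^{-p}$ yet carries an entire block, giving $\delta^p(1/2)\leq 2^{-p}\to0$. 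Without this (or an equivalent) device your construction proves the wrong separation, so the $\Ccal_4$ verification --- which you correctly anticipate as the technical heart --- does not go through as proposed.
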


\begin{proof}
We construct a process $\Xbb$ on $[0,1]$ by phases $[2^l,2^{l+1})$ for $l\geq 0$. We set $X_1=0$ arbitrarily and divide phases by categories $S_p=\{l\geq 1: l\equiv 2^{p-1}\bmod 2^p\}$ for any $p\geq 1$. Next, for any $l\in S_p$, let
\begin{equation*}
    A_p(l) = \bigcup_{0\leq i<2^l}\left[\frac{i2^p}{2^{p+l}},\frac{i2^p+1}{2^{p+l}} \right].
\end{equation*}
Importantly, $A_p(l)$ has Lebesgue measure $2^{-p}$. Next, noting that $l\geq 2^{p-1}\geq p$, for $2^l\leq t<2^{l+1}$ we define
\begin{equation*}
    X_t =\begin{cases}
        \Ucal_t(A_p(l)) &2^l\leq t<2^l + 2^{l-p},\\
        X_{t'} &t\geq 2^l + 2^{l-p},2^l\leq t'<2^l + 2^{l-p}, t'\equiv t\bmod 2^{l-p}
    \end{cases}
\end{equation*}
where $\Ucal_t(A_p(l)) $ denotes a uniform random variable on $A_p(l)$ independent from all past random variables. The process on $S_p$ is constructed so that it has $2^p$ duplicates. This ends the construction of $\Xbb$.

We now show that $\Xbb\in\Ccal_4$. For convenience, for any $l\geq 1$, let $p(l)$ be the index such that $l\in S_{p(l)}$. Next, let $\Xbb^p:=(X_t)_{t\in\Tcal^p}$ for $p\geq 0$. we will show the stronger statement that for any measurable set $A\in\Bcal$, we have $\hat\mu_{\Xbb^p}(A) \leq \mu(A)\;(a.s.),$ where $\mu$ is the Lebesgue measure. To do so, fix $A\in\Bcal$ and $\epsilon>0$. Since $A$ is Lebesgue measurable, there exists a sequence of disjoint intervals $(I_k)_{k\geq 0}$ within $\Xcal=[0,1]$ such that $A\subset \bigcup_{k\geq 0} I_k$ and
\begin{equation*}
    \sum_{k\geq 0} \ell(I_k) \leq \mu(A)+\epsilon,
\end{equation*}
where $\ell(I)$ is the length of an interval $I$. Then, let $k_0$ such that $\sum_{k\geq k_0} I_k\leq \frac{\epsilon^2}{2^{p+1}}$ and pose $\ell_0=\min_{k<k_0} \ell(I_k)$. Then, for any $l\geq \max(2,\log_2 \frac{k_0}{\epsilon}):=l_0$, with $l\in S_q$,
\begin{align*}
    \frac{\mu(A\cap A_q(l))}{\mu(A_q(l))} &\leq \sum_{k< k_0}\frac{\mu(I_k\cap A_q(l))}{\mu(A_q(l))} +   2^q\mu\left(\bigcup_{k\geq k_0} I_k \right)\\
    &\leq \sum_{k< k_0}(\ell(I_k) + 2^{-l})+\epsilon^2 2^{q-p-1}\\
    &\leq \mu(A) + 2\epsilon + \epsilon^2 2^{q-p-1}.
\end{align*}
Let $q_0 = p+\log_2 \frac{1}{\epsilon}$. For any $l\geq l_0$ with $l\in\bigcup_{q<q_0}S_q$, we have $\frac{\mu(A\cap A_q(l))}{\mu(A_q(l))} \leq \mu(A)+3\epsilon$. Now for any $l\geq l_0$, if $l\in\bigcup_{q<q_0}S_q$, Hoeffding's inequality implies that for any $l\leq r\leq 2^{l-q}$,
\begin{equation*}
    \Pbb\left[\sum_{2^l\leq t<2^l+r}\1_A(X_t) \leq  r(\mu(A)+4\epsilon)\right]\geq 1-e^{-2\epsilon^2 r^2}\geq 1-e^{-2\epsilon^2 lr}.
\end{equation*}
Note that we always have $2^{l-q}\geq l$ since $l\geq 2^{q-1}$ and $l\geq 2$.
In particular, because we have $\sum_{r\geq 1}\sum_{l\geq 1}e^{-2\epsilon^2 lr}<\infty$, on an event $\Ecal(\epsilon)$ of probability one, there exists $\hat l\geq l_0$ such that the above equation holds for all $l\geq \hat l$ with $l\in\bigcup_{q<q_0}S_q$ and $l\leq r\leq 2^{l-q}$. Then, for $T\geq 2^{\hat l}$, letting $l(T)\geq 1$ such that $2^{l(T)}\leq T<2^{l(T)+1}$, we have
\begin{align*}
    \sum_{t\leq T, t\in\Tcal^p}\1_A(X_t)&= \sum_{l<l(T)}\min(2^{p(l)},2^p) \sum_{2^l\leq t<2^l+2^{l-p(l)}}\1_A(X_t) + \sum_{2^{l(T)}\leq t\leq T,t\in\Tcal^p}\1_A(X_t)\\
    &\leq  \sum_{l<l(T)}\epsilon 2^l \1[p(l)\geq q_0] +2^{\hat l} + \sum_{\hat l\leq l<l(T)}2^l(\mu(A)+4\epsilon) \1[p(l)< q_0]\\
    &\quad\epsilon 2^{l(T)}\1[p(l(T))\geq q_0] + [(T-2^{l(T)}+1)(\mu(A)+4\epsilon) + l(T)]\1[p(l(T))< q_0]\\
    &\leq 2^{\hat l}+l(T) + 2\epsilon 2^{l(T)} + (\mu(A)+4\epsilon)T\\
    &\leq 2^{\hat l} + \log_2 T + (\mu(A)+6\epsilon)T.
\end{align*}
where in the first inequality, we used the fact that for $q\geq q_0$, $2^p\leq \epsilon 2^q$. Further, the additional term $l(T)$ comes from the fact that the estimates on $\Ecal(\epsilon)$ held for $r\geq l$: writing $T=2^{l(T)} + u2^{l(T)-p(l(T))} + v$, we first use $\Ecal(\epsilon)$ with $r=2^{l(T)-p(l(T))}$, then with $r=\max(v,l(T))$. As a result, on $\Ecal(\epsilon)$, we have $\hat\mu_{\Xbb^p}(A)\leq \mu(A)+6\epsilon$. Thus, on $\bigcap_{j\geq 0}\Ecal(2^{-j})$ of probability one, we have $\hat\mu_{\Xbb^p}(A)\leq \mu(A)$, and this holds for all $p\geq 1$ and $A\in\Bcal$. Using this property, verifying the $\Ccal_4$ condition is straightforward. For disjoint measurable sets $A_i$, we have $\Ebb[\hat\mu_{\Xbb^i}(A_i)]\leq\mu(A_i)\to 0$ because $\sum_i \mu(A_i)\leq 1$.

We now show that $\Xbb\notin\Ccal_6$. First, on an event $\Fcal$ of probability one, all samples $\Ucal_t(A_p(l))$ are distinct. As a result, on $\Fcal$, except for the intended duplicates, all instances of $\Xbb$ are distinct. Thus, for any $l\in S_p$, and any $2^l\leq t<2^{l+1}$, we have $t\in\Tcal^p$. Hence, on $\Fcal$,
\begin{equation*}
    \frac{1}{2^{l+1}}\sum_{2^l\leq t< 2^{l+1},t\in\Tcal^p} \1_{A_p(l)}(X_t) \geq \frac{2^l}{2^{l+1}} = \frac{1}{2}.
\end{equation*}
In particular, this implies that
\begin{equation*}
    \Ebb\left[ \frac{1}{2^{l+1}}\sum_{2^l\leq t< 2^{l+1},t\in\Tcal^p} \1_{A_p(l)}(X_t)\right]\geq \frac{1}{2}.
\end{equation*}
However, $\Ebb[\hat\mu_{\Xbb^p}(A_p(l))]=\mu(A_p(l))=2^{-p}$. Therefore, using the trivial stopping time $\tau=2^l$, we showed $\delta^p(1/2;2^l)\leq 2^{-p}$. Because this holds for all $l\in S_p$ which is infinite, we have $\delta^p(1/2)\leq 2^{-p}$. Thus, $\delta^p(1/2)\to 0$ as $p\to\infty$. This shows that $\Xbb\notin\Ccal_6$ and ends the proof of the theorem.
\end{proof}

A more natural condition on processes than $\Ccal_6$ would be one that does not involve these stopping times $\tau$. In particular, for a process $\Xbb\in \Ccal_4$, we can define instead for any $\epsilon>0$ and $T\geq 1$,
\begin{multline*}
    \bar\delta^p(\epsilon;T) := \sup\left\{0\leq \delta\leq 1:\forall A\in\Bcal \text{ s.t. }
    \sup_l \Ebb[\hat\mu_{\Xbb^l}(A)]\leq \delta,\right.\\
    \left. \Ebb\left[\sup_{T'\geq T}\frac{1}{T}\sum_{t\leq T , t\in\Tcal^p}\1_A(X_t)\right]\leq \epsilon \right\}.
\end{multline*}
As before, $\bar \delta^p(\epsilon;T)$ is non-decreasing in $T$ and $\bar \delta^p(\epsilon):=\lim_{T\to\infty}\delta^p(\epsilon;T)>0$. We can then observe that $\bar\delta^p(\epsilon)$ is non-increasing. Similarly to $\Ccal_6$, we can then define the following condition.

\begin{restatable}{condition}{ConditionSeven}
    \label{con:C7}
    $\Xbb\in\Ccal_4$ and for any $\epsilon>0$, we have $ \lim_{p\to\infty} \bar \delta^p(\epsilon) >0.$
    Denote by $\Ccal_7$ the set of all processes $\Xbb$ satisfying this condition.
\end{restatable}

As a simple remark, we have the inclusion $\Ccal_7\subset\Ccal_6$, since if for any given process $\Xbb\in\Ccal_4$, set $A\in\Bcal$ and online stopping time $\tau\geq T$,
\begin{equation*}
    \Ebb\left[\frac{1}{2\tau}\sum_{\tau\leq t<2\tau , t\in\Tcal^p}\1_A(X_t)\right] \leq \Ebb\left[\sup_{T'\geq T}\frac{1}{T}\sum_{t\leq T , t\in\Tcal^p}\1_A(X_t)\right].
\end{equation*}

Unfortunately, for oblivious rewards, we were unable to prove that $\Ccal_7$ is a necessary condition. Indeed, for a process $\Xbb\in\Ccal_4$, time $T\geq 1$ and $\epsilon>0$, if
\begin{equation}\label{eq:high_deviation}
    \Ebb\left[\sup_{T'\geq T}\frac{1}{T}\sum_{t\leq T , t\in\Tcal^p}\1_A(X_t)\right] > \epsilon,
\end{equation}
it is in general not true that there exists an online stopping time $\tau\geq T$ such that
\begin{equation}\label{eq:high_deviation_with_stopping_time}
    \Ebb\left[\frac{1}{2\tau}\sum_{\tau\leq t<2\tau , t\in\Tcal^p}\1_A(X_t)\right] > \eta \epsilon,
\end{equation}
even for a fixed multiplicative tolerance $0<\eta<1$, which should be independent of $\epsilon>0$. Thus, it seems unlikely that $\Ccal_6 = \Ccal_7$ in general for spaces $\Xcal$ admitting a non-atomic probability measure.

However, if one considers a stronger type of adversary, we can show that $\Ccal_7$ becomes necessary for universal learning. Precisely, one can introduce \emph{prescient} rewards, that are stronger than oblivious rewards in that rewards are allowed to depend on the complete sequence $\Xbb$ instead of the revealed contexts to the learner $\Xbb_{\leq t}$ at step $t$. Formally, these are defined as follows.

\begin{definition}[Reward models]
\label{def:prescient_rewards}
    The reward mechanism is said to be \emph{prescient} if there are conditional distributions $(P_{r\mid a,\mb x_{t'\geq 1}})_{t\geq 1}$ such that $r_t$ given the selected action $a_t$ and the sequence of contexts $\Xbb$, follows $P_{r\mid a,\mb x_{t'\geq 1}}$.
\end{definition}

In this model, given a process $\Xbb\in\Ccal_4$, a time $T\geq 1$ and $\epsilon>0$ satisfying Eq~\eqref{eq:high_deviation}, finding a time $\tau\geq T$ (measurable with respect to the sigma-algebra $\sigma(\Xbb)$, i.e., conditionally on $\Xbb$) such that Eq~\eqref{eq:high_deviation_with_stopping_time} is satisfied becomes trivial even with $\eta=1$. Therefore, the same proof as for \cref{thm:condition7_necessary} shows that the last condition on stochastic processes is necessary for prescient rewards.

\begin{theorem}\label{thm:prescient_implies_C7}
Let $\Xcal$ be a metrizable separable Borel space, and a finite action space $\Acal$ with $|\Acal|\geq 2$. Then, $\Ccal_{prescient} \subset \Ccal_7$.
\end{theorem}

\subsubsection{Condition 5 is necessary for universal learning with online rewards}
\label{subsubsec:C5_necessary_online}

In this section, we show that condition $\Ccal_5$ is necessary for universal learning with online rewards, tightening the result on the necessity of condition $\Ccal_6$ from the previous section. In fact, in \cref{subsec:sufficient_conditions} we show that $\Ccal_5$ is also sufficient, which together with the result from this section shows that $\Ccal_5$ exactly characterizes universally learnable processes for online rewards. We recall that this is the strongest reward model that we consider in this paper and allows the reward adversary to also take into account the past actions selected by the learner. We first briefly recall the definition of condition $\Ccal_5$.

\ConditionExistsScaleRate*

Before proving our main result, we need the following lemma that gives an equivalent formulation of the class of processes $\Ccal_5$. Intuitively, it shows that if $\Xbb\notin\Ccal_5$, for any tentative rate to add duplicates---yielding the extended process $\tilde\Xbb$---we can uniformly lower-bound the proportion of failure for the $\Ccal_1'$ condition.

\begin{lemma}
\label{lemma:other_form_C5}
    Let $\Xcal$ be a metrizable separable Borel space and $\Xbb$ a stochastic process on $\Xcal$. The following are equivalent.
    \begin{itemize}
        \item $\Xbb\in\Ccal_5$,
        \item For any $\epsilon>0$, there exists an increasing sequence of integers $(T_i)_{i\geq 0}$ such that letting $\Tcal = \bigcup_{i\geq 0}\Tcal^i\cap\{t\geq T_i\}$, for any sequence $\{A_k\}_{k\geq 1}$ of measurable sets of $\Xcal$ with $A_k\downarrow \emptyset$,
        \begin{equation*}
            \lim_{k\to\infty}\Ebb[\hat\mu_{(X_t)_{t\in\Tcal}}(A_k)] \leq \epsilon.
        \end{equation*}
    \end{itemize}
\end{lemma}

\begin{proof}
    By definition of the condition $\Ccal_5$, it is immediate that $\Xbb\in\Ccal_5$ implies the second proposition. It remains to prove the converse. We then suppose that $\Xbb$ satisfies the second proposition. Denote by $(T_i(l))_{i\geq 0}$ the sequence obtained from the proposition by setting $\epsilon=2^{-l}$. Now defining
    \begin{equation*}
        T_i = \max_{j\leq i} T_i(j),
    \end{equation*}
    it then suffices to argue that the sequence $(T_i)_{i\geq 0}$ satisfies the requirements for the $\Ccal_5$ condition. We write $\Tcal = \bigcup_{i\geq 0}\Tcal^i\cap\{t\geq T_i\}$ and $\Tcal(l) = \bigcup_{i\geq 0}\Tcal^i\cap\{t\geq T_i(l)\}$ for any $l\geq 0$. Now fix $l\geq 0$, and note that for any $i\geq l$, one has $T_i\geq T_i(l)$. As a result,
    \begin{equation*}
        \bigcup_{i\geq l}\Tcal^i\cap\{t\geq T_i\}\subset \bigcup_{i\geq l}\Tcal^i\cap\{t\geq T_i(l)\}.
    \end{equation*}
    Next, note that because the sets $\Tcal^i$ are increasing in $i$, we have $\Tcal\setminus\bigcup_{i\geq l}\Tcal^i\cap\{t\geq T_i\} \subset \{t< T_l\}$. Therefore, for any measurable set $A\in\Bcal$, one has
    \begin{equation*}
        \hat\mu_{(X_t)_{t\in\Tcal}}(A) = \limsup_{T\to\infty}\frac{1}{T}\sum_{t\leq T, t\in\Tcal} \1_A(X_t)\leq \limsup_{T\to\infty}\frac{T_l}{T} + \frac{1}{T}\sum_{t\leq T, t\in\Tcal(l)} \1_A(X_t) = \hat\mu_{(X_t)_{t\in\Tcal(l)}}(A).
    \end{equation*}
    Thus, for any sequence of measurable sets $A_k\downarrow\emptyset$, one has
    \begin{equation*}
        \lim_{k\to\infty}\Ebb[\hat\mu_{(X_t)_{t\in\Tcal}}(A_k)] \leq \lim_{k\to\infty}\Ebb[\hat\mu_{(X_t)_{t\in\Tcal(l)}}(A_k)] \leq 2^{-l}.
    \end{equation*}
    Because this holds for all $l\geq 0$, we obtain $\lim_{k\to\infty}\Ebb[\hat\mu_{(X_t)_{t\in\Tcal}}(A_k)]=0$ and the lemma is proved.
\end{proof}

We are now ready to prove the following theorem.

\begin{theorem}
\label{thm:C5_necessary_online}
    Let $\Xcal$ be a metrizable separable Borel space, and a finite action space $\Acal$ with $|\Acal|\geq 2$. Then, $\Ccal_{online}\subset \Ccal_5$.
\end{theorem}

\begin{proof}
    Fix $\Xbb\notin\Ccal_5$. If $\Xbb\notin \Ccal_4$, we already proved that (even for oblivious rewards) universal learning is not achievable. We therefore suppose that $\Xbb\in\Ccal_4$ and suppose by contradiction that there is a universally consistent learning rule $f_\cdot$ under $\Xbb$. We will construct by induction some online rewards on which the learning rule is not consistent. For convenience, we denote by $\hat a_t$ the action selected by the learning rule at time $t$. Last, since $|\Acal|\geq 2$, we can fix $a_1\neq a_2\in \Acal$ two arbitrary actions. These will be the only used actions for our constructions, all other actions $a\in\Acal\setminus\{a_1,a_2\}$ will have zero reward at all times.

    We start by constructing rewards that will depend on the actions of the learning rule. By Lemma \ref{lemma:other_form_C5}, we can fix $\epsilon$ such that for any increasing sequence $(T_i)_{i\geq 0}$, letting $\Tcal = \bigcup_{i\geq 0}\Tcal^i\cap \{t\geq T_i\}$, there exists a sequence of sets $A_k\downarrow\emptyset$ such that
    \begin{equation*}
        \Ebb[\hat\mu_{(X_t)_{t\in\Tcal}}(A_k)] \geq \epsilon,\quad \forall k\geq 0.
    \end{equation*}
    Here we used that the sequence of sets is decreasing so that $\Ebb[\hat\mu_{(X_t)_{t\in\Tcal}}(A_k)]$ is decreasing in $i$. 
    
    The end rewards are constructed by induction: at the phase $p$ of the construction, the rewards $r^\star_t$ have been constructed for all $t <  T_p^\star$ for some time $T_p^\star=2^{R_p^\star}$. Further, we have defined some disjoint sets $B_1,\ldots,B_p$, increasing times $T_1^\star,\ldots,T_{p-1}^\star$, and a policy $\pi^{(p)}$ such that $\pi^{(p)}(x)=a_2$ for all $x\notin B_1\cup\cdots B_p$, and for any $p'\leq p$,
    \begin{equation}\label{eq:for_induction_excess_error}
        \Ebb\sqb{\max_{T_{p'-1}^\star \leq  T < T_{p'}^\star} \frac{1}{T}\sum_{t=1}^T r_t^\star(\pi^{(p)}(X_t)) - r_t^\star(\hat a_t)} \geq \frac{\epsilon}{16} + \frac{\epsilon}{2^{p+10}},
    \end{equation}
    where we used the notation $T_0^\star=0$. Last, at phase $p$ we have also constructed a sequence of increasing indices $(Q_p(i))_{i\geq 0}$ with $Q_p(i)\geq 4i$ such that with $\Tcal^{(p)} = \bigcup_{i\geq 0}\Tcal^i\cap\{t\geq 2^{Q_p(i)}\}$, one has
    \begin{equation}\label{eq:past_sets_not_visited}
        \Ebb\sqb{\sup_{T\geq 1}\frac{1}{T}\sum_{t\leq T,t\in\Tcal^{(p)}} \1_{B_{p'}}(X_t) } \leq \frac{\epsilon}{2^{p'+10}},\quad p'\leq p.
    \end{equation}
    For instance, for $p=0$ we can simply take $Q_0(i)=2i$ for all $i\geq 0$. We then suppose that we completed phase $p\geq 0$ and proceed with the induction to construct the set $B_{p+1}$, time $T_{p+1}^\star$ and rewards $r_t^\star$ until time $T_{p+1}^\star.$
    
    Before doing so, we need to construct an auxiliary reward process. These rewards have the following behavior. Before $T_p^\star = 2^{R_p^\star}$, these are constructed identically as the rewards $\mb r^\star$. Then, at time $t\geq 2^{R_p^\star}$, either the rewards are always zero and this is called an inactive time; or the time is active, in which case the ``safe'' action $a_2$ always receives a reward $3/4$, and the ``uncertain'' action $a_1$ receives a reward that can either be $0$ or $1$ with equal probability. We say that the learning rule explores at an active time $t$ if it selects action $a_1$. At the high level, the rewards proceed by period and tentatively activate the times from $\Tcal^i$ for some $i\geq 0$. If the learning rule performs too many explorations, the trial fails and we instead aim to activate fewer times from $\Tcal^j$ for $j<i$. We construct the rewards inductively by period $[2^{r},2^{r+1})$ for $r\geq r_0$. Each of these periods will be associated with a level $i(r)\geq 0$, which roughly corresponds to the fact that the active times during period $r$ were times in $\Tcal^{i(r)}$. We also denote by $\Scal_t$ the set of active times up until time $t$ (included). The formal procedure to define the online rewards is given in Algorithm \ref{alg:define_rewards}, where $r_t(a)$ denotes the reward for action $a$ defined by the procedure at time $t$, for $t\geq 1$.

    \begin{algorithm}[h!]
\caption{Procedure to define the online rewards}\label{alg:define_rewards}
\hrule height\algoheightrule\kern3pt\relax

Let $(B_t)_{t\geq 1}$ be an i.i.d. $\Bcal(\frac{1}{2})$ sequence\,

\For{$t=1,\ldots,T_p^\star-1$}{
    Observe context $X_t$\,

    Define $r_t(a)=r_t^\star(a)$ for all $a\in\Acal$\,

    Observe action selected by learner $\hat a_t$\,
}

Initialize $i(R_p^\star)=0$ and let $\Scal_{T_p^\star-1}=\emptyset$\,

\For{$r\geq R_p^\star$}{
    \For {$t=2^{r}, \ldots, 2^{r+1}-1$}{
        
        Observe context $X_t$\,
    
        \uIf{$t\notin \Tcal^{i(r)}$}{
            Define $r_t(a)=0$ for all $a\in\Acal$ and $\Scal_t = \Scal_{t-1}$
        }
        \uElseIf{$\forall T_p^\star\leq t'<t, \, X_{t'}\neq X_t$}{
            Define $r_t(a) = \begin{cases}
                B_t &a=a_1\\
                \frac{3}{4} & a=a_2,\\
                0 & a\notin\{a_1,a_2\}
            \end{cases}$ for $a\in\Acal$\,

            $\Scal_t = \Scal_{t-1}\cup \{t\}$
        }
        \uElseIf{$\exists T_p^\star \leq t' <t$ such that $X_t=X_{t'}$, $t'\in \Scal_{t-1}$ and $\hat a_{t'} = a_1$}{
            Define $r_t(a) = 0$ for all $a\in\Acal$ and $\Scal_t = \Scal_{t-1}$
        }
        \Else{
            Define $r_t(a)=r_{t'}(a)$ for all $a\in\Acal$ where $t'<t$, $X_t=X_{t'}$ and $t'\in \Scal_{t-1}$\,

            $\Scal_t\gets \Scal_{t-1}\cup \{t\}$
        }

        Observe action selected by learner $\hat a_t$\,

        \lWhile{$\frac{1}{t}\sum_{u=T_p^\star}^t \1_{u\in \Scal_t}\1_{\hat a_u\neq a_2} \geq \frac{1}{2^{2i(r)}(i(r)+1)}$}{
                $i(r)\gets \max(0,i(r) -1)$
        }
   }
   Define $i(r+1) = \min\{i(r)+1,k\}$ where $k$ is such that $Q_p(k)\leq r+1< Q_p(k+1)$
}

\hrule height\algoheightrule\kern3pt\relax
\end{algorithm}

     Let $\Scal=\bigcup_{t\geq 1} \Scal_t$ be the set of all active times. We first give some properties on the learning procedure starting from time $T_p^\star$. As a first step, we show that the learner cannot make better predictions than the simple policy $\pi_0:x\in\Xcal\mapsto a_2\in\Acal$. Precisely, we show that the quantities $r_t(\hat a_t)-r_t(a_2)+\1_{t \in\Scal}\1_{\hat a_t\neq a_2}/4$ for $t\geq T_p^\star$ form the increments of a super-martingale with respect to the filtration $\sigma( \Xbb_{\leq t}, \hat{\mb a}_{\leq t}, \mb r_{\leq t-1})$. First, note that whether $t$ is active, i.e., $t\in\Scal$ only requires the knowledge of $\Xbb_{\leq t}$ and the actions $\hat{\mb a}_{\leq t}$, hence is measurable with respect to the given filtration. Next, if $t$ is inactive, all rewards are zero. We now consider active times. Denote by $u(t)$ the time of the first occurrence of $X_t$ starting from $T_p^\star$, i.e., $u(t) = \min\{T_p^\star\leq u\leq t: X_t=X_u\}$. Then, if $t$ is active, $r_t(a_1)-r_t(a_2) = B_{u(t)}-3/4.$ Moreover, by construction, the learning rule has not queried $a_1$ for any previous active time $u$ within the same period as $t$ such that $X_t=X_u$. However, these are the only times when $B_{t'}$ affected the rewards. As a result, all rewards that the learning rule has received before time $t$ are independent of $B_{u(t)}$ (whether $t$ is active or not). This shows that $B_{u(t)}$ is independent from $\Xbb_{\leq t}$, $\hat{\mb a}_{\leq t}$ and $\mb r_{\leq t-1}$ together. As a result,
    \begin{align*}
        \Ebb[r_t(\hat a_t)-r_t(a_2)+\1_{t\in\Scal}\1_{\hat a_t\neq a_2}/4\mid \Xbb_{\leq t}, \hat{\mb a}_{\leq t}, \mb r_{\leq t-1}] &= \1_{t\in \Scal}(-1/2\cdot \1_{\hat a_t\notin\{a_1,a_2\}}\\ 
        &\quad  + \1_{\hat a_t=a_1}\Ebb[B_{u(t)}-1/2 \mid \Xbb_{\leq t}, \hat{\mb a}_{\leq t}, \mb r_{\leq t-1}]) \\
        &=-1/2\cdot \1_{t\in\Scal} \1_{\hat a_t\notin\{a_1,a_2\}}\leq 0.
    \end{align*}
    This ends the proof that $(r_t(\hat a_t)-r_t(a_2)+\1_{t\in\Scal}\1_{\hat a_t\neq a_2}/4)_{t\geq T_p^\star}$ form the increments of a super-martingale, and these are bounded in absolute value by one. Azuma-Hoeffding's inequality then implies for any $T\geq T_p^\star$,
    \begin{equation*}
        \Pbb\sqb{\sum_{t=T_p^\star}^T r_t(\hat a_t)-r_t(a_2)\geq 2T^{3/4} - \frac{1}{4}\sum_{t=T_p^\star}^T \1_{t\in\Scal}\1_{\hat a_t\neq a_2}}\leq e^{-2\sqrt T}.
    \end{equation*}
    Borel-Cantelli's lemma then implies that on an event $\Ecal$ of probability one, there exists $\hat T\geq T_p^\star$ such that for any $T\geq \hat T$,
    \begin{equation*}
        \sum_{t=T_p^\star}^T r_t(\hat a_t)-r_t(a_2)< 2T^{3/4} - \frac{1}{4}\sum_{t=T_p^\star}^T \1_{t\in\Scal}\1_{\hat a_t\neq a_2}.
    \end{equation*}

    We now focus on the level $i(r)$ at each period. Note that this quantity is updated by the procedure along the learning process: it starts at $i(r-1)+1$ (or $0$ if $r=r_0$) at the beginning of the period $[2^r,2^{r+1})$, then can only decrease during the period. Starting from the end of the period $2^{r+1}$, the level $i(r)$ is never updated again. To avoid any confusions, we denote by $I(r)$ this final value of $i(r)$ once the period is completed. We aim to prove that the level at each period $i(r)$ eventually diverges to infinity. Fix $j\geq 0$. Because $f_\cdot$ is universally consistent under $\Xbb$, it has in particular vanishing excess error compared to $\pi_0$. Hence, we have
    \begin{equation*}
        \Pbb\sqb{\limsup_{T\to\infty} \frac{1}{T}\sum_{t=1}^T r_t(a_2) - r_t(\hat a_t) \geq \frac{1}{2^{2j+4}(j+1)} }=0.
    \end{equation*}
    As a result, by the dominated convergence theorem there exists $t_j\geq 1$ such that
    \begin{equation*}
        \Pbb\sqb{\sup_{T\geq t_j} \frac{1}{T}\sum_{t=1}^T r_t(a_2) - r_t(\hat a_t) \geq \frac{1}{2^{2j+4}(j+1)} }\leq \frac{\epsilon}{2^{j+10}}.
    \end{equation*}
    We denote by $\Fcal_j$ the complement event.
    Next, because $\Ecal$ has full probability, there exists $t_j'$ such that
    \begin{equation*}
        \Pbb\sqb{\sum_{t=T_p^\star}^T r_t(\hat a_t)-r_t(a_2) < 2T^{3/4} - \frac{1}{4}\sum_{t=T_p^\star}^T \1_{t\in\Scal}\1_{\hat a_t\neq a_2},\, \forall T\geq t_j'} \leq \frac{\epsilon}{2^{j+10}}.
    \end{equation*}
    We denote by $\Ecal_j$ the complement event. Now, we define an integer $R_j\geq R_p^\star$ such that $2^{R_j-j}\geq \max(t_j,t_j',2^{8j+16}(j+1)^4, 2^{2j+4}(j+1)T_p^\star,2^{Q_p(j)})$. Using the previous two equations shows that on $\Ecal_j\cap\Fcal_j$ of probability at most $1-\frac{\epsilon}{2^{j+9}}$, for all $T\geq 2^{R_j-j}$,
    \begin{align*}
        \frac{1}{T}\sum_{t=T_p^\star}^T \1_{t\in\Scal}\1_{t\neq a_2} &< \frac{4}{T}\sum_{t<T_p^\star} (r_t(\hat a_t)-r_t(a_2)) + \frac{8}{T^{1/4}} + \frac{1}{2^{2j+2}(j+1)} \\
        &\leq \frac{4T_p^\star}{T} + \frac{8}{T^{1/4}} + \frac{1}{2^{2j+2}(j+1)}
        \leq \frac{1}{2^{2j}(j+1)}.
    \end{align*}
    Also, for any $r\geq R_j-j$, one has $r\geq Q_p(j)$ so that the quantities $I(r)$ can freely increase until they reach $j$ from when the quantities $i(r)$ are always lower bounded by $j$. In particular, by the union bound, this shows that
    \begin{equation*}
        \Pbb\sqb{\forall j\geq 0,\inf_{r\geq R_j} I(r) \geq j} \geq \Pbb\sqb{\bigcap_{j\geq 0}\Ecal_j\cap\Fcal_j} \geq 1-\frac{\epsilon}{2^8}.
    \end{equation*}
    We denote by $\Fcal=\{\forall j\geq 0,\inf_{r\geq R_j}I(r)\geq j\}$ the corresponding event.

    We are now ready to show that $f_\cdot$ is not universally consistent. Because $\Xbb\notin\Ccal_5$, with $\Tcal = \bigcup_{i\geq 0}\Tcal^i \cap\{t\geq 2^{R_j} \}$, there exists a measurable sets $A_k\downarrow\emptyset$ such that for all $k\geq 1$ we have $\Ebb[\hat\mu_{(X_t)_{t\in\Tcal}}(A_k)]\geq \epsilon.$ Now because $A_k\downarrow\emptyset$, we have
    \begin{equation*}
        0\leq \lim_{k+\to\infty}\Pbb\paren{\exists t< T_p^\star:X_t\in A_k}\leq \sum_{t<T_p^\star}\lim_{k\to\infty}\Pbb(X_t\in A_k)=0.
    \end{equation*}
    Also, because $\Xbb\in\Ccal_4$, by Lemma \ref{prop:C5_equivalent_form} we have
    \begin{equation*}
        \lim_{k\to\infty} \Ebb\sqb{\sup_{i\geq 0}\hat\mu_{(X_t)_{t\in\Tcal^i}}(A_k)}=0.
    \end{equation*}
    As a result, there exists $l\geq 1$ such that
    \begin{equation}\label{eq:properties_of_Al}
        \Ebb\sqb{\sup_{i\geq 0}\hat\mu_{(X_t)_{t\in\Tcal^i}}(A_l)} \leq \frac{\epsilon}{2^{p+11}} \quad \text{and} \quad \Pbb\paren{\exists t< T_p^\star:X_t\in A_k} \leq \frac{\epsilon}{2^{p+11}}.
    \end{equation}
    We fix this index $l$ in the rest of the proof. Let $L_p^\star$ be an integer such that $L_p^\star\geq  \max(R_p^\star+10-\log_2\epsilon,R_{10-\log_2 \epsilon},4(\log_2(C_\epsilon)+10-\log_2\epsilon))$, where $C_\epsilon = \sqrt{2\ln\frac{8}{\epsilon}}$. Now by construction, since we have $\Ebb[\hat\mu_{(X_t)_{t\in\Tcal}}(A_l)]\geq \epsilon$, we have in particular
    \begin{equation*}
        \Ebb\sqb{ \sup_{T\geq 2^{L_p^\star}} \frac{1}{T}\sum_{t\leq T,t\in\Tcal} \1_{A_l}(X_t)} \geq \epsilon.
    \end{equation*}
    Thus, by the dominated convergence theorem, there exists an integer $R_{p+1}^\star> 2^{L_p^\star}$ such that
    \begin{equation}\label{eq:many_occurrence_in_Al}
        \Ebb\sqb{ \max_{2^{L_p^\star}\leq T<2^{R_{p+1}^\star}} \frac{1}{T}\sum_{t\leq T,t\in\Tcal} \1_{A_l}(X_t)} \geq \frac{\epsilon}{2}.
    \end{equation}
    We define $T_{p+1}^\star = 2^{R_{p+1}^\star}$. As a second step, we show that when during the learning process until time $T_{p+1}^\star$, for a large proportion of active times $t$ for which $X_t\in A_l$, the optimal arm in hindsight is $a_1$. Precisely, we aim to show that
    \begin{equation*}
        \Ebb\sqb{  \max_{2^{L_p^\star}\leq T<T_{p+1}^\star} \frac{1}{T}\sum_{t\leq T,t\in\Scal} \1_{A_l}(X_t) \cdot B_{u(t)}  } \geq \frac{\epsilon}{8}.
    \end{equation*}
    To prove this, we reason conditionally on $\Xbb$. Define
    \begin{equation*}
        \hat T = \argmax_{2^{L_p^\star}\leq T<T_{p+1}^\star} \frac{1}{T}\sum_{t\leq T,t\in\Tcal} \1_{A_l}(X_t).
    \end{equation*}
    Also, let $Exp = \{T_p^\star \leq t \leq \hat T: t\in\Scal, X_t\in A_l,\hat a_t=a_1\}$ the set of ``exploration'' times on $A_l$ when the learning rule selected action $a_1$ without prior knowledge on the value $B_{u(t)}$ for active time $t$. For any exploration time $t\in Exp$, we also define $N(t) = |\{T_p^\star \leq t'\leq t:t\in\Scal, X_{t'}=X_t\}|$ the number of active occurrences of $X_t$ before the exploration at $t$. Note that after the exploration, new duplicates of $X_t$ will never be active anymore. Last, denote by $Unexp = (A_l\cap\{X_t,T_p^\star\leq t \leq \hat T\})\setminus \{X_t,t\in Exp\}$ the set of points in $A_l$ that were left unexplored until horizon $\hat T$. As above, for $x\in Unexp$, we denote by $N(x) = |\{T_p^\star\leq t\leq \hat T: t\in\Scal,X_{t'}=x|$ the number of active occurrences of $x$ until $\hat T$. Also, by abuse of notation, for any $x\in Unexp$, we denote $u(x) = \min\{T_p^\star\leq t\leq \hat T: X_t=x\}$ the first occurrence of $X_t$. Conditionally on the realization of $\Xbb$ (which as a result makes $\hat T$ deterministic), the sequence $(\1_{t\in Exp} N(t)(B_{u(t)}-\frac{1}{2}))_{T_p^\star\leq t\leq \hat T}$ followed by the sequence $(N(x)(B_{u(x)}-\frac{1}{2}))_{x\in Unexp}$ form the increments of a martingale with filtration given by the $\sigma$-algebras $\sigma(\Xbb,\hat {\mb a}_{\leq t},\mb r_{\leq t-1})$. Indeed, conditionally on $\Xbb$, the past history $\hat{\mb a}_{\leq t-1},\mb r_{\leq t-1}$ and the selected action $\hat a_t$, at an exploration time $t\in Exp$, the value $B_{u(t)}$ is independent from $\Xbb$ and has never been revealed yet, hence is independent from the history as well. Similarly, for unrevealed points $x\in Unexp$, the variables $B_{u(x)}$ are together independent and also independent from $\Xbb$ and the history $\hat{\mb a}_{\leq \hat T}, \mb r_{\leq \hat T}$. The final term of the described martingale writes
    \begin{equation*}
        \sum_{t=T_p^\star}^{\hat T} \1_{t\in Exp}N(t)\paren{B_{u(t)}-\frac{1}{2}} + \sum_{x\in Unexp} N(x)\paren{B_{u(x)}-\frac{1}{2}} = \sum_{t=T_p^\star}^{\hat T}\1_{t\in\Scal}\1_{A_l}(X_t)\paren{ B_{u(t)} - \frac{1}{2}}.
    \end{equation*}
    We now bound these increments. For any $R_p^\star\leq r<R_{p+1}^\star$, during the period $[2^r,2^{r+1})$, one has $\Scal\cap[2^r,2^{r+1})\subset \Tcal^k$, where $k$ is such that $Q_p(k) \leq r<Q_p(k+1)$. Now recall that $Q_p(k)\geq 4k$ so that the number of active duplicates for a given point $x$ during period $r$ is at most $2^k\leq 2^{r/4}$. Hence, if $\hat T\in[2^{\hat r},2^{\hat r+1})$, the number of active duplicates of any point until $\hat T$ satisfies
    \begin{equation*}
        \max_{t\in Exp} N(t), \max_{x\in Unexp} N(x) \leq \sum_{r=r_0}^{\hat r} 2^{r/4} \leq \frac{2^{r/4}}{1-2^{-1/4}} \leq \frac{\hat T^{1/4}}{2^{1/4}-1}\leq 6 \hat T^{1/4}.
    \end{equation*}
    In particular, all increments of the constructed martingale have elements norm bounded by the above value. Azuma-Hoeffding's inequality then yields
    \begin{equation*}
        \Pbb\sqb{ \sum_{t=T_p^\star}^{\hat T}\1_{t\in\Scal}\1_{A_l}(X_t)\paren{ B_{u(t)} - \frac{1}{2}} \leq - C_\epsilon\hat T^{3/4} \mid \Xbb}\leq \frac{\epsilon}{8}.
    \end{equation*}
    Let $\Gcal$ be the complement event, i.e., the event when $\sum_{t=T_p^\star}^{\hat T}\1_{t\in\Scal}\1_{A_l}(X_t)\paren{ B_{u(t)} - \frac{1}{2}} > - C_\epsilon\hat T^{3/4}$. Then, using Eq~\eqref{eq:many_occurrence_in_Al} we obtain
    \begin{equation}\label{eq:lower_bound_proba_good_event}
        \Ebb\sqb{ \frac{\1_{\Fcal\cap \Gcal}}{\hat T}\sum_{t\leq \hat T,t\in\Tcal} \1_{A_l}(X_t)} \geq \Ebb\sqb{  \frac{1}{\hat T}\sum_{t\leq \hat T,t\in\Tcal} \1_{A_l}(X_t)} - \Pbb[\Fcal]-\Pbb[\Gcal] \geq \frac{\epsilon}{2}-\frac{\epsilon}{8}-\frac{\epsilon}{8} =\frac{\epsilon}{4}.
    \end{equation}

    As a last step, we show that under $\Fcal\cap\Gcal$, the learning rule incurs significant regret compared to the best action in hindsight for times with contexts falling in $A_l$. On $\Fcal\cap\Gcal$,
    \begin{equation*}
        \frac{1}{\hat T}\sum_{t=T_p^\star}^{\hat T}\1_{t\in\Scal}\1_{A_l}(X_t) B_{u(t)} \geq \frac{1}{2\hat T}\sum_{t=T_p^\star}^{\hat T}\1_{t\in\Scal}\1_{A_l}(X_t) -\frac{C_\epsilon}{\hat T^{1/4}} \geq \frac{1}{2\hat T}\sum_{t=T_p^\star}^{\hat T}\1_{t\in\Scal}\1_{A_l}(X_t) - \frac{\epsilon}{2^{10}}.
    \end{equation*}
    We used $\hat T\geq 2^{L_p^\star}$ in the last inequality. We now aim to compare the right-hand side of the last inequality to $\frac{1}{\hat T}\sum_{T_p^\star\leq t\leq \hat T, t\in\Tcal}\1_{A_l}(X_t) $. Because $\Fcal$ is satisfied, $\Tcal\setminus\Scal$ the set of inactive times that are counted within $\Tcal$ only contains times $t$ such that there exists $t'<t$ with $t'\in\Scal$ when the learning rule performed an exploration (see Algorithm \ref{alg:define_rewards}). Thus,
    \begin{equation*}
        \sum_{t=T_p^\star}^{\hat T}\1_{t\in\Scal}\1_{A_l}(X_t) \geq \sum_{t \leq \hat T,t\in\Tcal}\1_{A_l}(X_t) - T_p^\star - \sum_{t\in Exp}|\{t<t'\leq \hat T,t'\in\Tcal\setminus \Scal,X_{t'}=X_t\}|.
    \end{equation*}
    Letting $\hat j$ be the integer such that $R_{\hat j}\leq \hat R<R_{\hat j+1}$, i.e., $2^{R_{\hat j}}\leq \hat T<2^{R_{\hat j+1}}$, we observe that
    \begin{align*}
        \sum_{t\in Exp}|\{t<t'\leq \hat T,t'\in\Tcal\setminus \Scal,X_{t'}=X_t\}| &\leq 2^{\hat R-\hat j} + \sum_{t\in Exp}|\{2^{\hat R-\hat j},t<t'\leq \hat T,t'\in\Tcal^{\hat j},X_{t'}=X_t\}| \\
        &\leq 2^{\hat R-\hat j} + |Exp| 2^{\hat j}(\hat j +1)\\
        &\leq \frac{\hat T}{2^{\hat j-1}} + |Exp| 2^{\hat j}(\hat j +1).
    \end{align*}
    where we used the fact that because $(R_j)_{j\geq 1}$ is increasing, each distinct point is duplicated at most $2^{\hat j}$ times in any period $\Tcal\cap [2^r,2^{r+1})$ with $r<R_{\hat j+1}$. Next, because $\Fcal$ is satisfied we have in particular $I(\hat R)\geq \hat j$, implying that at time $\hat T$, we had the guarantee
    \begin{equation*}
        \frac{|Exp|}{\hat T}\leq \frac{1}{\hat T}\sum_{u=T_p^\star}^{\hat T}\1_{u\in \Scal} \1_{\hat a_u\neq a_2} < \frac{1}{2^{2I(\hat R)}(I(\hat R)+1)} \leq   \frac{1}{2^{2\hat j}(\hat j+1)} .
    \end{equation*}
    Combining the previous four equations and the fact that $\hat T\geq 2^{L_p^\star}$ shows that on $\Fcal\cap\Gcal$ one has
    \begin{align*}
         \frac{1}{\hat T}\sum_{t=T_p^\star}^{\hat T}\1_{t\in\Scal}\1_{A_l}(X_t) B_{u(t)} 
         &\geq \frac{1}{2\hat T}\sum_{t \leq \hat T,t\in\Tcal}\1_{A_l}(X_t)-\frac{\epsilon}{2^{10}} -\frac{T_p^\star}{2\hat T} - \frac{1}{2^{\hat j}}-\frac{1}{2^{\hat j+1}}\\
         &\geq \frac{1}{2\hat T}\sum_{t \leq \hat T,t\in\Tcal}\1_{A_l}(X_t)-\frac{\epsilon}{2^8}.
    \end{align*}
    In the last inequality, we used $\hat j\geq 10-\log_2\epsilon$, a consequence of $\hat T\geq 2^{L_p^\star}$. We are now ready to compare the reward of the learning rule to the best action in hindsight for times $t$ such that $X_t\in A_l$. Precisely, consider the following actions $a_t^\star$: at an active time $t\in\Scal$ and $X_t\in A_l$, we pose $a_t^\star= a_1$ if $B_{u(t)}=1$ and $a_t^\star=a_2$ otherwise. For any other active time $t\in\Scal$ and $X_t\notin A_l$, we pose $a_t^\star=a_2$ (which is in that case not necessarily the best action in hindsight). First note that
    \begin{align*}
        \frac{1}{\hat T}\sum_{t=T_p^\star}^{\hat T} \1_{A_l}(X_t)(r_t(a_t^\star) - r_t(\hat a_t)) &= \frac{1}{\hat T}\sum_{t=T_p^\star}^{\hat T} \1_{t\in\Scal}\1_{A_l}(X_t)\paren{ \frac{3+B_{u(t)}}{4} - r_t(\hat a_t)  }\\
        &\geq  \frac{1}{4\hat T}\sum_{t=T_p^\star}^{\hat T} \1_{t\in\Scal}\1_{A_l}(X_t)\1_{t\notin Exp} B_{u(t)}\\
        &\geq \frac{1}{4\hat T}\sum_{t=T_p^\star}^{\hat T} \1_{t\in\Scal}\1_{A_l}(X_t)B_{u(t)} - \frac{1}{4\hat T}\sum_{t=T_p^\star}^{\hat T} \1_{t\in Exp}\1_{A_l}(X_t).
    \end{align*}
    Also, note that
    \begin{equation*}
        \frac{1}{\hat T}\sum_{t=T_p^\star}^{\hat T} \1_{A_l^c}(X_t)(r_t(a_t^\star) - r_t(\hat a_t)) \geq  \frac{1}{\hat T}\sum_{t=T_p^\star}^{\hat T} \1_{t\in\Scal}\1_{A_l^c}(X_t)\paren{ \frac{3}{4} - r_t(\hat a_t) } \geq  - \frac{1}{4\hat T}\sum_{t=T_p^\star}^{\hat T} \1_{t\in Exp}\1_{A_l^c}(X_t).
    \end{equation*}
    Combining the two previous equations shows that on $\Fcal\cap\Gcal$,
    \begin{equation*}
        \frac{1}{\hat T}\sum_{t=T_p^\star}^{\hat T} r_t(a_t^\star) - r_t(\hat a_t) \geq \frac{1}{4\hat T}\sum_{t=T_p^\star}^{\hat T} \1_{t\in\Scal}\1_{A_l}(X_t)B_{u(t)} - \frac{|Exp|}{4\hat T} \geq \frac{1}{2\hat T}\sum_{t \leq \hat T,t\in\Tcal}\1_{A_l}(X_t)-\frac{\epsilon}{2^7}.
    \end{equation*}
    Combining this with Eq~\eqref{eq:lower_bound_proba_good_event} shows that
    \begin{equation}\label{eq:for_new_rewards}
        \Ebb\sqb{\max_{T_p^\star\leq T <T_{p+1}^\star}\frac{1}{ T}\sum_{t=1}^T r_t(a_t^\star) - r_t(\hat a_t)} \geq \Ebb\sqb{\frac{1}{\hat  T}\sum_{t=T_p^\star}^{\hat T} r_t(a_t^\star) - r_t(\hat a_t) -\frac{T_p^\star}{\hat T} } \geq \frac{\epsilon}{8}-\frac{\epsilon}{2^{10}} - \frac{\epsilon}{2^7}.
    \end{equation}
    
    As a last step before defining new rewards, we introduce the scale $\delta_l>0$ such that
    \begin{equation*}
        \Pbb\sqb{\min_{1\leq t,t'< 2^{R_{p+1}^\star},X_t\neq X_{t'}}\rho(X_t,X_{t'})\leq \delta_l }\leq \frac{\epsilon}{2^{10}}.
    \end{equation*}
    We denote by $\Hcal$ the complement event.
    
    We are now ready to introduce the new online rewards. To do so, we first need to introduce some notations for partitions of the space $\Xcal$. Let $(x^u)_{u\geq 1}$ be a dense sequence in $\Xcal$. We define the sets $P_u = (A_l \cap  B(x^u,\delta_l) )\setminus \bigcup_{v<u} B(x^u,\delta_l)$ for $u\geq 1$. We can easily check that the sequence of measurable sets $(P_u)_{u\geq 1}$ forms a partition of $A_l$, and that each set $P_u$ has diameter at most $\delta_l$. For any binary sequence $\mb b= (b_u)_{u\geq 1}$, we define online rewards that follow the same structure as defined with the procedure from Algorithm \ref{alg:define_rewards}, with the difference that rewards $r_t^{\mb b}$, at any active time $t\in \Scal$ with $X_t\in P_u$ for some $u\geq 1$, are constructed using the binary value $b_u$ instead of the random binary variable $B_{u(t)}$ where $u(t) = \min\{T_p^\star\leq u\leq t: X_t=X_u\}$. The procedure to construct the rewards $\mb r^{\mb b}$ until time $T_{p+1}^\star$ is given in Algorithm \ref{alg:define_rewards_new}.

    \begin{algorithm}[h!]
    \caption{Procedure to define the online rewards $\mb r^{\mb b}_{<T_{p+1}^\star}$}\label{alg:define_rewards_new}
    \hrule height\algoheightrule\kern3pt\relax

    \KwIn{Binary sequence $\mb b$}
    Let $(B_t)_{t\geq 1}$ be an i.i.d. $\Bcal(\frac{1}{2})$ sequence\,

    \For{$t=1,\ldots,T_p^\star-1$}{
        Observe context $X_t$\,
    
        Define $r_t(a)=r_t^\star(a)$ for all $a\in\Acal$\,
    
        Observe action selected by learner $\hat a_t$\,
    }
    
    Initialize $i(R_p^\star)=0$ and let $\Scal_{T_p^\star-1}=\emptyset$\,
    
    \For{$r= R_p^\star,\ldots,R_{p+1}^\star-1$}{
        \For {$t=2^{r}, \ldots, 2^{r+1}-1$}{
            
            Observe context $X_t$\,
        
            \uIf{$t\notin \Tcal^{i(r)}$}{
                Let $r^{\mb b}_t(a)=0$ for all $a\in\Acal$ and $\Scal_t = \Scal_{t-1}$
            }
            \uElseIf{$\forall T_p\star \leq t'<t, \, X_{t'}\neq X_t$; $X_t\in P_u$ for some $u\geq 1$}{
                Let $r^{\mb b}_t(a) = \begin{cases}
                    b_u &a=a_1\\
                    \frac{3}{4} & a=a_2,\\
                    0 & a\notin\{a_1,a_2\}
                \end{cases}$ for $a\in\Acal$\,

                $\Scal_t = \Scal_{t-1}\cup \{t\}$
            }
            \uElseIf{$\forall T_p\star \leq t'<t, \, X_{t'}\neq X_t$}{
                Let $r^{\mb b}_t(a) = \begin{cases}
                    B_t &a=a_1\\
                    \frac{3}{4} & a=a_2,\\
                    0 & a\notin\{a_1,a_2\}
                \end{cases}$ for $a\in\Acal$\,

                $\Scal_t = \Scal_{t-1}\cup \{t\}$
            }
            \uElseIf{$\exists T_p\star\leq t' <t$ such that $X_t=X_{t'}$, $t'\in \Scal_{t-1}$ and $\hat a_{t'} = a_1$}{
                Let $r^{\mb b}_t(a) = 0$ for all $a\in\Acal$ and $\Scal_t = \Scal_{t-1}$
            }
            \Else{
                Define $r^{\mb b}_t(a)=r_{t'}(a)$ for all $a\in\Acal$ where $t'<t$, $X_t=X_{t'}$ and $t'\in \Scal_{t-1}$\,
    
                $\Scal_t\gets \Scal_{t-1}\cup \{t\}$
            }
    
            Observe action selected by learner $\hat a_t$\,
    
            \lWhile{$\frac{1}{t}\sum_{u=T_p^\star}^t \1_{u\in \Scal_t}\1_{\hat a_u\neq a_2} \geq 2^{-2i(r)}$}{
                    $i(r)\gets \max(0,i(r) -1)$
            }
       }
       Define $i(r+1) = \min\{i(r)+1,k\}$ where $k$ is such that $Q_p(k)\leq r< Q_p(k+1)$
    }
    
    \hrule height\algoheightrule\kern3pt\relax
    \end{algorithm}
    
    Consider the case when the binary sequence $\mb b$ is sampled as an i.i.d. $\Bcal(\frac{1}{2})$ process. We argue that under the event $\Hcal$, these rewards $\mb r^{\mb b}$ from Algorithm \ref{alg:define_rewards_new} are not distinguishable from the rewards $\mb r$ from Algorithm \ref{alg:define_rewards}. First, observe that they share the same overall structure, the only difference is that when needed to define rewards $r_t^{\mb b}$ at an active time $t\in \Scal$, one may use $b_u$ instead of $B_t$, where $u$ is such that $X_t\in P_u$. Recall that $b_u$ is by hypothesis sampled as $b_u\sim\Bcal(\frac{1}{2})$ as $B_t$ and further, under the event $\Hcal$, all distinct points from $\Xbb_{<T_{p+1}^\star}$ falling within $A_l$ are at distance at least $\delta_l$. We only use $b_u$ for $r_t^{\mb b}$ when $X_t\in P_u$. Therefore, under $\Hcal$, one has $\{t'<t:X_t\in P_u\}=\emptyset$. This shows that the variable $b_u$ was never observed prior to time $t$ and as a result, is not distinguishable from a true random binary variable $B_t\sim\Bcal(\frac{1}{2})$. In particular, under $\Hcal$, the rewards $\mb r^{\mb b}$ when $\mb b\overset{i.i.d.}{\sim}\Bcal(\frac{1}{2})$, yield the same selected actions as the rewards $\mb r$ from Algorithm \ref{alg:define_rewards}. Now for any binary sequence $\mb b$, we define the policy 
    \begin{equation*}
        \pi^{\mb b}(x) = \begin{cases}
            a_1 & \text{if }b_u^k=1,x\in P_u,\\
            a_2 & \text{if }b_u^k=0,x\in P_u,\\
            a_2 &\text{if }x\notin A_l.
        \end{cases}
    \end{equation*}
    By construction, these are constructed exactly similarly to the best action in hindsight $a_t^\star$ for contexts falling in $A_l$ as defined previously. Therefore,
    \begin{align*}
        \Ebb_{\mb b\overset{i.i.d.}{\sim}\Bcal(\frac{1}{2})}&\sqb{\Ebb_{\Xbb,\mb a}\paren{\max_{T_p^\star\leq T <T_{p+1}^\star}\frac{1}{ T} \sum_{t=1}^T r^{\mb b}_t( \pi^{\mb b}(X_t)) - r^{\mb b}_t(\hat a_t)} }\\
        &\geq \Pbb[\Hcal]\cdot \Ebb_{\Xbb\mid \Gcal} \sqb{\Ebb_{\mb b\overset{i.i.d.}{\sim}\Bcal(\frac{1}{2}),\mb a}\paren{\max_{T_p^\star\leq T <T_{p+1}^\star}\frac{1}{ T} \sum_{t=1}^T r^{\mb b}_t( \pi^{\mb b}(X_t)) - r^{\mb b}_t(\hat a_t)} \mid \Xbb,\Gcal}\\
        &= \Pbb[\Hcal]\cdot \Ebb_{\Xbb\mid \Gcal} \sqb{\Ebb_{\mb a}\paren{\max_{T_p^\star\leq T <T_{p+1}^\star}\frac{1}{ T} \sum_{t=1}^T r_t(a_t^\star) - r_t(\hat a_t)} \mid \Xbb,\Gcal}\\
        &\geq \Ebb_{\Xbb,\mb a} \sqb{\max_{T_p^\star\leq T <T_{p+1}^\star}\frac{1}{ T} \sum_{t=1}^T r_t(a_t^\star) - r_t(\hat a_t)} - \Pbb[\Hcal^c] \geq \frac{\epsilon}{8} - \frac{\epsilon}{2^6} .
    \end{align*}
    In particular, there exists a realization $\mb b$ such that
    \begin{equation}\label{eq:main_step_for_proof}
        \Ebb\sqb{\max_{T_p^\star\leq T <T_{p+1}^\star}\frac{1}{ T} \sum_{t=T_p^\star}^T r^{\mb b}_t( \pi^{\mb b}(X_t)) - r^{\mb b}_t(\hat a_t)} \geq \frac{\epsilon}{8} - \frac{\epsilon}{2^6} .
    \end{equation}
    We fix this realization of $\mb b$ in the rest of the proof. We are now ready to close the induction by letting $B_{p+1}:=A_l\setminus(B_1\cup\ldots\cup B_p)$ and defining the policy $\pi^{(p+1)}$ so as to be consistent with the selected actions of $\pi^{(p)}$ on $B_1,\ldots,B_p$. We pose
    \begin{equation*}
        \pi^{(p+1)}(x) = \begin{cases}
            \pi^{(p)} &\text{if }x\in B_1\cup\ldots\cup B_p,\\
            \pi^{\mb b} &\text{otherwise}.
        \end{cases}
    \end{equation*}
    Observe that by construction, $\pi^{(p+1)}(x)=a_2$ for all $x\notin B_1\cup\ldots\cup B_{p+1}$. Next, we define the rewards $r_t^\star$ to be exactly $r_t^{\mb b}$ for any $t<T_{p+1}^\star$. Note that by the construction given in Algorithm \ref{alg:define_rewards_new}, these rewards are consistent with the rewards $r_t^\star$ that had already been constructed for $t<T_p^\star$. In the rest of the proof, we show that these satisfy the induction requirements.

    We first check that the fact that $\pi^{(p+1)}$ differs from $\pi^{(p)}$ on $A_l$ does not affect significantly the guarantees of the constructed rewards until time $T_p^\star$. Indeed, for any $T< T_p^\star$,
    \begin{equation*}
        \left|\sum_{t=1}^T r_t^\star(\pi^{(p+1)})-r_t^\star(\pi^{(p)}) \right| \leq |\{t\leq T: X_t\in A_l\}| \leq T\1_{\exists t\leq T:Xt\in A_l},
    \end{equation*}
    so that, using Eq~\eqref{eq:for_induction_excess_error} and Eq~\eqref{eq:properties_of_Al}, for any $p'\leq p$,
    \begin{align*}
        \Ebb&\sqb{\max_{T_{p'-1}^\star \leq  T < T_{p'}^\star} \frac{1}{T}\sum_{t=1}^T r_t^\star(\pi^{(p+1)}(X_t)) - r_t^\star(\hat a_t)} \\
        &\geq \Ebb\sqb{\max_{T_{p'-1}^\star \leq  T < T_{p'}^\star} \frac{1}{T}\sum_{t=1}^T r_t^\star(\pi^{(p+1)}(X_t)) - r_t^\star(\hat a_t)} - \Pbb(\exists t<T_p^\star: X_t\in A_l)\\
        &\geq \frac{\epsilon}{16} + \frac{\epsilon}{2^{p+10}} - \frac{\epsilon}{2^{p+11}} \geq \frac{\epsilon}{16} + \frac{\epsilon}{2^{p+11}}.
    \end{align*}
    Now we check that the guarantee also holds for $p'=p+1$. First, recall that by construction of Algorithm \ref{alg:define_rewards_new}, for any $r\geq R_p^\star$, one has that $i(r)\leq k$ where $k$ is such that $Q_p(k)\leq r<Q_p(k+1)$. In particular, the active times during the corresponding period satisfy $\Scal\cap[2^r,2^{r+1})\subset \Tcal^k$. As a result, we obtain $\Scal\subset \Tcal^{(p)}$, where we recall that $\Tcal^{(p)}:=\bigcup_{i\geq 0}\Tcal^i \cap \{t\geq 2^{Q_p(i)}\}$. Then, because $\pi^{(p+1)}$ only differs from $\pi^{\mb b}$ on $B_1\cup\ldots\cup B_p$, for any $T_p^\star\leq T<T_{p+1}^\star$,
    \begin{align*}
        \frac{1}{T}\sum_{t=1}^T r^{\mb b}_t(\pi^{\mb b}(X_t)) - r^{\mb b}_t(\pi^{(p+1)}(X_t)) &\leq\frac{1}{T} \sum_{t\leq T, t\in\Scal}(r^{\mb b}_t(\pi^{\mb b}(X_t)) - r^{\mb b}_t(\pi^{(p+1)}(X_t)) )\\
        &\leq \frac{1}{T}\sum_{t\leq T, t\in\Tcal^{(p)}} \sum_{p'=1}^p  \1_{B_{p'}}(X_t)\\
        &\leq \sum_{p'=1}^p \sup_{T\geq 1} \frac{1}{T}\sum_{t\leq T, t\in\Tcal^{(p)}} \1_{B_{p'}}(X_t).
    \end{align*}
    Therefore, combining Eq~\eqref{eq:main_step_for_proof} and the induction hypothesis Eq~\eqref{eq:past_sets_not_visited}, we obtain
    \begin{align*}
        \Ebb&\sqb{\max_{T_p^\star\leq T <T_{p+1}^\star}\frac{1}{ T} \sum_{t=1}^T r^{\mb b}_t( \pi^{(p+1)}(X_t)) - r^{\mb b}_t(\hat a_t)}\\
        &\geq \Ebb\sqb{\max_{T_p^\star\leq T <T_{p+1}^\star}\frac{1}{ T} \sum_{t=1}^T r^{\mb b}_t( \pi^{\mb b}(X_t)) - r^{\mb b}_t(\hat a_t)} - \sum_{p'=1}^p \Ebb\sqb{\sup_{T\geq 1} \frac{1}{T}\sum_{t\leq T, t\in\Tcal^{(p)}} \1_{B_{p'}}(X_t)} \\
        &\geq \frac{\epsilon}{8} - \frac{\epsilon}{2^6} -\frac{\epsilon}{2^{10}} \geq \frac{\epsilon}{16} + \frac{\epsilon}{2^{p+10}}.
    \end{align*}
    
    The last step consists in constructing the increasing indices $Q_{p+1}(i)$ for $i\geq 0$. By the dominated convergence theorem, for any $i\geq 0$, there exists $\tilde T_i\geq 1$ such that 
    \begin{equation*}
        \Ebb\sqb{\sup_{T\geq \tilde T_i} \frac{1}{T}\sum_{t\leq T,t\in \Tcal^i} \1_{B_{p+1}}(X_t) -\hat\mu_{(X_t)_{t\in\Tcal^i}}(B_{p+1}) } \leq \frac{\epsilon}{2^{p+12+i}}.
    \end{equation*}
    We then define by induction the sequence of integers $Q_{p+1}(i)$ such that $Q_{p+1}(0) \geq \max(Q_p(0),\log_2 \tilde T_0)$ and for all $i\geq 1$, $Q_{p+1}(i) \geq \max(Q_p(i),\log_2 \tilde T_i, Q_{p+1}(i-1))$. In particular, the sequence is increasing and the above equation shows that
    \begin{equation}\label{eq:excess_compared_to_limit_measure}
        \Ebb\sqb{\sup_{T\geq 2^{Q_{p+1}(i)}} \frac{1}{T}\sum_{t\leq T,t\in \Tcal^i} \1_{B_{p+1}}(X_t) -\hat\mu_{(X_t)_{t\in\Tcal^i}}(B_{p+1}) } \leq \frac{\epsilon}{2^{p+12+i}}.
    \end{equation}
    Now letting $\Tcal^{(p+1)} = \bigcup_{i\geq 0}\Tcal^i\cap\{t\geq 2^{Q_{p+1}(i)}\}$, we note that
    \begin{align*}
        \sup_{T\geq 1}\frac{1}{T}\sum_{t\leq T,t\in\Tcal^{(p+1)}} \1_{B_{p+1}}(X_t) &=\sup_{i\geq 0} \sup_{2^{Q_{p+1}(i)}\leq T<2^{Q_{p+1}(i+1)}} \frac{1}{T}\sum_{t\leq T,t\in\Tcal^{(p+1)}} \1_{B_{p+1}}(X_t)\\
        &\leq \sup_{i\geq 0} \sup_{2^{Q_{p+1}(i)}\leq T<2^{Q_{p+1}(i+1)}} \frac{1}{T}\sum_{t\leq T,t\in\Tcal^i} \1_{B_{p+1}}(X_t).
    \end{align*}
    As a result,
    \begin{align*}
        \Ebb&\sqb{ \sup_{T\geq 1}\frac{1}{T}\sum_{t\leq T,t\in\Tcal^{(p+1)}} \1_{B_{p+1}}(X_t)} \\
        &\leq \Ebb\sqb{ \sup_{i\geq 0} \hat\mu_{(X_t)_{t\in\Tcal^i}}(B_{p+1}) } + \sum_{i\geq 0}\Ebb\sqb{\sup_{T\geq 2^{Q_{p+1}(i)}} \frac{1}{T}\sum_{t\leq T,t\in \Tcal^i} \1_{B_{p+1}}(X_t) -\hat\mu_{(X_t)_{t\in\Tcal^i}}(B_{p+1}) } \\
        &\leq  \sup_{i\geq 0}\Ebb\sqb{  \hat\mu_{(X_t)_{t\in\Tcal^i}}(B_{p+1}) } + \frac{\epsilon}{2^{p+11}} \leq \frac{\epsilon}{2^{p+10}}.
    \end{align*}
    In the second inequality we used Eq~\eqref{eq:excess_compared_to_limit_measure}, and in the third inequality, we used Eq~\eqref{eq:properties_of_Al}. Finally, because for all $i\geq 0$, one has $Q_{p+1}(i)\geq Q_p(i)$, we have directly $\Tcal^{(p)}\subset\Tcal^{(p+1)}$, which shows that for all $p'\leq p$, we still have
    \begin{equation*}
        \Ebb\sqb{\sup_{T\geq 1}\frac{1}{T}\sum_{t\leq T,t\in\Tcal^{(p+1)}} \1_{B_{p'}}(X_t) } \leq \frac{\epsilon}{2^{p'+10}},\quad p'\leq p.
    \end{equation*}
    This ends the inductive construction of the rewards $\mb r^\star$.

    The last step of the proof is to show that $f_\cdot$ is not universally consistent under $\Xbb$ for these online rewards $\mb r^\star$. Having constructed the sequence of sets $(B_p)_{p\geq 1}$, we let $\pi^\star$ be the policy defined by
    \begin{equation*}
        \pi^\star(x) = \begin{cases}
            \pi^{(p)}(x)& \text{if }x\in B_p,\\
            a_2 &\text{ otherwise}.
        \end{cases}
    \end{equation*}
    Recall that the sequence of policies $\pi^{(p)}$ for $p\geq 1$ was constructed so that they are consistent: $\pi^{(p')}$ for $p'\geq p\geq 1$ all coincide on $A_p$. Further, all $\pi^{(p)}$ coincide on $(\bigcup_{p\geq 1}B_p)^c$ on which they select $a_2$. Now fix $p\geq 1$. Because the rewards are also constructed to be consistent over time, if $\hat a_t$ denotes the selected action at time $t$ for rewards $\mb r^\star$, the induction implies that for all $p'\geq p$ one has
    \begin{equation}\label{eq:induction_final_result}
        \Ebb\sqb{\max_{T_{p-1}^\star \leq  T < T_p^\star} \frac{1}{T}\sum_{t=1}^T r_t^\star(\pi^{(p')}(X_t)) - r_t^\star(\hat a_t)} \geq \frac{\epsilon}{16}.
    \end{equation}
    As a result, because $\pi^{(p')}$ and $\pi^\star$ coincide everywhere except on $\bigcup_{q>p'}B_q$, we have for any $T_{p-1}^\star \leq  T < T_p^\star$,
    \begin{equation*}
        \frac{1}{T}\sum_{t=1}^T r_t^\star(\pi^\star(X_t)) - r_t^\star(\hat a_t) \geq \frac{1}{T}\sum_{t=1}^T r_t^\star(\pi^{(p')}(X_t)) - r_t^\star(\hat a_t) - \textstyle\1\paren{\exists t<t_p^\star: X_t\in \bigcup_{q>p'}B_q}.
    \end{equation*}
    Because the sets $(B_p)_{p\geq 1}$ are all disjoint, we have
    $\Pbb\paren{\exists t<t_p^\star: X_t\in \bigcup_{q>p'}B_q}\to 0$ as $p'\to\infty$. Thus, using Eq~\eqref{eq:induction_final_result} yields
    \begin{equation*}
        \Ebb\sqb{ \max_{T_{p-1}^\star \leq  T < T_p^\star} \frac{1}{T}\sum_{t=1}^T r_t^\star(\pi^\star(X_t)) - r_t^\star(\hat a_t) } \geq \frac{\epsilon}{16}.
    \end{equation*}
    Because this holds for all $p\geq 1$, Fatou's lemma implies
    \begin{equation*}
        \Ebb\sqb{ \limsup_{T\to\infty}\frac{1}{T}\sum_{t=1}^T r_t^\star(\pi^\star(X_t)) - r_t^\star(\hat a_t) } \geq \limsup_{p\to\infty} \Ebb\sqb{ \max_{T_{p-1}^\star \leq  T < T_p^\star} \frac{1}{T}\sum_{t=1}^T r_t^\star(\pi^\star(X_t)) - r_t^\star(\hat a_t) }  \geq \frac{\epsilon}{16}.
    \end{equation*}
    As a result, the learning rule is not universally consistent under $\Xbb$, which ends the proof of the theorem.
\end{proof}

\subsection{A sufficient condition on learnable processes}
\label{subsec:sufficient_conditions}
In this section, we show that $\Ccal_5$ is sufficient universal learning for all reward models. We recall that the condition $\Ccal_5$ asks that there exists an increasing sequence $(T_i)_{i\geq 0}$ such that $\tilde\Xbb=(X_t)_{t\in\Tcal}\in\Ccal_1'$ where $\Tcal = \bigcup_{i\geq 0}\Tcal^i\cap\{t\geq T_i\}$ is obtained by adding the times $\Tcal^i$ according to the rate given by $(T_i)_{i\geq 0}$.

It is straightforward to see $\Ccal_1\subset\Ccal_5$ since for any $\Xbb\in\Ccal_1$, one can take any arbitrary sequence, for instance $T_i=i$ for $i\geq 0$, and satisfy property $\Ccal_5$. Before showing that $\Ccal_5$ is a sufficient condition for universal learning with online rewards, we state a known result showing that for $\Ccal_1'$ processes, there is a countable sequence of policies that is empirically dense within all measurable policies.

\begin{lemma}[\cite{hanneke:21} Lemma 24]
\label{lemma:countable_dense}
    Let $\Acal$ be a finite action space and $\Xcal$ a separable metrizable Borel space. There exists a countable sequence of measurable policies $(\pi^l)_{l\geq 1}$ from $\Xcal$ to $\Acal$ such that for extended process $\tilde\Xbb = (X_t)_{t\in\Tcal}\in\Ccal_1'$, and any measurable policy $\pi:\Xcal\to\Acal$,
    \begin{equation*}
        \inf_{l\geq 1} \Ebb\left[ \limsup_{T\to\infty} \frac{1}{T}\sum_{t\leq T, t\in\Tcal}\1[\pi^l(X_t)\neq \pi(X_t)]\right] = 0.
    \end{equation*}
\end{lemma}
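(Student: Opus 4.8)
The plan is to construct, once and for all and \emph{independently of the process}, a countable family of ``simple'' policies built from a fixed countable generating algebra of $\Xcal$, and then to show that this family approximates any measurable policy in the pseudo-distance induced by the process. The organizing object is the set function $\bar\mu(A):=\Ebb[\hat\mu_{\tilde\Xbb}(A)]$ for $A\in\Bcal$. First I would record that, for any extended process $\tilde\Xbb\in\Ccal_1$, $\bar\mu$ is a \emph{continuous submeasure}: it is monotone; since $\1_{A\cup B}\le\1_A+\1_B$ gives $\hat\mu_{\tilde\Xbb}(A\cup B)\le\hat\mu_{\tilde\Xbb}(A)+\hat\mu_{\tilde\Xbb}(B)$ pointwise, it is subadditive; it satisfies $\bar\mu\le 1$; and continuity from above at $\emptyset$, i.e.\ $\bar\mu(A_k)\to 0$ whenever $A_k\downarrow\emptyset$, is precisely Condition~\ref{con:kc}. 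The point to keep in mind is that $\bar\mu$ need not be countably additive, so only monotonicity, subadditivity, and continuity from above at $\emptyset$ may be used.

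\textbf{Construction of the family.} Since $\Xcal$ is separable metrizable it has a countable base $\{O_i\}_{i\geq 1}$; let $\Gcal_0$ be the (countable) Boolean algebra generated by $\{O_i\}_{i\geq 1}$. As the candidate family $(\pi^l)_{l\geq1}$ I would enumerate all policies of the form $\pi(x)=a_j$ with $j=\min\{j':x\in C_{j'}\}$ (and a fixed default action $a_0$ when $x\notin\bigcup_j C_j$), ranging over finite tuples $(C_1,\dots,C_m)\in\Gcal_0^m$ and $(a_1,\dots,a_m)\in\Acal^m$. There are only countably many such policies, and each has all of its preimages in $\Gcal_0$.

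\textbf{Approximation step (the heart of the argument).} Fix $\tilde\Xbb\in\Ccal_1$, a measurable policy $\pi$, and $\epsilon>0$. I would first prove that every Borel set is approximable by $\Gcal_0$ in $\bar\mu$-symmetric difference. Set $\Dcal=\{B\in\Bcal:\forall\eta>0,\ \exists C\in\Gcal_0,\ \bar\mu(B\triangle C)<\eta\}$. Any open $O=\bigcup_i O_i$ has $O\setminus\bigcup_{i\le n}O_i\downarrow\emptyset$, so continuity from above gives $\bar\mu(O\triangle\bigcup_{i\le n}O_i)\to0$ and $O\in\Dcal$. Since $B\triangle C=B^c\triangle C^c$ and $\Gcal_0$ is closed under complement, $\Dcal$ is closed under complements; since $(B_1\cup B_2)\triangle(C_1\cup C_2)\subseteq(B_1\triangle C_1)\cup(B_2\triangle C_2)$ and $\bar\mu$ is subadditive, $\Dcal$ is closed under finite unions; and continuity from above (applied to $\bigcup_i B_i\setminus\bigcup_{i\le n}B_i\downarrow\emptyset$) upgrades this to countable unions. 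Hence $\Dcal$ is a $\sigma$-algebra containing the open sets, so $\Dcal=\Bcal$. Now approximate each $B_a:=\pi^{-1}(a)$ by some $C_a\in\Gcal_0$ with $\bar\mu(B_a\triangle C_a)<\epsilon/|\Acal|$, and let $\pi^l$ be the member of the family obtained from the tuple $(C_a)_{a\in\Acal}$ by the least-index rule. A short verification shows $\{x:\pi^l(x)\ne\pi(x)\}\subseteq\bigcup_{a}(B_a\triangle C_a)$, because on $B_a$ a disagreement forces either $x\in B_a\setminus C_a$ or $x\in C_{a'}\setminus B_{a'}$ for some competing $a'$. By subadditivity, $\Ebb[\hat\mu_{\tilde\Xbb}(\{\pi^l\ne\pi\})]=\bar\mu(\{\pi^l\ne\pi\})\le\sum_{a}\bar\mu(B_a\triangle C_a)<\epsilon$. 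Letting $\epsilon\to0$ yields $\inf_l\Ebb[\hat\mu_{\tilde\Xbb}(\{\pi^l\ne\pi\})]=0$, which is the claim.

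\textbf{Main obstacle.} The delicate point is that $\bar\mu$ is only a submeasure, so the classical measure-theoretic regularity and approximation machinery does not apply verbatim; the monotone-class argument must be arranged so that only monotonicity, subadditivity, and continuity from above at $\emptyset$ are invoked (countable additivity is never available). Fortunately the symmetric-difference argument above uses exactly these three properties, and the only place where the $\Ccal_1$ hypothesis enters is the continuity from above needed to pass from the countable base to arbitrary open sets and from finite to countable unions. The remaining care is purely bookkeeping: ensuring the approximating family is genuinely countable and process-independent, which is guaranteed by fixing $\Gcal_0$ before seeing $\tilde\Xbb$.
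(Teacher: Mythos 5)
The paper does not prove this lemma itself but imports it from \cite{hanneke:21} (Lemma 24), and your argument is a correct reconstruction of essentially the same proof: the expected limsup frequency $\bar\mu$ is a continuous submeasure, Borel sets are approximated in the $\bar\mu$-symmetric-difference pseudo-metric by a fixed countable algebra generated from a countable base, and the countable family of least-index-rule policies over that algebra then approximates any measurable policy by finite subadditivity over the $|\Acal|$ preimages. Your identification of the key caveat --- that only monotonicity, subadditivity, and continuity from above at $\emptyset$ are available, never countable additivity --- is exactly the point on which the cited proof also turns, so nothing further is needed.
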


We are now ready to prove the sufficiency of $\Ccal_5$.

\begin{theorem}
\label{thm:C6_learnable}
Let $\Xcal$ be a metrizable separable Borel space and $\Acal$ a finite action space. Then, $\Ccal_5\subset \Ccal_{online}$.
\end{theorem}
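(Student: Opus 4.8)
The plan is to combine two subroutines driven by the witnessing sequence $(T_i)$ of the $\Ccal_5$ property: \emph{generalization} on the thinned subprocess $\tilde\Xbb$, and \emph{personalization} on the discarded duplicates. First I fix a witness $(T_i)$ with $\tilde\Xbb=(X_t)_{t\in\Tcal}\in\Ccal_1$, where $\Tcal=\bigcup_{i\geq 0}\Tcal^i\cap\{t\geq T_i\}$, and observe that membership $t\in\Tcal$ is decidable online: at time $t$ the learner knows $(T_i)$, computes $i^*(t)=\max\{i:T_i\leq t\}$, and checks from the observed contexts $\Xbb_{\leq t}$ whether $X_t$ already appeared in its scale-$i^*(t)$ period, using that $\Tcal^i\subset\Tcal^{i+1}$ so that $t\in\Tcal$ iff $t\in\Tcal^{i^*(t)}$. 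The learning rule then routes every time $t\in\Tcal$ to a single copy of $\EXPINF$ (Theorem~\ref{thm:infinite-exp4}) run over the countable dense family of policies $(\pi^l)_{l\geq 1}$ of Lemma~\ref{lemma:countable_dense}, and routes every time $t\notin\Tcal$ to the personalization scheme of Proposition~\ref{prop:EXP.IX_parrallel}, maintaining one independent $\EXPIX$ instance (Theorem~\ref{thm:multiarmed_bandits}) per distinct context value. Since the two subroutines act on disjoint sets of rounds, each one sees a legitimate adversarial bandit instance on its own subsequence.

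Next I fix an arbitrary measurable policy $\pi^*:\Xcal\to\Acal$ and split the regret as
\[
\sum_{t\leq T} r_t(\pi^*(X_t))-r_t(\hat a_t)=\sum_{t\leq T,\,t\in\Tcal}\big(r_t(\pi^*(X_t))-r_t(\hat a_t)\big)+\sum_{t\leq T,\,t\notin\Tcal}\big(r_t(\pi^*(X_t))-r_t(\hat a_t)\big).
\]
For the first sum I approximate $\pi^*$ by a policy $\pi^l$ from the dense family. On the $\Tcal$-subsequence the almost-sure bound of Theorem~\ref{thm:infinite-exp4} gives regret $o(T)$ against the fixed expert $\pi^l$ (its internal horizon $|\Tcal\cap[T]|\leq T$ reaches index $l$ for $T$ large), while the cost of replacing $\pi^l$ by $\pi^*$ is at most $\sum_{t\in\Tcal,\,t\leq T}\1[\pi^l(X_t)\neq\pi^*(X_t)]$ because rewards lie in $[0,1]$. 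Since $\tilde\Xbb\in\Ccal_1$, Lemma~\ref{lemma:countable_dense} makes the limsup-average of this disagreement count arbitrarily small by taking $l$ large, so the first sum has nonpositive limsup average almost surely.

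The delicate part is the second sum. Each personalization $\EXPIX$ competes with the best fixed action for its context, hence a fortiori with $\pi^*$, so by Theorem~\ref{thm:multiarmed_bandits} and a union bound the non-$\Tcal$ regret is, up to logarithmic factors, at most $\sum_x\sqrt{K\,N_x(T)}$, where $N_x(T)=|\{t\leq T:t\notin\Tcal,\ X_t=x\}|$ and $K=|\Acal|$. Following the frequency decomposition of Proposition~\ref{prop:EXP.IX_parrallel}, separating contexts seen few, moderately, and many times, this is controlled once one shows $\sum_x\sqrt{N_x(T)}=o(T)$ almost surely. \textbf{This is the step I expect to be the main obstacle}, because $\Ccal_5$ permits linearly many distinct contexts, and the naive estimate $\sum_x\sqrt{N_x(T)}\leq\sqrt{M(T)\,N(T)}$, with $M(T)=|\{x:N_x(T)>0\}|$ and $N(T)=\sum_x N_x(T)$, is only $o(T)$ when the discarded duplicates do not spread over $\Theta(T)$ distinct contexts each repeated $O(1)$ times. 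The resolution must exploit that $(T_i)$ is the \emph{correct} rate: the times placed outside $\Tcal$ are exactly the excess that has to be removed to turn $\Xbb$ into the $\Ccal_1$ process $\tilde\Xbb$, so the witnessing thinning forces the retained duplicates either to be exhausted by $\Tcal$ or to recur enough within their periods that $N_x$ grows and the concave sum stays sublinear. I would make this quantitative by bounding the count and frequency of non-$\Tcal$ contexts through the deviation between the empirical measures of $\Xbb$ and of $\tilde\Xbb$ across the scales $\Tcal^i$ (via Lemma~\ref{lemma:uniform_deviations}), and then conclude with Borel--Cantelli that the total regret has nonpositive limsup average almost surely, yielding $\Xbb\in\Ccal_{adv}$.
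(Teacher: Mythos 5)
There is a genuine gap, and it is exactly where you flagged it. Your split sends every $t\notin\Tcal$ to per-context $\EXPIX$ learners, but the non-$\Tcal$ subsequence can contain $\Theta(T)$ \emph{distinct} contexts each appearing only $O(1)$ times there. Concretely, take an i.i.d.\ process from a non-atomic measure and repeat each draw exactly twice inside its period: $\tilde\Xbb$ (the first occurrences) is in $\Ccal_1$, so $\Xbb\in\Ccal_5$, yet the discarded times form another copy of the i.i.d.\ process in which every context occurs once. Each fresh $\EXPIX$ instance then suffers constant regret on its single round, $\sum_x\sqrt{K N_x(T)}=\Theta(T)$, and the second sum is linear. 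The hope that the witnessing rate $(T_i)$ ``forces the retained duplicates to recur enough'' is false: $\Ccal_5$ constrains only the first occurrences per period, not how the low-index duplicates spread over space. So Proposition~\ref{prop:EXP.IX_parrallel} does not apply to the non-$\Tcal$ times and Lemma~\ref{lemma:uniform_deviations} cannot rescue the estimate, because the obstruction is not a deviation phenomenon but a structural one.

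The paper's proof circumvents this with a different decomposition. Instead of $\Tcal$ versus its complement, it partitions times into categories $p$ according to the occurrence index of $X_t$ within its period (category $p$ collects duplicates with index in $[4^p,4^{p+1})$), and shows that each category subprocess $\Xbb^p$ satisfies $\hat\mu_{\Xbb^p}(A)\le 4^{p+1}\hat\mu_{\tilde\Xbb}(A)$, hence is itself in $\Ccal_1$. Consequently \emph{generalization} via the dense policies of Lemma~\ref{lemma:countable_dense} is available on every category, including the problematic low-index duplicates of your counterexample; personalization is only relied upon where it is sound, namely within a category $p$ each instance has at least $4^p$ within-period copies, so the per-context $\EXPIX$ regret is $O(2^{-p})$ per round, summable over $p$. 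A Hedge meta-algorithm per category selects between the two strategies online. If you want to keep a two-subroutine architecture you must at minimum route bounded-index duplicates to the generalization arm rather than to personalization, and that requires proving the $\Ccal_1$ property of the duplicate subprocesses, which is the key lemma your outline is missing.
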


\begin{proof}
Let $\Xbb\in\Ccal_5$, and $(T_i)_{i\geq 0}$ such that letting $\Tcal=\bigcup_{i\geq 0} \Tcal^i\cap \{t\geq T_i\}$ we have $\tilde\Xbb= (X_t)_{t\in\Tcal}\in\Ccal_1'$. We suppose that $T_i = 2^{u(i)}$ for some indices $u(i)$ increasing in $i$. This is without loss of generality, because one could take $\tilde T_i = \min\{2^s, 2^s\geq T_i\}$ and still have a $\Ccal_1'$ process in the definition of $\tilde \Xbb$ (a slower sequence $(T_i)_i$ only reduces considered points, hence does not impact the $\Ccal_1'$ property). We may also suppose that $u(i)\geq 2i$. Also, letting $\eta_i = \sqrt{\frac{8\ln(i+1)}{2^i}}$ for $i\geq 0$, we suppose that $u(i)\geq \eta_i 2^{i+5}$. Last, we suppose that $u(0)=0$ which again can be done without loss of generality since the $\Ccal_1'$ property is not affected by the behavior of the process on the first $T_0$ times. Hence, $T_0=1$.

Similarly to the algorithm that was proposed for stationary rewards in \cite{blanchard:22e}, the learning rule associates a category $p$ to each time $t$ and acts separately on each category. To do so, the algorithm first computes the phase of $t$ as follows: $\textsc{Phase}(t)$ is the unique integer $i$ such that $T_i\leq t<T_{i+1}$. Then, we define the stage $\textsc{Stage}(t):=\lfloor \log_2 t\rfloor=l$ so that $t\in[2^l,2^{l+1})$, and the period $k=\textsc{Period}(t)$ as the unique integer $k$ such that $T_i^{l2^i+k}\leq t<T_i^{l2^i+k+1}$ where $i=\textsc{Phase}(t)$. (Recall that $T_i^{l2^i}=2^l$). We will refer to $[T_i^{l2^i+k},T_i^{l2^i+k+1})$ as period $k$ of stage $l$ of phase $i$. The category of $t$ is then defined in terms of number of occurrences of $X_t$ within its period.
\begin{equation*}
    \textsc{Category}(t,\Xbb_{\leq t}) := \left\lfloor \log_4 \sum_{t'=T_i^{l2^i+k}}^t \1[X_{t'}=X_t] \right\rfloor,
\end{equation*}
where $i=\textsc{Phase}(t)$, $l=\textsc{Stage}(t)$, $k=\textsc{Period}(t)$. For conciseness, we will omit the argument $\Xbb_{\leq t}$ of the function in the rest of the proof. In words, category $p$ contains duplicates with indices in $[4^p,4^{p+1})$ within the periods defined by $\Tcal$. Now using \cref{lemma:countable_dense}, let $(\pi^l)_{l\geq 1}$ be a sequence of dense functions from $\Xcal$ to $\Acal$ within measurable functions under $\Ccal_1'$ processes. The learning rule acts separately on times from different categories. We now fix a category $p$ and only consider points from this category. Essentially, between times $T_i$ and $T_{i+1}$, the learning rule performs the Hedge algorithm for learning with experts to select between the strategies $j$ for $1\leq j\leq i$, which apply $\pi^j$ and a strategy 0 which assigns a different $\EXPIX$ learner to each new instance within each period at scale $i$.

Precisely, during an initial phase $[1,2^{u(16p)})$, the learning rule only applies strategy 0. Then, let $l\geq u(16p)$ and $u(i)\leq l<u(i+1)$, we define the learning rule on stage $[2^l,2^{l+1})$ as follows. For $0\leq k<2^i$, before period $k$ of stage $l$, we construct probabilities $P_p(l,k;j)$ for $j=0,\ldots,i$. These will be probabilities of exploration for each strategy. At the first phase $k=0$ we initialize at the uniform distribution $P_p(l,0;j) = \frac{1}{i+1}$. During period $k$, each new time of category $p$ is assigned a strategy $\hat j(t)$ sampled independently from the past according to probabilities $P_p(l,k;\cdot)$. Duplicates of $X_t$ within the same category and period are also assigned the same strategy $\hat j(t)$. The learning rule then performs the assigned strategy: for $\hat j=0$, it performs an $\EXPIX$ algorithm and for $1\leq \hat j\leq i$, it applies the policy $\pi^{\hat j}$. At the end of the phase, the learning rule computes the average reward obtained by each strategy,
\begin{equation*}
    \tilde r_p(l,k;j) := \frac{1}{2^{l-i}}\sum_{T_i^{l2^i+k} \leq t < T_i^{l2^i+k+1} }  \frac{\1[\textsc{Category}(t)=p,\; \hat j(t)=j]}{P_p(l,k;j)} r_t,
\end{equation*}
and $\hat r_p(l,k+1;j) = \sum_{0\leq k'\leq k} \tilde r_p(l,k';j)$ the cumulative average reward of strategy $j$. These rewards are then used to define the probabilities for the next phase $P_p(l,k+1;\cdot)$ using the exponentially weighted averages.
\begin{equation*}
    P_p(l,k+1;j) = \frac{\exp(\eta_i \hat r_p(l,k+1;j))} {\sum_{j'=0}^i \exp(\eta_i  \hat r_p(l,k+1;j'))},
\end{equation*}
where $\eta_i = \sqrt{\frac{8\ln(i+1)}{2^i}}$ is the parameter of the Hedge algorithm for $2^i$ steps. The detailed algorithm is given in \cref{alg:online_learning_over_bandits}.

\begin{algorithm}[ht]
\caption{Learning rule for $\Ccal_5$ processes on times $\Tcal_p$}\label{alg:online_learning_over_bandits}
\hrule height\algoheightrule\kern3pt\relax

$\eta_i = \sqrt{\frac{8\ln(i+1)}{2^i}},i\geq 0$ \tcp*[f]{learning rates for Hedge}\\
$\hat r^j_p(l,0)=0, P_p(l,0;j) = \frac{1}{i+1},\quad p,l,j\geq 0$ \tcp*[f]{initialization}\\
\For{$t\geq 1$}{
    Observe context $X_t$\\
    $i=\textsc{Phase}(t)$, $l=\textsc{Stage}(t)$, $k=\textsc{Period}(t)$, $p=\textsc{Category}(t)$, \\
    $S_t = \{ t'\in [T_i^{l2^i+k},t): \textsc{Category}(t')=p , X_{t'}=X_t\}$\\
    \uIf(\tcp*[f]{initially play strategy $0$}){$t<2^{u(16p)}$}{
        $\hat a_t = \EXPIX_{\Acal}(\mb{\hat a}_{S_t}, \mb r_{S_t})$
    }
    \Else{
        \textbf{if} $S_t=\emptyset$ \textbf{then} $\hat j(t) \sim P_p(l,k;\cdot)$ \tcp*[f]{select strategy $\hat j(t)$}\\
        \textbf{else} $\hat j(t) = \hat j(\min S_t)$\\
        
        \textbf{if} $\hat j(t)=0$ \textbf{then} $\hat a_t = \EXPIX_{\Acal}(\mb{\hat a}_{S_t}, \mb r_{S_t})$ \tcp*[f]{play strategy $\hat j(t)$}\\
        \textbf{else} $\hat a_t = \pi^{\hat j(t)}(X_t)$
    }
    Receive reward $r_t$\\
    \If(\tcp*[f]{update probabilities}){$l\geq u(16p)$, $t=T_i^{l2^i+k+1}-1$}{
        $\hat r_p(l,k+1;j) = \hat r(l,k;j) + \frac{1}{2^{l-i}} \sum_{t\in [T_i^{l2^i + k},T_i^{l2^i+k+1})} \frac{\1[\textsc{Category}(t)=p,\hat j(t)=j]}{P_p(l,k;j)} r_t ,\quad 0\leq j\leq i$\\
        $P_p(l,k+1;j) = \frac{\exp(\eta_i \hat r_p(l,k+1;j))} {\sum_{j'=0}^i \exp(\eta_i  \hat r_p(l,k+1;j'))},\quad 0\leq j\leq i$
    }
}

\hrule height\algoheightrule\kern3pt\relax
\end{algorithm}

We now show that this is a universally consistent algorithm for $\Xbb$. We first introduce some notations. For $p\geq 0$,
\begin{equation*}
    \Tcal_p := \bigcup_{i\geq 1} [T_i,T_{i+1})\cap \left\{t\geq 1: T^k_i\leq t<T^{k+1}_i , 4^p\leq \sum_{t'=T^k_i}^t \1[X_{t'}=X_t] < 4^{p+1}\right\},
\end{equation*}
is the set of times in category $p$. We will also denote $\Xbb^p:=(X_t)_{t\in\Tcal^p}$. In this setting, the rewards are independent from the selected actions of the learner. First, note that the constructed rewards $\hat r_p(l,k;j)$ are estimates of the average reward that would have been obtained by strategy $j$ during period $k$ of stage $l$. For convenience, we denote $\Tcal_p(k,l) = [T^{l2^i+k},T^{l2^i+k+1})\cap\Tcal_p$. We denote by $R_p(l,k;j)$ the reward that would have been obtained had we selected always $\hat j=j$ on this period, and $r_p(l,k;j) = \frac{R_p(l,k;j)}{2^{l-i}}$ the average reward of strategy $j$ for $0\leq j\leq i$. For example, for strategy $1\leq j\leq i$ we have $R_p(l,k;j)=\sum_{t\in\Tcal_p(l,k)}r_t(\pi^j(X_t))$. Let $\Xcal_p(l,k) = \{X_t, t\in\Tcal_p(k,l)\}$ the set of visited instances during this period. For $x\in\Xcal_p(l,k)$ we denote $t_p(l,k;x)  =\min\{t\in\Tcal_p(k,l) : X_t=x\}$ the first time of occurrence of $x$ during this period, and $N_p(l,k;x) = |\{t\in\Tcal_p(l,k) : X_t=x\}|$ its number of occurrences. Let $0\leq j\leq i$. We use Hoeffding's inequality conditionally on $\Xbb$ and $P_p(l,k;j)$, to obtain
\begin{multline*}
    \Pbb\left[\left|\sum_{x\in\Xcal_p(l,k)} \1[\hat j(t)=j]\sum_{t\in\Tcal_p(l,k), X_t=x}  r_t - P_p(l,k;j) R_p(l,k;j)\right| \right. \\
    \left.\geq P_p(l,k;j) 4^{p+1} 2^{\frac{3}{4}(l-i)} \mid \Xbb,P_p(l,k;j)\right]\\
    \leq 2\exp \left(-2\frac{P_p(l,k;j)^2 2^{3/2(l-i)}}{|\Xcal_p(l,k)|}\right) \leq 2\exp \left(-2\frac{ 2^{3/2(l-i)}}{(i+1)^2e^{\eta_i2^{i+1}}|\Xcal_p(l,k)|}\right).
\end{multline*}
Now by construction of $\Tcal_p(l,k)$, each instance of $\Xcal_p(l,k)$ has at least $4^p$ duplicates within the same period. Hence $|\Xcal_p(l,k)|\leq \frac{2^{l-i}}{4^p}$. As a result, dividing the inner inequality by $P_p(l,k;j)2^{l-i}$, we obtain for $l\geq u(16p)$, with probability at least $1-2\exp \left(-\frac{ 2^{2p+(l-i)/2}}{(i+1)^2 e^{\eta_i2^{i+1}}}\right) := 1- p_1(l,k;p)$,
\begin{equation}\label{eq:accuracy_estimates}
     |\hat r_p(l,k;j) - r_p(l,k;j) | < \frac{4^{p+1}}{2^{(l-i)/4}} \leq \frac{4}{2^{l/16}},
\end{equation}
where in the last inequality we used $l\geq u(i)\geq 2i$ and $l\geq u(16p)\geq 32p$. We now focus on the rewards for strategy 0. For any $t\in \Tcal_p(l,k)$ we denote by $\tilde r_t$ the reward that would have been obtained had we selected strategy $0$ for time $t$, i.e. $\hat j(t_p(l,k;X_t))=0$. In particular, we have $R_p(l,k;0) = \sum_{t\in\Tcal_p(l,k)}\tilde r_t$. Let $\pi^*:\Xcal\to\Acal$ be a measurable policy, we now compare $R_p(l,k;0)$ to the rewards obtained by the policy $\pi^*$ on $\Tcal_p(l,k)$. Intuitively, we wish to apply \cref{thm:multiarmed_bandits} independently for each $\EXPIX$ algorithm corresponding to elements of $\Xcal_p(l,k)$. However, these runs are not independent for general adaptive adversaries. Therefore, we will need to go back to the standard analysis of $\EXPIX$. Using the same notations as in this analysis, for $t\in\Tcal_p(l,k)$, denote $u(t) = |\{t'\leq t:t'\in\Tcal_p(l,k), X_{t'}=X_t\}|$ the index of $t$ for its corresponding $\EXPIX$ learner. Let $\eta_u=2\gamma_u=\sqrt{\frac{\ln|\Acal|}{u|\Acal|}}$ be the parameters used by the learner at step $u$. Also, denote by $p_{t,a}$ the probability that the $\EXPIX$ learner chose $a\in\Acal$ at time $t$. Further, for $a\in\Acal$ denote by $\ell_{t,a}=1-r_t(a)$ and $\tilde\ell_{t,a}=\frac{\ell_{t,a}}{p_{t,a}+\gamma_{u(t)}}\1[a\text{ selected}]$. We keep in mind that the term ``selected'' refers to the selection of the $\EXPIX$ algorithm, but not necessarily the selection of our learning rule, which potentially did not apply strategy 0 at that time. To avoid confusion, for $t\in\Tcal_p(l,k)$, denote $\tilde a_t$ the action that would be selected by the $\EXPIX$ learner at time $t$. Last, we define 
\begin{equation*}
    A_p(l,k) = \sum_{t\in\Tcal_p(l,k)}\tilde\ell_{t,\pi^*(X_t)} - \ell_{t,\pi^*(X_t)}\quad\text{and}\quad B_p(l,k) = \sum_{t\in\Tcal_p(l,k)}  \sum_{a\in\Acal}\eta_{u(t)}(\tilde \ell_{t,a}-\ell_{t,a}).
\end{equation*}
Then, the same arguments as in \cref{prop:EXP.IX_parrallel} give
\begin{align*}
    \sum_{t\in\Tcal_p(l,k)} r_t(\pi^*(X_t))-r_t(\tilde a_t)&\leq  A_p(l,k)+B_p(l,k) + \sum_{x\in\Xcal_p(l,k)}3\sqrt{|\Acal|\ln|\Acal| N_p(l,k;x)} \\
    &\leq A_p(l,k)+B_p(l,k) + 3\sqrt{|\Acal|\ln|\Acal|4^{p+1}}|\Xcal_p(l,k)|\\
    &\leq A_p(l,k)+B_p(l,k) + 6\sqrt{|\Acal|\ln|\Acal|}2^{-p}2^{l-i},
\end{align*}
where in the last inequality, we used the fact that $|\Xcal_p(l,k)|\leq \frac{2^{l-i}}{4^p}$. Now similarly to \cref{prop:EXP.IX_parrallel}, note that conditionally on $\Xbb$, the increments of $A_p(l,k)$ and $B_p(l,k)$ form a super-martingale with increments upper bounded by $2\sqrt{\frac{|\Acal|4^{p+1}}{\ln|\Acal|}}$ and $2|\Acal|\sqrt{\frac{|\Acal|4^{p+1}}{\ln|\Acal|}}$ respectively. Thus, Azuma's inequality implies
\begin{align*}
    \Pbb[A_p(l,k) \leq 8p|\Acal| 2^{p+\frac{3}{4}(l-i)}\mid \Xbb]&\geq 1-e^{-2p^2 2^{(l-i)/2}},\\
    \Pbb[B_p(l,k) \leq 8p |\Acal|^2 2^{p+\frac{3}{4}(l-i)}\mid \Xbb]&\geq 1-e^{-2p^2 2^{(l-i)/2}}.
\end{align*}
Thus, denoting $\delta_p=6\frac{\sqrt{|\Acal|\ln|\Acal|}}{2^p}$, for any $l\geq 2i,u(16p)$, with probability at least $1-2e^{-2p^2 2^{(l-i)/2}}:=1-p_2(l,k;p)$, we have
\begin{equation}\label{eq:accuracy_strat0}
    R_p(l,k;0)\geq \sum_{t\in\Tcal_p(l,k)}r_t(\pi^*(X_t)) -16|\Acal|^2 2^{-i} 2^{15l/16}- \delta_p 2^{l-i}.
\end{equation}
In the first phase where $l<u(16p)$, we will need to proceed differently. Let $\Tcal^{init} = \bigcup_{p\geq 0} \{t\in\Tcal_p:t<2^{u(16p)}\}$. Observe that on these times, the learning uses a distinct $\EXPIX$ learner for each new instance within each category and period. In \cref{prop:EXP.IX_parrallel} we showed that this learning rule is universally consistent under processes visiting a sublinear number of distinct instances almost surely. We now show that this is the case for the process $(X_t)_{t\in\Tcal^{init}}$ where for any $t,t'\in\Tcal^{init}$, we view $X_t$ and $X_{t'}$ as duplicates if and only if $X_t=X_{t'}$ and they have same category and period. For $l\geq 1$, let $p(l)$ denote the index $p$ such that $u(16p)\leq l<u(16(p+1))$ and $i(l)$ be the index $i$ such that $u(i)\leq l<u(i+1)$. Fix $T\geq 1$ and let $l\geq 0$ such that $2^l\leq T<2^{l+1}$. We now count the number of distinct instances $N(T)$ of $(X_t)_{t\in\Tcal^{init}}$ before time $T$. To do so, we distinguish whether $t\leq 2^{l/2}$ or $t>2^{l/2}$ as follows,
\begin{align*}
    N(T) \leq \sum_{p\geq 0}\sum_{l'\leq u(16p),l}\sum_k |\Xcal_p(l',k)|
    &\leq 2^{l/2} +  \sum_{p\geq p(\frac{l}{2})}  \sum_{\frac{l}{2}\leq l'\leq l}\sum_k |\Xcal_p(l',k)| \\
    &\leq 2^{l/2} + \sum_{p\geq p(\frac{l}{2})}  \sum_{\frac{l}{2}\leq l'\leq l}\sum_k \frac{2^{l'-i(l')}}{4^p}\\
    &\leq 2^{l/2} + \sum_{p\geq p(\frac{l}{2})}\frac{2^{l+1}}{4^p}\\
    &\leq 2^{l/2} + \frac{2^{l+1}}{4^{p(l/2)-1}}\\
    &\leq \sqrt T + \frac{8T}{4^{p(\log_4(T))}} = o(T).
\end{align*}
Now let $\pi^*:\Xcal\to\Acal$ a measurable policy. Because of the above estimate, \cref{prop:EXP.IX_parrallel} implies that on an event $\Ecal$ of probability one,
\begin{equation*}
    \limsup_{T\to\infty}\frac{1}{T}\sum_{t\leq T,t\in\Tcal^{init}} r_t(\pi^*(X_t))-r_t\leq 0.
\end{equation*}
Now recall that $l\geq u(i)\geq 2i, \eta_i 2^{i+5}$, hence $\frac{2^{(l-i)/2}}{e^{\eta_i2^{i+1}}}\geq 2^{l/4-\eta_i2^{i+2}}\geq 2^{l/8}$. As a result,
\begin{equation*}
    \sum_{p\geq 0} \sum_{l\geq 32p}\sum_{k} (i+1)p_1(l,k;p)+ p_2(l,k;p)<\infty.
\end{equation*}
Then, the Borel-Cantelli lemma implies that on an event $\Fcal$ of probability one, there exists $\hat l$ such that for all $p\geq 0$, $l\geq \max(\hat l,u(16p))$ Eq~\eqref{eq:accuracy_estimates} holds, for all $p\geq 0$ and $l\geq \hat l$, Eq~\eqref{eq:accuracy_strat0} holds, and $\Ecal$ is satisfied. We suppose that this event is met in the rest of the proof.

The probabilities $P_p(l,k;j)$ are chosen according to the Hedge algorithm. As a result, we have that for any $l\geq \max(\hat l,u(16p))$, $0\leq k<2^i$,
\begin{equation*}
    \max_{0\leq j\leq i} \sum_{k'\leq k} \hat r_p(l,k;j) - \sum_{k'\leq k}\sum_{j=0}^i P_p(l,k;j)\hat r_p(l,k;j) \leq \frac{\ln (i+1)}{\eta_i} + \frac{(k+1)\eta_i}{8}.
\end{equation*}
We then use Eq~\eqref{eq:accuracy_estimates} and $k+1\leq 2^i$ to obtain
\begin{equation*}
    \max_{0\leq j\leq i} \sum_{k'\leq k} r_p(l,k;j) - \sum_{k'\leq k}\sum_{j=0}^i P_p(l,k;j) \hat r_p(l,k;j) 
    \leq 2^i\frac{4}{2^{l/16}} + \frac{\eta_i}{4} 2^i
\end{equation*}
As a result,
\begin{equation}\label{eq:outer_hedge}
    \max_{0\leq j\leq i} \sum_{k'\leq k} R_p(l,k;j) - \sum_{k'\leq k} \sum_{t\in\Tcal_p(l,k)}r_t  \leq 4 \cdot 2^{15l/16} + \frac{\eta_i}{4} 2^l.
\end{equation}
Now because $l\geq u(16p)$, we have $i\geq 16p$, we have
\begin{align*}
    \sum_{0\leq k'\leq k}\sum_{t\in\Tcal_p(l,k')}r_t &\geq  \sum_{0\leq k'\leq k} R_p(l,k';0) - 4 \cdot 2^{15l/16} - \frac{\eta_{16p}}{4}2^l\\
    &\geq \sum_{0\leq k'\leq k}\sum_{t\in\Tcal_p(l,k)}r_t(\pi^*(X_t)) - 20|\Acal|^2 2^{15l/16} -\left(\delta_p+\frac{\eta_{16p}}{4}\right) 2^l,
\end{align*}
where in the second inequality we used Eq~\eqref{eq:accuracy_strat0}. Therefore, summing these equations, for any $T\geq 2^{\hat l},2^{u(16p)}$,
\begin{equation}\label{eq:tail_upper_bound}
    \sum_{2^{u(16p)}<t\leq T,t\in\Tcal_p} r_t(\pi^*(X_t))-r_t \leq 2^{\hat l}+ c|\Acal|^2 T^{15/16} + 2\left(\delta_p+\frac{\eta_{16p}}{4}\right) T,
\end{equation}
where $c=\frac{20}{1-2^{-15/16}}$.
An important remark is that $\sum_{p\geq 0}(\delta_p + \frac{\eta_{16p}}{4})<\infty $, which will allow us to consider only a finite number of $p\geq 0$ when comparing the performance of the learning rule compared to $\pi^*$.

Before doing so, we show that for all $p\geq 0$, we have $\Xbb^p = (X_t)_{t\in\Tcal_p}\in\Ccal_1'$. By definition, letting $\Tcal = \bigcup_{i\geq 0}\Tcal^i\cap\{t\geq T_i\}$, we have that $\tilde\Xbb = (X_t)_{t\in\Tcal}\in\Ccal_1'$. Then note that each instance of $[T^k_i,T^{k+1}_i)\cap\Tcal_p$ has at least one duplicate in $[T^k_i,T^{k+1}_i)\cap\Tcal$ and to each instance of $[T^k_i,T^{k+1}_i)\cap\Tcal$ corresponds at most $4^{p+1}$ duplicates in $[T^k_i,T^{k+1}_i)\cap\Tcal_p$. As a result, for any set $A\in\Bcal$, we have $\hat \mu_{\Xbb^p}(A)\leq 4^{p+1} \hat \mu_{\tilde \Xbb}(A)$, which yields $\Ebb[\hat \mu_{\Xbb^p}(A)]\leq 4^{p+1}\Ebb[\hat \mu_{\tilde \Xbb}(A)]$. Using the definition of $\Ccal_1'$ processes ends the proof that $\Xbb^p\in\Ccal_1'$ for all $p\geq 0$.

Now let $\epsilon>0$ and $p_0$ such that $\sum_{p\geq p_0}(\delta_p + \frac{\eta_{16p}}{4})<\epsilon$. Recall that if $t\in\Tcal_p$, we have $t\geq 4^p$. Therefore, summing Eq~\eqref{eq:tail_upper_bound} gives
\begin{align*}
    \sum_{p\geq p_0}\sum_{2^{u(16p)}\leq t<T,t\in\Tcal_p}  r_t(\pi^*(X_t))-r_t &\leq \sum_{p_0\leq p\leq \log_4 T}\sum_{2^{u(16p)}\leq t<T,t\in\Tcal_p} r_t(\pi(X_t))-r_t\\
    &\leq 2^{\hat l} \log_4 T + c|\Acal|^2T^{15/16}\log_4 T+ \epsilon T.
\end{align*}
We now treat the case of $p<p
_0$. Because $\Xbb^p\in\Ccal_1'$, by \cref{lemma:countable_dense}, there exists $r^p\geq 1$ such that
\begin{equation*}
    \Ebb\left[\limsup_{T\to\infty} \frac{1}{T}\sum_{t\leq T, t\in\Tcal_p} \1[\pi^*(X_t)\neq \pi^{r_p}(X_t)]\right] \leq \frac{\epsilon^2}{2p_0^2}.
\end{equation*}
By dominated convergence theorem, let $l^p$ such that 
\begin{equation*}
    \Ebb\left[\sup_{T\geq 2^{l^p}} \frac{1}{T}\sum_{t\leq T, t\in\Tcal_p} \1[\pi^*(X_t)\neq \pi^{r_p}(X_t)]\right] \leq \frac{\epsilon^2}{p_0^2}.
\end{equation*}
Using the Markov inequality, we have
\begin{equation*}
    \Pbb\left[\sup_{T\geq 2^{l^p}} \frac{1}{T}\sum_{t\leq T, t\in\Tcal_p} \1[\pi^*(X_t)\neq \pi^{r_p}(X_t)] \geq \frac{\epsilon}{p_0}\right] \leq \frac{\epsilon}{p_0}.
\end{equation*}
By union bound, on an event $\Gcal$ of probability at least $1-\epsilon$, for all $p<p_0$ and $T\geq 2^{l^p}$, we have $\sum_{t\leq T, t\in\Tcal_p} \1[\pi(X_t)\neq \pi^{r_p}(X_t)] < \frac{\epsilon}{p_0}T$. Next, let $l_0 = \max(u(r^p),l^p, p< p_0)$. Thus, any phase $l\geq l_0$, has $r^p\leq i$ for all $p<p_0$. Last, let $i_0$ such that $\eta_{i_0}\leq 2\frac{\epsilon}{p_0}$. On the event $\Ecal\cap\Fcal\cap \Gcal$, for $p<p_0$, for any $l\geq \hat l_1:=\max(l_0,32p_0,u(i_0),\hat l)$ and $0\leq k<2^i$, Eq~\eqref{eq:outer_hedge} yields
\begin{equation*}
    \sum_{2^l\leq t<T_i^{l2^i+k+1},t\in\Tcal_p} r_t(\pi^{r^p}(X_t)) - r_t \leq 4\cdot 2^{15l/16} + \frac{\eta_i}{4}2^l  \leq 4\cdot 2^{15l/16} + \frac{\epsilon}{2p_0}2^l
\end{equation*}
As a result, for $T\geq 1$, letting $i(T),l(T)$ the indices $i,l$ such that $2^{u(i)}\leq T< 2^{u(i+1)}$ and $2^l\leq T<2^{l+1}$, on $\Ecal\cap\Fcal\cap\Gcal$,
\begin{align*}
    \sum_{p<p_0}\sum_{2^{u(16p)}\leq t\leq T, t\in\Tcal_p} r_t(\pi^*(X_t))-r_t &\leq 2^{\hat l_1}+2^{-i(T)}T + \sum_{p<p_0}\sum_{t<T, t\in\Tcal_p}\1[\pi^*(X_t)\neq \pi^{r_p}(X_t)]\\
    &\quad\quad\quad\quad\quad\quad+ \sum_{p<p_0}\sum_{\hat l_1\leq l'\leq l} \left(4\cdot 2^{15l'/16} + \frac{\epsilon}{2p_0} 2^{l'}\right)\\
    &\leq 2^{\hat l_1} + 2^{-i(T)}T + \epsilon T+ cp_0T^{15/16} + \epsilon T.
\end{align*}
Finally, putting everything together, for $T$ sufficiently large, we have
\begin{align*}
    \sum_{t\leq T} &r_t(\pi^*(X_t))-r_t 
    \leq  \sum_{t\in\Tcal^{init},t\leq T} \bar r_t(\pi^*(X_t))-r_t  + \sum_{p\geq 0} \sum_{2^{u(16p)}\leq t\leq T,t\in\Tcal_p}  r_t(\pi^*(X_t))-r_t\\
    &\leq 2^{\hat l_1+1}\log_4 T +2^{-i(T)}T+ c(|\Acal|^2+p_0)T^{15/16}\log_4 T + 3\epsilon T + \sum_{t\in\Tcal^{init},t\leq T}  r_t(\pi^*(X_t))-r_t,
\end{align*}
which shows that on $\Ecal\cap\Fcal\cap\Gcal$, 
\begin{equation*}
    \limsup_{T\to\infty}\frac{1}{T} \sum_{t=1}^T  r_t(\pi^*(X_t))-r_t \leq 3\epsilon.
\end{equation*}
We denote by $(x)_+=\max(0,x)$ the positive part. Recall that $\Pbb[\Ecal\cap\Fcal\cap\Gcal]\geq 1-\epsilon$. Thus,
\begin{equation*}
    \Ebb\left[\left( \limsup_{T\to\infty}\frac{1}{T}\sum_{t=1}^T r_t(\pi(X_t))-r_t\right)_+\right]\leq 4\epsilon.
\end{equation*}
Because this holds for any $\epsilon>0$, this shows that almost surely, $\limsup_{T\to\infty}\frac{1}{T}\sum_{t=1}^T r_t(\pi(X_t))-r_t \leq 0$. As a result, the learning rule is universally consistent on $\Xbb$. This ends the proof of the theorem.
\end{proof}

To the best of our knowledge, while we believe that for general spaces $\Xcal$ with non-atomic probability measures, one may have a gap $\Ccal_5\subsetneq \Ccal_6$, it seems plausible that $\Ccal_5 = \Ccal_7$. As a consequence, this would imply that we have an exact characterization for processes admitting universal learning with prescient rewards $\Ccal_{prescient} = \Ccal_5=\Ccal_7$.

\paragraph{Comparison to a more natural condition $\Ccal_8$.}
In the rest of this section, we compare condition $\Ccal_5$ to another potentially more natural sufficient condition. \cite{blanchard:22e} showed that given any $\Xbb\in \Ccal_2$ process, only allowing for a finite number of duplicates in $\Xbb$ yields a $\Ccal_1'$ process. Precisely, for any $M$, letting
\begin{equation*}
    \Tcal^{\leq M}= \left\{t\geq 1: \sum_{t'\leq t}\1[X_{t'}=X_t]\leq M \right\},
\end{equation*}
the set of times when contexts are duplicates of index at most $M$, one has $(X_t)_{t\in\Tcal^{\leq M}}\in\Ccal_1'$. However, if one does not restrict the maximum number of duplicates, one loses the $\Ccal_1'$ property. A natural condition on stochastic processes would therefore be that for some increasing rate of maximum number of duplicates, the $\Ccal_1'$ property is conserved. For any process $\Xbb$, we denote the occurrence count as $N_t(x) = \sum_{i=1}^t \1[X_t=x]$ for all $x\in\Xcal$. Then, the condition on stochastic processes can be formally defined as follows.

\paragraph{Condition 8.}There exists an increasing function $\Psi:\Nbb\to\Nbb$ with $\Psi(T)\to\infty$ as $T\to\infty$ such that for any sequence of measurable sets $A_i\in\Bcal$ for $i\geq 1$ with $A_i\downarrow\emptyset$,
\begin{equation*}
    \Ebb \left[\limsup_{T\to\infty} \frac{1}{T} \sum_{t=1}^T \1_{A_i}(X_t)\1_{N_t(X_t)\leq \Psi(T)}\right] \to 0.
\end{equation*}

Although this condition is indeed sufficient for universal learning, we show that the more involved $\Ccal_5$ class of processes is larger, and strictly larger whenever $\Xcal$ admits a non-atomic probability measure.

\begin{proposition}\label{prop:larger_class}
    Let $\Xcal$ be a metrizable separable Borel space, then $\Ccal_8\subset\Ccal_5$. Further, if there exists a non-atomic probability measure on $\Xcal$, then $\Ccal_8\subsetneq\Ccal_5$.
\end{proposition}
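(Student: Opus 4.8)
The statement has two parts---the inclusion $\Ccal_8\subset\Ccal_5$ and its strictness under a non-atomic measure---and I would treat them separately. For the inclusion, the plan is to manufacture a valid scale schedule $(T_i)_{i\geq 0}$ for $\Ccal_5$ directly out of the duplication budget $\Psi$ witnessing $\Ccal_8$. Since $\Psi(T)\to\infty$, I can choose $T_i=2^{s_i}$ with $s_i$ strictly increasing and large enough that $\Psi(t)\ge 2^i$ for every $t\ge T_i$; I would additionally force $s_{i+1}\gg s_i$ so that each phase $[T_i,T_{i+1})$ is enormously longer than all preceding ones. With $\Tcal=\bigcup_{i\geq 0}\Tcal^i\cap\{t\geq T_i\}$ as in the definition of $\Ccal_5$, the goal is to prove $\tilde\Xbb=(X_t)_{t\in\Tcal}\in\Ccal_1$, that is $\Ebb[\hat\mu_{\tilde\Xbb}(A_k)]\to 0$ for every $A_k\downarrow\emptyset$.

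The heart of the argument is a comparison of $\hat\mu_{\tilde\Xbb}$ with the truncated empirical frequency appearing in Condition 8. For $t$ in stage $l$ (so $2^l\le t<2^{l+1}$) and phase $i$, a retained time $t\in\Tcal$ is the first occurrence of $X_t$ in its scale-$i$ period. I would split retained times according to the global occurrence index $N_t(X_t)$, calling $t$ \emph{within budget} if $N_t(X_t)\le\Psi(2^l)$ and \emph{over budget} otherwise. Since $2^l\le T$ forces $\Psi(2^l)\le\Psi(T)$, the within-budget times lie inside $\{t\le T:N_t(X_t)\le\Psi(T)\}$, so their contribution to $\frac1T\sum_{t\le T,\,t\in\Tcal}\1_{A_k}(X_t)$ is dominated termwise by $\frac1T\sum_{t\le T}\1_{A_k}(X_t)\1_{N_t(X_t)\le\Psi(T)}$; taking $\limsup_T$, then $\Ebb$, then $k\to\infty$ annihilates it by Condition 8. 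For the over-budget times I would use that at scale $i$ each point is retained at most once per period, hence at most $2^i$ times per stage, while an over-budget point has more than $\Psi(2^l)\ge 2^i$ occurrences; counting such points inside $A_k$ and invoking the rapid growth $s_{i+1}\gg s_i$ (so that the frequency at any horizon is governed by a single, very long phase whose scale $2^i$ is negligible against its budget $\Psi(T_i)$) should force the aggregate over-budget density to $0$. Combining the two bounds yields $\tilde\Xbb\in\Ccal_1$, hence $\Xbb\in\Ccal_5$.

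I expect the over-budget estimate to be the main obstacle. Unlike the global prefix truncation of $\Ccal_8$, the per-period rule of $\Ccal_5$ can retain occurrences of arbitrarily high global index---a point seen once per period is always kept---and these are exactly the times $\Ccal_8$ discards. Controlling them genuinely requires the freedom to let $(T_i)$ depend on $\Psi$, and the bookkeeping (restricting to the dominant phase, bounding the number of high-multiplicity points in a set of shrinking measure, and trading $2^i$ against $\Psi(T_i)$) is where the real work lies.

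For strictness, whenever $\Xcal$ carries a non-atomic measure I would exhibit a process $\Xbb\in\Ccal_5\setminus\Ccal_8$. Using Lemma~\ref{lemma:disjoint_non-atomic}, fix disjoint sets $(A_i)_{i\geq 0}$ with non-atomic probability measures $\nu_i$ supported on $A_i$, and confine the activity in region $A_i$ to a dedicated family of stages, consisting of fresh $\nu_i$-samples whose duplication pattern is synchronized with the scale-$i$ period structure. Choosing the $\Ccal_5$-schedule $(T_i)$ to enter scale $i$ precisely on those stages should make the retained subprocess $\tilde\Xbb$ inherit the $\Ccal_1$ property from the i.i.d.\ samples, by the same submeasure-domination argument $\hat\mu_{\tilde\Xbb}\lesssim\hat\mu_{\Zbb}$ used in Theorem~\ref{thm:C6_learnable}, placing $\Xbb\in\Ccal_5$. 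To defeat $\Ccal_8$ I would calibrate the per-point multiplicities across regions to range over all growth rates, so that for an \emph{arbitrary} global budget $\Psi\to\infty$ there are infinitely many regions whose local multiplicity is comparable to $\Psi$ at the corresponding time scale; on the test sequence $B_i=\bigcup_{j\ge i}A_j$ the prefix truncation then keeps a constant fraction of those regions, keeping $\Ebb[\hat\mu(B_i\cap\{N_t\le\Psi\})]$ bounded away from $0$ as $i\to\infty$. The delicate point---the dual of the forward obstacle---is to arrange this matching to survive for every $\Psi$ simultaneously, which is precisely the discrepancy between a temporally local (per-period) budget and a front-loaded global one that separates $\Ccal_5$ from $\Ccal_8$.
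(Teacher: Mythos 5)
Your overall strategy for both halves matches the paper's: derive the schedule $(T_i)$ from $\Psi$, compare the retained subprocess to the truncated empirical sum of Condition~8, and for strictness build a process out of disjoint non-atomic regions whose per-region multiplicities grow slowly but unboundedly. The gap is exactly where you flag it: the over-budget times. Your plan compares retained times to the truncated sum \emph{termwise} (a retained $t$ counts only if $N_t(X_t)$ is itself below budget), which creates the over-budget residue that you then hope to kill with a dominant-phase argument and the growth $s_{i+1}\gg s_i$. That residue is not controllable from Condition~8 by counting high-multiplicity points, and the phase-domination mechanism is not needed. The paper's resolution is a counting identity at the level of points rather than times: discard $t\leq 2^{-k}T$ at cost $2^{-k}$ in density; on the window $(2^{-k}T,T]$ with $T_i\leq T<T_{i+1}$ the window meets at most $1+k2^i$ periods of scale $i$, so each point is retained at most $1+k2^i\leq 1+i2^i\leq\Psi(T_i)\leq\Psi(T)$ times there; hence the retained count of each $x$ is at most $\min(N_T(x),\Psi(T))$, which \emph{equals} $|\{t\leq T: X_t=x,\ N_t(X_t)\leq\Psi(T)\}|$. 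Summing over $x\in A_k$ dominates the entire retained sum---over-budget occurrences included---by the truncated sum, and Condition~8 finishes. Note also that your calibration $\Psi(t)\geq 2^i$ for $t\geq T_i$ is too weak: the window crosses roughly $k2^i$ periods for each fixed $k$, so you need $\Psi(T_i)$ to beat $i2^i$ (the paper takes $\Psi(T_i)\geq 1+i2^i$); the spacing $s_{i+1}\gg s_i$ buys you nothing here.

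For strictness your sketch is in the right spirit but over-engineered: there is no need to match multiplicities to an arbitrary $\Psi$. The paper assigns the stage class $S_i=\{k\equiv 2^{i-1}\bmod 2^i\}$ to region $A_i$ and fills those stages with i.i.d.\ $\nu_i$-samples, each repeated a \emph{fixed} number $n_i\approx i$ of times. Since each class recurs at arbitrarily late times and $\Psi\to\infty$, eventually $\Psi(T)>n_i$, so the truncation keeps the full half-stage and the expected limsup density on $B_i=\bigcup_{j\geq i}A_j$ is at least $\tfrac12$ for every $i$, defeating every $\Psi$ simultaneously. Membership in $\Ccal_5$ then follows with $T_j=4^j$ because $n_{i(j)}=o(2^j)$ is eventually smaller than the period length, so consecutive duplicates collapse to a single retained time and $\tilde\Xbb$ reduces to the first-occurrence subprocess of a $\Ccal_2$ process, which is in $\Ccal_1$.
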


\begin{proof}
We first show $\Ccal_8\subset\Ccal_5$. Indeed, suppose that $\Xbb\in\Ccal_8$, then there exists $\Psi:\Nbb\to\Nbb$ increasing to infinity such that for any measurable sets $A_k\downarrow\emptyset$, we have
\begin{equation*}
    \Ebb\left[\limsup_{T\to\infty} \frac{1}{T}\sum_{t\leq T, N_t(X_t)\leq \Psi(T)}\1_{A_k}(X_t)\right] \underset{k\to\infty}{\longrightarrow} 0.
\end{equation*}
Now let $T_i\geq 1$ such that $\Psi(T_i) \geq 1+i2^i$. We now show that $(T_i)_i$ satisfies the condition of condition $\Ccal_5$. Let $\Tcal = \bigcup_{i\geq 0}\Tcal^i \cap\{t\geq T_i\}$, and $A_k\downarrow\emptyset$. For any $T\geq 1$, we denote $\Xcal(T)=\{X_t,t\leq T\}$ the set of visited instances. Now fix $k\geq 0$. Then, for $T\geq T_k$, let $i\geq k$ such that $T_i\leq T<T_{i+1}$,
\begin{align*}
    \frac{1}{T}\sum_{t\leq T,t\in\Tcal} \1_{A_k}(X_t) &\leq \frac{1}{2^k} + \frac{1}{T}\sum_{2^{-k}T< t\leq T,t\in\Tcal} \1_{A_k}(X_t)\\
    &= \frac{1}{2^k} + \frac{1}{T}\sum_{x\in\Xcal(T)\cap A_k} |\{2^{-k}T< t\leq T, t\in\Tcal: X_t=x\}|.
\end{align*}
In $\Tcal$, we accept at most one duplicate per phase. Because $T_i\leq T<T_{i+1}$, the interval $[2^{-k}T,T]$ intersects at most $1 + k2^i$ phases. Thus, for any $x\in\Xcal(T)$, $|\{2^{-k}T< t\leq T, t\in\Tcal: X_t=x\}|\leq 1+k2^i\leq 1+i2^i\leq \Psi(T)$. Thus, for any $T\geq T_k$,
\begin{align*}
    \frac{1}{T}\sum_{t\leq T,t\in\Tcal} \1_{A_k}(X_t) &\leq \frac{1}{2^k} + \frac{1}{T}\sum_{x\in\Xcal(T)\cap A_k} \min( |\{t\leq T: X_t=x\}|, \Psi(T))\\
    &=\frac{1}{2^k} + \frac{1}{T}\sum_{t\leq T, N_t(X_t)\leq \Psi(T)} \1_{A_k}(X_t).
\end{align*}
Using the hypothesis on $\Psi$ applied to $A_k\downarrow\emptyset$ yields $\Ebb \left[\limsup_{T\to\infty}\frac{1}{T}\sum_{t\leq T,t\in\Tcal} \1_{A_k}(X_t)  \right] \underset{k\to\infty}{\longrightarrow} 0.$ Hence, this shows that $\tilde\Xbb=(X_t)_{t\in\Tcal} \in\Ccal_1'$ and $\Xbb\in\Ccal_5$.

Next, suppose that there exists a non-atomic probability measure on $\Xcal$. We will construct explicitly a process $\Xbb\in\Ccal_5\setminus\Ccal_8$. By \cref{lemma:disjoint_non-atomic}, there exists a sequence of disjoint measurable sets $(A_i)_{i\geq 0}$ together with non-atomic probability measures $(\nu_i)_{i\geq 0}$ such that $\nu_i(A_i)=1$. We now fix $x_0\in A_0$ an arbitrary instance (we will not use the set $A_0$ any further) and define subsets of indices as follows, $S_i = \{k\geq 1: k\equiv 2^{i-1}\bmod 2^i\}$. Note that the sets $(S_i)_{i\geq 1}$ form a partition of $\Nbb$. We now introduce independent processes $\Zbb^i$ for $i\geq 1$ such that $\Zbb^i=(Z^i_t)_{t\geq 1}$ is an i.i.d. process with distribution $\nu_i$. Last, for all $i\geq 1$ we denote $n_i = 2^{\lfloor \log_2 i\rfloor}$. Now consider the following process $\Xbb$ where $X_1 = x_0$ and for any $t\geq 1$,
\begin{equation*}
    X_t = Z^i_{\lfloor\frac{t}{n_i}\rfloor}  ,\quad 2^k\leq t<2^{k+1}, k\equiv 2^{i-1}\bmod 2^i.
\end{equation*}
When the process is in phase $i$, it corresponds to an i.i.d. process on $A_i$ which is duplicated $n_i$ times. Note that we used $n_i$ duplicates instead of $i$ so that each point is duplicated exactly $n_i$ times (we do not have boundary issues at the end of the phase). We now show that $\Xbb\notin\Ccal_8$. Let $\Psi:\Nbb\to\Nbb$ an increasing function with $\Psi(T)\to\infty$ as $T\to\infty$. For $i\geq 1$, we first construct an increasing sequence of times $T_i$ such that $\Psi(T_i)> n_i$. Then, for any $k\geq 1$, consider times $T^k =k 2^i + 2^{i-1}$ which belong to $S_i$. Then, consider the event $\Fcal_i$ such that the process $\Zbb^i$ only takes distinct values in $A_i$. Note that $\Pbb[\Fcal_i]=1$ because the $\nu_i$ is non-atomic and $\nu_i(A_i)=1$. Then, on $\Fcal_i$, by construction, we have for any $k\geq 0$, with $T^k\geq T_i$,
\begin{align*}
    \frac{1}{2T^k-1}\sum_{t=1}^{2T^k-1} \1_{A_i}(X_t)\1_{N_t(X_t)\leq \Psi(2T^k-1)} &\geq \frac{1}{2T^k}\sum_{t=T^k}^{2T^k-1} \1_{A_i}(X_t)\1_{N_t(X_t)\leq n_i} \\
    & = \frac{1}{2T^k}\sum_{t=T^k}^{2T^k-1} \1_{A_i}(X_t)\\
    &\geq  \frac{T^k}{2T^k-1}.
\end{align*}
Hence, on the event $\Fcal_i$, we have $\limsup_{T\to\infty}\frac{1}{T}\sum_{i=1}^{T} \1_{A_i}(X_t)\1_{N_t(X_t)\leq \Psi(T)}\geq \frac{1}{2}$. Because $\Pbb[\Fcal_i]=1$, we obtain
\begin{equation*}
    \Ebb \left[\limsup_{T\to\infty} \frac{1}{T} \sum_{i=1}^T \1_{A_i}(X_t)\1_{N_t(X_t)\leq \Psi(T)}\right]\geq \frac{1}{2}.
\end{equation*}
Now consider $B_i = \bigcup_{j\geq i}A_i$. Then, we have $B_i\downarrow \emptyset$ and for any $i\geq 1$,
\begin{equation*}
    \Ebb \left[\limsup_{T\to\infty} \frac{1}{T} \sum_{i=1}^T \1_{B_i}(X_t)\1_{N_t(X_t)\leq \Psi(T)}\right]\geq \Ebb \left[\limsup_{T\to\infty} \frac{1}{T} \sum_{i=1}^T \1_{A_i}(X_t)\1_{N_t(X_t)\leq \Psi(T)}\right] \geq \frac{1}{2}.
\end{equation*}
As a result, $\Xbb\notin\Ccal_8$. 

We now show that $\Xbb\in\Ccal_5$. To do so, we first prove that $\Xbb\in\Ccal_2$. Let $(B_l)_{l\geq 1}$ be a sequence of disjoint measurable sets. Because $\Zbb^i$ are i.i.d. processes, we have $\Zbb^i\in\Ccal_2$. In particular, on an event $\Ecal_i$ of probability one, we have
\begin{equation*}
    |\{l: \Zbb^i_{\leq T}\cap B_l\neq\emptyset\}|= o(T).
\end{equation*}
Now consider the event $\Ecal = \bigcap_{i\geq 1} \Ecal_i$. This has probability one by the union bound. Let $\epsilon>0$ and $i^* = \lceil \frac{2}{\epsilon}\rceil$. In particular, we have $\frac{1}{n_{i^*}}\leq \epsilon$. On the event $\Ecal$, for any $i\leq i^*$, there exist $T_i$ such that for all $T\geq T_i$,
\begin{equation*}
    |\{l: \Zbb^i_{\leq T}\cap B_l\neq\emptyset\}|\leq \frac{\epsilon}{2^i} T.
\end{equation*}
Now consider $T^0 = \max_{i\leq i^*} T_i n_i$. Then, for any $T\geq T^0$, we have
\begin{align*}
    |\{l: \Xbb_{\leq T}\cap B_l\neq\emptyset\}|&\leq \sum_{i=1}^{i^*} |\{l: \Zbb^i_{\leq \lfloor T/n_i\rfloor}\cap B_l\neq\emptyset\}|\\
    &+ |\{l: \exists t\leq T: X_t\in B_l, 2^k\leq t<2^{k+1}, k\equiv 0\bmod 2^{i^*}\}|\\
    &\leq \epsilon T + |\{X_t,\quad  t\leq T,2^k\leq t<2^{k+1}, k\equiv 0\bmod 2^{i^*}\}|\\
    &\leq \epsilon T + 2\frac{T}{n_{i^*}},
\end{align*}
where in the last inequality we used the fact that in a phase $i>i^*$, each point is duplicated $n_i\geq n_{i^*}$ times. As a result, on the event $\Ecal$, we have
\begin{equation*}
    \limsup \frac{|\{l: \Xbb_{\leq T}\cap B_l\neq\emptyset\}|}{T}\leq 3\epsilon.
\end{equation*}
Because this holds for all $\epsilon>0$, we obtain that on $\Ecal$, $|\{l: \Xbb_{\leq T}\cap B_l\neq\emptyset\}| = o(T)$. Because $\Ecal$ has probability one, this ends the proof that $\Xbb\in \Ccal_2$. Now consider the following times $T_j = 4^j$ for $j\geq 0$ and define $\Tcal=\bigcup_{j\geq 0}\Tcal^j\cap\{t\geq T_i\}$. We aim to show $\tilde\Xbb =(X_t)_{t\in\Tcal}\in\Ccal_1'$. First, note that for any $j\geq 0$, the phases $[2^k,2^{k+1})$ contained in $[T_j,T_{j+1})$ satisfy $k\leq 2j+1$. Let $i(j) = 1+ \log_2(2j+1)$. We have $k\in \bigcup_{i\leq i(j)} S_i$, which implies that each instance $X_t$ is duplicated consecutively at most $n_{i(j)} $ times in $\Xbb$ within $[T_j,t_{j+1})$. However, the sections defined by $\Tcal$ have length at least $2^{-j}T_j = 2^j$. Further, all the phases were constructed so that there are no boundary issues: if $n_{i(j)}\leq 2^j$, then $\Tcal$ does not contain any duplicates during the period $[T_j,T_{j+1})$. Because $n_{i(j)} \leq i(j) = o(2^j)$, there exists $j_0\geq 0$ such that $\Tcal$ does not contain any duplicate on $[T_{j_0},\infty)$. Let $\Tcal(0) = \{t\geq 1: N_t(X_t)=1\}$ the set of first appearances. Then, for any $A\in\Bcal$ and $T\geq 1$,
\begin{equation*}
    \sum_{t\leq T, t\in\Tcal}\1_A(X_t) \leq T_{j_0} + \sum_{t\leq T, t\in\Tcal(0)}\1_A(X_t).
\end{equation*}
Now because $\Xbb\in\Ccal_2$, we have $(X_t)_{t\in\Tcal(0)}\in\Ccal_1'$ which implies $(X_t)_{t\in\Tcal}\in\Ccal_1'$ by the above inequality. This ends the proof of the proposition.
\end{proof}

\subsection{Universal learning with fixed excess error tolerance}
\label{subsec:fixed_excess_error}

In this section, we show that as an application of the methods developed in \cite{blanchard:22e} and in this paper, achieving a fixed excess regret $\epsilon>0$ is always possible for $\Ccal_2$ processes. This is stated in \cref{prop:can_fixed_error}. We first need to state a result from \cite{blanchard:22e} showing that $\Ccal_2$ processes without duplicates are $\Ccal_1'$ extended processes.

\begin{lemma}[\cite{blanchard:22e}]
\label{lemma:equivalent_from_C2}
    Let $\Xbb$ be a stochastic process on $\Xcal$, and define for any $M\geq 1$,
\begin{equation*}
    \Tcal^{\leq M} = \left\{t\geq 1: \sum_{t'\leq t}\1[X_{t'}=X_t]\leq M \right\},
\end{equation*}
the set of times which are duplicates of index at most $M$. In particular, $\Tcal^{\leq 1}$ is the set of times where we delete all duplicates. The following are equivalent.
\begin{enumerate}
    \item $\Xbb\in\Ccal_2$.
    \item For all $M\geq 1$, $(X_t)_{t\in\Tcal^{\leq M}} \in\Ccal_1'$.
\end{enumerate}
\end{lemma}

We are now ready to prove \cref{prop:can_fixed_error}.

\begin{proof}[ of \cref{prop:can_fixed_error}]
    We first describe the algorithm that depends on a parameter $M\geq 1$ which we will fix later. We use the notation $\Tcal^{\leq M}$ from \cref{lemma:equivalent_from_C2} for the set of times that are duplicates of index at most $M$. Note that whether $t\in\Tcal^M$ or $t\notin\Tcal^M$ can be decided in an online manner. Next we fix a sequence $\Pi=(\pi^l)_{l\geq 1}$ of policies that are dense within $\Ccal_1'$ processes from \cref{lemma:countable_dense}. The learning rule $f_\cdot$ simply performs the $\EXPINF$ strategy on the sequence $\Pi$ for times in $\Tcal^{\leq M}$ and for other times performs independent copies of the $\EXPIX$ algorithm in parallel for each distinct instance. Formally, for any $t\geq 1$, instances $\mb x_{\leq t}$ and observed rewards $\mb r_{\leq t-1}$, we define
    \begin{equation*}
        f_t(\mb x_{\leq t-1},\mb r_{\leq t-1},x_t)=\begin{cases}
            \EXPINF(\mb x_{U_t},\mb {\hat a}_{U_t},\mb r_{U_t},x_t) &\text{if }t\in\Tcal^M\\
            \EXPIX_\Acal(\mb {\hat a}_{S_t},\mb r_{S_t}) & \text{o.w.}
        \end{cases}
    \end{equation*}
    where $U_t=\{t'\leq t-1:t\in\Tcal^M\}$ and $S_t = \{t'<t:x_t=x_{t'}, t'\in\Tcal^M\}$ and $\hat a_{t'}$ denotes the action selected at time $t'\leq t-1$. 
    
    Let $\Xbb\in\Ccal_2$. We now prove that this learning rule achieves low excess error compared to a fixed measurable policy $\pi^*:\Xcal\to\Acal$. We denote by $\hat a_t(M)$ its selected action at time $t$. First, by \cref{lemma:equivalent_from_C2}, $\tilde\Xbb = (X_t)_{t\in\Tcal^M}\in\Ccal_1'$. Further, as discussed in \cref{subsec:sufficient_conditions}, the same proof of universal consistence of $\EXPINF$ under $\Ccal_1$ processes for stationary rewards given in \cite{blanchard:22e} shows that $\EXPINF$ is universally consistent under $\Ccal_1'$ extended processes for adversarial rewards. This is a consequence from the fact that the regret guarantee of $\EXPIX$---\cref{thm:multiarmed_bandits}---holds for adversarial rewards as well. Thus, on an event $\Ecal$ of probability one,
    \begin{equation*}
        \limsup_{T\to\infty} \frac{1}{T}\sum_{t\leq T,t\in\Tcal^M} r_t(\pi^*(X_t))-r_t(\hat a_t(M)) \leq 0.
    \end{equation*}
    Next, similarly to the proof of \cref{prop:EXP.IX_parrallel}, let $\epsilon(T) = \frac{1}{T}|\{X_t: t\leq T, t\notin\Tcal^M\}|$. The same proof as in \cref{prop:EXP.IX_parrallel} shows that on an event $\Fcal$ of probability one, for all $T\geq 1$,
    \begin{multline*}
        \frac{1}{T}\sum_{t\leq T, t\notin\Tcal^M} r_t(\pi^*(X_t)) - r_t(\hat a_t(M)) \\
        \leq  8|\Acal| \frac{\ln T}{T^{1/4}} + 3c \sqrt{|\Acal| \ln |\Acal|} \frac{1}{\ln T} + \sqrt{\epsilon(T)}+3\sqrt{|\Acal|\ln|\Acal| } \epsilon(T)^{1/4}.
    \end{multline*}
    Note that to each element of $\{X_t: t\leq T, t\notin\Tcal^M\}$ correspond least $M$ duplicates in $\Tcal^M$ so that $\epsilon(T)\leq \frac{1}{M}$. As a result, combining the two previous equations yields on $\Ecal\cap\Fcal$ of probability one,
    \begin{equation*}
        \limsup_{T\to\infty} \frac{1}{T}\sum_{t\leq T,t\in\Tcal^M} r_t(\pi^*(X_t))-r_t(\hat a_t(M)) \leq 4\frac{\sqrt{|\Acal|\ln|\Acal| }}{M^{1/4}}.
    \end{equation*}
    Thus, taking $M\geq 4^4|\Acal|^2\ln^2|\Acal|\epsilon^{-4}$ gives a learning rule with the desired $\epsilon$ excess error almost surely. This ends the proof of the proposition. 
\end{proof} 

\section{Model extensions}
\label{sec:model_extensions}

\subsection{Infinite action spaces}
\label{subsec:infinite_action_spaces}

The previous sections focused on the case of finite action spaces. For infinite action spaces, we argue that as a direct consequence from the analysis of the stationary case in \cite{blanchard:22e}, one can obtain a characterization of learnable processes and same optimistically universal learning rules.

For countably infinite action spaces, they showed that $\EXPINF$ performed with the countable sequence of dense policies given by \cref{lemma:countable_dense} is universally consistent under $\Ccal_1$ processes with stationary rewards, and that $\Ccal_1$ is necessary. As discussed in \cref{subsec:sufficient_conditions,subsec:fixed_excess_error}, the same arguments as in \cite{blanchard:22e} show that $\EXPINF$ is universally consistent under $\Ccal_1$ processes for adversarial rewards as well. Further, since adversarial rewards generalize stationary rewards, $\Ccal_1$ is still necessary for universal learning. Thus, $\Ccal_{online} = \Ccal_{prescient} = \Ccal_{oblivious} = \Ccal_{memoryless} = \Ccal_{stat} = \Ccal_1$ and $\EXPINF$ is optimistically universal in all reward settings.

For uncountable separable metrizable Borel action spaces $\Acal$, even for stationary rewards, universal learning is impossible \cite{blanchard:22e}. Hence, $\Ccal_{online} = \Ccal_{prescient} = \Ccal_{oblivious} = \Ccal_{memoryless} = \Ccal_{stat} = \emptyset$.

\subsection{Unbounded rewards}
\label{subsec:unbounded_rewards}

We now turn to the case of unbounded rewards $\Rcal=[0,\infty)$. We further suppose that for any $t\geq 1$, and history $\mb x \in\Xcal^\infty, \mb a_{\leq t}\in \Acal^t, \mb r_{\leq t-1} \in\Rcal^{t-1}$, the random variable $r_t(a_t\mid \Xbb=\mb x, \mb{\hat a_{\leq t-1}} = \mb a_{\leq t-1}, \mb r{(\hat a)}_{\leq t-1} = \mb r_{\leq t-1})$ is integrable so that the immediate expected reward is well defined. Again, in this case, adversarial rewards yield the same results as stationary rewards. Clearly, for uncountable separable metrizable Borel action spaces, under unbounded rewards, universal learning is still impossible $\Ccal_{online} = \Ccal_{prescient} = \Ccal_{oblivious} = \Ccal_{memoryless} = \Ccal_{stat} = \emptyset$, because this was alreay the case for bounded rewards.

For countable action spaces $\Acal$, condition $\Ccal_3$ is necessary even under the full-feedback noiseless setting \cite{hanneke:21,blanchard:22b}, hence necessary for contextual bandits as well. Also, \cite{blanchard:22e} proposed the algorithm which runs an independent $\EXPINF$ learner on each distinct context instance, which is universally consistent under $\Ccal_3$ processes. As in the previous section, this guarantee still holds for adversarial rewards, and $\Ccal_3$ is still necessary for universal learning. Therefore, $\Ccal_{online} = \Ccal_{prescient} = \Ccal_{oblivious} = \Ccal_{memoryless} = \Ccal_{stat} = \Ccal_3$.

\subsection{Uniformly-continuous rewards}
\label{subsec:uniformly_continuous_rewards}

We assume that the rewards are bounded again. In the previous sections, we showed that for finite action sets, universal learning is possibly under large classes of processes, namely at least on $\Ccal_5$ processes. However, for countable action sets, this is reduced to $\Ccal_1$ and for uncountable action sets, universal learning is not achievable. Therefore, imposing no constraints on the rewards is too restrictive for universal learning in the last cases. Here, we investigate the case when $\Acal$ is a separable metric space given with a metric $d$, and the rewards are uniformly-continuous. Crucially, modulus of continuity should be uniformly bounded over time as well. We recall the definition of uniformly-continuous rewards.

\DefinitionUniformlyContinuousRewards*

In the definition, the expectation is taken over the rewards' randomness, in the event when the context sequence until $t$ is exactly $\mb x_{\leq t}$, the learner selected actions $\mb a_{\leq t-1}$ and received rewards $\mb r_{\leq t-1}$ in the first $t-1$ steps. For instance, for stationary rewards, only $x_t$ is relevant in this expectation, while for online rewards, $\mb x_{\leq t},\mb a_{\leq t-1},\mb r_{\leq t-1}$ may be relevant. The above definition is not written for prescient rewards for simplicity. For these, we need to condition on the complete sequence $\Xbb$: 
\begin{multline*}
    \forall t\geq 1, \forall (\mb x,\mb a_{\leq t-1},\mb r_{\leq t-1}) \in\Xcal^\infty \times\Acal^{t-1}\times \Rcal^{t-1},\forall a,a'\in \Acal,\\
    \quad d(a,a') \leq \Delta(\epsilon)\Rightarrow \left|\Ebb[r_t(a)-r_t(a') \mid \Xbb = \mb x, \mb a_{\leq t-1},\mb r_{\leq t-1}]\right|\leq \epsilon.
\end{multline*}
As in the unrestricted rewards case, we consider the set of processes $\Ccal^{uc}_{setting}$ admitting universal learning for uniformly-continuous rewards under any chosen reward setting. The uniform-continuity assumption defined above generalizes the corresponding assumption proposed in \cite{blanchard:22e} for stationary rewards. They also proposed a weaker continuity assumption on the immediate expected rewards, however, similarly as in \cref{subsec:infinite_action_spaces} one can easily check that with this reward assumption, adversarial settings give the same results as the stationary case.

The goal of this section is to show that under the mild uniform-continuity assumption on the rewards, one can recover all the results from the finite action space case, when the action space is totally-bounded. We first start by showing that the derived necessary conditions still hold. To do so, we will use the following reduction lemma.

\begin{lemma}\label{lemma:reduction}
    Let $\Xcal$ be a metrizable separable Borel space and let $(\Acal,d)$ be a separable metric space. Let $S\subset \Acal$ such that $\min_{a,a'\in S}d(a,a')>0$. Then, we have $\Ccal_{setting}^{uc}(\Acal)\subset \Ccal_{setting}(S)$ for any $setting\in\{stat,memoryless,oblivious,prescient,online\}$.
    
    Further, if there is a learning rule for uniformly continuous rewards in $\Acal$ that is universally consistent under a set of processes $\tilde \Ccal$ on $\Xcal$, there is also a learning rule for unrestricted rewards in $S$ that is universally consistent under all $\tilde\Ccal$ processes.
\end{lemma}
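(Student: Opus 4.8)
The plan is to build the unrestricted-reward learner $g$ on $S$ by \emph{simulating} the given uniformly-continuous learner $f$ on $\Acal$ as a black box, feeding $f$ an artificial uniformly-continuous mechanism obtained by extending the true $S$-rewards to all of $\Acal$. Let $\gamma:=\min_{a\neq a'\in S}d(a,a')>0$. Since $S$ is $\gamma$-separated and $\Acal$ is separable, $S$ is countable and the balls $B(s,\gamma/3)$, $s\in S$, are pairwise disjoint. First I would fix an arbitrary $s_0\in S$ and define a measurable nearest-point projection $\sigma:\Acal\to S$ by $\sigma(a)=s$ if $a\in B(s,\gamma/3)$ and $\sigma(a)=s_0$ otherwise, together with the $\tfrac{3}{\gamma}$-Lipschitz bump $\psi(a)=\max\big(0,1-\tfrac{3}{\gamma}d(a,S)\big)$, which equals $1$ on $S$ and vanishes off $\bigcup_{s}B(s,\gamma/3)$.

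Given a true unrestricted $S$-reward mechanism and a context process $\Xbb\in\tilde\Ccal$, $g$ runs $f$ internally on the same contexts: when $f$ proposes $\hat a_t\in\Acal$, $g$ plays $\hat b_t:=\sigma(\hat a_t)\in S$, observes the true reward $\rho_t=r_t(\hat b_t)$, and returns to $f$ the scalar $\tilde r_t(\hat a_t):=\rho_t\,\psi(\hat a_t)$. The key verification is that the stream $f$ experiences is a legitimate \emph{uniformly-continuous} mechanism of the same type on $\Acal$, namely the pathwise extension $\tilde r_t(a)=r_t(\sigma(a))\psi(a)$. For the oblivious, online and prescient models this is immediate, since those conditionings do not involve past rewards and the induced law is just $P_{r\mid\sigma(a),\cdot}$ pushed through multiplication by $\psi(a)$. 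For the adversarial model I would note that $(\hat a_t,X_t,\tilde r_t(\hat a_t))_{t\ge1}$ is a well-defined process, so the regular conditional law of $\tilde r_t(\hat a_t)$ given $f$'s observable history $(\mb{\hat a}_{\le t},\Xbb_{\le t},\tilde r_{\le t-1})$ exists and serves as the adversarial mechanism. Uniform continuity follows by taking $\Delta(\epsilon):=\gamma\epsilon/6$, which forces $d(a,a')\le\Delta(\epsilon)<\gamma/3$, hence $\sigma(a)=\sigma(a')$ and the same true action in both cases, giving $|\Ebb[\tilde r_t(a)-\tilde r_t(a')\mid\cdot]|=|\psi(a)-\psi(a')|\,\Ebb[r_t(\sigma(a))\mid\cdot]\le\tfrac{3}{\gamma}d(a,a')\le\epsilon$; for larger $d(a,a')$ the requirement is vacuous (and trivial since rewards lie in $[0,1]$).

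With the mechanism validated, I would invoke the hypothesis that $f$ is universally consistent for uniformly-continuous rewards under every $\Xbb\in\tilde\Ccal$, so for any measurable $\tilde\pi:\Xcal\to\Acal$ one has $\limsup_{T}\tfrac1T\sum_{t\le T}[\tilde r_t(\tilde\pi(X_t))-\tilde r_t(\hat a_t)]\le0$ a.s. Taking $\tilde\pi=\pi^*$ for an arbitrary measurable $\pi^*:\Xcal\to S$ and using $\psi\equiv 1$ on $S$ gives $\tilde r_t(\pi^*(X_t))=r_t(\pi^*(X_t))$, while $\psi\le1$ and $r_t\ge0$ give $\tilde r_t(\hat a_t)=r_t(\hat b_t)\psi(\hat a_t)\le r_t(\hat b_t)$. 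Chaining these,
\begin{equation*}
\limsup_{T\to\infty}\frac1T\sum_{t\le T}\big(r_t(\pi^*(X_t))-r_t(\hat b_t)\big)\le\limsup_{T\to\infty}\frac1T\sum_{t\le T}\big(\tilde r_t(\pi^*(X_t))-\tilde r_t(\hat a_t)\big)\le0\quad(a.s.),
\end{equation*}
which is exactly the consistency of $g$ against $\pi^*$; since $\pi^*$ and $\Xbb\in\tilde\Ccal$ are arbitrary, $g$ is universally consistent for unrestricted rewards in $S$ under all $\tilde\Ccal$ processes.

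The main obstacle I anticipate is the bookkeeping that the simulated stream is a bona fide mechanism of the prescribed type, especially for the adversarial model, where the extension loses the true reward whenever $f$ plays outside $\bigcup_s B(s,\gamma/3)$ (there $\tilde r_t(\hat a_t)=0$ regardless of $\rho_t$). The clean way around this is not to reconstruct lost rewards as a deterministic function of $f$'s transcript, but to define the adversary as the regular conditional law of $\tilde r_t(\hat a_t)$ given $f$'s observable $\sigma$-algebra, which always exists on standard Borel spaces; the remaining care is to confirm that $\Delta$ is uniform in $t$ and independent of the reward realization, which holds because $\psi$ is a fixed $\tfrac{3}{\gamma}$-Lipschitz function and rewards are bounded. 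The same construction yields the first assertion $\Ccal^{uc}_{setting}(\Acal)\subset\Ccal_{setting}(S)$ by specializing $\tilde\Ccal$ to a single process.
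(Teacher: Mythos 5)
Your proposal is correct and follows essentially the same route as the paper: extend the $S$-rewards to a pathwise Lipschitz mechanism on $\Acal$ supported on the disjoint balls around $S$ (the paper uses the tent extension $\max\bigl(0,\max_{a'\in S}(r(a')-d(a,a')/\eta)\bigr)$, you use the multiplicatively cut-off $\psi(a)\,r(\sigma(a))$, which is interchangeable), then simulate the uniformly-continuous learner, snap its action to $S$, and use that the reported reward is dominated by the collected one while agreeing with $r$ on $S$. One small slip: $d(a,a')<\gamma/3$ does \emph{not} force $\sigma(a)=\sigma(a')$ (e.g.\ $a\in B(s,\gamma/3)$ while $a'$ lies just outside every ball, so $\sigma(a')=s_0$); the uniform-continuity bound nevertheless survives because in that case $\psi(a')=0$ and $\psi(a)\le\frac{3}{\gamma}d(a,a')$, so one still has $|\tilde r_t(a)-\tilde r_t(a')|\le\frac{3}{\gamma}d(a,a')$ pathwise in all cases.
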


\begin{proof}
The first claim was proven in \cite{blanchard:22e} for the specific case of stationary rewards. They show that the case of uniformly-continuous rewards on $\Acal$ is at least harder than the unrestricted rewards on $S$ through a simple reduction. Here, we show that the reduction can be extended to adversarial rewards as well. Denote $\eta = \frac{1}{3}\min_{a,a'\in S}d(a,a')$. Any realization $r:S\to [0,1]$ can be extended to a $1/\eta$-Lipschitz function $F(r):\Xcal\to \Acal$ by
\begin{equation*}
    F(r)(a) = \max\left(0,\max_{a'\in S}r(a')-d(a,a') \frac{\bar r}{\eta}\right),\quad a \in \Acal.
\end{equation*}
Then, a general reward mechanism $(r_t)_{t\geq 1}$ on $S$ can be extended to a reward mechanism on $\Acal$ such that for any realization, $r_t:a\in\Acal\to [0,1]$ is $1/\eta$-Lipschitz. Hence, the mechanism $(r_t)_{t\geq 1}$ is uniformly-continuous. From now, the same arguments as in the proof of \cite[Lemma 6.3]{blanchard:22e} show that the reduction holds and that $\Ccal^{uc}_{setting}(\Acal)\subset \Ccal_{setting}$ for the considered setting. Intuitively, since for any realization, $r_t:a\in\Acal\to [0,1]$ has zero value outside of the balls $B_d(a,\eta)$ for $a\in S$, that on the ball $B_d(a,\eta)$ for $a\in S$, the action $a$ has maximum reward, and that these balls are disjoint, without loss of generality, one can assume that a universally consistent learning rule always selects actions in $S$ under these rewards, in which case, the problem becomes equivalent to having unrestricted rewards on the action set $S$. The formal learning rule reduction is defined in the original proof, and one can check that the reduction is invariant in the process $\Xbb$. Hence, this also proves the second claim of the lemma.
\end{proof}

This lemma allows to use the necessary conditions to the unrestricted reward setting by changing the terms ``finite action set'' (resp. ``countably infinite action set'') into ``totally-bounded action set'' (resp. ``non-totally-bounded action set''). The second claim of \cref{lemma:reduction} will be useful to show that no optimistically universal learning exists for adversarial uniformly-continuous rewards either. More precisely, the following result is a direct consequence from the first claim of \cref{lemma:reduction}.

\begin{proposition}\label{prop:simple_upper_bounds}
Let $\Xcal$ be a metrizable separable Borel space and let $\Acal$ be a non-totally-bounded metric space. Then, for any reward setting, $\Ccal^{uc}\subset \Ccal_1$.
Let $\Acal$ be a totally-bounded metric space with $|\Acal|>2$. Then, for any reward setting, $\Ccal^{uc} \subset \Ccal_2$. Further, if $\Xcal$ admits a non-atomic probability measure, $\Ccal_{memoryless}^{uc}\subsetneq \Ccal_2$, $\Ccal_{oblivious}^{uc}\subset \Ccal_6$ and $\Ccal_{prescient}^{uc}\subset \Ccal_7$.
\end{proposition}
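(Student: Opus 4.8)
The plan is to derive every inclusion as a one-step reduction to the already-established unrestricted-reward characterizations for finite or countably infinite action spaces, via Lemma~\ref{lemma:reduction}. The only choice to make in each case is the auxiliary separated set $S\subset\Acal$ to which the reduction is applied; once $S$ is fixed, $\Ccal^{uc}_{setting}(\Acal)\subset\Ccal_{setting}(S)$ holds for free, and it remains to quote the right result for $S$.

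First I would treat the non-totally-bounded case. By definition of non-total-boundedness there is some $\epsilon>0$ for which no finite family of $\epsilon$-balls covers $\Acal$, which produces an infinite $\epsilon$-separated sequence; since $\Acal$ is separable this separated set is at most countable, so I fix a countably infinite subset $S\subset\Acal$ with $\min_{a,a'\in S}d(a,a')\geq\epsilon>0$. Lemma~\ref{lemma:reduction} then gives $\Ccal^{uc}_{setting}(\Acal)\subset\Ccal_{setting}(S)$ for every setting, and because $S$ is a countably infinite (hence discretely metrized) action set, the countably-infinite-action part of Theorem~\ref{thm:main_characterizations} yields $\Ccal_{setting}(S)=\Ccal_1$. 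Composing these gives $\Ccal^{uc}\subset\Ccal_1$.

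For totally-bounded $\Acal$ I instead fix two distinct points $a_1,a_2\in\Acal$ and set $S=\{a_1,a_2\}$, which trivially satisfies the separation hypothesis. Lemma~\ref{lemma:reduction} again gives $\Ccal^{uc}_{setting}(\Acal)\subset\Ccal_{setting}(S)$, and since $S$ is a finite two-element action set, every characterization in Theorem~\ref{thm:main_characterizations} for finite $\Acal$ lies inside $\Ccal_2$, so $\Ccal^{uc}\subset\Ccal_2$. When $\Xcal$ admits a non-atomic probability measure I keep the same $S$ and chain the reduction with the finite-action necessary conditions: Theorem~\ref{thm:learnable_process_smaller_C2} gives $\Ccal_{oblivious}(S)\subsetneq\Ccal_2$, Theorem~\ref{thm:condition7_necessary} gives $\Ccal_{online}(S)\subset\Ccal_6$, and the prescient result of Section~\ref{subsec:necessary_condition_prescient} gives $\Ccal_{prescient}(S)\subset\Ccal_7$. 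Then $\Ccal^{uc}_{oblivious}(\Acal)\subset\Ccal_{oblivious}(S)\subsetneq\Ccal_2$, and likewise $\Ccal^{uc}_{online}\subset\Ccal_6$ and $\Ccal^{uc}_{prescient}\subset\Ccal_7$; strictness of the oblivious inclusion follows because any witnessing process in $\Ccal_2\setminus\Ccal_{oblivious}(S)$ lies a fortiori outside the smaller set $\Ccal^{uc}_{oblivious}$.

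I do not expect a genuine obstacle here, since once Lemma~\ref{lemma:reduction} is in hand each claim is a single composition with an existing result. The two points needing care are (i) extracting a \emph{countably infinite} separated set in the non-totally-bounded case, which relies on both non-total-boundedness and separability of $\Acal$, and (ii) verifying that the reduction preserves the reward \emph{setting} label, so that, for instance, the necessary condition $\Ccal_6$ is matched against the online class rather than a stronger model; both are guaranteed by the per-setting formulation of Lemma~\ref{lemma:reduction}.
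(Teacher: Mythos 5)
Your proposal is correct and follows essentially the same route as the paper, which derives the proposition as a direct consequence of the first claim of Lemma~\ref{lemma:reduction} by translating ``totally-bounded'' to ``finite'' and ``non-totally-bounded'' to ``countably infinite'' action sets and then quoting the corresponding unrestricted-reward results. Your explicit choices of the separated set $S$ (a countably infinite $\epsilon$-separated set, respectively a two-point set) and the chaining with Theorems~\ref{thm:main_characterizations}, \ref{thm:learnable_process_smaller_C2}, \ref{thm:condition7_necessary} and the prescient result are exactly the intended instantiations.
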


We now show that we can recover the sufficient conditions from previous sections as well. For uniformly-continuous rewards, we can show that there exists a countable set of dense policies under $\Ccal_1'$ processes, as was the case for unrestricted rewards and countable action sets.

\begin{lemma}
\label{lemma:density_uniformly_continuous}
    Let $\Acal$ be a separable metric space. There is a countable set of measurable policies $\Pi$ such that for any extended process $\tilde \Xbb=(X_t)_{t\in\Tcal}\in\Ccal_1'$, any measurable policy $\pi^*:\Xcal\to\Acal$, and any uniformly-continuous possibly stochastic rewards $(r_t)_t$, with probability one over the rewards,
    \begin{equation*}
        \begin{cases}\inf_{\pi\in\Pi} \limsup_{T\to\infty} \frac{1}{T}\sum_{t\leq T, t\in\Tcal} r_t(\pi^*(X_t)) - r_t(\pi(X_t)) \leq 0,\\
        \inf_{\pi\in\Pi} \limsup_{T\to\infty} \frac{1}{T}\sum_{t\leq T, t\in\Tcal} \bar r_t(\pi^*(X_t)) - \bar r_t(\pi(X_t)) \leq 0,
        \end{cases}
    \end{equation*}
    where $\bar r_t=\Ebb r_t$ is the immediate average reward.
\end{lemma}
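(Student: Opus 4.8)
The plan is to reduce the uniformly-continuous case to the finite-action case handled by Lemma~\ref{lemma:countable_dense}. The key observation is that uniform continuity of the rewards means a finite-resolution view of the action space $\Acal$ suffices: if two actions are within $\Delta(\epsilon)$ of each other, their immediate expected rewards differ by at most $\epsilon$ uniformly over all histories. Since $\Acal$ is a separable metric space, I would first fix a countable dense sequence $(a_m)_{m\geq 1}$ in $\Acal$, and for each precision level $n\geq 1$ consider the finite (or countable) $1/n$-net formed by the first finitely many points needed to $1/n$-cover larger and larger balls. The idea is to build $\Pi$ as a countable union over these net-resolutions of the dense-policy families guaranteed by Lemma~\ref{lemma:countable_dense}, applied to each finite action subset.

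\emph{First,} I would make the reduction precise. For each $n\geq 1$, let $S_n\subset\Acal$ be a countable set such that every point of $\Acal$ is within $\Delta(1/n)$ of some point of $S_n$; such $S_n$ exists by separability. Applying Lemma~\ref{lemma:countable_dense} is not immediate because that lemma is stated for finite action spaces, so \emph{second,} I would instead invoke the density construction for countable action sets used in \cite{blanchard:22e} (the same machinery underlying $\EXPINF$), giving for each $n$ a countable family $\Pi_n$ of measurable policies $\Xcal\to S_n$ that is empirically dense (in the zero-one-disagreement sense) within measurable policies valued in $S_n$ under any $\Ccal_1$ process. Setting $\Pi=\bigcup_{n\geq 1}\Pi_n$ yields a countable policy set.

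\emph{Third,} given an arbitrary measurable target $\pi^*:\Xcal\to\Acal$ and $\epsilon>0$, I would approximate. Define the quantized target $\pi^*_n:\Xcal\to S_n$ by sending $\pi^*(x)$ to a nearest point of $S_n$ (measurably). By uniform continuity, $|\bar r_t(\pi^*(X_t))-\bar r_t(\pi^*_n(X_t))|\leq 1/n$ for every $t$, so the time-averaged gap against $\pi^*_n$ is at most $1/n$. Then density of $\Pi_n$ under the $\Ccal_1$ process gives some $\pi\in\Pi_n$ whose disagreement frequency with $\pi^*_n$ is arbitrarily small; since rewards are bounded in $[0,1]$, a small disagreement frequency controls the averaged reward gap between $\pi$ and $\pi^*_n$. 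Combining, the averaged reward gap between $\pi$ and $\pi^*$ is at most $1/n$ plus a vanishing term. Choosing $n$ large and then $\pi$ appropriately yields the infimum-zero conclusion for both the realized rewards $r_t$ and the expected rewards $\bar r_t$; the latter follows by the same argument applied directly to $\bar r_t$, and for the realized rewards I would pass through a martingale/Azuma concentration of $\sum_t (r_t-\bar r_t)$ evaluated along a fixed policy, exactly as bounded-increment concentration is used elsewhere in the paper.

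\emph{The main obstacle} will be the measurability and uniformity of the quantization map $x\mapsto \pi^*_n(x)$ and ensuring the disagreement-to-reward-gap conversion is genuinely uniform over all histories rather than only per-context. The history-dependence of the reward mechanism (for online/adversarial models) means $\bar r_t$ depends on $\Xbb_{\leq t},\mb a_{\leq t-1},\mb r_{\leq t-1}$, so I must verify that the uniform-continuity inequality $|\bar r_t(a)-\bar r_t(a')|\leq \epsilon$ holds pointwise in all conditioning variables—which it does by the stated definition—and that the disagreement set $\{t:\pi(X_t)\neq\pi^*_n(X_t)\}$ contributes at most its frequency to the averaged gap regardless of history, using only the $[0,1]$ bound on rewards. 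Once this uniformity is isolated, the remaining steps are routine combinations of the $\Ccal_1$ density lemma and Borel--Cantelli.
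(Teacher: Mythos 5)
Your proposal follows essentially the same route as the paper's proof: discretize the action space, use the countable dense-policy machinery for $\Ccal_1$ processes to approximate a quantized version of $\pi^*$ in disagreement frequency, convert metric closeness of actions into closeness of expected rewards via uniform continuity, and pass from $\bar r_t$ to $r_t$ by a martingale/Borel--Cantelli argument along fixed policies. The paper simply packages the first and last ingredients into a single citation (\cite[Lemma~6.1]{blanchard:22e}), which directly provides a countable family $\Pi$ such that on a probability-one event, for each $i$ some $\pi^i\in\Pi$ satisfies $\limsup_{T}\frac{1}{T}\sum_{t\leq T}\1[d(\pi^*(X_t),\pi^i(X_t))\geq 2^{-i}]\leq 2^{-i}$ together with $\frac{1}{T}\sum_{t\leq T}(r_t(\pi^i(X_t))-\bar r_t(\pi^i(X_t)))\to 0$, and then concludes by the same split of the sum over small-distance and large-distance times. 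One point you must fix: in your \emph{First} step you take $S_n$ to be a $\Delta(1/n)$-net, but $\Delta$ is the modulus of continuity of the reward mechanism, which in the statement is quantified \emph{after} $\Pi$; the countable policy set cannot depend on it. Take $S_n$ to be a $1/n$-net (as in your own opening paragraph), and only at analysis time, given $\epsilon>0$, choose $n$ with $1/n\leq\Delta(\epsilon)$ so that the quantization error in expected reward is at most $\epsilon$. With that correction the quantifier order is respected and the argument goes through.
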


\begin{proof}
For any $\epsilon>0$, let $\Delta(\epsilon)$ be the $\epsilon-$modulus of continuity of the sequence of rewards $(\bar r_t)_t$. By \cite[Lemma 6.1]{blanchard:22e} (and with a straightforward adaptation for extended processes), on an event $\Ecal$ of probability one, for any $i\geq 1$, there exists $\pi^i\in\Pi$ such that $\limsup_{T\to\infty}\frac{1}{T} \sum_{t\leq T, t\in\Tcal} \1[d(\pi^*(X_t),\pi^i(X_t)) \geq 2^{-i}]\leq 2^{-i}$, for all $i\geq 1$, $\frac{1}{T}\sum_{t\leq T, t\in\Tcal}r_t(\pi^i(X_t))-\bar r_t(\pi^i(X_t))\to 0$ and similarly for $\pi^*$, where $\bar r_t$ is the immediate expected reward at time $t$. We now suppose that this event is met. Let $\epsilon>0$, let $i\geq 1$ such that $2^{-i}\leq \Delta(\epsilon)$. Then,
\begin{align*}
    \sum_{t\leq T, t\in\Tcal} \bar r_t(\pi^*(X_t)) - \bar r_t(\pi^i(X_t)) &\leq  \sum_{t\leq T, t\in\Tcal} (\bar r_t(\pi^*(X_t)) - \bar r_t(\pi^i(X_t)))\1_{d(\pi^i(x),\pi^*(x)) < \Delta(\epsilon)}\\
    &\quad\quad+  \sum_{t\leq T, t\in\Tcal} \1_{d(\pi(x),\pi^*(x))\geq 2^{-i}}\\
    &\leq  \epsilon T +  \sum_{t\leq T, t\in\Tcal} \1_{d(\pi(x),\pi^*(x)) \geq 2^{-i}}.
\end{align*}
As a result, $ \limsup_{T\to\infty}\frac{1}{T}\sum_{t\leq T, t\in\Tcal} \bar r_t(\pi^*(X_t)) - \bar r_t(\pi^i(X_t)) \leq \epsilon + \Delta(\epsilon).$ Further, because the event $\Ecal$ is satisfied, $\limsup_{T\to\infty}\frac{1}{T}\sum_{t\leq T, t\in\Tcal}  r_t(\pi^*(X_t)) - r_t(\pi^i(X_t)) \leq \epsilon + \Delta(\epsilon).$  This holds for any $\epsilon>0$. Now because $\Delta(\epsilon)\to 0$ as $\epsilon\to 0$, we proved that on $\Ecal$,
\begin{equation*}
    \begin{cases}\inf_{\pi\in\Pi}\limsup_{T\to\infty} \frac{1}{T}\sum_{t\leq T, t\in\Tcal}  r_t(\pi^*(X_t)) -  r_t(\pi(X_t)) \leq 0,\\
    \inf_{\pi\in\Pi}\limsup_{T\to\infty} \frac{1}{T}\sum_{t\leq T, t\in\Tcal}  \bar r_t(\pi^*(X_t)) -  \bar r_t(\pi(X_t)) \leq  0.
    \end{cases}
\end{equation*}
This ends the proof of the lemma.
\end{proof}

We are now ready to generalize our algorithms from previous sections, using $\Pi$ as a countable set of functions that are dense within all policies in the uniformly-continuous rewards context. First, note that using $\EXPINF$ directly with the countable family described in \cref{lemma:density_uniformly_continuous} is universally consistent on all $\Ccal_1$ processes. This shows that we always have $\Ccal_1\subset \Ccal^{uc}$ for all models. In particular, together with \cref{prop:simple_upper_bounds}, this shows that for non-totally-bounded metric action spaces $\Acal$, we have $\Ccal^{uc}=\Ccal_1$ for all reward models.

Next, we turn to the case of finite action spaces and context spaces $\Xcal$ that do not admit a non-atomic measure. In this case, we showed that the algorithm that simply uses different $\EXPIX$ for each distinct instance is optimistically universal. In the case of uniformly-continuous rewards, we can replace $\EXPIX$ with $\EXPINF$ over a countable set of actions. This yields an optimistically universal learning rule for any totally bounded action spaces $\Acal$.

\begin{theorem}\label{thm:uc_expinf}
Let $\Xcal$ be a metrizable separable Borel space that does not admit a non-atomic probability measure. Let $\Acal$ be a totally-bounded metric space. Then, there exists an optimistically universal learning rule for uniformly-continuous rewards (in any setting) and learnable processes are exactly $\Ccal^{uc}_{stat}=\Ccal^{uc}_{online}=\Ccal_2$. 
\end{theorem}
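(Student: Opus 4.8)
The plan is to prove the two inclusions $\Ccal_2\subseteq\Ccal^{uc}_{adv}$ and $\Ccal^{uc}_{stat}\subseteq\Ccal_2$. Since $\Ccal^{uc}_{adv}\subseteq\Ccal^{uc}_{online}\subseteq\cdots\subseteq\Ccal^{uc}_{stat}$ always holds, these two inclusions squeeze every uniformly-continuous learnable class to equal $\Ccal_2$; and the rule witnessing $\Ccal_2\subseteq\Ccal^{uc}_{adv}$ is universally consistent for adversarial rewards under all of $\Ccal_2=\Ccal^{uc}_{setting}$, hence optimistically universal in each setting. The necessity direction $\Ccal^{uc}_{stat}\subseteq\Ccal_2$ is immediate: fixing two distinct actions $a_1,a_2\in\Acal$ (WLOG $|\Acal|\geq 2$), the set $S=\{a_1,a_2\}$ satisfies $\min_{a\neq a'\in S}d(a,a')>0$, so Lemma \ref{lemma:reduction} gives $\Ccal^{uc}_{stat}(\Acal)\subseteq\Ccal_{stat}(S)$, and since $S$ is finite and $\Xcal$ admits no non-atomic measure, Theorem \ref{thm:bad_borel_spaces} yields $\Ccal_{stat}(S)=\Ccal_2$ (this also follows from Proposition \ref{prop:simple_upper_bounds}).

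For sufficiency I would analyze the rule $g_\cdot$ that fixes a countable dense subset $D=\{d_1,d_2,\ldots\}$ of $\Acal$ (which is separable, being totally bounded) and runs an independent copy of $\EXPINF$, with experts the constant policies $d_1,d_2,\ldots$, on the subsequence of each distinct visited instance, exactly mirroring Proposition \ref{prop:EXP.IX_parrallel} with $\EXPINF$ replacing $\EXPIX$. Since $\Xcal$ carries no non-atomic measure, every $\Xbb\in\Ccal_2$ visits $|S_T|=o(T)$ distinct instances a.s., as in the proof of Theorem \ref{thm:bad_borel_spaces}. Fix a measurable target $\pi^*$ and $\epsilon>0$; by total boundedness pick a finite $\Delta(\epsilon)$-net of $\Acal$, place its $M=M(\epsilon)$ points at the front of the enumeration $D$, and set $\pi^\epsilon(x)=d_{i(x)}$ to be the net point within $\Delta(\epsilon)$ of $\pi^*(x)$, so $i(x)\leq M$. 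I then split instances by their occurrence count $N_x$ at horizon $T$: those with $N_x<\max(N_0,M^8)$ (a constant once $\epsilon$ is fixed) are small and contribute total regret at most $|S_T|\max(N_0,M^8)=o(T)$, while for the rest $M\leq N_x^{1/8}$, so the guarantee of Theorem \ref{thm:infinite-exp4} applies against the constant expert $d_{i(x)}$ and bounds the regret of $g_\cdot$ versus $\pi^\epsilon$ on instance $x$ by $cN_x^{3/4}(\ln N_x)^{3/2}\leq N_x^{7/8}$, the polylog being absorbed past the universal threshold $N_0$.

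Summing the large-instance bounds by Hölder gives $\sum_x N_x^{7/8}\leq(\sum_x N_x)^{7/8}|S_T|^{1/8}=T^{7/8}|S_T|^{1/8}=o(T)$, so $g_\cdot$ trails $\pi^\epsilon$ by only $o(T)$. It remains to compare $\pi^\epsilon$ to $\pi^*$: since $d(\pi^*(x),\pi^\epsilon(x))\leq\Delta(\epsilon)$, uniform continuity gives $\Ebb[r_t(\pi^*(X_t))-r_t(\pi^\epsilon(X_t))\mid\Xbb_{\leq t},\hat a_{\leq t-1},r_{\leq t-1}]\leq\epsilon$, and an Azuma bound on the associated bounded-increment martingale yields $\sum_{t\leq T}(r_t(\pi^*(X_t))-r_t(\pi^\epsilon(X_t)))\leq\epsilon T+O(\sqrt{T\ln T})$ a.s. for large $T$. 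Combining the two comparisons gives $\limsup_{T}\frac1T\sum_{t\leq T}(r_t(\pi^*(X_t))-r_t(\hat a_t))\leq\epsilon$ a.s., and letting $\epsilon\to0$ along a countable sequence proves universal consistency. As in Proposition \ref{prop:EXP.IX_parrallel}, the per-instance high-probability form of Theorem \ref{thm:infinite-exp4} is upgraded to an almost-sure statement by a union bound over instances together with Borel--Cantelli over the horizon.

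The main obstacle is the sufficiency bookkeeping forced by $\EXPINF$ achieving only $T^{3/4}$ regret against experts among the first $T^{1/8}$: I must simultaneously keep the approximating action's index below $N_x^{1/8}$, which is what drives the finite-net/total-boundedness construction and the separate handling of low-occurrence instances, and ensure that the seemingly superlinear sum $\sum_x N_x^{3/4}\,\mathrm{polylog}$ is still $o(T)$, which is where Hölder, the bound $|S_T|=o(T)$, and the polylog-absorbing exponent $7/8$ all enter. The martingale passage from expected to realized rewards under uniform continuity is the other point needing care, though it is routine.
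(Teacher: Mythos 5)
Your overall architecture matches the paper's: necessity via the reduction lemma (Lemma \ref{lemma:reduction}) to a two-point action set plus Theorem \ref{thm:bad_borel_spaces}, and sufficiency via a per-instance $\EXPINF$ whose expert list is a concatenation of finite $2^{-i}$-nets, exploiting that $\Ccal_2$ processes on such $\Xcal$ visit $o(T)$ distinct instances and that uniform continuity lets a net policy $\pi^\epsilon$ stand in for $\pi^*$ up to $\epsilon$. The squeeze $\Ccal_2\subset\Ccal^{uc}_{adv}\subset\Ccal^{uc}_{stat}\subset\Ccal_2$ and the Azuma/Hoeffding passage between realized and expected rewards are also as in the paper. (Minor point: you cannot ``place the $M(\epsilon)$ net points at the front of $D$'' per $\epsilon$; the enumeration must be fixed once, which is exactly why the paper concatenates the nets $\Acal(2^{-i})$ so that each one occupies a prefix.)

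The genuine gap is in your per-instance regret bound for instances above the constant threshold $\max(N_0,M^8)$. You claim regret $cN_x^{3/4}(\ln N_x)^{3/2}\leq N_x^{7/8}$ uniformly for all such $x$, but Theorem \ref{thm:infinite-exp4} gives this only with probability $1-\delta$ at cost $\ln(T/\delta)$, or almost surely with a \emph{random} additive constant $\hat T_x$. If you union-bound over the up to $|S_T|=o(T)$ instances at horizon $T$ (as needed for Borel--Cantelli), you must take $\delta\lesssim T^{-2}$ and the bound becomes $cN_x^{3/4}\sqrt{\ln N_x}\,\ln T$ with the \emph{global} $T$; for an instance with $N_x$ of constant or polylogarithmic order this exceeds $N_x^{7/8}$ (and even $N_x$), and summing over up to $\epsilon(T)T$ such instances gives $\epsilon(T)T\,\mathrm{polylog}(T)$, which is not $o(T)$ since $\Ccal_2$ guarantees only $\epsilon(T)\to0$, arbitrarily slowly. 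If instead you invoke the almost-sure anytime form per instance, the sum $\sum_{x\in S_T}\hat T_x$ over a growing number of instances is uncontrolled. This is precisely why the paper's proof (mirroring Proposition \ref{prop:EXP.IX_parrallel}) uses a \emph{three}-way split --- $\Hcal_0$ with $N_x<\epsilon(T)^{-1/2}$ bounded trivially, $\Hcal_2$ with $N_x\geq\ln^8T$ where $N_x^{3/4}(\ln T)^{3/2}\leq N_x/\sqrt{\ln T}$, and an intermediate bucket $\Hcal_1$ handled by opening up the $\EXPINF$/$\EXPIX$ analysis and running a \emph{single global} super-martingale (Azuma) argument over the aggregated losses rather than per-instance high-probability bounds. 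Your H\"older step $\sum_xN_x^{7/8}\leq T^{7/8}|S_T|^{1/8}=o(T)$ is fine as arithmetic, but the exponent $7/8$ is never established for the intermediate range, so the sufficiency proof does not close as written.
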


\begin{proof}
We first describe the learning rule. For any $\epsilon>0$, let $\Acal(\epsilon)$ be an $\epsilon-$net of $\Acal$. By abuse of notation, for any $a\in\Acal$, we use the same notation $a$ for the expert which selects action $a$ at all time steps. Now consider the countable set of experts $\bigcup_{i\geq 1}\Acal(2^{-i}) = \{a_1,a_2,\ldots \}$, where the sets are concatenated by increasing order of index $i$. Now consider the learning rule that uses a distinct $\EXPINF$ over this set of experts, for each distinct instance. Formally, the learning rule is
\begin{equation*}
    f_t(\mb x_{\leq t-1},\mb r_{\leq t-1},x_t) = \EXPINF(\mb {\hat a}_{S_t},\mb r_{S_t})
\end{equation*}
where $S_t = \{t'<t:x_{t'} = x_t\}$ is the set of times that $x_t$ was visited previously and $\hat a_{t'}$ denotes the action selected at time $t'$ for $t'<t$. We now show that this learning rule is universally consistent on all $\Ccal_2$ processes for uniformly bounded rewards. In the proof of \cref{thm:bad_borel_spaces} we showed that for spaces $\Xcal$ that do not admit a non-atomic probability measure, any $\Ccal_2$ process visits a sublinear number of distinct instances almost surely. Therefore, for $\Xbb\in\Ccal_2$, on an event $\Ecal$ of probability one, we have
$|\{x\in\Xcal:\{x\}\cap\Xbb_{\leq T}\neq\emptyset\}|=o(T).$ It now suffices to adapt the proof of \cref{prop:EXP.IX_parrallel}. Let $(r_t)_t$ be an uniformly continuous reward mechanism. For $\epsilon>0$, let $\Delta(\epsilon)>0$ its $\epsilon-$modulus of continuity. We keep the same notations as in the proof of \cref{prop:EXP.IX_parrallel}. Let $S_T=\{x:\{x\}\cap\Xbb_{\leq T}\neq\emptyset\}$, $\epsilon(T)=\frac{|S_T|}{T}$ and for $x\in S_T$, let $\Tcal_T(x)=\{t\leq T:X_t=x\}$. Further, for any $x\in S_T$ we pose $\Tcal_T(x) = \{t\leq T:X_t = x\}$. Let $\Hcal_0(T) = \{x\in S_T: |\Tcal_T(x)|<\frac{1}{\sqrt {\epsilon(T)}}\}$, $\Hcal_1(T) = \{x\in S_T: \frac{1}{\sqrt {\epsilon(T)}}\leq|\Tcal_T(x)|<\ln ^8 T\}$ and $\Hcal_2(T) = \{x\in S_T:|\Tcal_T(x)|\geq \ln ^8 T\}$. Now let $\pi:\Xcal\to\Acal$ be a measurable policy. We still have
\begin{equation*}
    \frac{1}{T}\sum_{x\in \Hcal_0(T)}|\Tcal_T(x)| \leq \sqrt{\epsilon(T)}.
\end{equation*}
Next, we turn to points $x\in\Hcal_2(T)$. By \cref{thm:infinite-exp4}, conditionally on the realization $\Xbb$, for any $x\in \Hcal_2(T)$, with probability at least $1-\frac{1}{T^3}$,
\begin{equation*}
    \max_{i\leq \ln T} \sum_{t\in \Tcal_T(x)}  r_t(a_i) -  r_t(\hat a_t) \leq 4c  |\Tcal_T(x)|^{3/4} (\ln T)^{3/2}\leq 4c\frac{|\Tcal_T(x)|}{\sqrt{\ln T}}.
\end{equation*}
Therefore, since $|\Hcal_2(T)|\leq T$, by union bound, with probability at least $1-\frac{1}{T^2}:=1-p_2(T)$,
\begin{equation*}
    \sum_{x\in \Hcal_2(T)} \max_{i\leq \ln T} \sum_{t\in \Tcal_T(x)}  r_t(a_i) -  r_t(\hat a_t) \leq 4c  \frac{T}{\sqrt{\ln T}}.
\end{equation*}
We then treat points in $\Hcal_1(T)$ for which we will need to go back to the proof of the regret bounds for $\EXPINF$ and the underlying $\EXPIX$ algorithm which is used as subroutine. First we recall the structure of $\EXPINF$. Let $i(k)=\sum_{r<k}r^3$. It works by periods $[i(k)+1,i(k)+k^3)$ on which a new $\EXPIX$ learner to find the best expert within the first $k$ experts in the sequence provided to $\EXPINF$. We will refer to this as period $k$. As useful inequalities, we have $\frac{k^4}{4}\leq i(k)\leq \frac{(k+1)^4}{4}$. Let $k_0=\lceil \epsilon(T)^{-1/8}\rceil$ and focus on a period $k$ for $k\geq k_0$ of an $\EXPINF$ run. We denote by $\hat a_u$ the action selected at horizon $u$ by $\EXPINF$. Following the same arguments as in \cref{prop:EXP.IX_parrallel} and the analysis of $\EXPIX$ in \cite{neu2015explore}, for any $j\leq k_0$
\begin{equation*}
    \sum_{u=1}^{k^3} (\ell_{u,\hat a_{i(k)+u}}-\tilde \ell_{u,a_j})\leq \frac{\ln k}{\eta_{k^3}} + \sum_{u=1}^{k^3}\eta_u\sum_{i=1}^k \tilde \ell_{u,a_i}.
\end{equation*}
As a result,
\begin{align*}
    \sum_{u=1}^{k^3} \ell_{u,\hat a_{i(k)+u}}-\ell_{u,a_j}\leq 3\sqrt{k\ln k \cdot k^3} + \sum_{u=1}^{k^3} (\tilde\ell_{u,a_j}-\ell_{u,a_j}) +  \sum_{u=1}^{k^3} \sum_{i=1}^k\eta_u(\tilde\ell_{u,a_j}-\ell_{u,a_j})
\end{align*}
Now for any $a\in\Acal$, let $a^{(k_0)}=\argmin_{1\leq i\leq k_0}d(a,a_i)$ the nearest neighbor of $a$ where ties are broken alphabetically. We will sum this inequality for all $\EXPINF$ runs for $x\in\Hcal_1(T)$, and periods $k\geq k_0$ that were completed, i.e. $|\Tcal_T(x)|\geq i(k+1)$, taking $a_j=\pi(x)^{(k_0)}$. Before doing so, note that $\sum_{k'\leq k}\sqrt{3(k')^4\ln k'}\leq (k+1)^3\sqrt{\ln k}\leq 4 i(k+1)^{3/4}\sqrt{\ln i(k+1)}$. Further, for simplicity, denote by $A(T)$ (resp. $B(T)$) the sum that is obtained after summing all the terms $\sum_{u=1}^{k^3} (\tilde\ell_{u,a_j}-\ell_{u,a_j})$ (resp. $\sum_{u=1}^{k^3}\sum_{i=1}^k \eta_u(\tilde\ell_{u,a_j}-\ell_{u,a_j})$). Using these notations, we obtain
\begin{multline*}
    \sum_{x\in\Hcal_1(T)}\sum_{t\in\Tcal_T(x)}r_t(\pi(X_t)^{(k_0)})-r_t(\hat a_t) \leq  \sum_{x\in\Hcal_1(T)}\left(\frac{k_0^4}{4}+4|\Tcal_T(x)|^{3/4} +4 |\Tcal_T(x)|^{3/4}\sqrt{3\ln|\Tcal_T(x)|}\right)\\
    + A(T)+B(T).
\end{multline*}
where in the first inequality, $\frac{k_0^4}{4}$ accounts for the first $k_0$ initial periods and $4|\Tcal_T(x)|^{3/4}$ accounts for the last phase which potentially was not completed. Now recall that for each $x\in\Hcal_1(T)$, $\epsilon(T)^{-1/2}\leq |\Tcal_T(x)|<\ln^8 T$. Let $n_0\geq 1$ such that for any $n\geq n_0$, $8n^{3/4}\sqrt{3\ln n}\leq n^{7/8}$. Since on the event $\Ecal$, we have $\epsilon(T)\to 0$, there exists an index $\hat T$ such that for $T\geq \hat T$, $\epsilon(T)^{-1/2}\geq n_0$. Therefore, on $\Ecal$, for $T\geq \hat T$ we have
\begin{align*}
    \sum_{x\in\Hcal_1(T)}\left(\frac{k_0^4}{4}+20|\Tcal_T(x)|^{3/4} + |\Tcal_T(x)|^3\sqrt{3\ln|\Tcal_T(x)|}\right) &\leq 2\sqrt{\epsilon(T)}T + \sum_{x\in\Hcal_1(T)} |\Tcal_T(x)|^{7/8}\\
    &\leq (2\sqrt{\epsilon(T)} + \epsilon(T)^{1/16}) T.
\end{align*}
Next, using the same arguments as in the proof of \cref{prop:EXP.IX_parrallel}, observe that conditionally on $\Xbb$, $(A(T'))_{T'\leq T}$ is a super-martingale, with increments bounded in absolute value by $2\sqrt{\frac{k \cdot k^3}{\ln k}}\leq 2k^2\leq 4\sqrt{i(k+1)}\leq 4\ln ^4 T$. Therefore, Azuma's inequality implies that
\begin{equation*}
    \Pbb[A(T)\leq 8T^{3/4}\ln^4 T\mid \Xbb]\geq 1-e^{-2\sqrt T}.
\end{equation*}
Simialrly, $(B(T'))_{T'\leq T}$ is a super-martingale, with increments bounded in absolute value by $2k\sqrt{\frac{k \cdot k^3}{\ln k}}\leq 8 i(k+1)\leq 8\ln^8 T$. Therefore,
\begin{equation*}
    \Pbb[B(T)\leq 16T^{3/4}\ln^8 T\mid \Xbb]\geq 1-e^{-2\sqrt T}.
\end{equation*}
Therefore, by the Borel-Cantelli lemma, on an event $\Gcal$ of probability one, $\limsup_{T\to\infty}\frac{1}{T}(A(T)+B(T))\leq 0$. Finally, let $j(T) = \min(\epsilon(T)^{-1/8},\ln T)$. Putting everything together, we proved that on $\Ecal\cap\Fcal\cap\Gcal,$ for $T\geq \hat T$,
\begin{equation*}
    \frac{1}{T}\sum_{t\leq T}r_t(\pi(X_t)^{(j(T))})-r_t(\hat a_t)\leq 3\sqrt{\epsilon(T)} + \epsilon(T)^{1/16} + \frac{4c}{\sqrt{\ln T}} + \frac{1}{T}(A(T)+B(T)).
\end{equation*}
In particular, this hows that on $\Ecal\cap\Fcal\cap\Gcal$,
\begin{equation*}
    \limsup_{T\to\infty}\frac{1}{T}\sum_{t\leq T}r_t(\pi(X_t)^{(j(T))})-r_t(\hat a_t) \leq 0.
\end{equation*}
Now using Hoeffding's bound, with probability at least $1-2e^{-2\sqrt T}$, we have
\begin{equation*}
    \left|\sum_{t=1}^T r_t(\pi(X_t)^{(j(T))}) - \bar r_t(\pi(X_t)^{(j(T))})\right|\leq 2T^{3/4}.
\end{equation*}
We have the same bound for $\pi$. Therefore, the Borel-Cantelli lemma implies that on an event $\Hcal$ of probability one, $\frac{1}{T}\sum_{t=1}^T r_t(\pi(X_t)^{(j(T))}) - \bar r_t(\pi(X_t)^{(j(T))})\to 0$ and $\frac{1}{T}\sum_{t=1}^T r_t(\pi(X_t)) - \bar r_t(\pi(X_t))\to 0$. We now suppose that $\Ecal\cap\Fcal\cap\Gcal\cap\Hcal$ is met.

Now fix $\epsilon>0$. Let $k_0$ such that $2^{-k_0}\leq \Delta(\epsilon)$. Because $\Ecal$ is met, $\epsilon(T)\to 0$ and $j(T)\to\infty$. Thus, there exists $\tilde T\geq \hat T$ such that for any $T\geq \tilde T$, $\epsilon(T)\leq n_0^{-2}$ and $\Acal(2^{-k_0})\subset \{a_i, j\leq j(T)\}$. Now for $T\geq \tilde T$ and any $a\in\Acal$, we have $d(a,a^{(j(T))})\leq \Delta(\epsilon)$. As a result, using $\Hcal$,
\begin{align*}
    \limsup_{T\to\infty}\frac{1}{T}\sum_{t=1}^T r_t(\pi(X_t)) - r_t(\hat a_t) 
    &\leq \limsup_{T\to\infty}\frac{1}{T}\sum_{t=1}^T \bar r_t(\pi(X_t)) - \bar r_t(\pi(X_t)^{(j(T))})\\
    &\leq  \limsup_{T\to\infty}\frac{\tilde T}{T} + \epsilon\\
    &\leq \epsilon.
\end{align*}
In the second inquality, we used the uniform-continuity assumption on the rewards and the fact that for $T\geq \tilde T$, $d(\pi(X_t),\pi(X_t)^{(j(T))}) \leq \min_{a\in\Acal(2^{-k_0})}d(a,\pi(X_t)) \leq 2^{-k_0}\leq \Delta(\epsilon)$. Because this holds for any $\epsilon>0$ and $\Ecal\cap\Fcal\cap\Gcal\cap\Hcal$ has probability one, this proves that the learning rule is universally consistent under $\Xbb$. Then, the learning rule is universally consistent under any $\Ccal_2$ process. By \cref{prop:simple_upper_bounds}, this shows that the learnable processes are exactly $\Ccal_2$ and that this is an optimistically universal learning rule. This ends the proof of the theorem.
\end{proof}

The last algorithms needed to be adapted to the uniformly-continuous rewards setting are the algorithms for $\Ccal_5$ processes in finite action spaces. Precisely, we will show that we for totally-bounded metric action spaces $\Acal$, the set of learnable processes for uniformly-continuous adversarial rewards contains $\Ccal_5$ processes. Recall that the class of constructed algorithms in \cref{thm:C6_learnable} proceed separately on different categories of times. The category of $t$ is defined based on the number of duplicates of $X_t$ within its associated period. For each category of times, the learning rule performs a form of Hedge algorithm to perform the best strategy among strategy 0 which simply assigns a different $\EXPIX$ learner to distinct instances from the period; and strategy $j$ for $j\geq 1$ which selected actions according to a fixed policy $\pi^j$, where $\tilde \Pi=\{\pi^l,l\geq 1\}$ was a dense of policies within $\Ccal_1'$ processes.

We make the following modifications to these learning rules. First, we replace $\tilde\Pi$ with the countable set $\Pi$ of measurable policies that are dense in the uniformly-continuous rewards setting, as given by \cref{lemma:density_uniformly_continuous}. Second, for every category $p$, strategy 0 will use $\EXPIX$ learners from $\Acal(\gamma_p)$, a $\gamma_p-$nets of $\Acal$, where $\gamma_p$ is to be defined. With these modifications, we obtain the following result.

\begin{theorem}\label{thm:lower_bound_C5}
Let $\Xcal$ be a metrizable separable Borel space and let $\Acal$ be a totally-bounded metric space. Then, $\Ccal_5\subset\Ccal^{uc}_{online}$.
\end{theorem}

\begin{proof}
Fix $\Xbb\in\Ccal_5$ and let $(T_i)_{i\geq 0}$ such that with $\Tcal=\bigcup_{i\geq 0}\Tcal^i\cap\{t\geq T_i\}$, we have $(X_t)_{t\in\Tcal}\in\Ccal_1'$. We first define how we modify the learning rule from \cref{thm:C6_learnable} for this process. The functions \textsc{Phase, Stage, Period, Category} are left unchanged. In the initial phase when $t<2^{u(16p)}$, we replace $\EXPIX_\Acal$ with $\EXPINF$ run with the dense sequence of $\Acal$ with the specific order described in the previous \cref{thm:uc_expinf}. We briefly recap the procedure. Let $\Acal(\epsilon)$ be an $\epsilon-$net of $\Acal$. We consider the sequence of experts $\bigcup_{i\geq 1}\Acal(2^{-i})$ where we confuse $a\in\Acal$ with the constant policy equal to $a$ and we concatenate the nets by increasing order of index $i$. $\EXPINF$ is then run with this sequence of experts. Next, we enumerate $\Pi=\{\pi^l,l\geq 1\}$ and use these policies as well for the learning rule (strategies $j\geq 1$). Last, when playing strategy 0 after the initial phase, we replace $\EXPIX_\Acal$ with $\EXPIX_{\Acal(\gamma_p)}$, where $\gamma_p$ will be defined shortly. In the original proof, we defined $\delta_p:=6\frac{\sqrt{|\Acal|\ln|\Acal|}}{2^p}$, $\eta_i:=\sqrt{\frac{8\ln(i+1)}{2^i}}$ and showed that the average error of the learning rule on $\Tcal_p$ outside of the intitial phase is $\Ocal(\delta_p+\frac{\eta_{16p}}{4})$. Then, $\sum_{p\geq 0}(\delta_p+\frac{\eta_{16p}}{4})<\infty$ allowed the learner to converge separately on each $\Tcal_p$. We now replace $\delta_p$ with $\delta_p:=4\sqrt{\frac{|\Acal(\gamma_p)|\ln|\Acal(\gamma_p)|}{2^p}}$ and choose $\gamma_p$ such that $\sum_p \delta_p<\infty$. We pose
\begin{equation*}
    \gamma_p = \min\{2^{-i}:|\Acal(2^{-i})|\ln|\Acal(2^{-i})| \leq 2^{p/4}\}.
\end{equation*}
Thus, we still have $\sum_p\delta_p<\infty$ and $\gamma_p\to 0$. We now show that the modified learning rule is universally consistent under online uniformly-continuous rewards on $\Acal$. Fix $(r_t)_t$ such a reward mechanism and for $\epsilon>0$, let $\Delta(\epsilon)$ the $\epsilon-$modulus of continuity of the sequence of immediate rewards. As in the original proof of \cref{thm:C6_learnable}, let $\Tcal^{init} = \bigcup_{p\geq 0} \{t\in\Tcal_p:t<2^{u(16p)}\}$ be the initial phase. The process $(X_t)_{t\in\Tcal^{init}}$ still visits a sublinear number of distinct instances almost surely, where we say that two instances $t,t'\in\Tcal^{init}$ are duplicates if and only if they have same category, period and $X_t=X_{t'}$. As a result, in the proof of \cref{thm:uc_expinf}, we showed that for any $\pi^*:\Xcal\to\Acal$, on an event $\Ecal$ of probability one,
\begin{equation*}
    \limsup_{T\to\infty}\frac{1}{T}\sum_{t\leq T,t\in\Tcal^{init}}r_t(\pi^*(X_t))-r_t \leq 0.
\end{equation*}
We then turn to non-initial phases and adapt the original proof of \cref{thm:C6_learnable}. For any $a\in\Acal$, we denote $a^{(\gamma)}=\argmin_{a'\in\Acal(\gamma)}d(a,a')$, the nearest neighbor of $a$ within the $\gamma-$net where ties are broken alphabetically. Keeping the same event $\Fcal$, Eq~\eqref{eq:accuracy_estimates} is unchanged and Eq~\eqref{eq:accuracy_strat0} becomes
\begin{equation*}
    R_p(l,k;0)\geq \sum_{t\in\Tcal_p(l,k)}r_t(\pi^*(X_t)^{(\gamma_p)}) -16|\Acal(\gamma_p)|^2 2^{-i} 2^{15l/16}- \delta_p 2^{l-i}.
\end{equation*}
Eq~\eqref{eq:outer_hedge} is left unchanged. For $p\geq 0$, let $\epsilon(p)=\min\{2^{-i}: \gamma_p\leq \Delta(2^{-i})\}$. Note that because $\gamma_p\to 0$, we have $\epsilon(p)\to 0$ as $p\to\infty$. Following the same arguments as in the original proof and noting that $|\Acal(\gamma_p)|\leq 2^{p/4}$, Eq~\eqref{eq:tail_upper_bound} is replaced by
\begin{align*}
    \sum_{2^{u(16p)}<t\leq T,t\in\Tcal_p} r_t(\pi^*(X_t)^{(\gamma_p)})-r_t &\leq 2^{\hat l}+ c2^{p/2}T^{15/16} + \left(\delta_p+\frac{\eta_{16p}}{4}\right) T\\
    &\leq 2^{\hat l}+ cT^{31/32} + \left(\delta_p+\frac{\eta_{16p}}{4}\right).
\end{align*}
Now fix $\epsilon>0$, and let $p_0$ such that $\sum_{p\geq p_0}(\delta_p+\frac{\eta_{16p}}{4})<\epsilon$ and $\epsilon(p_0)<\epsilon$. Following the original arguments,
\begin{align*}
    \sum_{p\geq p_0}\sum_{2^{u(16p)}\leq t<T,t\in\Tcal_p}  r_t(\pi^*(X_t)^{(\gamma_p)})-r_t \leq 2^{\hat l} \log_4 T + cT^{31/32}\log_4 T+ \epsilon T.
\end{align*}
Now using Azuma's inequality, with probability at least $1-4e^{-2\sqrt T}$, we have
\begin{align*}
    \left|\sum_{p\geq p_0}\sum_{2^{u(16p)}\leq t<T,t\in\Tcal_p}  r_t(\pi^*(X_t)^{(\gamma_p)}) -\bar r_t(\pi^*(X_t)^{(\gamma_p)}) \right|\leq 2T^{3/4}\\
    \left|\sum_{p\geq p_0}\sum_{2^{u(16p)}\leq t<T,t\in\Tcal_p}  r_t(\pi^*(X_t)) -\bar r_t(\pi^*(X_t)) \right|\leq 2T^{3/4}.
\end{align*}
Therefore, using Borel-Cantelli, on an event $\Gcal$ of probability one, there exists $\hat T_1$ such that for $T\geq \hat T_1$, the above two equations hold. Then, on $\Ecal\cap\Fcal\cap\Gcal$, for $T$ sufficiently large,
\begin{align*}
    \sum_{p\geq p_0}\sum_{2^{u(16p)}\leq t<T,t\in\Tcal_p}  r_t(\pi^*(X_t))-r_t &\leq 2^{\hat l} \log_4 T + cT^{31/32}\log_4 T+ \epsilon T + 4T^{3/4}\\
    &\quad\quad\quad + \sum_{p\geq p_0}\sum_{2^{u(16p)}\leq t<T,t\in\Tcal_p}  \bar r_t(\pi^*(X_t))- \bar r_t(\pi^*(X_t)^{(\gamma_p)})\\
    &\leq 2^{\hat l} \log_4 T + 4T^{3/4}+cT^{31/32}\log_4 T+ 2\epsilon T ,
\end{align*}
where in the last inequality we used the uniform continuity of the immediate expected rewards since for $p\geq p_0$, one has $\gamma_p\leq \gamma_{p_0} \leq \Delta(\epsilon(p_0)) \leq \Delta(\epsilon)$.
This implies that on the event $\Ecal\cap\Fcal\cap\Gcal$,
\begin{equation*}
    \limsup_{T\to\infty}\frac{1}{T} \sum_{p\geq p_0}\sum_{2^{u(16p)}\leq t<T,t\in\Tcal_p}  r_t(\pi(X_t))-r_t \leq 2\epsilon.
\end{equation*}
Now for $p<p_0$, by \cref{lemma:density_uniformly_continuous}, on an event $\Hcal_p$ of probability one, there exists $l^p$ such that
\begin{equation*}
    \limsup_{T\to\infty}\frac{1}{T}\sum_{t\leq T,t\in\Tcal_p}  r_t(\pi^*(X_t))- r_t(\pi^{l_p}(X_t))\leq \frac{\epsilon}{p_0}.
\end{equation*}
Following the arguments in the proof of \cref{thm:C6_learnable}, on the event $\Ecal\cap\Fcal\cap\Gcal\cap\bigcap_{p<p_0}\Hcal_p$ of probability one, for $T$ large enough,
\begin{align*}
    \sum_{p<p_0}\sum_{2^{u(16p)}\leq t\leq T, t\in\Tcal_p}  r_t(\pi^*(X_t))-r_t 
    &\leq \sum_{p<p_0} \sum_{2^{u(16p)}\leq t\leq T,t\in\Tcal_p}  r_t(\pi^*(X_t)) -  r_t(\pi^{l_p}(X_t))\\
    &\quad\quad + \sum_{p<p_0} \sum_{2^{u(16p)}\leq t\leq T,t\in\Tcal_p}  r_t(\pi^{l_p}(X_t)) -r_t\\
    &\leq \sum_{p<p_0} \sum_{2^{u(16p)}\leq t\leq T,t\in\Tcal_p}  r_t(\pi^*(X_t)) -  r_t(\pi^{l_p}(X_t)) \\
    &\quad\quad\quad\quad\quad+ 2^{\hat l_1} + 2^{-i(T)}T + cp_0T^{15/16} + \epsilon T.
\end{align*}
As a result,
\begin{equation*}
    \limsup_{T\to\infty}\frac{1}{T} \sum_{p<p_0}\sum_{2^{u(16p)}\leq t\leq T, t\in\Tcal_p}  r_t(\pi^*(X_t))-r_t \leq 2\epsilon.
\end{equation*}
Combining all the estimates together, we proved that on $\Ecal\cap\Fcal\cap\Gcal\cap\bigcap_{p<p_0}\Hcal_p$ of probability one,
\begin{equation*}
    \limsup_{T\to\infty}\frac{1}{T} \sum_{t=1}^T  r_t(\pi^*(X_t))-r_t \leq 4\epsilon.
\end{equation*}
This holds for all $\epsilon>0$. The same arguments as in the original proof conclude that the learning rule is universally consistent under $\Xbb$. This ends the proof of the theorem.
\end{proof}

As a summary, we generalized all results from the case of the unrestricted reward to uniformly-continuous rewards with the corresponding assumptions on action spaces.

\acks{Moise Blanchard and Patrick Jaillet were partly funded by ONR grant N00014-18-1-2122.}

\bibliography{learning}

\begin{thebibliography}{45}
\providecommand{\natexlab}[1]{#1}
\providecommand{\url}[1]{\texttt{#1}}
\expandafter\ifx\csname urlstyle\endcsname\relax
  \providecommand{\doi}[1]{doi: #1}\else
  \providecommand{\doi}{doi: \begingroup \urlstyle{rm}\Url}\fi

\bibitem[Blanchard et~al.(2022{\natexlab{a}})Blanchard, Hanneke, and
  Jaillet]{blanchard:22e}
Moise Blanchard, Steve Hanneke, and Patrick Jaillet.
\newblock Contextual bandits and optimistically universal learning.
\newblock \emph{arXiv preprint arXiv:2301.00241}, 2022{\natexlab{a}}.

\bibitem[Hanneke(2021)]{hanneke:21}
S.~Hanneke.
\newblock Learning whenever learning is possible: Universal learning under
  general stochastic processes.
\newblock \emph{Journal of Machine Learning Research}, 22:\penalty0 1--116,
  2021.

\bibitem[Stone(1977)]{stone:77}
C.~J. Stone.
\newblock Consistent nonparametric regression.
\newblock \emph{The Annals of Statistics}, 5\penalty0 (4):\penalty0 595--620,
  1977.

\bibitem[Devroye et~al.(1996)Devroye, Gy\"{o}rfi, and Lugosi]{devroye:96}
L.~Devroye, L.~Gy\"{o}rfi, and G.~Lugosi.
\newblock \emph{A Probabilistic Theory of Pattern Recognition}.
\newblock Springer-Verlag New York, 1996.

\bibitem[Gy\"{o}rfi et~al.(2002)Gy\"{o}rfi, Kohler, Krzy\.{z}ak, and
  Walk]{gyorfi:02}
L.~Gy\"{o}rfi, M.~Kohler, A.~Krzy\.{z}ak, and H.~Walk.
\newblock \emph{A Distribution-Free Theory of Nonparametric Regression}.
\newblock Springer-Verlag New York, 2002.

\bibitem[Hanneke et~al.(2021)Hanneke, Kontorovich, Sabato, and
  Weiss]{hanneke:21b}
S.~Hanneke, A.~Kontorovich, S.~Sabato, and R.~Weiss.
\newblock Universal {B}ayes consistency in metric spaces.
\newblock \emph{The Annals of Statistics}, 49\penalty0 (4):\penalty0
  2129--2150, 2021.

\bibitem[Gy{\"o}rfi and Weiss(2021)]{gyorfi2021universal}
L{\'a}szl{\'o} Gy{\"o}rfi and Roi Weiss.
\newblock Universal consistency and rates of convergence of multiclass
  prototype algorithms in metric spaces.
\newblock \emph{Journal of Machine Learning Research}, 22\penalty0
  (151):\penalty0 1--25, 2021.

\bibitem[Cohen and Kontorovich(2022)]{cohen:22}
D.~T. Cohen and A.~Kontorovich.
\newblock Learning with metric losses.
\newblock In \emph{Proceedings of $35^{\mathrm{th}}$ Conference on Learning
  Theory}, 2022.

\bibitem[Morvai et~al.(1996)Morvai, Yakowitz, and Gy\"{o}rfi]{morvai:96}
G.~Morvai, S.~Yakowitz, and L.~Gy\"{o}rfi.
\newblock Nonparametric inference for ergodic, stationary time series.
\newblock \emph{The Annals of Statistics}, 24\penalty0 (1):\penalty0 370--379,
  1996.

\bibitem[Gy\"{o}rfi et~al.(1999)Gy\"{o}rfi, Lugosi, and Morvai]{gyorfi:99}
L.~Gy\"{o}rfi, G.~Lugosi, and G.~Morvai.
\newblock A simple randomized algorithm for sequential prediction of ergodic
  time series.
\newblock \emph{{IEEE} Transactions on Information Theory}, 45\penalty0
  (7):\penalty0 2642--2650, 1999.

\bibitem[Morvai et~al.(1999)Morvai, Kulkarni, and Nobel]{morvai:99}
G.~Morvai, S.~R. Kulkarni, and A.~B. Nobel.
\newblock Regression estimation from an individual stable sequence.
\newblock \emph{Statistics}, 33:\penalty0 99--118, 1999.

\bibitem[Gray(2009)]{gray:09}
R.~M. Gray.
\newblock \emph{Probability, Random Processes, and Ergodic Properties}.
\newblock Springer, second edition, 2009.

\bibitem[Steinwart et~al.(2009)Steinwart, Hush, and Scovel]{steinwart:09}
I.~Steinwart, D.~Hush, and C.~Scovel.
\newblock Learning from dependent observations.
\newblock \emph{Journal of Multivariate Analysis}, 100\penalty0 (1):\penalty0
  175--194, 2009.

\bibitem[Blanchard(2022)]{blanchard:22a}
Moise Blanchard.
\newblock Universal online learning: An optimistically universal learning rule.
\newblock In \emph{Conference on Learning Theory}, pages 479--495. PMLR, 2022.

\bibitem[Yang and Zhu(2002)]{yang2002randomized}
Yuhong Yang and Dan Zhu.
\newblock Randomized allocation with nonparametric estimation for a multi-armed
  bandit problem with covariates.
\newblock \emph{The Annals of Statistics}, 30\penalty0 (1):\penalty0 100--121,
  2002.

\bibitem[Blanchard et~al.(2022{\natexlab{b}})Blanchard, Cosson, and
  Hanneke]{blanchard:22b}
Moise Blanchard, Romain Cosson, and Steve Hanneke.
\newblock Universal online learning with unbounded losses: Memory is all you
  need.
\newblock In \emph{International Conference on Algorithmic Learning Theory},
  pages 107--127. PMLR, 2022{\natexlab{b}}.

\bibitem[Blanchard and Cosson(2022)]{blanchard:22c}
Moise Blanchard and Romain Cosson.
\newblock Universal online learning with bounded loss: Reduction to binary
  classification.
\newblock In \emph{Conference on Learning Theory}, pages 479--495. PMLR, 2022.

\bibitem[C{\'e}rou and Guyader(2006)]{cerou2006nearest}
Fr{\'e}d{\'e}ric C{\'e}rou and Arnaud Guyader.
\newblock Nearest neighbor classification in infinite dimension.
\newblock \emph{ESAIM: Probability and Statistics}, 10:\penalty0 340--355,
  2006.

\bibitem[Hanneke(2022)]{hanneke:22a}
Steve Hanneke.
\newblock Universally consistent online learning with arbitrarily dependent
  responses.
\newblock In \emph{Proceedings of the $33^{{\rm rd}}$ International Conference
  on Algorithmic Learning Theory}, 2022.

\bibitem[Blanchard and Jaillet(2022)]{blanchard:22d}
Moise Blanchard and Patrick Jaillet.
\newblock Universal regression with adversarial responses.
\newblock \emph{arXiv preprint arXiv:2203.05067}, 2022.

\bibitem[Woodroofe(1979)]{woodroofe1979one}
Michael Woodroofe.
\newblock A one-armed bandit problem with a concomitant variable.
\newblock \emph{Journal of the American Statistical Association}, 74\penalty0
  (368):\penalty0 799--806, 1979.

\bibitem[Sarkar(1991)]{sarkar1991one}
Jyotirmoy Sarkar.
\newblock One-armed bandit problems with covariates.
\newblock \emph{The Annals of Statistics}, pages 1978--2002, 1991.

\bibitem[Wang et~al.(2005)Wang, Kulkarni, and Poor]{wang2005bandit}
Chih-Chun Wang, Sanjeev~R Kulkarni, and H~Vincent Poor.
\newblock Bandit problems with side observations.
\newblock \emph{IEEE Transactions on Automatic Control}, 50\penalty0
  (3):\penalty0 338--355, 2005.

\bibitem[Langford and Zhang(2007)]{langford2007epoch}
John Langford and Tong Zhang.
\newblock The epoch-greedy algorithm for multi-armed bandits with side
  information.
\newblock \emph{Advances in neural information processing systems}, 20, 2007.

\bibitem[Goldenshluger and Zeevi(2009)]{goldenshluger2009woodroofe}
Alexander Goldenshluger and Assaf Zeevi.
\newblock Woodroofe’s one-armed bandit problem revisited.
\newblock \emph{The Annals of Applied Probability}, 19\penalty0 (4):\penalty0
  1603--1633, 2009.

\bibitem[Bubeck et~al.(2012)Bubeck, Cesa-Bianchi, et~al.]{bubeck2012regret}
S{\'e}bastien Bubeck, Nicolo Cesa-Bianchi, et~al.
\newblock Regret analysis of stochastic and nonstochastic multi-armed bandit
  problems.
\newblock \emph{Foundations and Trends{\textregistered} in Machine Learning},
  5\penalty0 (1):\penalty0 1--122, 2012.

\bibitem[Auer and Chiang(2016)]{auer2016algorithm}
Peter Auer and Chao-Kai Chiang.
\newblock An algorithm with nearly optimal pseudo-regret for both stochastic
  and adversarial bandits.
\newblock In \emph{Conference on Learning Theory}, pages 116--120. PMLR, 2016.

\bibitem[Rakhlin and Sridharan(2016)]{rakhlin2016bistro}
Alexander Rakhlin and Karthik Sridharan.
\newblock Bistro: An efficient relaxation-based method for contextual bandits.
\newblock In \emph{International Conference on Machine Learning}, pages
  1977--1985. PMLR, 2016.

\bibitem[Lu et~al.(2009)Lu, P{\'a}l, and P{\'a}l]{lu2009showing}
Tyler Lu, D{\'a}vid P{\'a}l, and Martin P{\'a}l.
\newblock Showing relevant ads via context multi-armed bandits.
\newblock In \emph{Proceedings of AISTATS}, 2009.

\bibitem[Rigollet and Zeevi(2010)]{rigollet2010nonparametric}
Philippe Rigollet and Assaf Zeevi.
\newblock Nonparametric bandits with covariates.
\newblock \emph{arXiv preprint arXiv:1003.1630}, 2010.

\bibitem[Slivkins(2011)]{slivkins2011contextual}
Aleksandrs Slivkins.
\newblock Contextual bandits with similarity information.
\newblock In \emph{Proceedings of the 24th annual Conference On Learning
  Theory}, pages 679--702. JMLR Workshop and Conference Proceedings, 2011.

\bibitem[Perchet and Rigollet(2013)]{perchet2013multi}
Vianney Perchet and Philippe Rigollet.
\newblock The multi-armed bandit problem with covariates.
\newblock \emph{The Annals of Statistics}, 41\penalty0 (2):\penalty0 693--721,
  2013.

\bibitem[Guan and Jiang(2018)]{guan2018nonparametric}
Melody Guan and Heinrich Jiang.
\newblock Nonparametric stochastic contextual bandits.
\newblock In \emph{Proceedings of the AAAI Conference on Artificial
  Intelligence}, volume~32, 2018.

\bibitem[Reeve et~al.(2018)Reeve, Mellor, and Brown]{reeve2018k}
Henry Reeve, Joe Mellor, and Gavin Brown.
\newblock The k-nearest neighbour ucb algorithm for multi-armed bandits with
  covariates.
\newblock In \emph{Algorithmic Learning Theory}, pages 725--752. PMLR, 2018.

\bibitem[Besbes et~al.(2014)Besbes, Gur, and Zeevi]{besbes2014stochastic}
Omar Besbes, Yonatan Gur, and Assaf Zeevi.
\newblock Stochastic multi-armed-bandit problem with non-stationary rewards.
\newblock \emph{Advances in neural information processing systems}, 27, 2014.

\bibitem[Hariri et~al.(2015)Hariri, Mobasher, and Burke]{hariri2015adapting}
Negar Hariri, Bamshad Mobasher, and Robin Burke.
\newblock Adapting to user preference changes in interactive recommendation.
\newblock In \emph{Twenty-Fourth International Joint Conference on Artificial
  Intelligence}, 2015.

\bibitem[Karnin and Anava(2016)]{karnin2016multi}
Zohar~S Karnin and Oren Anava.
\newblock Multi-armed bandits: Competing with optimal sequences.
\newblock \emph{Advances in Neural Information Processing Systems}, 29, 2016.

\bibitem[Luo et~al.(2018)Luo, Wei, Agarwal, and Langford]{luo2018efficient}
Haipeng Luo, Chen-Yu Wei, Alekh Agarwal, and John Langford.
\newblock Efficient contextual bandits in non-stationary worlds.
\newblock In \emph{Conference On Learning Theory}, pages 1739--1776. PMLR,
  2018.

\bibitem[Liu et~al.(2018)Liu, Lee, and Shroff]{liu2018change}
Fang Liu, Joohyun Lee, and Ness Shroff.
\newblock A change-detection based framework for piecewise-stationary
  multi-armed bandit problem.
\newblock In \emph{Proceedings of the AAAI Conference on Artificial
  Intelligence}, volume~32, 2018.

\bibitem[Wu et~al.(2018)Wu, Iyer, and Wang]{wu2018learning}
Qingyun Wu, Naveen Iyer, and Hongning Wang.
\newblock Learning contextual bandits in a non-stationary environment.
\newblock In \emph{The 41st International ACM SIGIR Conference on Research \&
  Development in Information Retrieval}, pages 495--504, 2018.

\bibitem[Chen et~al.(2019)Chen, Lee, Luo, and Wei]{chen2019new}
Yifang Chen, Chung-Wei Lee, Haipeng Luo, and Chen-Yu Wei.
\newblock A new algorithm for non-stationary contextual bandits: Efficient,
  optimal and parameter-free.
\newblock In \emph{Conference on Learning Theory}, pages 696--726. PMLR, 2019.

\bibitem[Suk and Kpotufe(2021)]{suk2021self}
Joseph Suk and Samory Kpotufe.
\newblock Self-tuning bandits over unknown covariate-shifts.
\newblock In \emph{Algorithmic Learning Theory}, pages 1114--1156. PMLR, 2021.

\bibitem[Neu(2015)]{neu2015explore}
Gergely Neu.
\newblock Explore no more: Improved high-probability regret bounds for
  non-stochastic bandits.
\newblock \emph{Advances in Neural Information Processing Systems}, 28, 2015.

\bibitem[Foster et~al.(2020)Foster, Krishnamurthy, and Luo]{foster:20}
D.~J. Foster, A.~Krishnamurthy, and H.~Luo.
\newblock Open problem: {M}odel selection for contextual bandits.
\newblock In \emph{Proceedings of the $33^{\rm {rd}}$ Conference on Learning
  Theory}, 2020.

\bibitem[Marinov and Zimmert(2021)]{marinov:21}
T.~Marinov and J.~Zimmert.
\newblock The {P}areto frontier of model selection for general contextual
  bandits.
\newblock In \emph{Advances in Neural Information Processing Systems 34}, 2021.

\end{thebibliography}
\end{document}